\renewcommand{\thefigure}{\thesubsection.\arabic{equation}}
\numberwithin{equation}{subsection}
\theoremstyle{definition}
\newtheorem{defi}{Definition}[subsection] 
\newtheorem{remark}[defi]{Remark}
\newtheorem{example}[defi]{Example}
\newtheorem{proposition}[defi]{Proposition}
\newtheorem*{proposition*}{Proposition}
\newtheorem{theorem}[defi]{Theorem}
\newtheorem{corollary}[defi]{Corollary}
\newtheorem{claim}[defi]{Claim}
\newtheorem{lemma}[defi]{Lemma}
\DeclareMathOperator*{\argmax}{arg\,max}
\DeclareMathOperator*{\esssup}{ess\,sup}
\begin{document}


\pagenumbering{Alph}

\title{Master Thesis \\[3pt] 
\begin{Large} 
Department of Mathematics and Computer Science \\[3pt]
University of Cologne\\[1.6cm]
\centerline{\includegraphics[width=0.45\textwidth]{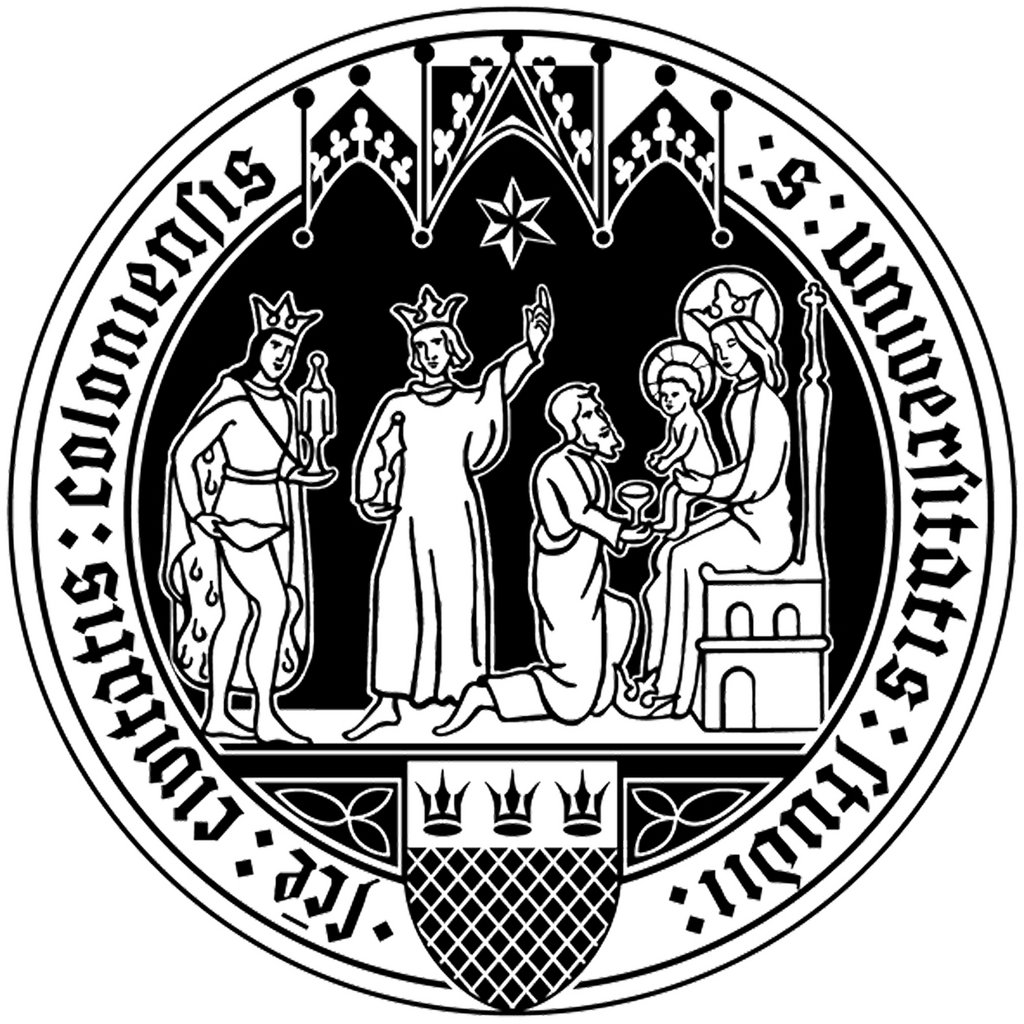}} 
\end{Large}
$ \ $ \\
\textbf{Constructing Gaussian Processes via Samplets\\[2cm]}
\begin{large}
Marcel Neugebauer\\[3pt]
November 8, 2024\\[2cm]

\begin{minipage}[c]{0.4\textwidth}
\begin{center}
	Advisors:\\
	Prof. Dr. Angela Kunoth\\
	Prof. Dr. Michael Multerer\\
\end{center}
\end{minipage}
\date{}
\centering
\end{large}}

\maketitle

\thispagestyle{empty}



\pagenumbering{arabic}

\renewcommand{\thefigure}{\arabic{figure}}

\section*{Acknowledgements}
I would like to express my gratitude to Prof. Dr. Angela Kunoth for appointing me as a research associate in her group and for her valuable support throughout this thesis. I also wish to thank Prof. Dr. Michael Multerer for his continuous support and for always being available to assist whenever I had questions.\\

Finally, I would like to extend my heartfelt thanks to my family and friends for their constant encouragement and support throughout my studies.


\newpage
\tableofcontents


\newpage
\hypersetup{linkcolor=black}
\fancyhead[RO]{\small{\nouppercase{\leftmark}}}
\fancyhead[LE]{\small{\nouppercase{\rightmark}}}
\fancyhead[LO, RE]{\small{Marcel Neugebauer}}

\section{Introduction} \label{Chap. 1}
In today's world, there are two common ways to first encounter Gaussian Processes. The first is through the study of stochastic Processes, such as the Wiener Process. In this context, one works with normally distributed function values, which naturally give rise to the concept of a Gaussian Process. The second involves inferring an unknown real-valued function from data using a probabilistic model. Assuming the function values follow a multivariate normal distribution leads directly to a Gaussian Process. The first approach is more traditional, while the latter is more modern and relies on advanced algorithms and computational power. The growing use of Gaussian Processes has driven extensive research, resulting in a robust theoretical framework that spans multiple areas of mathematics.\\

Norbert Wiener worked extensively in the 1920s on the Wiener Process, establishing foundational results such as proving its existence. Since the Wiener Process is a specific type of Gaussian Process, these are considered some of the earliest significant results in Gaussian Process theory. In the 1930s, Kolmogorow used measure theory to develop a rigorous framework for probability theory, laying the groundwork for stochastic Processes. Throughout the 1940s and 1950s, mathematicians like Doob, Itô , Kolmogorow, Levy and Wiener expanded this foundation, producing many key results on Gaussian Processes. 

In the 1950s, Danie Krige applied linear regression to tackle interpolation problems in mining engineering. In the 1960s, Georges Matheron generalized Krige’s work by incorporating the correlation structure of observed data and polynomial models to create best unbiased predictors. He named this method "Kriging" in honor of Krige and developed a framework that leveraged the first two moments of a Gaussian Process. From there, it was a natural step to further extend Kriging by employing the entire probability distribution rather than relying solely on the first two moments. So Gaussian Processes can be viewed as the natural extension of Kriging.

For many years, Gaussian Processes were primarily used in geostatistics to address low-dimensional regression problems. Their broader application began in the 1990s, fueled by advancements in machine learning. Practitioners could rely on the substantial mathematical foundation in stochastic Processes and a non-parametric model that provides uncertainty estimation, interpretability and adaptability. Gaussian Processes quickly became integral to Bayesian Optimization, which aims to optimize black-box functions. In 2006, Rasmussen and Williams published an influential textbook that synthesized the theoretical developments and applied Gaussian Processes to a wide range of regression problems. However, constructing Gaussian Processes for large datasets and selecting the optimal model remain challenging tasks to this day.\\
 
The goal of this master’s thesis is to tackle the challenges of model selection and construction within Gaussian Process frameworks, focusing on applications in up to three dimensions. We examine recent convergence results to identify models with optimal convergence rates and pinpoint essential parameters. Utilizing this model, we propose a Samplet-based approach to efficiently construct and train the Gaussian Processes, reducing the cubic computational complexity to a log-linear scale. This method facilitates optimal regression while maintaining efficient performance.


The primary references used are \cite{RW}, \cite{BGW}, and \cite{HM1}. The first is the influential textbook by Rasmussen and Williams, which provides a comprehensive introduction to Gaussian Processes in machine learning. It begins with foundational concepts and progresses to advanced modeling, approximation techniques, and comparisons with other models. While this textbook is an excellent starting point, it does not delve into convergence results or modern approximation techniques. Reference \cite{BGW} compiles and extends convergence results for Gaussian Process means, offering valuable insights for model selection and establishing reliable convergence rates. Specifically, it facilitates model selection by guiding experimental design, identifying key hyperparameters, recognizing poor model choices, and estimating expected convergence rates. Lastly, the paper \cite{HM1} by Harbrecht and Multerer introduces Samplets. This is a concept originally developed for data compression that also enables kernel matrix compression. Handling Gaussian Processes on large datasets is computationally intensive, as it requires inverting a kernel matrix or solving associated linear systems. To address this, we apply Samplets to compress the kernel matrix into a sparse structure, allowing us to work efficiently with this reduced pattern. This approach brings the cubic computational cost of Gaussian Process construction down to log-linear complexity. However, since the scaling constants increase significantly with dimension, this approach is best suited for low-dimensional approximations.\\

This thesis is organized as follows: Chapter \ref{Chap. 2} covers the foundational concepts of Gaussian Processes that are essential for model construction and understanding their interconnections. Chapter \ref{Chap. 3} examines the convergence properties of Gaussian Processes, assessing their reliability and applications in model selection. Chapter \ref{Chap. 4} introduces the concept of Samplets and demonstrates their use in kernel matrix compression. In Chapter \ref{Chap. 5}, we present algorithms for constructing Gaussian Processes with Samplets and conducting Bayesian Optimization. Chapter \ref{Chap. 6} contains numerical experiments that evaluate the convergence rates discussed in Chapter \ref{Chap. 3} and compares the proposed algorithms to current state-of-the-art methods. All graphics and proofs throughout this thesis have been performed by the author, unless stated otherwise.

\newpage
\section{Preliminaries} \label{Chap. 2}
In this chapter, Gaussian Processes are motivated and defined. We look at which properties they have, how they are configured and their relation to reproducing kernel hilbert spaces. We then examine a current application of Gaussian Processes.

\subsection{Gaussian Processes} \label{Chap 2.1}
Let $\Omega \subseteq \mathbb{R}^d$ be a non-empty set. Consider an unknown function $f:\Omega \rightarrow \mathbb{R}$ and a problem
$$y = f(\boldsymbol{x}) + \varepsilon,$$
where $\varepsilon \sim \mathcal{N}(0, \sigma^2)$. The aim is to estimate $f$ using a set of observations $\mathcal{D}_N = \{ (\boldsymbol{x}_1, y_1), ..., (\boldsymbol{x}_N, y_N) \}$\index{\textit{$\mathcal{D}_N$,}}. There are two classical approaches. The first assumes $f$ is parametric, meaning it has a specific form, and fits its parameters by minimizing a loss function. The second one is to use a non-parametric method. As example for the latter you can a priori assume that the function values follow a distribution. The most typical used, inter alia due to the multidimensional central limit theorem, is the multivariate normal distribution. That is
$$ \begin{bmatrix}
	f(\boldsymbol{z}_1) \\ \vdots \\ f(\boldsymbol{z}_n) 
\end{bmatrix}
\sim \mathcal{N} \left( 
\begin{bmatrix}
	m(\boldsymbol{z}_1) \\ \vdots \\ m(\boldsymbol{z}_n) 
\end{bmatrix},
\begin{bmatrix}
	k(\boldsymbol{z}_1,\boldsymbol{z}_1) & \cdots & k(\boldsymbol{z}_1,\boldsymbol{z}_n) \\ \vdots 
	& \ddots & \vdots \\ k(\boldsymbol{z}_n,\boldsymbol{z}_1) & \cdots 
	& k(\boldsymbol{z}_n,\boldsymbol{z}_n) 
\end{bmatrix} \right)$$
for every finite amount of points $\boldsymbol{z}_1, ..., \boldsymbol{z}_n \in \Omega$. Here $m:\Omega \rightarrow \mathbb{R}$ is called mean function\index{Mean function,} and the covariance function\index{Covariance function,} $k:\Omega \times \Omega \rightarrow \mathbb{R}$ has to be a symmetric and positive semi-definite kernel to let every multivariate normal distribution be well defined.

\begin{defi} \label{2.1.1}
A function $k:\Omega \times \Omega \rightarrow \mathbb{R}$ is called symmetric and positive semi-definite kernel\index{Kernel,} (in the following simply kernel), if
\begin{enumerate}[label=(\roman*),topsep=5pt]
	\setlength\itemsep{0.1mm}
\item $k(\boldsymbol{z}, \boldsymbol{z}') = k(\boldsymbol{z}', \boldsymbol{z})$ for every $\boldsymbol{z}, \boldsymbol{z'} \in \Omega$,
\item for every $n\in \mathbb{N}, c_1,...,c_n \in \mathbb{R}$ and $\boldsymbol{z}_1, ...,\boldsymbol{z}_n \in \Omega$ is
$$\sum_{i=1}^n \sum_{j=1}^n c_i c_j k(\boldsymbol{z}_i, \boldsymbol{z}_j) \geq 0.$$
\end{enumerate}
\end{defi}

In this scenario, we contemplate an infinite set of distributions, prompting the question of whether such a collection possessing these characteristics exists. The affirmative answer to this query is provided by Theorem 12.1.3 in \cite{D}, where this entity is commonly recognized in literature as a Gaussian Process. It can be seen as infinite-dimensional generalization of the multivariate normal distribution.

\begin{defi} \label{2.1.2}
A random function $f:\Omega \rightarrow \mathbb{R}$ is a Gaussian Process\index{Gaussian Process,} (GP), if any finite set of function values has a multivariate normal distribution. We write $f \sim \mathcal{GP} (m,k)$\index{\textit{$\mathcal{GP} (m,k)$,}} for a Gaussian Process with mean function $m$ and kernel $k$.
\end{defi}


\begin{remark} \label{2.1.3}
As using a different mean function or kernel gives rise to a new Gaussian Process, there is a one-to-one correspondence between Gaussian Processes and pairs $(m,k)$ of mean function $m$ and kernel $k$.
\end{remark}

The probabilistic model we will use further on is $f \sim \mathcal{G}\mathcal{P} (m,k)$. Technically, we assume $f$ is a sample of a Gaussian Process, meaning it is a realization drawn from the distribution defined by a GP. Examples are provided in the following figures.

\begin{example} \label{2.1.4}
Examine the Gaussian Process $\mathcal{G}\mathcal{P} (0,k)$ with
\begin{equation}
k(\boldsymbol{x},\boldsymbol{x}') = \text{exp} \, \left( - \dfrac{1}{2} \| \boldsymbol{x} - \boldsymbol{x}' \|_2^2 \right). \label{eq. 2.1.1}
\end{equation}
The one-dimensional case is visualized in the top left panel of Figure \ref{fig. 1}, where the samples of this GP appear unremarkable. Since this GP does not consider any observations, this characteristic remains unchanged for most a priori Gaussian Processes contemplated in practice.
\end{example}

\begin{figure}[h]
	\centering
	\includegraphics[width=0.85\textwidth]{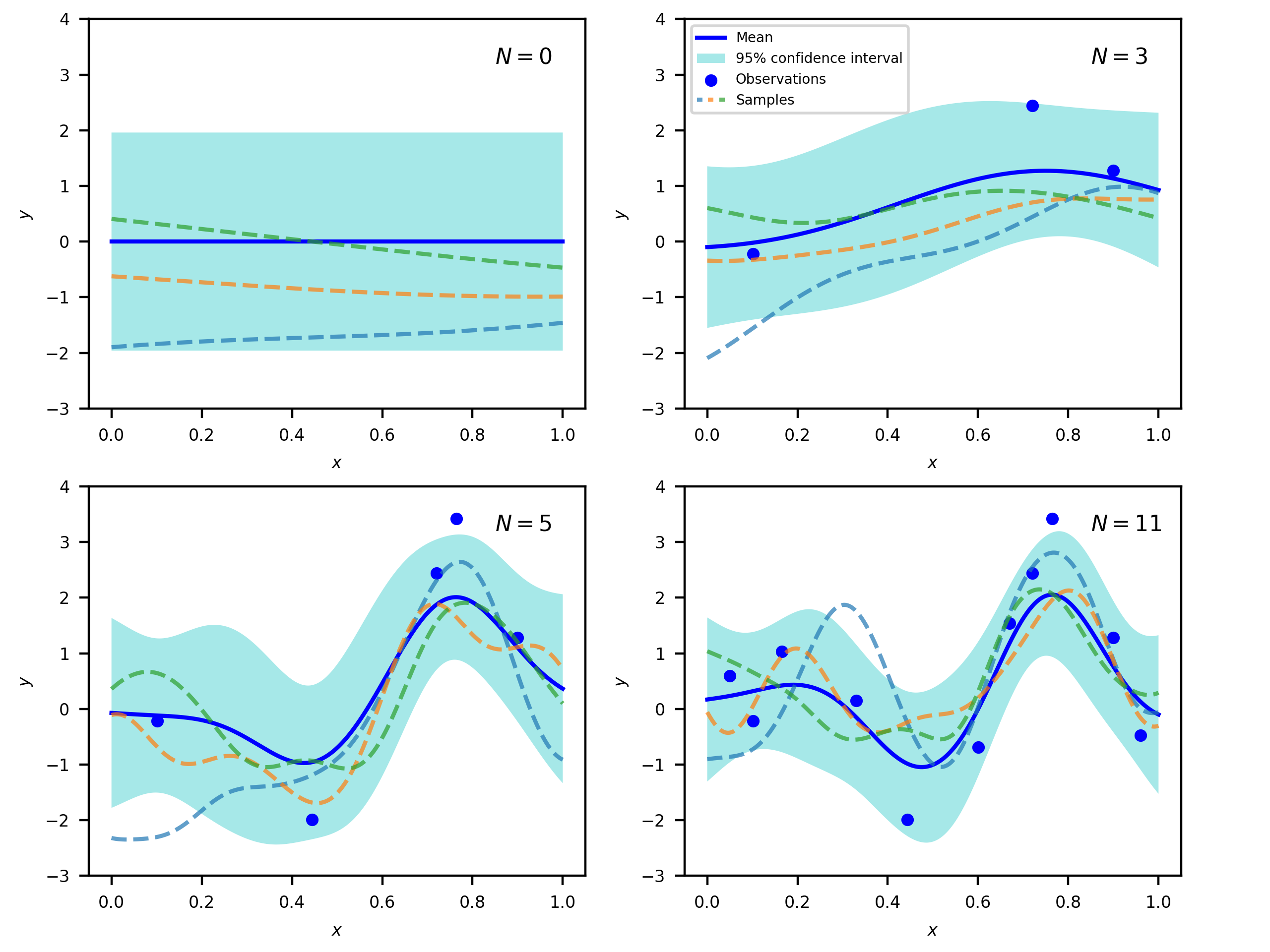}
	\caption{Visualization of different Gaussian Processes by plotting mean function, confidence intervals and samples. The top left panel shows $\mathcal{G}\mathcal{P} (0,k)$ with kernel given by equation (\ref{eq. 2.1.1}). The three other panels consider in addition noisy observations and visualize the associated posterior process $\mathcal{G}\mathcal{P} (m',k')$ for $\sigma^2 = 1$.}\label{fig. 1}
\end{figure}

The following proposition asserts that modeling $f \sim \mathcal{G}\mathcal{P} (m,k)$ enables to access the posterior distribution of the function values given observations. This approach allows for inference on the unknown function values.

\begin{proposition} \label{2.1.5}
Given $f \sim \mathcal{G}\mathcal{P} (m,k)$ and observations $\mathcal{D}_N = \{ (\boldsymbol{x}_1, y_1), ..., (\boldsymbol{x}_N, y_N) \}$. Then for the posterior function holds $f \, | \, \mathcal{D}_N \sim  \mathcal{G}\mathcal{P} (m',k')$\index{\textit{$f \, | \, \mathcal{D}_N$,}}, where \index{\textit{$m'$,}} \index{\textit{$k'$,}}
\begin{equation}
m'(\boldsymbol{x}) = m(\boldsymbol{x}) + \boldsymbol{k}(\boldsymbol{x})
(\boldsymbol{K}+ \sigma^2 \boldsymbol{I}_N )^{-1} (\boldsymbol{y}-\boldsymbol{m}), \label{eq. 2.1.2}
\end{equation}
\begin{equation}
k'(\boldsymbol{x}, \boldsymbol{x}') = k(\boldsymbol{x}, \boldsymbol{x}')- \boldsymbol{k}
(\boldsymbol{x})  (\boldsymbol{K}+ \sigma^2 \boldsymbol{I}_N )^{-1}  \boldsymbol{k}(\boldsymbol{x}') ^T \label{eq. 2.1.3}
\end{equation}
for $\boldsymbol{k}(\boldsymbol{x}) = [k(\boldsymbol{x},\boldsymbol{x}_1),...,k(\boldsymbol{x},\boldsymbol{x}_N)]$\index{\textit{$\boldsymbol{k}(\boldsymbol{x})$,}}, $\boldsymbol{m} = [m(\boldsymbol{x}_1), ..., m(\boldsymbol{x}_N)]^T$\index{\textit{$\boldsymbol{m}$,}}, $\boldsymbol{y} = [y_1, ..., y_N]^T$\index{\textit{$\boldsymbol{y}$,}} and 
$\boldsymbol{K} = [k(\boldsymbol{x}_i, \boldsymbol{x}_j)]_{1 \leq i,j \leq N}$\index{\textit{$\boldsymbol{K}$,}}. 
\end{proposition}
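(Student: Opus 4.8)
The plan is to deduce the statement from the standard fact about conditioning jointly Gaussian vectors, applied at an arbitrary finite collection of test points, and then to recognize the result as a Gaussian Process via Definition \ref{2.1.2}. Fix test points $\boldsymbol{z}_1,\dots,\boldsymbol{z}_n \in \Omega$ and write $\boldsymbol{f}_Z = [f(\boldsymbol{z}_1),\dots,f(\boldsymbol{z}_n)]^T$, $\boldsymbol{f}_X = [f(\boldsymbol{x}_1),\dots,f(\boldsymbol{x}_N)]^T$ and $\boldsymbol{m}_Z = [m(\boldsymbol{z}_1),\dots,m(\boldsymbol{z}_n)]^T$. Since $f\sim\mathcal{GP}(m,k)$ and the observation noise $\boldsymbol{\varepsilon}\sim\mathcal{N}(\boldsymbol{0},\sigma^2\boldsymbol{I}_N)$ is independent of $f$, the stacked vector $[\boldsymbol{f}_Z^T,\boldsymbol{y}^T]^T$ with $\boldsymbol{y}=\boldsymbol{f}_X+\boldsymbol{\varepsilon}$ is jointly Gaussian with mean $[\boldsymbol{m}_Z^T,\boldsymbol{m}^T]^T$ and covariance assembled from the blocks $\operatorname{Cov}(\boldsymbol{f}_Z)=\boldsymbol{K}_{ZZ}$, $\operatorname{Cov}(\boldsymbol{f}_Z,\boldsymbol{y})=\boldsymbol{K}_{ZX}$ and $\operatorname{Cov}(\boldsymbol{y})=\boldsymbol{K}+\sigma^2\boldsymbol{I}_N$, where all entries are prescribed by $k$ and the last identity uses independence of $f$ and $\boldsymbol{\varepsilon}$.

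Next I would invoke the Gaussian conditioning lemma: for a Gaussian vector $[\boldsymbol{a}^T,\boldsymbol{b}^T]^T$ with mean $[\boldsymbol{\mu}_a^T,\boldsymbol{\mu}_b^T]^T$ and covariance blocks $\boldsymbol{\Sigma}_{aa},\boldsymbol{\Sigma}_{ab},\boldsymbol{\Sigma}_{bb}$, with $\boldsymbol{\Sigma}_{bb}$ invertible, the conditional law of $\boldsymbol{a}$ given $\boldsymbol{b}$ is $\mathcal{N}\bigl(\boldsymbol{\mu}_a+\boldsymbol{\Sigma}_{ab}\boldsymbol{\Sigma}_{bb}^{-1}(\boldsymbol{b}-\boldsymbol{\mu}_b),\ \boldsymbol{\Sigma}_{aa}-\boldsymbol{\Sigma}_{ab}\boldsymbol{\Sigma}_{bb}^{-1}\boldsymbol{\Sigma}_{ba}\bigr)$. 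If the thesis wants this proved rather than cited, the fastest route is to write the joint Lebesgue density, divide by the marginal density of $\boldsymbol{b}$, and complete the square in the exponent, at which point the Schur complement appears on its own. Before applying the lemma one should observe that $\boldsymbol{K}+\sigma^2\boldsymbol{I}_N$ is invertible: $\boldsymbol{K}$ is symmetric positive semi-definite by Definition \ref{2.1.1}, hence its eigenvalues are nonnegative and those of $\boldsymbol{K}+\sigma^2\boldsymbol{I}_N$ are at least $\sigma^2>0$.

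Applying the lemma with $\boldsymbol{a}=\boldsymbol{f}_Z$ and $\boldsymbol{b}=\boldsymbol{y}$ shows that $\boldsymbol{f}_Z\mid\mathcal{D}_N$ is Gaussian with mean $\boldsymbol{m}_Z+\boldsymbol{K}_{ZX}(\boldsymbol{K}+\sigma^2\boldsymbol{I}_N)^{-1}(\boldsymbol{y}-\boldsymbol{m})$ and covariance $\boldsymbol{K}_{ZZ}-\boldsymbol{K}_{ZX}(\boldsymbol{K}+\sigma^2\boldsymbol{I}_N)^{-1}\boldsymbol{K}_{XZ}$. The $i$-th component of this mean is exactly $m'(\boldsymbol{z}_i)$ as in \eqref{eq. 2.1.2}, since the $i$-th row of $\boldsymbol{K}_{ZX}$ equals $\boldsymbol{k}(\boldsymbol{z}_i)$; the $(i,j)$-entry of the covariance is $k'(\boldsymbol{z}_i,\boldsymbol{z}_j)$ as in \eqref{eq. 2.1.3}, using symmetry of $k$ to identify the $j$-th column of $\boldsymbol{K}_{XZ}$ with $\boldsymbol{k}(\boldsymbol{z}_j)^T$. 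As $\boldsymbol{z}_1,\dots,\boldsymbol{z}_n$ were arbitrary, every finite-dimensional conditional distribution of $f\mid\mathcal{D}_N$ is the multivariate normal prescribed by the pair $(m',k')$, so, invoking the same existence result that underlies Definition \ref{2.1.2}, $f\mid\mathcal{D}_N\sim\mathcal{GP}(m',k')$; that $k'$ is a genuine kernel — symmetric and positive semi-definite — is automatic, being a Schur complement, equivalently a conditional covariance matrix.

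I expect the only real content to be the Gaussian conditioning lemma; everything else is bookkeeping with block matrices. The points that need care are the independence-of-noise step producing the $+\,\sigma^2\boldsymbol{I}_N$ term, the invertibility of $\boldsymbol{K}+\sigma^2\boldsymbol{I}_N$, and the observation that this consistent family of conditional normal laws genuinely defines a Gaussian Process, so that Definition \ref{2.1.2} legitimately applies.
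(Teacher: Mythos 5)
Your argument is correct and follows exactly the route the thesis relies on: the paper itself only cites the Appendices of Garnett's book and lists the same three ingredients you isolate (invertibility of $\boldsymbol{K}+\sigma^2\boldsymbol{I}_N$, Gaussianity of the noise, and independence of $[f(\boldsymbol{x}_1),\dots,f(\boldsymbol{x}_N)]$ from $[\varepsilon_1,\dots,\varepsilon_N]$), so you have simply written out in full the Gaussian-conditioning computation that the paper delegates to the reference. The one point worth keeping explicit, which you already flag, is the final consistency step identifying the family of conditional finite-dimensional normal laws with a Gaussian Process via the existence theorem behind Definition \ref{2.1.2}.
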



\begin{proof} Evidence is located split up in the Appendices of \cite{G}. The proof relies on properties of the (multivariate) normal distribution and the following assumptions:
\begin{itemize}
\item $\boldsymbol{K} + \sigma^2 \boldsymbol{I}_N$ is symmetric and positive definite (hence invertible),
\item $[\varepsilon_1,...,\varepsilon_N] \sim \mathcal{N} (0, \sigma^2 \boldsymbol{I}_N)$,
\item $[f(\boldsymbol{x}_1),...,f(\boldsymbol{x}_N)]$ and $[\varepsilon_1,...,\varepsilon_N]$ are independent. \qedhere
\end{itemize}
\end{proof}

Most applications assume a constant or even zero mean function in the model $f \sim \mathcal{G}\mathcal{P} (m,k)$. This assumption is common, because the mean function has relatively low impact on the Gaussian Process (see section \ref{Chap. 2.3.1}), a zero mean can be offset by adding an extra term to the kernel, it enhances model interpretability, and it simplifies calculations.

\begin{example} \label{2.1.6}
Consider again $\mathcal{G}\mathcal{P} (0,k)$ where $k$ is given by equation (\ref{eq. 2.1.1}). The other three panels in Figure \ref{fig. 1} show how the initial GP changes as we contemplate observations. The samples seem to be smooth and capture the trend the observations reflect. Notice, that we assume an initial zero mean function, but the mean functions of the posterior Gaussian Processes are not zero. A comparison between the last Gaussian Process and the real function, is pictured in Figure \ref{fig. 2}. The mean function is able to copy the fluctuation and the confidence intervals seem reliable. 
\end{example}

\begin{figure}[h]
	\centering
	\includegraphics[width=0.5\textwidth]{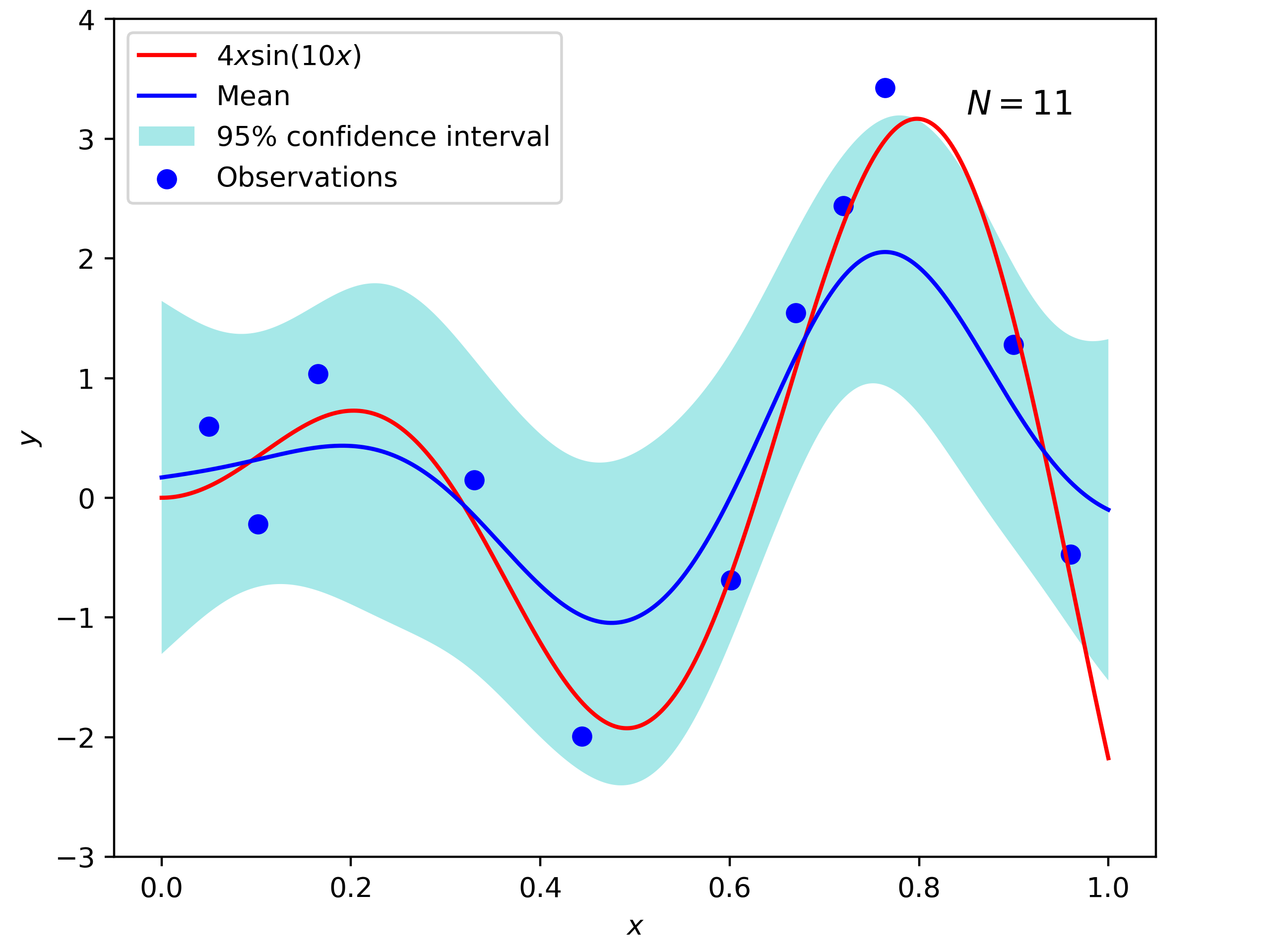}
	\caption{Comparison of the GP in the bottom right panel of Figure \ref{fig. 1} with the function $4x\sin(10x)$, from which the noisy observations were sampled.}
	\label{fig. 2}
\end{figure}

Accordingly, Gaussian Processes can be suitable for carrying out regression. As the posterior Gaussian Process yields
$$
f(\boldsymbol{x}) \, | \, \mathcal{D}_N \sim \mathcal{N} (m'(\boldsymbol{x}),k'(\boldsymbol{x},\boldsymbol{x}))
$$
for every $\boldsymbol{x} \in \Omega$, the prediction of $f(\boldsymbol{x})$ given data $\mathcal{D}_N$ will be the expectation $m'(\boldsymbol{x})$. This implies that our estimate of the unknown function $f$ is the posterior mean function $m'$. Table \ref{tab. 1} lists the pros and cons of this method.\\


\begin{table}
\centering
\begin{tabularx}{0.9\linewidth}{>{\parskip1ex}X@{\kern4\tabcolsep}>{\parskip1ex}X}
\toprule
\hfil\bfseries Pros
&
\hfil\bfseries Cons
\\\cmidrule(r{3\tabcolsep}){1-1}\cmidrule(l{-\tabcolsep}){2-2}

Access to quantiles, confidence intervals, standard deviation and correlation coefficients.\vspace{1ex} \par
Prediction is given in closed form.\vspace{1ex} \par
Interpretability of the prediction.\vspace{1ex} \par
Tiny chance of overfitting. Gaussian Processes have almost always a small amount of parameters.\vspace{1ex} \par
Model selection.
&
Inadequate model selection can lead to inferior results.\vspace{1ex} \par
Slow computation for large datasets. Computing the matrix inversion in equations (\ref{eq. 2.1.2}) and (\ref{eq. 2.1.3}) takes typically $\mathcal{O}(N^3)$ time.
\\\addlinespace[0.15cm] 
\bottomrule
\end{tabularx}
\caption{Pros and cons of using Gaussian Processes.}
\label{tab. 1}
\end{table}

The construction of Gaussian Processes inherits both theoretical and numerical challenges. Model selection constitutes a theoretical aspect, and to address this, we examine the convergence rate of Gaussian Process means in Chapter \ref{Chap. 3}. On the other hand, the computation of the matrix inversion presents a numerical challenge. We resolve this by utilizing samplets in Chapter \ref{Chap. 4}, which provide sparse matrices of $\mathcal{O}(N \log N)$ entries. In this context notice, that these two issues usually remain independent of each other. First you select the model, then you start the computation.


\subsection{Reproducing Kernel Hilbert Spaces} \label{Chap. 2.2}
The connection between Gaussian Processes and reproducing kernel hilbert spaces plays an important role for establishing a convergence result. In addition, we will gain an understanding of the smoothness property for specific Gaussian Processes, allowing for better interpretation of the posterior mean function. In 1950, Aronszajn laid the groundwork for reproducing kernel Hilbert spaces, as expounded in \cite{A}.

\begin{defi} \label{2.2.1}
Let $\Omega$ be an arbitrary non-empty set and $k:\Omega \times \Omega \rightarrow \mathbb{R}$ a function. A hilbert space $\mathcal{H}$\index{\textit{$\mathcal{H}$,}} of functions $g:\Omega \rightarrow \mathbb{R}$ with inner-product $\langle \cdot, \cdot \rangle_\mathcal{H}$ is called reproducing kernel hilbert space\index{RKHS,} (RKHS), if
 \begin{enumerate}[label=(\roman*),topsep=5pt]
	\setlength\itemsep{0.1mm}
\item for every $\boldsymbol{x} \in \Omega$ is $k(\cdot, \boldsymbol{x}) \in \mathcal{H}$,
\item the function has the reproducing property $\langle g, k(\cdot, \boldsymbol{x}) \rangle_\mathcal{H} = g(\boldsymbol{x})$ for every $g\in \mathcal{H}$ and $\boldsymbol{x} \in \Omega$.
\end{enumerate}
\end{defi}

The following proposition characterizes RKHS. Since it is relatively easy to construct kernels, it offers numerous indirect illustrations of reproducing kernel hilbert spaces.

\begin{proposition} \label{2.2.2}
\textbf{(Moore-Aronszajn Theorem)} For every kernel $k$ exists a unique RKHS and vice versa.
\end{proposition}

\begin{proof}
See \ref{Chap. 8.1} in the Appendix.
\end{proof}

Given a kernel, then the proof of Proposition \ref{2.2.2} shows how to construct the corresponding RKHS, implying the subsequent corollary.


\begin{corollary} \label{2.2.3}
The corresponding RKHS of the kernel $k$ is the completion of
$$
\left\{ g(\cdot) = \sum_{i=1}^{\infty} c_i k(\cdot,\boldsymbol{x}_i) \, : \, \boldsymbol{x}_i \in \Omega, c_i \in \mathbb{R}, \
\| g \|_\mathcal{H}^2 = \sum_{i,j=1}^{\infty} c_i c_j k(\boldsymbol{x}_i, \boldsymbol{x}_j) < \infty \right\}. 
$$
\end{corollary}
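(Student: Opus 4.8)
The goal is to identify the RKHS of a given kernel $k$ with the completion of the pre-Hilbert space
\[
\mathcal{H}_0 = \left\{ g = \sum_{i=1}^{n} c_i k(\cdot,\boldsymbol{x}_i) \ : \ n\in\mathbb{N},\ \boldsymbol{x}_i\in\Omega,\ c_i\in\mathbb{R} \right\},
\qquad
\left\langle \sum_i c_i k(\cdot,\boldsymbol{x}_i), \sum_j d_j k(\cdot,\boldsymbol{y}_j) \right\rangle_{\mathcal{H}_0} = \sum_{i,j} c_i d_j k(\boldsymbol{x}_i,\boldsymbol{y}_j).
\]
Since Proposition \ref{2.2.2} has already been proved, the plan is to simply read off the construction used there: the standard proof of Moore--Aronszajn builds the RKHS precisely by equipping $\mathcal{H}_0$ with the bilinear form above, checking it is a well-defined inner product, and then taking the abstract completion. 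So the corollary is essentially a restatement of the construction, and the work is to (a) recall why the displayed formula for $\|g\|_{\mathcal{H}}^2$ is forced, and (b) explain how the passage to the completion yields the series description with the stated summability condition.

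First I would note that on $\mathcal{H}_0$ the reproducing property $\langle g, k(\cdot,\boldsymbol{x})\rangle = g(\boldsymbol{x})$ holds by the very definition of the bilinear form, and that positive semi-definiteness of $k$ (Definition \ref{2.1.1}(ii)) gives $\langle g,g\rangle_{\mathcal{H}_0}\ge 0$; combined with the Cauchy--Schwarz inequality $|g(\boldsymbol{x})|^2 = |\langle g, k(\cdot,\boldsymbol{x})\rangle|^2 \le \langle g,g\rangle\, k(\boldsymbol{x},\boldsymbol{x})$ this shows $\langle g,g\rangle_{\mathcal{H}_0}=0$ forces $g\equiv 0$, so the form is genuinely an inner product and the value $\|g\|_{\mathcal{H}}^2 = \sum_{i,j} c_i c_j k(\boldsymbol{x}_i,\boldsymbol{x}_j)$ is independent of the representation of $g$. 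Then I would form the completion $\mathcal{H}$ of $\mathcal{H}_0$: a Cauchy sequence $(g_n)$ in $\mathcal{H}_0$ is pointwise Cauchy (again by the Cauchy--Schwarz bound), hence converges pointwise to some function $g:\Omega\to\mathbb{R}$, and one identifies the abstract completion with the space of such pointwise limits, checking that the limit function determines the equivalence class and that $k(\cdot,\boldsymbol{x})\in\mathcal{H}$ with the reproducing property persisting in the limit. This produces an RKHS, and uniqueness from Proposition \ref{2.2.2} tells us it \emph{is} the RKHS of $k$.

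Finally I would match this completion with the series description in the statement. Elements of $\mathcal{H}$ are limits of finite sums $g_n = \sum_{i=1}^{m_n} c_i^{(n)} k(\cdot,\boldsymbol{x}_i^{(n)})$; collecting all the points used, one writes each $g_n$ as a finite-support sequence of coefficients against a countable family $\{k(\cdot,\boldsymbol{x}_i)\}_{i\ge 1}$, and the Cauchy condition in $\mathcal{H}_0$ becomes a Cauchy condition for the coefficient vectors in the (weighted) norm induced by the Gram matrices, whose limit is exactly a (possibly infinite) expansion $g = \sum_{i=1}^\infty c_i k(\cdot,\boldsymbol{x}_i)$ with $\|g\|_{\mathcal{H}}^2 = \sum_{i,j} c_i c_j k(\boldsymbol{x}_i,\boldsymbol{x}_j) < \infty$; conversely any such convergent series lies in $\mathcal{H}$ as the limit of its partial sums.

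The main obstacle is the bookkeeping in this last step: the series $\sum_{i=1}^\infty c_i k(\cdot,\boldsymbol{x}_i)$ need not converge in $\mathcal{H}$ as an unconditionally summable family unless the points $\boldsymbol{x}_i$ are organized consistently across the approximating sequence, so one must be careful that the "completion" and the "set of convergent series" genuinely coincide rather than one merely containing the other — in particular that the finiteness of the quadratic form $\sum_{i,j} c_i c_j k(\boldsymbol{x}_i,\boldsymbol{x}_j)$ is the correct characterization and that convergence of this double sum is interpreted in the right (limit-of-finite-Gram-matrices) sense. Since all of this is already implicit in the construction carried out for Proposition \ref{2.2.2}, however, the corollary follows by inspection of that proof rather than requiring new arguments.
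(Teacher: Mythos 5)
Your proposal follows exactly the route the paper intends: the corollary is read off from the construction in the proof of Proposition \ref{2.2.2} (Appendix \ref{Chap. 8.1}), where $\mathcal{H}_0$ is the span of the $k(\cdot,\boldsymbol{x}_i)$ with the Gram inner product and the RKHS is its completion. Your additional care about identifying the abstract completion with the set of convergent infinite series (and the sense in which the double sum converges) is more detail than the paper itself supplies, but it is the same argument.
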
 

This Corollary suggests that the functions in the RKHS inherit the properties of the kernel. If the kernel is $\nu$ times differentiable, then most functions in the corresponding RKHS are also $\nu$ times differentiable.\\

Consider the same setting as in section \ref{Chap 2.1}, where the goal is to estimate a function $f:\Omega \rightarrow \mathbb{R}$ given observations $\mathcal{D}_N = \{ (\boldsymbol{x}_1, y_1), ..., (\boldsymbol{x}_N, y_N) \}$ by using Gaussian Processes. Proposition \ref{2.1.5} and the assumption $f \sim \mathcal{G}\mathcal{P} (0,k)$ imply that the prediction is given by
\begin{equation}
m'(\boldsymbol{x}) = \boldsymbol{k}(\boldsymbol{x})
(\boldsymbol{K}+ \sigma^2 \boldsymbol{I}_N )^{-1} \boldsymbol{y} = \boldsymbol{k}(\boldsymbol{x}) \boldsymbol{c} = \sum_{i=1}^N c_i k(\boldsymbol{x}, \boldsymbol{x}_i), \label{eq. 2.2.1} 
\end{equation}
where $\boldsymbol{c}=[c_1,...,c_N]^T \in \mathbb{R}^{N \times 1}$ satisfies $(\boldsymbol{K}+ \sigma^2 \boldsymbol{I}_N ) \boldsymbol{c} = \boldsymbol{y}$. Thus, while most samples of $\mathcal{G}\mathcal{P} (0,k)$ do not lie in the corresponding RKHS $\mathcal{H}$, the prediction does. The idea is to use a class of kernels that justify the assumption $f \in \mathcal{H}$ and then to investigate the distance $\| f-m' \|_\mathcal{H}$. To cover as many functions as possible, the class should consider different properties of $f$, such as smoothness, fluctuation and local behaviour.


\subsection{Model Selection} \label{Chap. 2.3}
Let $\Omega \subseteq \mathbb{R}^d$ be a non-empty set. When estimating $f:\Omega \rightarrow \mathbb{R}$ using observations and Gaussian Processes, the challenge arises on choosing the pair of functions $(m,k)$. This task is challenging, because it requires understanding the impact
of $(m,k)$ on the Gaussian Processes, and there is a vast array of possible mean and kernel functions to consider. This section explores various mean and kernel functions to establish a repertoire of sensible choices.


\subsubsection{Mean function} \label{Chap. 2.3.1}
The mean function $m:\Omega \rightarrow \mathbb{R}$ determines the expected value
$$m(\boldsymbol{x}) = \mathbb{E}[f(\boldsymbol{x})]$$
at any $\boldsymbol{x} \in \Omega$. The expected value is a crucial size, but in the literature numerous authors opt to keep the mean function constant. Initially, this might appear counterintuitive. However, the rationale behind this practice lies in the fact that the kernel predominantly shapes the Gaussian Process. Consequently, many methodologies prioritize the use of a simple mean function to balance against a more intricate kernel. It is also common to normalize the vector $\boldsymbol{y}$ before computing the Gaussian Process, which reinforces this assumption. Setting the prior mean function to zero improves interpretability and allows for an association of Gaussian Processes with the corresponding RKHS. Thus, we can leverage RKHS theory.


As we assume that $f$ is a sample of a Gaussian Process, its properties should match with the properties of the samples. Figure \ref{fig. 3} visualizes samples of different Gaussian Processes. The second plot changes only the mean function compared to the first plot, while the third plot changes only the kernel. It is evident that the change in kernel has a greater impact than the change in mean function. The change in mean merely results in a translation of the space.

\begin{figure}[h]
	\centering
	\includegraphics[width=\textwidth]{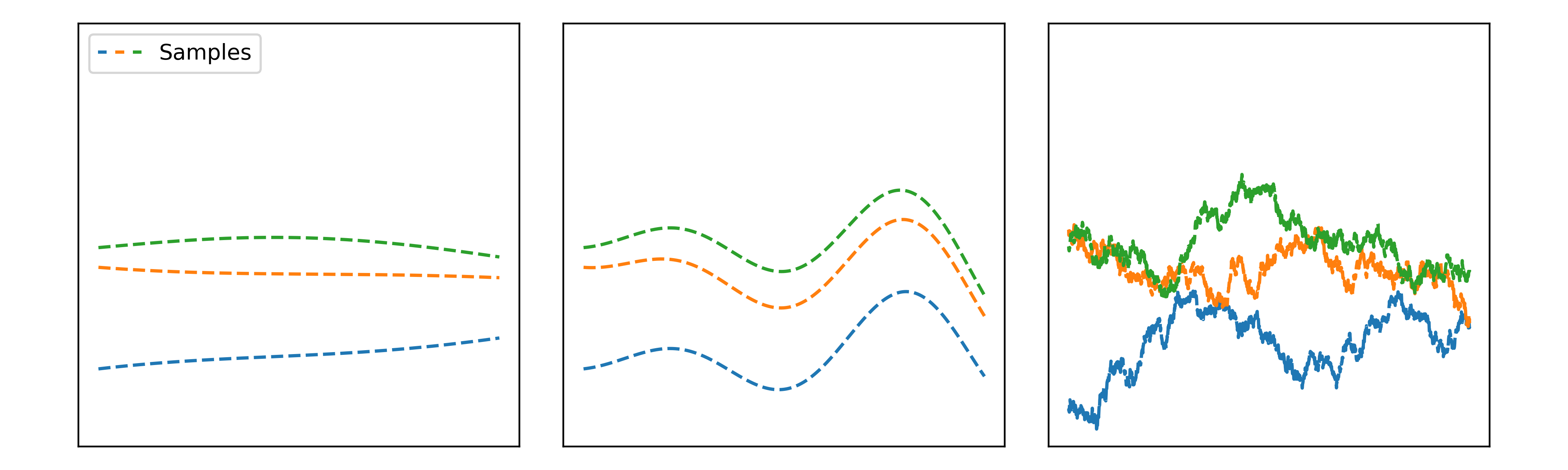}
	\caption{Samples of three Gaussian Processes. The middle plot alters the mean function, while the right plot modifies the kernel, in comparison to the left plot.}\label{fig. 3}
\end{figure}

You can also explore the impact of modifying the prior mean on the posterior Gaussian Process. However, since the posterior kernel function
$$k'(\boldsymbol{x}, \boldsymbol{x}') = k(\boldsymbol{x}, \boldsymbol{x}')- \boldsymbol{k}
(\boldsymbol{x})  (\boldsymbol{K}+ \sigma^2 \boldsymbol{I}_N )^{-1}  \boldsymbol{k}(\boldsymbol{x}')$$
is independent of the prior mean, similar behavior is expected. On the other hand, this function, along with the posterior mean
$$m'(\boldsymbol{x}) = m(\boldsymbol{x}) + \boldsymbol{k}(\boldsymbol{x})
(\boldsymbol{K}+ \sigma^2 \boldsymbol{I}_N )^{-1} (\boldsymbol{y}-\boldsymbol{m}).$$
depend on the prior kernel. This reinforces the idea that the prior kernel fundamentally shapes the Gaussian Process.

\begin{example} \label{2.3.1}
Examine the one-dimensional case of the Gaussian Processes $\mathcal{G}\mathcal{P} (0,k)$ and $\mathcal{G}\mathcal{P} (m,k)$ with $m(\boldsymbol{x}) = (\boldsymbol{x}-5)^2$ and
$$
k(\boldsymbol{x},\boldsymbol{x}') = \text{exp} \, \left( - \dfrac{1}{2} \| \boldsymbol{x} - \boldsymbol{x}' \|_2^2 \right).
$$
The top panels of Figure \ref{fig. 4} display the posterior Gaussian Processes for these two cases. Both Gaussian Processes appear very similar, and the sampled functions (the mean is also a sample) share the same properties. In contrast, the bottom panel visualizes the posterior Gaussian Process of $\mathcal{G}\mathcal{P} (0,k_{1/2,1})$ that uses a different kernel. Here, the sample functions fluctuate more and lack differentiability, highlighting the greater influence of the kernel on the sample properties.
\end{example}

\begin{figure}[h]
	\centering
	\includegraphics[width=0.8\textwidth]{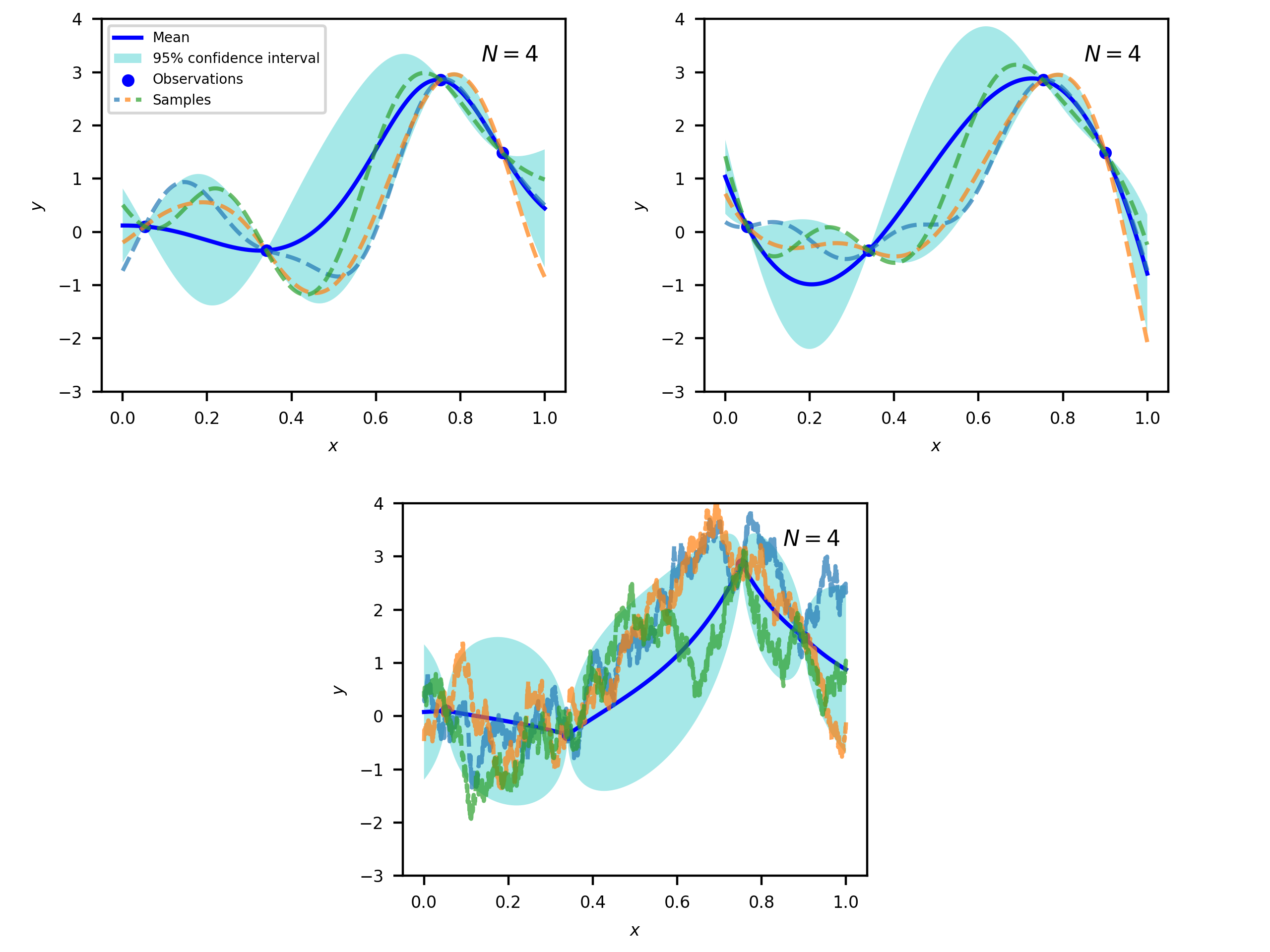}
	\caption{Visualization of posterior Gaussian Processes. The top left panel shows the posterior Process of $\mathcal{G}\mathcal{P} (0,k)$, the top right panel of $\mathcal{G}\mathcal{P} (m,k)$ with parameters from Example \ref{2.3.1} and the bottom panel of $\mathcal{G}\mathcal{P} (0,k_{1/2,1})$ with Matérn $1/2$ kernel. Observations were sampled from the same function as in Figure \ref{fig. 2}, but without noise.}\label{fig. 4}
\end{figure}

As the main part of a Gaussian Process is determined by the kernel, we will use a zero mean function for modelling. The benefits are, that we need to fit less parameters, have a relation to the theory of RKHS and can focus on the impact of kernels on Gaussian Processes.


\subsubsection{Kernel function}
The kernel function $k:\Omega \times \Omega \rightarrow \mathbb{R}$ determines the covariance
$$k(\boldsymbol{x},\boldsymbol{x}') = \text{Cov} \left(f(\boldsymbol{x}), f(\boldsymbol{x}') \right)$$
between any two function values. This covariance function specifies the similarity between two function values and for this reason how samples of a Gaussian Process behave. We will use mainly stationary kernels\index{Stationary kernel,}, that is the kernel $k$ can be written as
$$k(\boldsymbol{x},\boldsymbol{x}') = k(\boldsymbol{x} - \boldsymbol{x}' , \boldsymbol{0}) \text{ for every } \boldsymbol{x},\boldsymbol{x}' \in \Omega$$
and in this context we use the notation $k(\boldsymbol{\tau}) = k(\boldsymbol{\tau} , \boldsymbol{0})$ for $\boldsymbol{\tau} = \boldsymbol{x} - \boldsymbol{x}'$. A stationary kernel depends only on the difference $\boldsymbol{\tau}$ and not on the inputs $\boldsymbol{x}, \boldsymbol{x}'$ itself. This allows for inference on $k$ based on all pairs with the same difference $\boldsymbol{\tau}$. In 1955 Bochner characterised this kind of kernels and provided a method to construct them. Since the details are not relevant in my thesis, the interested reader can find more about the construction of stationary kernels in chapter four of \cite{RW} or in chapter two of \cite{S}. Table \ref{tab. 2} provides a list of commonly used stationary and non-stationary kernels.\\

\begin{table}[h]
    \centering
    \def\arraystretch{1.7}
    \begin{tabular}{|l|c|c|}
    \hline
    \textbf{Kernel name} & \centering\arraybackslash \textbf{Formula} $k(\boldsymbol{x}, \boldsymbol{x}')$ & \textbf{Parameters}\\[-0.1ex]
    \specialrule{1pt}{\abovetopsep}{\belowbottomsep}
    \text{Squared Exponential\index{Squared Exponential kernel,}} & \centering\arraybackslash $s^2 \exp\left(-\frac{\| \boldsymbol{\tau} \|_2^2}{2\ell^2}\right)$ & $s^2 >0, \ell >0$\\
    \hline
    Exponential\index{Exponential kernel,} & \centering\arraybackslash $s^2 \exp\left(-\frac{\| \boldsymbol{\tau} \|_2}{\ell}\right)$ & $s^2 >0, \ell >0$\\
    \hline
    Polynomial & $(s^2 + \boldsymbol{x}^T \boldsymbol{x}')^m$  & $s^2 \geq 0, m \in \mathbb{N}$ \\
    \hline
    Rational Quadratic & \centering\arraybackslash $\left( 1 + \frac{\| \boldsymbol{\tau} \|_2^2}{2a\ell^2} \right)^{-a}$  & $a>0,\ell>0$\\
    \hline
    Periodic\index{Periodic kernel,} & \centering\arraybackslash $s^2 \exp\left( - \frac{2}{\ell^2} \sum_{i=1}^d \sin^2 \left(\frac{\pi |x_i - x_i'|}{p} \right) \right)$  & $s^2>0 ,p>0, \ell>0$\\
    \hline
    Matérn & \centering\arraybackslash $s^2 \frac{2^{1-\nu}}{\Gamma(\nu)} 
\left( \frac{\sqrt{2} \nu \| \boldsymbol{\tau} \|_2}{\ell} \right)^\nu 
K_\nu \left( \frac{\sqrt{2} \nu \| \boldsymbol{\tau} \|_2}{\ell} \right)$  & $s^2>0, \nu>0, \ell>0$\\
    \hline
    \end{tabular}
    \caption{Examples of kernel functions.}
    \label{tab. 2}
\end{table}


With these fundamental kernels at our disposal, we can construct more complex kernels tailored to specific needs. The following lemma introduces several techniques for creating custom kernels, which can be particularly useful depending on the problem at hand.

\newenvironment{brsm}{
  \bigl[ \begin{smallmatrix} }{%
  \end{smallmatrix} \bigr]}

\begin{lemma} \label{2.3.2}
Let $(k_i)_{i\in \mathbb{N}}$ be a sequence of kernels, then the following functions are also kernels.
\begin{enumerate}[label=(\roman*),topsep=5pt]
	\setlength\itemsep{0.1mm}
	\item $a_1 k_1 + a_2 k_2$ for $a_1,a_2 \geq 0$,  \label{2.3.2 (i)}
	\item $k_1 \cdot k_2$, \label{2.3.2 (ii)}
	\item $\lim_{i \rightarrow \infty} k_i$ if it exists, \label{2.3.2 (iii)}
	\item $k:\Omega^2 \times \Omega^2 \rightarrow \mathbb{R}, k \left(
\begin{brsm} \boldsymbol{x}_1 \\ \boldsymbol{x}_1' \end{brsm},
\begin{brsm} \boldsymbol{x}_2 \\ \boldsymbol{x}_2' \end{brsm} \right)
= k_1(\boldsymbol{x}_1, \boldsymbol{x}_2)+k_2(\boldsymbol{x}_1', \boldsymbol{x}_2')$, \label{2.3.2 (iv)}
	\item $k:\Omega^2 \times \Omega^2 \rightarrow \mathbb{R}, k \left(
\begin{brsm} \boldsymbol{x}_1 \\ \boldsymbol{x}_1' \end{brsm},
\begin{brsm} \boldsymbol{x}_2 \\ \boldsymbol{x}_2' \end{brsm} \right)
= k_1(\boldsymbol{x}_1, \boldsymbol{x}_2) \cdot k_2(\boldsymbol{x}_1', \boldsymbol{x}_2')$. \label{2.3.2 (v)}
\end{enumerate}
\end{lemma}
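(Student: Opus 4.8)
The plan is to verify each of the five closure properties directly from Definition \ref{2.1.1}, in each case checking symmetry (condition (i)) and positive semi-definiteness (condition (ii)). Symmetry will in all cases be immediate from the symmetry of the $k_i$, so the substantive work is the quadratic-form inequality $\sum_{i,j} c_i c_j k(\boldsymbol{z}_i,\boldsymbol{z}_j)\ge 0$.

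For \ref{2.3.2 (i)}, I would simply write $\sum_{i,j} c_i c_j (a_1 k_1 + a_2 k_2)(\boldsymbol{z}_i,\boldsymbol{z}_j) = a_1 \sum_{i,j} c_i c_j k_1(\boldsymbol{z}_i,\boldsymbol{z}_j) + a_2 \sum_{i,j} c_i c_j k_2(\boldsymbol{z}_i,\boldsymbol{z}_j) \ge 0$, using $a_1,a_2\ge 0$ and positive semi-definiteness of $k_1,k_2$. For \ref{2.3.2 (iii)}, fix $n$, the $c_i$ and the $\boldsymbol{z}_i$; then $\sum_{i,j} c_i c_j k_\ell(\boldsymbol{z}_i,\boldsymbol{z}_j)\ge 0$ for every $\ell$, and since the finite sum is a continuous function of the matrix entries, the limit $\sum_{i,j} c_i c_j (\lim_\ell k_\ell)(\boldsymbol{z}_i,\boldsymbol{z}_j)$ is a limit of nonnegative numbers, hence nonnegative; symmetry passes to the limit likewise. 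Parts \ref{2.3.2 (iv)} and \ref{2.3.2 (v)} reduce to \ref{2.3.2 (i)} and \ref{2.3.2 (ii)} respectively once one observes that $(\boldsymbol{x}_1,\boldsymbol{x}_1')\mapsto k_1(\boldsymbol{x}_1,\boldsymbol{x}_2)$ viewed on $\Omega^2$ is still a kernel (it ignores the second block), so the sum/product of two such block-kernels is covered by the earlier parts; concretely, for \ref{2.3.2 (iv)}, given points $\begin{brsm}\boldsymbol{z}_i\\ \boldsymbol{z}_i'\end{brsm}$ one gets $\sum_{i,j} c_i c_j k_1(\boldsymbol{z}_i,\boldsymbol{z}_j) + \sum_{i,j} c_i c_j k_2(\boldsymbol{z}_i',\boldsymbol{z}_j')$, and each term is $\ge 0$ because $k_1,k_2$ are kernels on $\Omega$.

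The one genuinely non-trivial step is \ref{2.3.2 (ii)}, the product of two kernels, which will also be the engine behind \ref{2.3.2 (v)}. The standard argument is the Schur product theorem: if $A = [k_1(\boldsymbol{z}_i,\boldsymbol{z}_j)]_{i,j}$ and $B = [k_2(\boldsymbol{z}_i,\boldsymbol{z}_j)]_{i,j}$ are symmetric positive semi-definite matrices, then their Hadamard (entrywise) product $A\circ B$ is positive semi-definite. I would prove this by writing $B = \sum_{r} \lambda_r \boldsymbol{v}_r \boldsymbol{v}_r^T$ via the spectral theorem with $\lambda_r\ge 0$, so that for any vector $\boldsymbol{c}$,
\begin{equation}
\boldsymbol{c}^T (A\circ B)\boldsymbol{c} = \sum_{r}\lambda_r \sum_{i,j} c_i c_j A_{ij} (v_r)_i (v_r)_j = \sum_{r}\lambda_r\, \boldsymbol{w}_r^T A\, \boldsymbol{w}_r \ge 0,
\end{equation}
where $\boldsymbol{w}_r$ has entries $(w_r)_i = c_i (v_r)_i$ and $\boldsymbol{w}_r^T A \boldsymbol{w}_r\ge 0$ since $A$ is positive semi-definite. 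Applying this with $\boldsymbol{c} = [c_1,\dots,c_n]^T$ gives $\sum_{i,j} c_i c_j k_1(\boldsymbol{z}_i,\boldsymbol{z}_j) k_2(\boldsymbol{z}_i,\boldsymbol{z}_j)\ge 0$, which is exactly condition (ii) for $k_1\cdot k_2$; symmetry is clear. I expect this Schur-product step to be the main obstacle, and everything else to be bookkeeping; if one prefers to avoid the spectral theorem, an alternative is to use the outer-product/feature-map representation of positive semi-definite matrices, but the eigen-decomposition route is the cleanest to write out here.
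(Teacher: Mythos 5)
Your proposal is correct and takes essentially the same route as the paper: the only substantive step, part \ref{2.3.2 (ii)}, is handled in both by an eigendecomposition of one kernel matrix as $\sum_r \lambda_r \boldsymbol{v}_r\boldsymbol{v}_r^T$ with $\lambda_r\ge 0$ and absorbing the eigenvector entries into the coefficients so that positive semi-definiteness of the other kernel finishes the argument (you decompose the matrix of $k_2$ where the paper decomposes that of $k_1$, which is immaterial). The remaining parts are treated as routine in both, with your explicit reduction of \ref{2.3.2 (iv)} and \ref{2.3.2 (v)} to block-kernels being a harmless reorganization of what the paper calls ``straightforward'' and ``analogous.''
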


\begin{proof}
The verification of \ref{2.3.2 (i)}, \ref{2.3.2 (iii)} and \ref{2.3.2 (iv)} is straightforward as well as the symmetry property of the functions in \ref{2.3.2 (ii)} and \ref{2.3.2 (v)}. To show the positive semi-definiteness for the function in \ref{2.3.2 (ii)} let $n\in \mathbb{N}$, $c_1,..,c_n \in \mathbb{R}$, $\boldsymbol{z}_1,...,\boldsymbol{z}_n \in \Omega$ and $\boldsymbol{K} = [k_1(\boldsymbol{z}_i, \boldsymbol{z}_j)]_{1 \leq i,j \leq n}$ be the symmetric and positive semi-definite kernel matrix\index{Kernel matrix,}. Let
$$\boldsymbol{K} = UDU^T$$
be the eigendecomposition of $\boldsymbol{K}$ with orthogonal matrix $U=[u_{i,j}]_{1 \leq i,j \leq n} \in \mathbb{R}^{n \times n}$ and diagonal matrix $D$ with eigenvalues $\lambda_1,...,\lambda_n \geq 0$ on the diagonal. Then,
\begin{align*}
\sum_{i,j=1}^n c_ic_j k_1(\boldsymbol{z}_i, \boldsymbol{z}_j) k_2(\boldsymbol{z}_i, \boldsymbol{z}_j) 
= \sum_{i,j=1}^n &c_ic_j \left( \sum_{\ell=1}^n u_{i,\ell} \lambda_\ell u_{j,\ell} \right) k_2(\boldsymbol{z}_i, \boldsymbol{z}_j) \\
&= \sum_{\ell=1}^n \lambda_\ell \left( \sum_{i,j=1}^n (c_i u_{i,\ell}) (c_j u_{j,\ell}) k_2(\boldsymbol{z}_i, \boldsymbol{z}_j) \right) \geq 0,
\end{align*}
because $k_2$ is positive semi-definite. The proof that the function in \ref{2.3.2 (v)} is positive semi-definite goes analogous.
\end{proof}


\begin{example}
To estimate a function $f:[0,1] \rightarrow \mathbb{R}$, we assume that $f$ is a sample from a Gaussian Process, denoted as $f \sim \mathcal{GP}(0,k)$. Selecting the appropriate kernel is crucial, as the Gaussian Process samples should exhibit properties that closely align with those of $f$. Here are common recommendations: 
\begin{itemize}[topsep=5pt]
	\setlength\itemsep{0.1mm}
\item If the function values are locally similar, the squared exponential kernel may be used.
\item For functions with periodic behavior, the periodic kernel is more suitable. 
\item For functions displaying both periodic and local behavior, a product of the squared exponential and periodic kernels can be preferable. 
\end{itemize}
In Figure \ref{fig. 5}, samples of the considered Gaussian Processes are illustrated. The choice of kernel should aim to generate samples that reflect your understanding of $f$.
\end{example}

\begin{figure}[h]
	\centering
	\includegraphics[width=\textwidth]{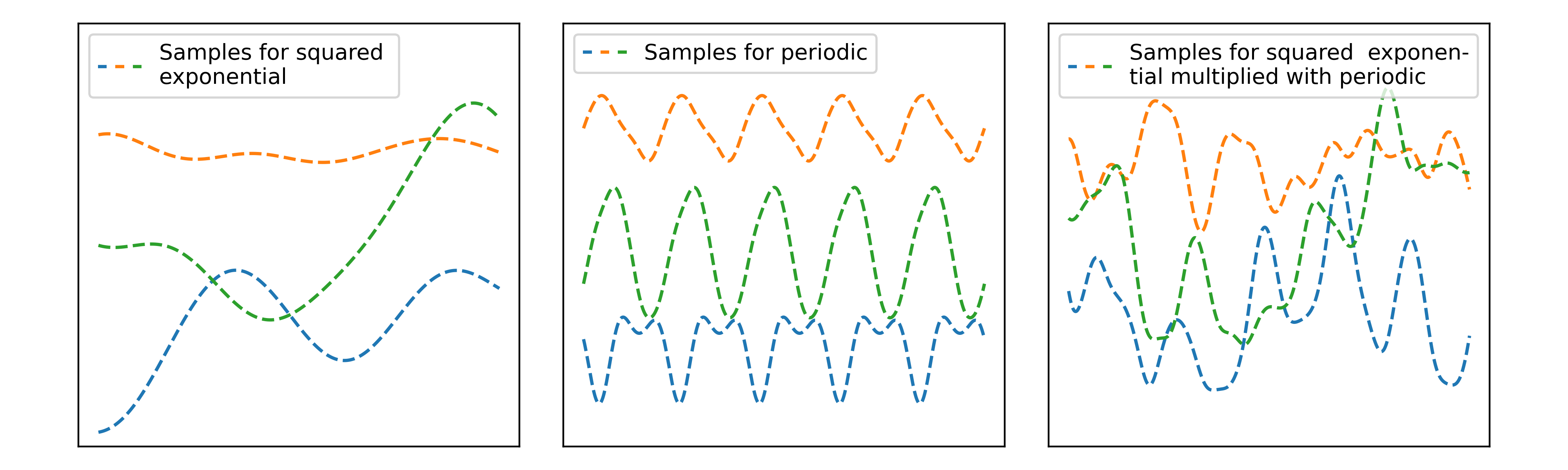}
	\caption{Samples of Gaussian Processes $\mathcal{GP}(0,k)$.}\label{fig. 5}
\end{figure}

A crucial class of kernels that holds significant importance in my thesis are the Matérn kernels, which are defined as follows.

\begin{defi} \label{2.3.4}
Let $\Omega \subseteq \mathbb{R}^d$ be a non-empty set. For $s^2, \nu>0$ and $\ell>0$ the Matérn kernels\index{Matérn kernel,} are
$$s^2 k_{\nu,\ell}(\boldsymbol{x},\boldsymbol{x}') 
= s^2 \dfrac{2^{1-\nu}}{\Gamma(\nu)} 
\left( \dfrac{\sqrt{2 \nu} \| \boldsymbol{x} - \boldsymbol{x}' \|_2}{\ell} \right)^\nu 
K_\nu \left( \dfrac{\sqrt{2 \nu} \| \boldsymbol{x} - \boldsymbol{x}' \|_2}{\ell} \right) \ 
\text{ for } \boldsymbol{x},\boldsymbol{x}' \in \Omega, \index{\textit{$k_{\nu,\ell}$,}}$$
where $\Gamma$ is the gamma function, $K_\nu$ the modified Bessel function of the second kind.
\end{defi}

This class of kernels arises because the Gaussian Process $f \sim \mathcal{GP}(0,k_{\nu,\ell})$ is a solution to a stochastic partial differential equation. The parameter $\nu$ acts as a smoothing parameter, with higher values of $\nu$ indicating a smoother $f$. Specifically:
\begin{itemize}[topsep=5pt]
	\setlength\itemsep{0.1mm}
\item The Gaussian Process $f \sim \mathcal{GP}(0,k_{\nu,\ell})$ is $n$-times mean square differentiable only if $\nu >n$, as detailed on page 85 in reference \cite{RW}.
\item The samples of $\mathcal{GP}(0,k_{\nu,\ell})$ are $n$-times continuously differentiable only if $\nu >n$, as explained in Corollary 2 of reference \cite{C}.
\end{itemize}
The parameter $\ell$, known as the length scale\index{Length scale,}, governs the function's fluctuation. As $\ell$ diminishes, the kernel undergoes more rapid changes, leading to faster variations in the values of $f$. The signal variance\index{Signal variance,} $s^2$, which scales the kernel, also affects the variations. A larger $s^2$ indicates greater similarity between function values.\\
The Matérn kernels are widely embraced due to their versatility. Unlike the periodic, squared exponential, or exponential kernel, which impose rigid assumptions on the underlying function, each Matérn kernel makes distinct assumptions. This suggests that there is likely a Matérn kernel that accurately captures the behavior of $f$. Additionally, the Matérn class enables fine control over the differentiability of $f$, a feature not offered by many other kernels.


\begin{remark} \label{2.3.5}
If $\nu=n+1/2$ for $n \in \mathbb{N}_{\geq 0}$, we can write
$$k_{\nu,\ell}(\boldsymbol{x},\boldsymbol{x}')  
= \text{exp} \, \left( - \dfrac{\sqrt{2\nu} \| \boldsymbol{x} - \boldsymbol{x}' \|_2}{\ell} \right)
\dfrac{\Gamma(n+1)}{\Gamma(2n+1)} \sum_{i=0}^n \dfrac{(n+i)!}{i!(n-i)!}
\left( \dfrac{\sqrt{8\nu} \| \boldsymbol{x} - \boldsymbol{x}' \|_2}{\ell} \right)^{n-i}.$$
That is a product of an exponential kernel and a polynomial in $\|\boldsymbol{\tau}\|_2$ of degree $n$. We have
\begin{align*}
k_{1/2,\ell} (\boldsymbol{x},\boldsymbol{x}') 
&= \exp\left(-\frac{\| \boldsymbol{x} - \boldsymbol{x}' \|_2}{\ell}\right), \\
k_{3/2,\ell} (\boldsymbol{x},\boldsymbol{x}') 
&= \left( 1+ \dfrac{\sqrt{3} \| \boldsymbol{x} - \boldsymbol{x}' \|_2}{\ell} \right)
\exp\left(-\frac{\sqrt{3} \| \boldsymbol{x} - \boldsymbol{x}' \|_2}{\ell}\right), \\
k_{5/2,\ell} (\boldsymbol{x},\boldsymbol{x}')
&= \left( 1+ \dfrac{\sqrt{5} \| \boldsymbol{x} - \boldsymbol{x}' \|_2}{\ell} + \dfrac{5 \| \boldsymbol{x} - \boldsymbol{x}' \|_2^2}{3 \ell^2} \right)
\exp\left(-\frac{\sqrt{5} \| \boldsymbol{x} - \boldsymbol{x}' \|_2}{\ell}\right).
\end{align*}
Thus, the class of Matérn kernels includes for $\nu=1/2$ the exponential kernel and for $\nu \rightarrow \infty$ the squared exponential kernel. To proof the latter, you can compare the spectral density functions of the kernels as indicated in section 4.2 of \cite{RW}.\\
Using the exponential kernel results in non-differentiable samples and using the squared exponential kernel results in smooth samples. While the parameter $\nu$ provides a variety of kernels, it is rare to ascertain whether $f$ is merely twice differentiable or more. As a result, one typically relies on $\nu = 5/2$ or the squared exponential kernel in such scenarios. This situation can be compared to the preference for cubic splines in the realm of splines, predominantly chosen for functions with multiple degrees of differentiability. Additionally, the computation of Matérn kernels becomes more intensive as $\nu$ increases. Figure \ref{fig. 6} illustrates samples of Gaussian Processes utilizing Matérn kernels.
\end{remark}

\begin{figure}[h]
	\centering
	\includegraphics[width=\textwidth]{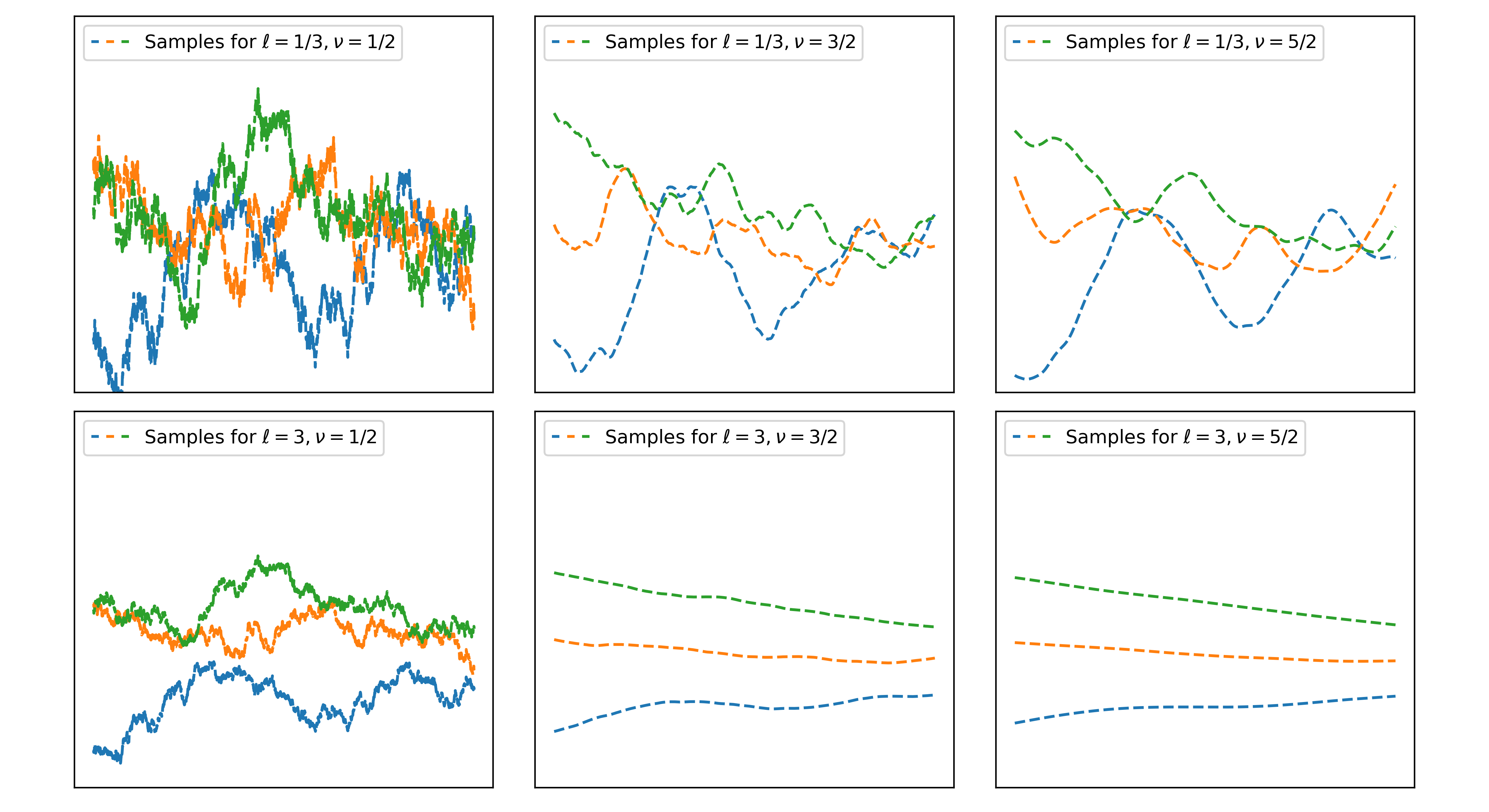}
	\caption{Samples of Gaussian Processes $\mathcal{GP}(0,k_{\nu,\ell})$.}\label{fig. 6}
\end{figure}


\subsubsection{Fitting Hyperparameters} \label{Chap. 2.3.3}
By choosing a kernel $k$ to model $f \sim \mathcal{GP} (0,k)$, the procedure is not done. As illustrated in Table \ref{tab. 2}, most kernels come with a set of hyperparameters that must be selected or learned to fully specify the model. For instance, opting for the squared exponential kernel
$$k(\boldsymbol{\tau}) = s^2 \exp\left(-\frac{\| \boldsymbol{\tau} \|_2^2}{2\ell^2}\right).$$
necessitates the choice of $s^2>0$ and $\ell>0$. When combining kernels, as discussed in Lemma \ref{2.3.2}, additional parameters may need to be selected. However, it is essential to exercise caution to prevent an excessive number of parameters, which could lead to overfitting. Here are three common methods for selecting hyperparameters:
\begin{itemize}[topsep=5pt]
	\setlength\itemsep{0.1mm}
\item Prior belief,
\item Cross-validation,
\item Maximum likelihood.\\
\end{itemize}

\noindent
\textbf{Prior belief:}\index{Prior belief,} The first method is the simplest. In Example \ref{2.1.4}, I selected $s^2=1$ and $\ell=1$, because I believed it could provide a reasonable estimate for $f$ given the specific problem at hand. While Figure \ref{fig. 2} may indicate that this selection was effective, it is generally not advisable to rely on this approach.\\

\noindent
\textbf{Cross-validation:}\index{Cross-validation,} The technique of cross-validation divides the training data into two disjoint sets, one for actual training, and the other for validation. To assess the model's performance, a loss function is minimized, often the residual sum of squares. Cross-validation is a fundamental concept in machine learning, and further details can be found in section 5.3 of \cite{RW}.\\

\noindent
\textbf{Maximum likelihood:}\index{Maximum likelihood,} It is perhaps the most commonly utilized concept. The objective is to maximize the probability that the model generates the observed data. Essentially maximizing the log marginal likelihood function\index{\textit{$p(\boldsymbol{y} \, \vert \, X)$,}}
\begin{equation}
\log p(\boldsymbol{y} \, | \, X)
= -\dfrac{1}{2} \boldsymbol{y}^T (\boldsymbol{K} + \sigma^2 \boldsymbol{I}_N)^{-1} \boldsymbol{y} - \dfrac{1}{2} \log | \boldsymbol{K} + \sigma^2 \boldsymbol{I}_N | - \dfrac{N}{2} \log 2\pi \label{eq. 2.3.1}
\end{equation}
for points $X = \{ \boldsymbol{x}_1, ..., \boldsymbol{x}_N \}$, $\boldsymbol{y}=[y_1,...,y_N]^T$ and kernel matrix $\boldsymbol{K}=[k(\boldsymbol{x}_i, \boldsymbol{x}_j)]_{1 \leq i,j \leq N}$, where $| \boldsymbol{K} + \sigma^2 \boldsymbol{I}_N | = \det (\boldsymbol{K} + \sigma^2 \boldsymbol{I}_N)$\index{\textit{$\vert \boldsymbol{K} + \sigma^2 \boldsymbol{I}_N \vert$,}}. The form of the likelihood function is justified, as the assumption $f \sim \mathcal{GP} (0,k)$ leads to $\boldsymbol{y} \, | \, X \sim \mathcal{N} (\boldsymbol{0},\boldsymbol{K} + \sigma^2 \boldsymbol{I}_N)$, making $p(\boldsymbol{y} \, | \, X)$ the corresponding probability density function of this multivariate normal distribution. Let $\hat{\boldsymbol{K}} = \boldsymbol{K} + \sigma^2 \boldsymbol{I}$\index{\textit{$\hat{\boldsymbol{K}}$,}}, then for a kernel hyperparameter $\vartheta \in \mathbb{R}$\index{\textit{$\vartheta$,}} is
$$\dfrac{\partial}{\partial \vartheta} \hat{\boldsymbol{K}}^{-1} = - \hat{\boldsymbol{K}}^{-1} \dfrac{\partial \hat{\boldsymbol{K}}}{\partial \vartheta} \hat{\boldsymbol{K}}^{-1} \quad \text{and} \quad 
\dfrac{\partial}{\partial \vartheta} \log | \hat{\boldsymbol{K}} | = \text{tr} \left( \hat{\boldsymbol{K}}^{-1} \dfrac{\partial \hat{\boldsymbol{K}}}{\partial \vartheta} \right),$$
where $\text{tr}(\boldsymbol{A})$\index{\textit{$\text{tr}(\boldsymbol{A})$,}} is the trace of the matrix $\boldsymbol{A}$. This leads to the derivative


\begin{equation}
\dfrac{\partial}{\partial \vartheta} \log p(\boldsymbol{y} \, | \, X) = \dfrac{1}{2} \boldsymbol{c}^T \dfrac{\partial \hat{\boldsymbol{K}}}{\partial \vartheta} \boldsymbol{c} - \dfrac{1}{2} \text{tr} \left( \hat{\boldsymbol{K}}^{-1} \dfrac{\partial \hat{\boldsymbol{K}}}{\partial \vartheta} \right), \label{eq. 2.3.2}
\end{equation}
where $\boldsymbol{c} = [c_1, ..., c_N]^T \in \mathbb{R}^{N \times 1}$ satisfies $(\boldsymbol{K} + \sigma^2 \boldsymbol{I} ) \boldsymbol{c} = \boldsymbol{y}$. \\
Optimizing (\ref{eq. 2.3.1}) is expensive, as computation requires evaluating $\boldsymbol{c}$ and the log determinant of $\hat{\boldsymbol{K}}$. This challenge is similar to that faced when computing the posterior Gaussian Process, where $\boldsymbol{c}$ must also be evaluated. If we compute (\ref{eq. 2.3.2}), the inversion of the matrix $\hat{\boldsymbol{K}} = \boldsymbol{K} + \sigma^2 \boldsymbol{I}$ is required. To overcome these challenges, we will simplify the matrix using samplets and then apply gradient descent methods to optimize (\ref{eq. 2.3.1}). \\

It is important to recognize that the parameters derived from these three methods are sensitive to small changes. Even slight alterations in the data or the fitting technique employed can lead to significant changes. Consequently, the error bounds we aim to establish must account for potential misspecification of hyperparameters.


\subsection{Bayesian Optimization}
Gaussian Processes play a pivotal role in Bayesian optimization\index{Bayesian optimization,}, which aims to discover the optimum
$$\boldsymbol{x}^* \in \underset{\boldsymbol{x} \in [0,1]^d}{\argmax} \, f(\boldsymbol{x})$$
of a black box function $f:[0,1]^d \rightarrow \mathbb{R}$. Here, a black-box function means that the underlying mapping is unknown, yet we can observe its values expensively as
$$y = f(\boldsymbol{x}) + \varepsilon,$$
where $\varepsilon \sim \mathcal{N}(0,\sigma^2)$. Notably, lacking access to the gradients of $f$, precludes the use of gradient descent methods. Black-box functions\index{Black-box function,} are pervasive across various fields such as hyperparameter tuning, robotics, asset allocation, advertising and experimental design. You might aim to identify the most effective experimental conditions to expedite drug discovery or enhance material design. Even if it is refining the taste of your coffee or optimizing plant growth in your garden, these scenarios pose black-box optimization challenges. Bayesian optimization now seeks to uncover the optimum by executing the following iterative process:
\begin{enumerate}
\item Start with observations $(\boldsymbol{x}_1,y_1), ..., (\boldsymbol{x}_N,y_N)$.
\item Utilize the observations to construct a surrogate model\index{Surrogate model,} for $f$.
\item Employ an acquisition function\index{Acquisition function,} to determine $\boldsymbol{x}_{N+1}$ and evaluate $y_{N+1}$.
\item Incorporate $(\boldsymbol{x}_{N+1}, y_{N+1})$ into the set of observations and repeat the process.
\end{enumerate}

\begin{figure}[H]
	\centering
	\includegraphics[width=0.88\textwidth]{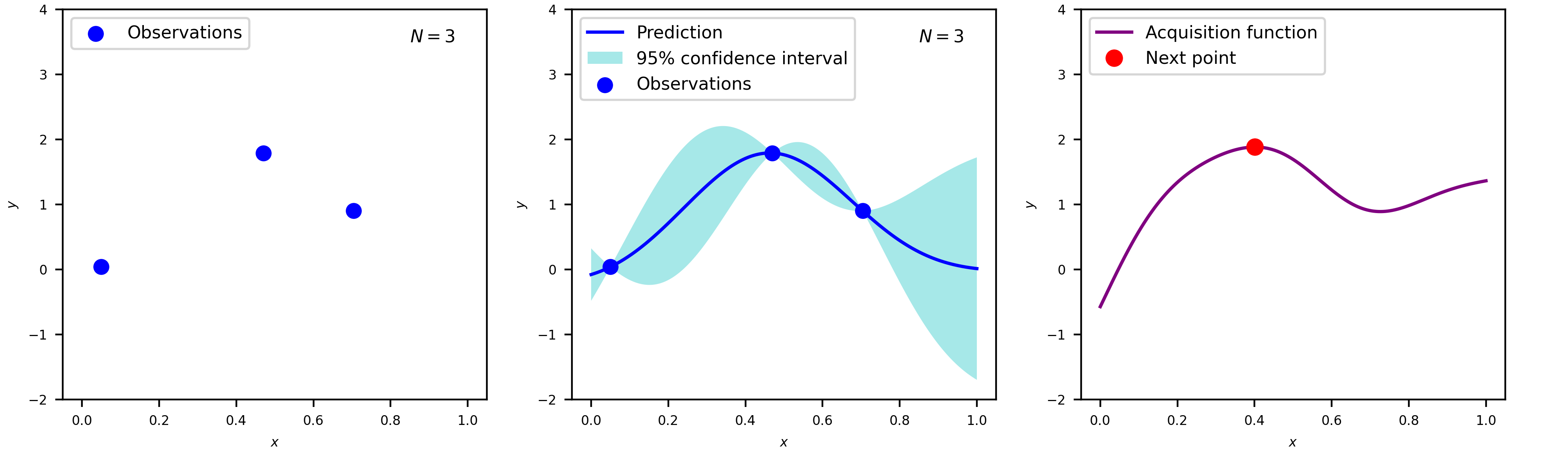}
	\caption{Iterative step of a one-dimensional Bayesian Optimization procedure.}\label{fig. 7}
\end{figure}


The Gaussian Process stands out as the preferred surrogate model due to its widespread adoption. Its key advantage lies in its probabilistic nature, enabling the quantification of prediction uncertainty. Leveraging this uncertainty, we can identify the optimal next point to observe, denoted as $\boldsymbol{x}_{\text{next}}$.

Once the surrogate model is established, we employ an acquisition function, denoted as $a:[0,1]^d \rightarrow \mathbb{R}$, to determine the subsequent observation point $\boldsymbol{x}_{\text{next}}$. More accurate we will set
$$\boldsymbol{x}_{\text{next}} = \underset{\boldsymbol{x} \in [0,1]^d}{\argmax} \, a(\boldsymbol{x}).$$
Various options for acquisition functions exist, with popular choices including expected improvement, probability of improvement and upper confidence bound, detailed in \cite{G}. Here, I will employ an alternative approach known as Thompson sampling\index{Thompson sampling,}. The rationale behind Thompson sampling lies in addressing the explore-exploit tradeoff inherent in all acquisition functions. Figure \ref{fig. 8} shows that two distinct regions are evident for selecting the next point, either within areas of known good solutions or within regions of high uncertainty.

\begin{figure}[h]
	\centering
	\includegraphics[width=0.5\textwidth]{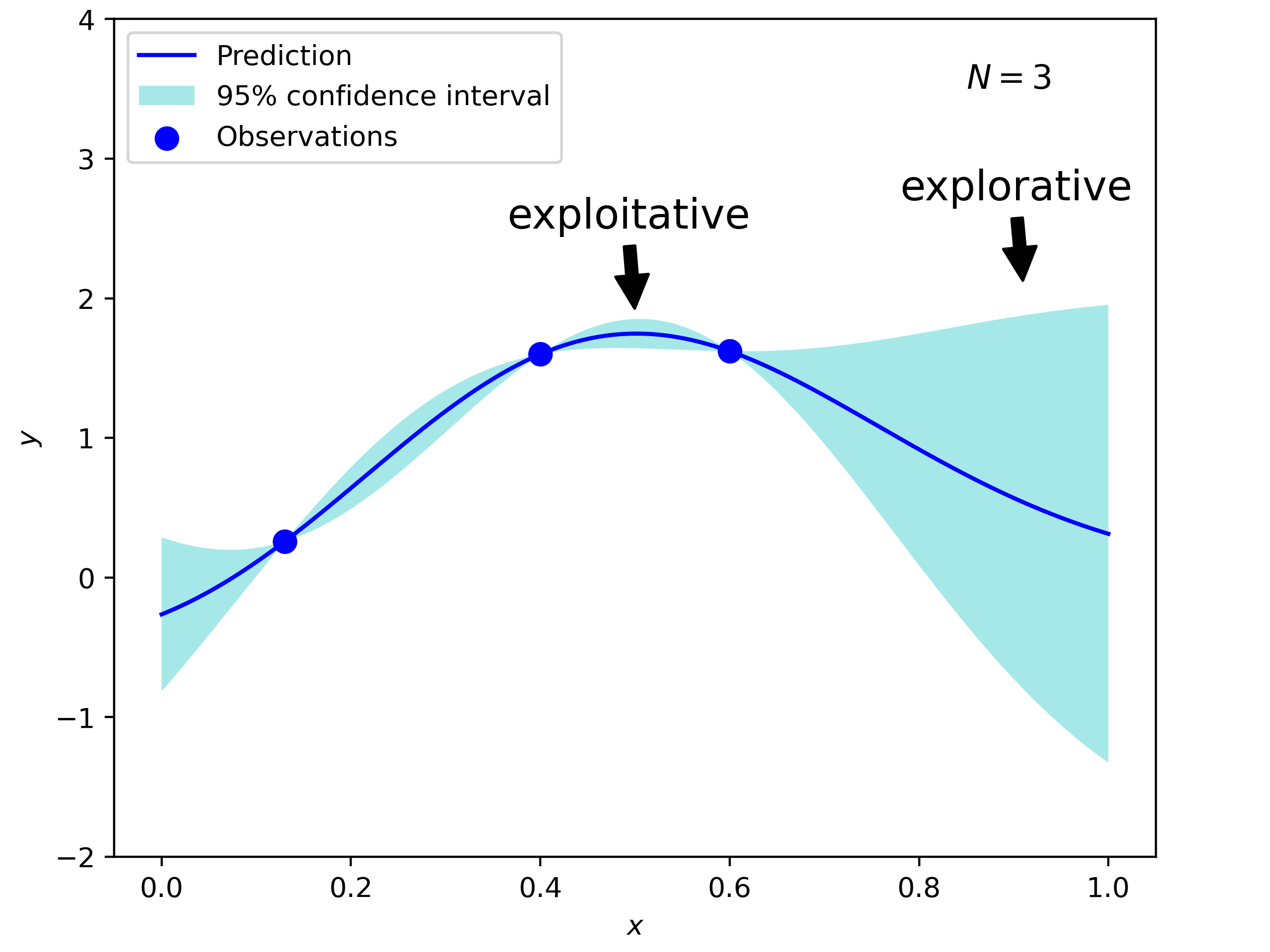}
	\caption{The acquisition function has to deal with an explore-exploit tradeoff.}\label{fig. 8}
\end{figure}

Thompson sampling effectively manages the explore-exploit tradeoff by employing a sample drawn from the Gaussian Process as the acquisition function. In previous sections samples from a Gaussian Process were already mentioned, but the method for their generation has not been outlined. Given that a sample from $\mathcal{GP}(m',k')$ is a function $a:[0,1]^d \rightarrow \mathbb{R}$ with potentially an infinite amount of function values, it is imperative to approximate this function. To do so, we represent the unit cube by using a dense grid, comprising points such as $\boldsymbol{z}_1,...,\boldsymbol{z}_n \in [0,1]^d$ and then sample from
$$\mathcal{N} \left( 
\begin{bmatrix}
	m'(\boldsymbol{z}_1) \\ \vdots \\ m'(\boldsymbol{z}_n) 
\end{bmatrix},
\begin{bmatrix}
	k'(\boldsymbol{z}_1,\boldsymbol{z}_1) & \cdots & k'(\boldsymbol{z}_1,\boldsymbol{z}_n) \\ \vdots 
	& \ddots & \vdots \\ k'(\boldsymbol{z}_n,\boldsymbol{z}_1) & \cdots 
	& k'(\boldsymbol{z}_n,\boldsymbol{z}_n) 
\end{bmatrix} \right).$$
The resultant $n$-dimensional vector represents the sample. It is essential for the value of $n$ to strike a balance, because it should be sufficiently high to approximate a sample from $\mathcal{GP}(m',k')$, yet not excessively so, as an overly high value would make the sampling process prohibitively time-consuming. Algorithm \ref{alg. 1} formulates the procedure.


\begin{algorithm}[h]
\DontPrintSemicolon
\caption{Thompson Sampling}\label{alg. 1}
\KwData{Points $Z = \{ \boldsymbol{z}_1, ..., \boldsymbol{z}_n \}$, mean and kernel functions $(m',k')$}
\KwResult{Function values $\boldsymbol{a}_Z$ of a sample $a \sim \mathcal{GP}(m',k')$}
\SetKwFunction{FMain}{GPsample}
\SetKwProg{Fn}{Function}{:}{}
\Fn{\FMain{$Z$}}{
	$\boldsymbol{m}' = [m'(\boldsymbol{z}_1), ..., m'(\boldsymbol{z}_n)]^T$\;
	$\boldsymbol{K}' = [k'(\boldsymbol{z}_i, \boldsymbol{z}_j)]_{1 \leq i,j \leq n}$\;
	$\boldsymbol{L} = \texttt{Cholesky}(\boldsymbol{K}')$\;
	$\boldsymbol{\varepsilon} = \texttt{Random.standard\_normal} (\text{size}=n)$\;
	$\boldsymbol{a}_Z = \boldsymbol{m}' + \boldsymbol{L} \boldsymbol{\varepsilon}$\;
  	\KwRet $\boldsymbol{a}_Z$\;
}
\textbf{end}
\end{algorithm}

\newpage
$ \ $
\newpage
\section{Convergence of Gaussian Process Means} \label{Chap. 3}
Let $\Omega \subseteq \mathbb{R}^d$ be a non-empty set. Given a dataset $\mathcal{D}_N$, we employ a Gaussian Process $\mathcal{GP}(0,k)$ to estimate an unknown function $f: \Omega \rightarrow \mathbb{R}$. To address the challenge of selecting an appropriate Gaussian Process model, this chapter explores the convergence rate of Gaussian Process means. In order to provide theoretical justification for our model choices in specific problem contexts, we address the following questions:
\begin{itemize}[topsep=5pt]
	\setlength\itemsep{0.1mm}
\item For which sets $\Omega \subseteq \mathbb{R}^d$ and functions $f: \Omega \rightarrow \mathbb{R}$ can we ensure convergence?
\item Which kernel $k$ should we use, and which hyperparameters are crucial?
\item What factors should be considered when fitting hyperparameters?
\item How does the data set $\mathcal{D}_N$ influence the convergence rate?
\item How does misspecified smoothness or likelihood affect the convergence rate?
\item What are the expected and optimal convergence rates?
\end{itemize}

The first convergence results were established by Narcowich et al. in \cite{N} for the interpolation setting, followed by van der Vaart et al. in \cite{V} for the regression case. Teckentrup further extended the interpolation results in \cite{TE} by investigating the impact of misspecified likelihoods and varying hyperparameters. Briol et al. later unified and expanded these findings in \cite{BGW} by relaxing certain assumptions, incorporating additional norms, and addressing both interpolation and regression settings. Given that the work of Briol et al. is the most recent, this chapter is based on their results.\\

The first section defines Sobolev spaces and summarizes the properties relevant to our study. We then proceed to the experimental setting, outlining key values and conditions for both the data and the Gaussian Process. Following this, we present the convergence guarantees for the interpolation and the regression settings, which are articulated through the theorems in this chapter and also available in reference \cite{BGW}. Last, we provide an overview of the conclusions that can be drawn from these guarantees.


\subsection{Sobolev spaces}
The convergence results will be formulated under the assumption that the unknown function $f$ belongs to a Sobolev space. Sobolev spaces are sufficiently broad to justify this assumption, which is crucial for deriving reliable convergence results. In this section, we define this spaces and explore relevant properties. Since the following statements are well-established, I rely on existing references rather than providing proofs.

\begin{defi} \label{3.1.1}
Let $\Omega \subseteq \mathbb{R}^d$ be an open and non-empty set and let $1 \leq p$. The $L^p$\index{\textit{$L^p$,}} space is defined as
$$L^p(\Omega) = \bigg\{ g: \Omega \rightarrow \mathbb{R} \, : \, g \text{ is measurable}, \|g\|_{L^p(\Omega)}^p = \int_{\Omega} |g(\boldsymbol{x})|^p \,d\boldsymbol{x} < \infty \bigg\}, $$
where functions that equal almost everywhere on $\Omega$ are identified. Similarly, define
$$L^\infty (\Omega) = \big\{ g: \Omega \rightarrow \mathbb{R} \, : \, g \text{ is measurable}, \|g\|_{L^\infty(\Omega)} < \infty \big\},$$
with the norm 
$$\|g\|_{L^\infty(\Omega)} = \esssup_{\boldsymbol{x} \in \Omega} |g(\boldsymbol{x})| = \inf \{ C >0 : |g(\boldsymbol{x})| \leq C \text{ for almost every } \boldsymbol{x} \in \Omega \}.$$
\end{defi}


The $L^p$ space is essential for constructing Sobolev spaces and examining convergence rates. It can be shown that $L^p(\Omega)$ is a Banach space and for $1 \leq p \leq q \leq \infty$ is $L^q(\Omega) \subseteq L^p(\Omega)$, as detailed in Theorem 2.14 and 2.16 of \cite{AF}.

\begin{defi} \label{3.1.2}
Let $\Omega \subseteq \mathbb{R}^d$ be an open and non-empty set, $1 \leq p < \infty$ and $\tau \in \mathbb{N}_{\geq 0}$. The integer Sobolev space\index{Sobolev space,} is defined as\index{\textit{$W_p^\tau$,}}
$$W_p^\tau (\Omega) = \Bigg\{ g \in L^p(\Omega) \ : \  \|g\|_{W_p^\tau (\Omega)}^p 
= \sum_{0 \leq | \boldsymbol{\beta} | \leq \tau} \|D^{\boldsymbol{\beta}} g\|_{L^p(\Omega)}^p < \infty \Bigg\},$$
where $| \boldsymbol{\beta} |= |\beta_1 | + ... + | \beta_d |$ \index{\textit{$ \vert \boldsymbol{\beta} \vert $,}} for $\boldsymbol{\beta} = (\beta_1,...,\beta_d) \in \mathbb{N}_{\geq 0}^d$ and $D^{\boldsymbol{\beta}}$\index{\textit{$D^{\boldsymbol{\beta}}$,}} is the generalized (or weak) derivative operator\index{Derivative Operator,}, see Definition 1.62 and 3.2 in \cite{AF}. Similarly, we define
$$W_\infty^\tau (\Omega) = \Big\{ g \in L^\infty(\Omega) \ : \  \|g\|_{W_\infty^\tau (\Omega)} 
= \max_{0 \leq | \boldsymbol{\beta} | \leq \tau } \|D^{\boldsymbol{\beta}} g\|_{L^\infty(\Omega)} < \infty \Big\}.$$
\end{defi}

For $\tau \in (0,\infty) \backslash \mathbb{N}_{\geq 1}$, we can also define fractional Sobolev spaces. The standard approach is to use real interpolation, as done in Chapter 7 of \cite{AF}, but to avoid a lengthy digression, I will use the approach in \cite{AL}. Nevertheless, for the  suitable sets $\Omega \subset \mathbb{R}^d$ which we use later, these two approaches are equivalent (see Theorem 14.2.3 in \cite{BS}).

\begin{defi} \label{3.1.3}
For $x \in \mathbb{R}$, define the floor function as $\lfloor x \rfloor = \max \{ m \in \mathbb{Z}: m \leq x \}$\index{\textit{$\lfloor x \rfloor$,}}. Let $\Omega \subseteq \mathbb{R}^d$ be an open and non-empty set, $1 \leq p < \infty$ and $\tau \in (0,\infty) \backslash \mathbb{N}_{\geq 1}$. The fractional Sobolev space can be defined as
$$W_p^\tau (\Omega) = \big\{ g \in W_p^{ \lfloor \tau \rfloor} \ : \ \|g\|^p_{W_p^\tau (\Omega)} 
= \|g\|^p_{W_p^{\lfloor \tau \rfloor} (\Omega)} + |g|^p_{W_p^\tau (\Omega)} < \infty \big\},$$
where we use the semi-norm
$$|g|^p_{W_p^\tau (\Omega)} = \sum_{ |\boldsymbol{\beta}| = \lfloor \tau \rfloor } \int_{ \Omega \times \Omega} \dfrac{| (D^{\boldsymbol{\beta}}g)(\boldsymbol{x}) - (D^{\boldsymbol{\beta}}g)(\boldsymbol{y}) |^p}{\| \boldsymbol{x}-\boldsymbol{y} \|_2^{d+p(\tau - \lfloor \tau \rfloor)}} \, d\boldsymbol{x} \,d\boldsymbol{y} .$$
Similarly, we define
$$W_\infty^\tau (\Omega) = \big\{ g \in W_\infty^{ \lfloor \tau \rfloor} \ : \ \|g\|_{W_\infty^\tau (\Omega)} 
= \max \{ \|g\|_{W_\infty^{\lfloor \tau \rfloor} (\Omega)}, |g|_{W_\infty^\tau (\Omega)} \} < \infty \big\},$$
where we use the semi-norm
$$|g|_{W_\infty^\tau (\Omega)} = \max_{|\boldsymbol{\beta}| = \lfloor \tau \rfloor} \esssup_{\boldsymbol{x}, \boldsymbol{y} \in \Omega, \boldsymbol{x} \neq \boldsymbol{y}} \dfrac{| (D^{\boldsymbol{\beta}}g)(\boldsymbol{x}) - (D^{\boldsymbol{\beta}}g)(\boldsymbol{y}) |}{\| \boldsymbol{x}-\boldsymbol{y} \|_2^{\tau - \lfloor \tau \rfloor}}.$$
\end{defi}

The Sobolev spaces are well known in the theory of partial differential equations, where derivatives are understood in a weak sense to ensure $W_p^\tau (\Omega)$ forms a Banach space. If a function is differentiable in the conventional sense, it is also weakly differentiable, with its weak derivative matching the conventional derivative. This space includes many differentiable functions, as highlighted in the following proposition. For instance, if $\Omega$ is bounded, it contains all polynomials of any degree.


\begin{proposition} \label{3.1.4}
Let $\tau \in \mathbb{N}_{\geq 1}$ and $1 \leq p < \infty$. The Sobolev Space $W_p^\tau (\Omega)$ is the completion of
$$\Bigg\{ g \in C^\tau(\Omega) \ : \ \sum_{0 \leq | \boldsymbol{\beta} | \leq \tau} \|D^{\boldsymbol{\beta}} g\|_{L^p(\Omega)}^p < \infty \Bigg\}.$$
\end{proposition}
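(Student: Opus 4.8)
The goal is to show that the integer Sobolev space $W_p^\tau(\Omega)$, defined in Definition \ref{3.1.2} via weak derivatives, coincides with the completion of the space of smooth functions $g \in C^\tau(\Omega)$ with finite Sobolev norm, taken with respect to the $W_p^\tau(\Omega)$-norm. The plan is to prove two inclusions: first, that smooth functions with finite norm sit inside $W_p^\tau(\Omega)$; second, that they are dense, so that the completion of the smooth space equals all of $W_p^\tau(\Omega)$. The first inclusion is the routine direction: if $g \in C^\tau(\Omega)$, then its classical derivatives $D^{\boldsymbol{\beta}}g$ for $|\boldsymbol{\beta}| \leq \tau$ coincide with its weak derivatives (integration by parts against test functions in $C_c^\infty(\Omega)$ has no boundary term), so the hypothesis $\sum_{|\boldsymbol{\beta}| \leq \tau} \|D^{\boldsymbol{\beta}}g\|_{L^p}^p < \infty$ is exactly the statement $g \in W_p^\tau(\Omega)$. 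Since $W_p^\tau(\Omega)$ is a Banach space (complete), the closure of the smooth subspace inside it is again a subspace of $W_p^\tau(\Omega)$.

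The substance is the density direction: every $g \in W_p^\tau(\Omega)$ can be approximated in the $W_p^\tau$-norm by functions in $C^\infty(\Omega) \cap W_p^\tau(\Omega) \subseteq C^\tau(\Omega) \cap W_p^\tau(\Omega)$. The standard tool is mollification. I would fix a standard mollifier $\rho_\varepsilon$ and, to handle the fact that $\Omega$ is merely open (so $g*\rho_\varepsilon$ need not be defined near $\partial\Omega$), use the Meyers–Serrin argument: exhaust $\Omega$ by compactly contained open sets $\Omega_k$, take a subordinate partition of unity $\{\psi_k\}$, mollify each piece $\psi_k g$ at a scale fine enough that it stays supported in a slightly larger compactly contained set and contributes at most $\varepsilon/2^k$ to the $W_p^\tau$-error, and sum. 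The key analytic fact driving this is that mollification commutes with weak differentiation, $D^{\boldsymbol{\beta}}(g * \rho_\varepsilon) = (D^{\boldsymbol{\beta}}g) * \rho_\varepsilon$ on compactly contained subsets, together with $\|h * \rho_\varepsilon - h\|_{L^p(\Omega')} \to 0$ for each $h \in L^p$. Applying this to $h = D^{\boldsymbol{\beta}}g$ for all $|\boldsymbol{\beta}| \leq \tau$ simultaneously yields $W_p^\tau$-convergence locally, and the partition-of-unity patching upgrades it to global convergence on $\Omega$. This is precisely the Meyers–Serrin "$H = W$" theorem, so in practice I would cite it (e.g.\ Theorem 3.17 in \cite{AF}) rather than reproduce the patching in detail.

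The main obstacle — and the only place genuine care is needed — is the behavior near the boundary: a naive global mollification $g * \rho_\varepsilon$ is undefined on the $\varepsilon$-collar of $\partial\Omega$, and no regularity or boundedness assumption on $\Omega$ is available to extend $g$ across $\partial\Omega$ or to approximate up to the boundary. The Meyers–Serrin partition-of-unity device is exactly what sidesteps this: one never needs a global mollification scale, only local ones, and one never needs $g$ outside $\Omega$. I would therefore structure the writeup as (i) classical $=$ weak derivatives for $C^\tau$ functions, hence $C^\tau(\Omega)\cap W_p^\tau(\Omega) \subseteq W_p^\tau(\Omega)$ and its closure lies in $W_p^\tau(\Omega)$ by completeness; (ii) density via Meyers–Serrin, citing \cite{AF}, noting that the approximants produced are in $C^\infty(\Omega) \subseteq C^\tau(\Omega)$ and have finite norm since they converge in $W_p^\tau(\Omega)$; (iii) conclude that the closure of the displayed smooth set is all of $W_p^\tau(\Omega)$, which is the claim. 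The finite-norm qualifier in the displayed set is automatic and need only be remarked upon.
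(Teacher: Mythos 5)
Your argument is correct and is essentially the paper's proof: the paper simply cites Theorem 3.17 in \cite{AF}, which is exactly the Meyers--Serrin density theorem you describe and ultimately invoke yourself. Your elaboration of the mollification-plus-partition-of-unity mechanism is accurate but not a different route.
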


\begin{proof}
See Theorem 3.17 in \cite{AF}.
\end{proof}

To effectively investigate the Sobolev spaces, it is essential for $\Omega$ to satisfy the interior cone condition (see \ref{Chap. 8.2} in the Appendix). As indicated by the following proposition, a function that is sufficiently often weakly differentiable is also differentiable in the conventional sense.

\begin{proposition} \label{3.1.5}
Let $\Omega\subset \mathbb{R}^d$ be an open and non-empty set that satisfies the interior cone condition, $1 \leq p < \infty$ and $\lfloor \tau \rfloor >n+d/p$ for $n \in \mathbb{N}_{\geq 0}$. Then, 
\begin{enumerate}[topsep=5pt]
	\setlength\itemsep{0.1mm}
\item $W_p^\tau (\Omega) \subseteq C_B^{n}(\Omega),$
\item $W_p^\tau (\Omega) \subseteq W_q^{n}(\Omega) \subseteq L^q(\Omega)$ for $p \leq q$,
\end{enumerate}
where $C_B^{n}(\Omega) \subseteq C^{n}(\Omega)$\index{\textit{$C_B^{n}(\Omega)$,}} is the space of functions having bounded and continuous derivatives up to order $n$ on $\Omega$.
\end{proposition}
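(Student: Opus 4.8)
This is the classical Sobolev embedding theorem; the shortest route is to quote it from \cite{AF}, but I outline the argument. The plan is to reduce to integer smoothness, then (by density) to an a priori inequality for the $C^{m}(\Omega)$-functions, to prove that inequality through the Sobolev integral representation tied to the cone condition, and finally to deduce the $L^{q}$ statement by an $L^{p}$--$L^{\infty}$ interpolation. Concretely, put $m := \lfloor \tau \rfloor$, which is an integer with $m > n + d/p$, hence $m \ge 1$. Definition~\ref{3.1.3} gives $W_{p}^{\tau}(\Omega) \subseteq W_{p}^{m}(\Omega)$ with $\|g\|_{W_{p}^{m}(\Omega)} \le \|g\|_{W_{p}^{\tau}(\Omega)}$, so it suffices to prove (i) and (ii) with the integer $m$ in place of $\tau$. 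By Proposition~\ref{3.1.4}, $W_{p}^{m}(\Omega)$ is the completion of the $C^{m}(\Omega)$-functions of finite norm, so it is enough to establish $\|u\|_{C_{B}^{n}(\Omega)} \le C\|u\|_{W_{p}^{m}(\Omega)}$ and $\|u\|_{W_{q}^{n}(\Omega)} \le C\|u\|_{W_{p}^{m}(\Omega)}$ for $u \in C^{m}(\Omega)$ of finite norm and then pass to the limit; the limit produced in the Banach space $C_{B}^{n}(\Omega)$ agrees almost everywhere with the $W_{p}^{m}$-limit (since $L^{p}$-convergence passes to an a.e.\ convergent subsequence), which supplies the continuous representative demanded by (i).

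For the a priori estimate, fix such a $u$ and a point $\boldsymbol{x} \in \Omega$. The interior cone condition supplies a finite cone $C_{\boldsymbol{x}} \subseteq \Omega$ with vertex $\boldsymbol{x}$, all such cones being congruent to one fixed cone. Taylor expansion with integral remainder along the rays of $C_{\boldsymbol{x}}$, averaged against a fixed smooth bump supported in the cone, yields a representation
\[
u(\boldsymbol{x}) = \sum_{|\boldsymbol{\beta}| \le m-1} \int_{C_{\boldsymbol{x}}} \phi_{\boldsymbol{\beta}}(\boldsymbol{y})\,(D^{\boldsymbol{\beta}}u)(\boldsymbol{y})\,d\boldsymbol{y} \;+\; \sum_{|\boldsymbol{\beta}| = m} \int_{C_{\boldsymbol{x}}} \psi_{\boldsymbol{\beta}}(\boldsymbol{x},\boldsymbol{y})\,(D^{\boldsymbol{\beta}}u)(\boldsymbol{y})\,d\boldsymbol{y},
\]
with the $\phi_{\boldsymbol{\beta}}$ bounded and $|\psi_{\boldsymbol{\beta}}(\boldsymbol{x},\boldsymbol{y})| \le C\,\|\boldsymbol{x}-\boldsymbol{y}\|_{2}^{\,m-d}$. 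The hypothesis $m > d/p$ is exactly what places this top-order kernel in $L^{p'}(C_{\boldsymbol{x}})$ with $1/p + 1/p' = 1$, so Hölder's inequality gives $|u(\boldsymbol{x})| \le C\|u\|_{W_{p}^{m}(C_{\boldsymbol{x}})} \le C\|u\|_{W_{p}^{m}(\Omega)}$, with $C$ independent of $\boldsymbol{x}$ by congruence of the cones. Applying the same representation to $D^{\boldsymbol{\alpha}}u$ for each $|\boldsymbol{\alpha}| = j \le n$, now with order $m - j$ (legitimate since $m - j \ge m - n > d/p$), gives $\|D^{\boldsymbol{\alpha}}u\|_{L^{\infty}(\Omega)} \le C\|u\|_{W_{p}^{m}(\Omega)}$; summing over $|\boldsymbol{\alpha}| \le n$ proves the $C^{m}$-level inequality, and the density argument above extends (i) to all of $W_{p}^{\tau}(\Omega)$.

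For (ii) with $p \le q < \infty$, note that $W_{q}^{n}(\Omega) \subseteq L^{q}(\Omega)$ is immediate from Definition~\ref{3.1.2}, so only $W_{p}^{m}(\Omega) \subseteq W_{q}^{n}(\Omega)$ must be shown; for $|\boldsymbol{\alpha}| = j \le n$ one has $D^{\boldsymbol{\alpha}}u \in L^{p}(\Omega)$ trivially and $D^{\boldsymbol{\alpha}}u \in L^{\infty}(\Omega)$ by the bound just obtained, whence $\|D^{\boldsymbol{\alpha}}u\|_{L^{q}(\Omega)}^{q} \le \|D^{\boldsymbol{\alpha}}u\|_{L^{\infty}(\Omega)}^{q-p}\|D^{\boldsymbol{\alpha}}u\|_{L^{p}(\Omega)}^{p}$ is finite and controlled by $\|u\|_{W_{p}^{m}(\Omega)}$; summing over $|\boldsymbol{\alpha}| \le n$ gives the claim, and $q = \infty$ is already contained in (i). The genuinely technical point is the Sobolev integral representation over the cone, including the singular bound $|\psi_{\boldsymbol{\beta}}| \lesssim \|\boldsymbol{x}-\boldsymbol{y}\|_{2}^{m-d}$ and the uniformity of all constants in $\boldsymbol{x}$; after that, only multi-index bookkeeping and the completion argument remain. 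A cleaner-looking alternative is to construct a bounded extension operator $W_{p}^{m}(\Omega) \to W_{p}^{m}(\mathbb{R}^{d})$ and invoke the embedding on $\mathbb{R}^{d}$, but building such an operator under merely the cone condition is itself delicate, so the direct representation argument is the more economical route.
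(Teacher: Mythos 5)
Your argument is correct, but it takes a different route from the thesis: the thesis proves this proposition in two lines, by observing that the interior cone condition coincides with the cone condition of Definition 4.6 in \cite{AF}, invoking Theorem 4.12 there for the integer case, and prepending the inclusion $W_p^\tau(\Omega) \subseteq W_p^{\lfloor \tau \rfloor}(\Omega)$ — exactly the ``shortest route'' you mention and then decline. What you do instead is unpack the classical proof: reduction to $m = \lfloor \tau \rfloor$, density of smooth functions via Proposition \ref{3.1.4}, the Sobolev integral representation over the cone with the singular kernel bound $|\psi_{\boldsymbol{\beta}}(\boldsymbol{x},\boldsymbol{y})| \lesssim \|\boldsymbol{x}-\boldsymbol{y}\|_2^{m-d}$, Hölder with the observation that $m > d/p$ is precisely what puts that kernel in $L^{p'}$, and the $L^p$–$L^\infty$ interpolation $\|f\|_{L^q}^q \le \|f\|_{L^\infty}^{q-p}\|f\|_{L^p}^p$ for part (ii). The exponent arithmetic and the passage to the limit in $C_B^n(\Omega)$ are all handled correctly, and your sketch has the virtue of making visible exactly where the cone condition and the hypothesis $\lfloor\tau\rfloor > n + d/p$ enter — something the bare citation hides. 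The trade-off is that your genuinely technical step, the integral representation with uniform constants over congruent but variably oriented cones, is itself asserted rather than proved, so in the end you are leaning on the same machinery from \cite{AF} (Lemma 4.15 and its consequences) that the thesis cites wholesale; for a preliminary result of this kind the citation is the economical choice, but your version would be the right one if the proposition were load-bearing and needed to be self-contained.
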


\begin{proof}
Notice that the interior cone condition is equivalent to the cone condition as defined in Definition 4.6 of \cite{AF}. Therefore, the statement follows directly from Theorem 4.12 in \cite{AF} and $W_p^\tau (\Omega) \subseteq W_p^{\lfloor \tau \rfloor} (\Omega)$.
\end{proof}

To provide stronger results and establish a connection between Sobolev spaces and the RKHS associated with Matérn kernels, we require a more restrictive condition for $\Omega$ than the interior cone condition. Specifically, we require $\Omega$ to have a Lipschitz boundary (see \ref{Chap. 8.2} in the Appendix). This condition necessitates that $\Omega$ is open, bounded and has a boundary that can locally be represented as graph of a Lipschitz continuous function. For example, any open, bounded and convex set meets this condition. If we want to emphasize that a domain $\Omega$ with a Lipschitz boundary satisfies the interior cone condition with parameters $R$ and $\delta$, we refer to it as an $\mathcal{L}(R,\delta)$-domain (see \ref{Chap. 8.2} in the Appendix). 

\begin{proposition} \label{3.1.6}
Let $\Omega \subset \mathbb{R}^d$ have a Lipschitz boundary, $0 \leq \tau \leq \tau'$ and $1 \leq p < \infty$. Then is $W_p^{\tau'} (\Omega) \subseteq W_p^\tau (\Omega)$.
\end{proposition}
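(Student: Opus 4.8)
The plan is to reduce the statement to the corresponding embedding of the simplest model domain — the whole space $\mathbb{R}^d$ — and then transfer it to $\Omega$ using the Lipschitz-boundary hypothesis. So the first move is to recall that since $\Omega$ has a Lipschitz boundary, there is a bounded linear \emph{extension operator} $E \colon W_p^s(\Omega) \to W_p^s(\mathbb{R}^d)$ which is simultaneously continuous for every relevant order $s$ (a universal Stein-type extension), and a corresponding bounded restriction operator $R \colon W_p^s(\mathbb{R}^d) \to W_p^s(\Omega)$ given by $Rg = g|_\Omega$ with $R \circ E = \mathrm{id}$. This is exactly the role the Lipschitz-boundary condition plays here, and it is the reason the interior cone condition alone would not suffice for the fractional scale.

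Next I would prove the embedding $W_p^{\tau'}(\mathbb{R}^d) \subseteq W_p^{\tau}(\mathbb{R}^d)$ for $0 \le \tau \le \tau'$. The clean way to do this is via the equivalent characterization of $W_p^\tau(\mathbb{R}^d)$ by Fourier / Bessel-potential norms when $p=2$, or more generally to split into cases: for \emph{integer} $\tau \le \tau'$ the inclusion is immediate from the defining sum over multi-indices (every derivative appearing in the $W_p^\tau$ norm already appears in the $W_p^{\tau'}$ norm), possibly after also using Proposition \ref{3.1.5} to pass through an intermediate integer order; for \emph{fractional} orders one controls the Gagliardo (Aronszajn–Slobodeckij) seminorm. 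The key analytic estimate is an interpolation-type bound showing that the fractional seminorm $|g|_{W_p^\tau(\mathbb{R}^d)}$ is dominated by a combination of $\|g\|_{W_p^{\lfloor \tau \rfloor}(\mathbb{R}^d)}$ and $|g|_{W_p^{\tau'}(\mathbb{R}^d)}$ (or $\|g\|_{W_p^{\lceil \tau \rceil}(\mathbb{R}^d)}$), splitting the double integral over $\mathbb{R}^d \times \mathbb{R}^d$ into the near-diagonal region $\|\boldsymbol{x}-\boldsymbol{y}\|_2 \le 1$, where the higher smoothness of $g$ is used, and the far region $\|\boldsymbol{x}-\boldsymbol{y}\|_2 > 1$, where the extra decay of $\|\boldsymbol{x}-\boldsymbol{y}\|_2^{-d-p(\tau-\lfloor\tau\rfloor)}$ makes the integral converge against $\|D^{\boldsymbol{\beta}} g\|_{L^p}$ alone.

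Then I would assemble the pieces: given $g \in W_p^{\tau'}(\Omega)$, set $\tilde g = Eg \in W_p^{\tau'}(\mathbb{R}^d) \subseteq W_p^{\tau}(\mathbb{R}^d)$ by the whole-space embedding, and then $g = R\tilde g \in W_p^{\tau}(\Omega)$ with $\|g\|_{W_p^\tau(\Omega)} \le \|R\| \, \|\tilde g\|_{W_p^\tau(\mathbb{R}^d)} \le C \|E\| \, \|g\|_{W_p^{\tau'}(\Omega)}$, which gives both the set inclusion and continuity of the embedding. Alternatively — and this is what I would actually do in a thesis that is explicitly citation-heavy about Sobolev facts — I would simply invoke the standard references already in play: Proposition \ref{3.1.4} and \ref{3.1.5} reduce the integer and integer-to-fractional cases, and the monotonicity of the fractional scale on Lipschitz domains is Theorem 7.34 or the interpolation results of Chapter 7 in \cite{AF} (together with the equivalence of the two fractional definitions noted just before Definition \ref{3.1.3} via \cite{BS}).

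The main obstacle is the fractional-to-fractional case, $0 < \tau < \tau'$ with both non-integers and possibly $\lfloor \tau \rfloor < \lfloor \tau' \rfloor$: here one cannot just compare seminorms termwise, and one genuinely needs either the extension operator plus Fourier-side reasoning (cleanest for $p=2$, which is the case the convergence theory ultimately uses) or a careful chain $W_p^{\tau'} \subseteq W_p^{\lfloor\tau'\rfloor} \subseteq \dots \subseteq W_p^{\lceil\tau\rceil} \subseteq W_p^{\tau}$, where each link is either the trivial integer inclusion or the near/far-diagonal splitting argument above. Everything else — the integer case, the half-integer case relevant to Matérn kernels, and the continuity constants — is routine.
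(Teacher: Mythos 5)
Your proposal is correct, but it takes a genuinely different route from the thesis. The proof given here does not extend to $\mathbb{R}^d$ at all: it invokes Theorem 14.2.3 in \cite{BS}, which identifies $W_p^{n+\sigma}(\Omega)$ on a Lipschitz domain with the real interpolation space $[W_p^{n}(\Omega), W_p^{n+1}(\Omega)]_{\sigma,p}$, and then uses the abstract nesting property of interpolation spaces, $W_p^{n+1}(\Omega) \subseteq W_p^{n+\sigma'}(\Omega) \subseteq W_p^{n+\sigma}(\Omega) \subseteq W_p^{n}(\Omega)$ for $0<\sigma\leq\sigma'<1$, followed by recursion over the integers. This is essentially your ``alternative'' second route, and it is where the Lipschitz-boundary hypothesis enters (via the equivalence of the intrinsic Gagliardo definition and the interpolation definition, already flagged before Definition \ref{3.1.3}). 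Your primary route --- extend to $\mathbb{R}^d$, compare Gagliardo seminorms there by the near/far-diagonal splitting, restrict back --- is sound and correctly identifies the nontrivial links in the chain (integer down to fractional, and fractional to fractional with the same integer part; the segment-integration estimate is exactly why you need to leave the possibly non-convex $\Omega$). What it buys is an explicit, self-contained analytic argument; what it costs is a universal extension operator bounded on the fractional scale for \emph{all} $p\in[1,\infty)$, whereas the thesis only establishes an extension operator for $p=2$ (Proposition \ref{3.1.7}). Supplying that for general $p$ requires either an additional citation (a Stein-- or Rychkov-type universal extension, or the fractional extension results for Lipschitz domains) or an interpolation argument --- at which point you have circled back to the proof the thesis actually gives. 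So the two approaches are complementary: the thesis trades explicit estimates for one strong citation; you trade one strong citation for explicit estimates plus a different strong citation.
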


\begin{proof}
Let $0 < \sigma \leq \sigma' < 1$ and $n \in \mathbb{N}_{\geq 0}$. According to Theorem 14.2.3 in \cite{BS}, the fractional Sobolev space $W_p^{n+\sigma} (\Omega)$ can be equivalently constructed as an interpolation space between $W_p^{n} (\Omega)$ and $W_p^{n+1} (\Omega)$ with norm $\| \cdot \|_{[W_p^n (\Omega), W_p^{n + 1} (\Omega)]_{\sigma, p}}$. This construction yields the inclusions
$$W_p^{n + 1} (\Omega) \subseteq W_p^{n+\sigma'} (\Omega) \subseteq W_p^{ n+\sigma } (\Omega) \subseteq W_p^{ n} (\Omega).$$
By recursively applying this relationship, we obtain $W_p^{\tau'} (\Omega) \subseteq W_p^\tau (\Omega)$.
\end{proof}

The next proposition is a special case of Theorem 13 in \cite{BGW}. From it, one can follow that $g \in W_2^{\tau}(\Omega)$ if and only if there exists $\mathcal{E}g \in W_2^{\tau}(\mathbb{R}^d)$ such that $\left.\mathcal{E}g\right|_{\Omega} = \left.g\right|_{\Omega}$.


\begin{proposition} \label{3.1.7}
Let $\Omega \subset \mathbb{R}^d$ have a Lipschitz boundary and let $\tau \geq 0$. There exists an extension operator $\mathcal{E}: W_2^{\tau}(\Omega) \rightarrow W_2^{\tau}(\mathbb{R}^d)$ such that $\left.\mathcal{E}(f)\right|_{\Omega} \equiv \left.f\right|_{\Omega}$ and $\| \mathcal{E}(f) \|_{W_2^{\tau}(\mathbb{R}^d)} \leq  C \|f \|_{W_2^{\tau}(\Omega)}$ for all $f \in W_2^{\tau}(\Omega)$, where $C=C(\Omega, \tau)>0$ is a constant.
\end{proposition}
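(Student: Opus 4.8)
The plan is to obtain the extension operator first at integer orders, via Stein's total extension theorem, and then to upgrade to fractional $\tau$ by real interpolation, reusing exactly the interpolation identity already invoked in the proof of Proposition \ref{3.1.6}. For integer $\tau$ the claim will then be Stein's theorem itself, and for fractional $\tau$ the only extra input needed is that a \emph{single} operator can be made to serve both neighbouring integer orders at once.

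Concretely, I would first cite Stein's extension theorem: since $\Omega$ has a Lipschitz boundary it is, in particular, bounded and minimally smooth in Stein's sense, so there exists a single linear operator $\mathcal{E}$ (depending only on $\Omega$) such that for every $k \in \mathbb{N}_{\geq 0}$ the map $\mathcal{E}: W_2^{k}(\Omega) \to W_2^{k}(\mathbb{R}^d)$ is bounded, with $\|\mathcal{E}g\|_{W_2^{k}(\mathbb{R}^d)} \leq C_k \|g\|_{W_2^{k}(\Omega)}$, and $\mathcal{E}g|_\Omega = g$. The construction is the classical one: a finite partition of unity subordinate to a cover of $\partial\Omega$ by patches in which $\partial\Omega$ is the graph of a Lipschitz function; on each boundary patch one extends across the graph by Stein's regularized-reflection operator — an average, against a kernel $\psi$ with $\int \psi = 1$ and $\int \lambda^k \psi(\lambda)\,d\lambda = (-1)^k$ for all $k$, of the values of $g$ on the $\Omega$-side — while on the interior patch $\mathcal{E}$ leaves $g$ unchanged. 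For a genuinely flat boundary this is the elementary reflection; the real content is that, thanks to the choice of $\psi$, the regularized version maps $W_2^k$ boundedly into $W_2^k$ for \emph{every} $k$ simultaneously even across a merely Lipschitz graph. Since by Definition \ref{3.1.3} we have $W_2^\tau(\Omega) \subseteq W_2^{\lfloor \tau \rfloor}(\Omega)$, the identity $\mathcal{E}g|_\Omega = g$ then holds automatically for every $g \in W_2^\tau(\Omega)$.

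For fractional $\tau$, set $k = \lfloor \tau \rfloor$ and $\theta = \tau - k \in (0,1)$. By Theorem 14.2.3 in \cite{BS}, on the Lipschitz domain $\Omega$ the space $W_2^\tau(\Omega)$ of Definition \ref{3.1.3} coincides, with equivalent norms, with the real interpolation space $[W_2^{k}(\Omega), W_2^{k+1}(\Omega)]_{\theta,2}$, and the analogous identity holds on $\mathbb{R}^d$ (where it is classical, $[W_2^{k}, W_2^{k+1}]_{\theta,2} = W_2^{k+\theta}$). Applying the interpolation property of the real method to $\mathcal{E}$ — bounded both $W_2^{k}(\Omega) \to W_2^{k}(\mathbb{R}^d)$ and $W_2^{k+1}(\Omega) \to W_2^{k+1}(\mathbb{R}^d)$ — yields boundedness $\mathcal{E}: W_2^\tau(\Omega) \to W_2^\tau(\mathbb{R}^d)$ with a constant $C = C(\Omega,\tau)$ assembled from $C_k$, $C_{k+1}$ and the norm-equivalence constants of \cite{BS}; the case $0 \leq \tau < 1$ is $k = 0$, using $W_2^0 = L^2$ on which $\mathcal{E}$ is likewise bounded, and integer $\tau$ is Stein's theorem. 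The main obstacle is the construction behind Stein's theorem, specifically the boundedness of the regularized reflection across a Lipschitz graph in all Sobolev norms at once — which is precisely where the Lipschitz (rather than merely interior-cone) hypothesis on $\partial\Omega$ is needed; if one prefers to stay close to the literature, one may instead cite Theorem 13 of \cite{BGW} directly, of which this proposition is the special case $p = 2$. Finally, the equivalence stated just before the proposition follows by combining the above with the trivial boundedness of the restriction map $W_2^\tau(\mathbb{R}^d) \to W_2^\tau(\Omega)$, which is immediate from Definitions \ref{3.1.2}--\ref{3.1.3} since each norm or Gagliardo-seminorm integral over $\Omega$ (resp. $\Omega \times \Omega$) is dominated by the corresponding integral over $\mathbb{R}^d$ (resp. $\mathbb{R}^d \times \mathbb{R}^d$).
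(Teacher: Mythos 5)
Your proposal is correct, but it is a genuinely different route from what the thesis does: the paper's entire proof is a citation to note (2.3) in \cite{AL2} (together with the remark that a continuous linear operator is bounded), whereas you reconstruct the result from first principles in two steps — Stein's total extension theorem for a single operator $\mathcal{E}$ bounded on $W_2^k(\Omega)$ for every integer $k$ simultaneously, followed by real interpolation via the identification $W_2^\tau(\Omega) = [W_2^{\lfloor\tau\rfloor}(\Omega), W_2^{\lfloor\tau\rfloor+1}(\Omega)]_{\theta,2}$ from Theorem 14.2.3 in \cite{BS}. Both steps are sound: the paper's Definition of a Lipschitz boundary is precisely Stein's minimal-smoothness condition (finite overlap, uniform Lipschitz constants, uniform collar width), so Stein's theorem applies; the interpolation identity on $\Omega$ is exactly the one the paper already invokes in the proof of Proposition \ref{3.1.6}, and its analogue on $\mathbb{R}^d$ is classical; and the pointwise identity $\mathcal{E}g|_\Omega = g$ carries over to fractional $\tau$ via the inclusion $W_2^\tau(\Omega) \subseteq W_2^{\lfloor\tau\rfloor}(\Omega)$. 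What the paper's bare citation buys is brevity and consistency with its stated policy of relying on references for standard Sobolev facts; what your argument buys is self-containedness and an explicit explanation of the one nontrivial point the citation hides, namely why a \emph{single} operator serves all orders $\tau$ at once (without which the interpolation step could not even be set up). Your closing observation that the proposition is the $p=2$ case of Theorem 13 in \cite{BGW} matches the remark the thesis makes immediately before the statement, so either justification would be acceptable in context.
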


\begin{proof}
See note (2.3) in \cite{AL2} and keep in mind that a continuous linear operator is bounded.
\end{proof}

An alternative version of the case $p=2$ and $n=0$ in Proposition \ref{3.1.5} can be found in \cite{AL2} as Proposition 2.1 and 2.2. It states that already $\tau > d/2$ is sufficient to ensure embeddings for the Sobolev spaces. This statement explains why we later assume $\tau > d/2$ and define $\tau^*$.

\begin{proposition} \label{3.1.8}
Let $\Omega \subset \mathbb{R}^d$ have a Lipschitz boundary and $\tau > d/2$. Then,
\begin{enumerate}[topsep=5pt]
	\setlength\itemsep{0.1mm}
\item $W_2^\tau (\Omega) \subseteq C^{0}(\overline{\Omega}),$
\item $W_2^\tau (\Omega) \subseteq W_p^{\ell}(\Omega)$ for $2 \leq p$ and $\ell=0,1,...,\tau^*$ with $\tau^*$ defined as in section \ref{Chap. 3.3},
\end{enumerate}
where $C^{0}(\overline{\Omega})$\index{\textit{$C^{0}(\overline{\Omega})$,}} is the space of uniformly continuous and bounded functions on $\Omega$.
\end{proposition}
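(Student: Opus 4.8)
The plan is to push the problem from $\Omega$ onto all of $\mathbb{R}^d$, where fractional smoothness is captured \emph{exactly} by a weight in the Fourier transform, prove the embeddings there, and then restrict back to $\Omega$. This detour is what makes the sharp hypothesis $\tau > d/2$ usable: a direct application of Proposition \ref{3.1.5} with $p = 2$ would require $\lfloor \tau \rfloor > d/2$, which is strictly stronger whenever $d/2$ lies strictly between $\lfloor \tau \rfloor$ and $\tau$, whereas the scale $W_2^\tau(\mathbb{R}^d)$ does not ``round $\tau$ down''. Throughout I will use that a domain with Lipschitz boundary is bounded, so $\overline\Omega$ is compact, and that on $\mathbb{R}^d$ the norm of $W_2^\tau(\mathbb{R}^d)$ is equivalent to the Bessel-potential norm whose square is $\int_{\mathbb{R}^d} (1 + \|\boldsymbol\xi\|_2^2)^{\tau} |\widehat h(\boldsymbol\xi)|^2 \, d\boldsymbol\xi$.

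For part (i) I would fix $g \in W_2^\tau(\Omega)$, extend it via Proposition \ref{3.1.7} to $h := \mathcal{E}g \in W_2^\tau(\mathbb{R}^d)$ with $h|_\Omega = g|_\Omega$ and $\|h\|_{W_2^\tau(\mathbb{R}^d)} \le C\|g\|_{W_2^\tau(\Omega)}$, and then split $\widehat h = (1 + \|\boldsymbol\xi\|_2^2)^{-\tau/2} \cdot (1 + \|\boldsymbol\xi\|_2^2)^{\tau/2} \widehat h$. The second factor lies in $L^2(\mathbb{R}^d)$ by definition of $W_2^\tau(\mathbb{R}^d)$, and the first factor lies in $L^2(\mathbb{R}^d)$ \emph{precisely because} $2\tau > d$; Cauchy--Schwarz then gives $\widehat h \in L^1(\mathbb{R}^d)$ with $\|\widehat h\|_{L^1(\mathbb{R}^d)} \le C'\|h\|_{W_2^\tau(\mathbb{R}^d)}$. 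By Fourier inversion and the Riemann--Lebesgue lemma, $h$ coincides almost everywhere with a function that is bounded and uniformly continuous on $\mathbb{R}^d$, with sup-norm at most $\|\widehat h\|_{L^1(\mathbb{R}^d)}$. Restricting this representative to the compact set $\overline\Omega$ and using $h|_\Omega = g|_\Omega$ shows that $g$ has a representative in $C^0(\overline\Omega)$, and chaining the three estimates yields a bound $\|g\|_{C^0(\overline\Omega)} \le C''\|g\|_{W_2^\tau(\Omega)}$ for free.

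For part (ii) I would proceed in the same spirit: extend $g$ to $h = \mathcal{E}g \in W_2^\tau(\mathbb{R}^d)$, note that for a multi-index $\boldsymbol\beta$ with $|\boldsymbol\beta| \le \ell$ differentiation is multiplication by $(\mathrm{i}\boldsymbol\xi)^{\boldsymbol\beta}$ on the Fourier side, so $D^{\boldsymbol\beta} h \in W_2^{\tau - \ell}(\mathbb{R}^d)$, and then invoke the Sobolev embedding $W_2^{\tau - \ell}(\mathbb{R}^d) \hookrightarrow L^p(\mathbb{R}^d)$, valid in the subcritical or critical regime $\tau - \ell \ge d(1/2 - 1/p)$ with $2 \le p$. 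This last embedding again reduces, via the Fourier picture, to a weighted Cauchy--Schwarz or fractional-integration (Hardy--Littlewood--Sobolev) estimate. The role of the index set $\{0, 1, \dots, \tau^*\}$ is exactly that $\tau^*$, as defined in section \ref{Chap. 3.3}, is the largest integer for which this regime condition holds over the admissible range of $p$. Summing $\|D^{\boldsymbol\beta} h\|_{L^p(\mathbb{R}^d)}$ over $|\boldsymbol\beta| \le \ell$ gives $h \in W_p^\ell(\mathbb{R}^d)$; restricting to $\Omega$ and using the norm bound of Proposition \ref{3.1.7} gives $g \in W_p^\ell(\Omega)$, and the final inclusion $W_p^\ell(\Omega) \subseteq L^p(\Omega)$ is immediate from Definition \ref{3.1.2}.

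The genuine content --- and the main obstacle if one wants a self-contained argument rather than a citation to Propositions 2.1 and 2.2 of \cite{AL2} --- is the pair of Euclidean embeddings $W_2^\tau(\mathbb{R}^d) \hookrightarrow C_B^0(\mathbb{R}^d)$ and $W_2^{\tau - \ell}(\mathbb{R}^d) \hookrightarrow L^p(\mathbb{R}^d)$; everything else (the extension step, compactness of $\overline\Omega$, passing from an almost-everywhere identity to a continuous representative, and summing over multi-indices) is routine. A second, purely bookkeeping obstacle is checking that the $\tau^*$ introduced in section \ref{Chap. 3.3} really is the threshold making $\tau - \ell \ge d(1/2 - 1/p)$ hold for every admissible $p \ge 2$, so that the embedding into $L^p$ is never supercritical; this is the only place where the precise definition of $\tau^*$ matters.
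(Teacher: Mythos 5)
Your argument is correct, but it is genuinely different from what the paper does: the paper's entire proof of Proposition \ref{3.1.8} is the citation ``See Proposition 2.1 and 2.2 in \cite{AL2}'', whereas you supply the actual mechanism. Your route --- extend via Proposition \ref{3.1.7}, work with the Bessel-potential norm $\| g \|_{H^\tau(\mathbb{R}^d)}^2 = \int (1+\|\boldsymbol{\xi}\|_2^2)^\tau |\hat g(\boldsymbol{\xi})|^2\,d\boldsymbol{\xi}$ (which the paper itself invokes in the proof of Proposition \ref{3.1.9}), prove the Euclidean embeddings by Cauchy--Schwarz and Fourier inversion, and restrict back --- is the standard proof underlying the cited results of Arcang\'eli et al., and it correctly isolates why the sharp threshold $\tau > d/2$ (rather than $\lfloor\tau\rfloor > d/2$ from Proposition \ref{3.1.5}) is what matters. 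What the citation buys the paper is brevity and, in \cite{AL2}, a careful treatment of the endpoint cases; what your argument buys is self-containedness and an explicit constant $\|g\|_{C^0(\overline\Omega)} \leq C'' \|g\|_{W_2^\tau(\Omega)}$, which is actually what gets used downstream in the convergence proofs.

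Two points deserve slightly more care than you give them. First, the critical embedding $W_2^{s}(\mathbb{R}^d) \hookrightarrow L^p(\mathbb{R}^d)$ with $s = d(1/2-1/p)$ holds for $2 \leq p < \infty$ but \emph{fails} for $p=\infty$, and the proposition allows $p = \infty$; you should observe that for $p=\infty$ the definition of $\tau^*$ always lands in the ``else'' branch, so $\tau^* = \lceil \tau_0\rceil - 1 < \tau_0$ and the embedding $W_2^{\tau-\ell}(\mathbb{R}^d) \hookrightarrow L^\infty(\mathbb{R}^d)$ is strictly subcritical, i.e.\ exactly the situation of your part (i). Second, your closing remark speaks of the condition $\tau - \ell \geq d(1/2-1/p)$ holding ``for every admissible $p \geq 2$''; since $\tau_0$ and hence $\tau^*$ depend on $p$, the verification is per fixed $p$, not uniform over $p$. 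Neither point is a gap in substance --- both are resolved by the case distinction in the definition of $\tau^*$ in section \ref{Chap. 3.3} --- but they are where the ``bookkeeping obstacle'' you mention actually lives.
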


\begin{proof}
See Proposition 2.1 and 2.2 in \cite{AL2}.
\end{proof}

The last interesting result about Sobolev spaces is that for suitable domains $\Omega$, the RKHS associated to a Matérn kernel is a Sobolev space.

\begin{proposition} \label{3.1.9}
Let $\Omega \subset \mathbb{R}^d$ have a Lipschitz boundary and let $\tau = \nu+d/2$ for $\nu>0$. Then the RKHS $\mathcal{H}(\Omega)$ of the Matérn kernel $k_{\nu,\ell}$ is norm-equivalent\index{Norm-equivalent,} to the Sobolev space $W_2^\tau (\Omega)$. This means $\mathcal{H}(\Omega) = W_2^\tau (\Omega)$ and there exists constants $C_\ell, C_u >0$ such that
$$C_\ell \|g\|_{\mathcal{H} (\Omega)} \leq  \|g\|_{W_2^\tau (\Omega)} \leq C_u \|g\|_{\mathcal{H} (\Omega)} \ \ \text{for every } g \in \mathcal{H}(\Omega).$$
\end{proposition}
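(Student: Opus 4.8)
The plan is to reduce the statement to the whole-space case $\mathbb{R}^d$ via Fourier analysis, and then transfer the identification to $\Omega$ using the bounded extension operator from Proposition \ref{3.1.7} together with the restriction theorem for reproducing kernel Hilbert spaces. Throughout, write $\tau = \nu + d/2$ and let $\widehat{g}$ denote the Fourier transform of $g$.

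\emph{Step 1 (whole space).} First I would determine the spectral density of the Matérn kernel: $k_{\nu,\ell}$ is stationary, and up to a fixed positive constant its Fourier transform equals $\Phi(\boldsymbol{\omega}) = \bigl(2\nu/\ell^2 + \|\boldsymbol{\omega}\|_2^2\bigr)^{-(\nu+d/2)}$, which is integrable and positive, so Bochner's theorem applies (see section 4.2 of \cite{RW}). The standard Fourier characterization of the RKHS of a stationary kernel then gives
$$\mathcal{H}(\mathbb{R}^d) = \left\{ g \in L^2(\mathbb{R}^d) \ : \ \int_{\mathbb{R}^d} \frac{|\widehat{g}(\boldsymbol{\omega})|^2}{\Phi(\boldsymbol{\omega})}\, d\boldsymbol{\omega} < \infty \right\}, \qquad \|g\|_{\mathcal{H}(\mathbb{R}^d)}^2 \asymp \int_{\mathbb{R}^d} |\widehat{g}(\boldsymbol{\omega})|^2 \bigl(1 + \|\boldsymbol{\omega}\|_2^2\bigr)^{\tau}\, d\boldsymbol{\omega},$$
where the equivalence uses $2\nu/\ell^2 + \|\boldsymbol{\omega}\|_2^2 \asymp 1 + \|\boldsymbol{\omega}\|_2^2$. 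The right-hand side is the Bessel-potential norm of $W_2^\tau(\mathbb{R}^d)$, and it is classical that on all of $\mathbb{R}^d$ this norm is equivalent to the norm of Definitions \ref{3.1.2}/\ref{3.1.3}. Hence $\mathcal{H}(\mathbb{R}^d) = W_2^\tau(\mathbb{R}^d)$ with equivalent norms.

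\emph{Step 2 (restriction to $\Omega$).} By the RKHS restriction theorem (Aronszajn, \cite{A}), the RKHS of $k_{\nu,\ell}$ restricted to $\Omega \times \Omega$ consists precisely of the restrictions $h|_\Omega$ for $h \in \mathcal{H}(\mathbb{R}^d)$, with infimum norm $\|g\|_{\mathcal{H}(\Omega)} = \inf\{\|h\|_{\mathcal{H}(\mathbb{R}^d)} : h \in \mathcal{H}(\mathbb{R}^d),\ h|_\Omega = g\}$.

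\emph{Step 3 (equivalence on $\Omega$).} Combine Step 2 with Proposition \ref{3.1.7} and the (trivially bounded) restriction map $W_2^\tau(\mathbb{R}^d) \to W_2^\tau(\Omega)$. For $g \in W_2^\tau(\Omega)$, the extension $\mathcal{E}g \in W_2^\tau(\mathbb{R}^d) = \mathcal{H}(\mathbb{R}^d)$ restricts to $g$, so $g \in \mathcal{H}(\Omega)$ and $\|g\|_{\mathcal{H}(\Omega)} \le \|\mathcal{E}g\|_{\mathcal{H}(\mathbb{R}^d)} \lesssim \|\mathcal{E}g\|_{W_2^\tau(\mathbb{R}^d)} \lesssim \|g\|_{W_2^\tau(\Omega)}$. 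Conversely, for $g \in \mathcal{H}(\Omega)$ and any extension $h \in \mathcal{H}(\mathbb{R}^d) = W_2^\tau(\mathbb{R}^d)$, restriction gives $\|g\|_{W_2^\tau(\Omega)} \le \|h\|_{W_2^\tau(\Omega)} \lesssim \|h\|_{W_2^\tau(\mathbb{R}^d)} \asymp \|h\|_{\mathcal{H}(\mathbb{R}^d)}$; taking the infimum over all such $h$ yields $\|g\|_{W_2^\tau(\Omega)} \lesssim \|g\|_{\mathcal{H}(\Omega)}$. This gives $\mathcal{H}(\Omega) = W_2^\tau(\Omega)$ with constants $C_\ell, C_u > 0$ assembled from the extension-operator constant of Proposition \ref{3.1.7}, the norm of the restriction map, and the whole-space equivalence constant of Step 1.

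\emph{Main obstacle.} The delicate part is Step 1: carefully justifying the spectral-density description of the RKHS of a stationary kernel (the correct function class, the role of Bochner's theorem, and the verification that the Fourier-side formula defines an inner-product space whose reproducing kernel is indeed $k_{\nu,\ell}$), and pinning down the precise constant in the Fourier transform of the Matérn kernel together with the classical identification of the Bessel-potential and Gagliardo-seminorm Sobolev norms on $\mathbb{R}^d$. Once these whole-space facts are in hand, Steps 2 and 3 are essentially formal.
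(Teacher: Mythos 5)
Your proposal is correct and follows essentially the same route as the paper: the paper's proof identifies $W_2^\tau(\mathbb{R}^d)$ with the Bessel-potential space, records the Fourier transform of the Matérn kernel, and then cites Corollary 10.48 of \cite{W} (together with the extension operator of Proposition \ref{3.1.7} to handle non-integer $\tau$), and that corollary is proved by exactly the argument you spell out in Steps 1--3, namely the Fourier characterization of the RKHS on $\mathbb{R}^d$, Aronszajn's restriction theorem with the infimum norm, and the bounded Sobolev extension/restriction pair. In effect you have unpacked the cited result rather than deviated from it, and what you flag as the main obstacle (justifying the spectral description of the RKHS and the Bessel-potential/Sobolev norm equivalence on $\mathbb{R}^d$) is precisely the content the paper delegates to \cite{W} and \cite{AF}/\cite{BS}.
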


\begin{proof}
According to 7.62 in \cite{AF}, we have \index{\textit{$H^{\tau}(\mathbb{R}^d)$,}}
$$W_2^{\tau}(\mathbb{R}^d) = H^{\tau}(\mathbb{R}^d) = \{ g \in L^2(\mathbb{R}^d) \cap C(\mathbb{R}^d) \, : \, \hat{g}(\cdot) (1+ \| \cdot \|_2^2 )^{\tau/2} \in L^2(\mathbb{R}^d)  \},$$
where $\hat{g}$\index{\textit{$\hat{g}$,}} is the Fourier transform of $g$. By 14.x.6 and 14.x.7 in \cite{BS}, the norm $\| \cdot \|_{W_2^{\tau}(\mathbb{R}^d)}$ is equivalent to the norm
$$ \| g \|^2_{H^{\tau}(\mathbb{R}^d)} = \int_{\mathbb{R}^d} (1+ \|\boldsymbol{x} \|_2^2 )^{\tau} | \hat{g}(\boldsymbol{x}) |^2 \, d\boldsymbol{x}.$$
From equation (4.15) in \cite{RW} the Matérn kernel has the Fourier transform
$$\hat{k}_{\nu, \ell}( \|s \|_2 ) = C \left( 1 + \dfrac{4 \pi^2 \ell^2}{2 \nu} \|s\|_2^2 \right)^{-(\nu + d/2)},$$
where $C=C(\nu, d, \ell)>0$ is a constant. The statement follows now with Corollary 10.48 in \cite{W}. Notice that this Corollary assumes $\tau \in \mathbb{N}_{\geq 1}$, but by utilizing the extension operator from Proposition \ref{3.1.7} and using the equivalent norms, we can drop this assumption.
\end{proof}


Notice, that the norm $\|g\|_\mathcal{H}$ of the Matérn RKHS $\mathcal{H}$ captures the smoothness of $g$, as both norms are equivalent. More accurately, every function in $\mathcal{H}$ is weak differentiable up to order $\tau = \nu+d/2$ and the norm $\| \cdot \|_\mathcal{H}$ captures the weak derivatives up to order $\tau = \nu+d/2$.

\begin{remark} \label{3.1.10}
Up to this point, we have addressed different but equivalent norms for the Sobolev spaces $W_p^\tau (\Omega)$, where $\Omega$ has a Lipschitz boundary. For example:
\begin{enumerate}[label=(\roman*),topsep=5pt]
	\setlength\itemsep{0.1mm}
\item $\| \cdot \|_{W_p^\tau (\Omega)}$ for $p \in [1, \infty]$ and $\tau \geq 0$ from Definition \ref{3.1.2} and \ref{3.1.3}, \label{3.1.10 (i)}
\item $\| \cdot \|_{[W_p^m (\Omega), W_p^{m + 1} (\Omega)]_{\sigma, p}}$ for $p \in [1, \infty)$, $\tau \in (0, \infty) \backslash \mathbb{N}_{\geq 1}$ and $\tau = m + \sigma$, \label{3.1.10 (ii)}
\item $\| \cdot \|_{H^\tau (\Omega)}$ for $p=2$ and $\tau > d/2$. \label{3.1.10 (iii)}
\end{enumerate}
Since these norms are equivalent, the convergence results will be valid for all of them, differing only by constants $C=C(\Omega, \tau, p)>0$. In my thesis I verify the convergence with respect to the norm in \ref{3.1.10 (i)}. This clarification is necessary because the literature can be quite confusing. For example, Briol et al. initially state that fractional Sobolev spaces can be defined through interpolation, suggesting the use of the norm in \ref{3.1.10 (ii)}. They then define the norm in \ref{3.1.10 (iii)} and reference results in \cite{N}, which either use this norm or the one in \ref{3.1.10 (i)}. Additionally, they use results in \cite{AL}, formulated with respect to the norm in \ref{3.1.10 (i)}. Since they consistently use the same notation for all norms, it appears they overlooked necessary constants to transition between these norms. To avoid making additional assumptions, I address this issue by employing the following lemma.
\end{remark}

\begin{lemma} \label{3.1.11}
Let $\Omega \subset \mathbb{R}^d$ have a Lipschitz boundary and let $\tau > d/2$. For fractional Sobolev spaces, we defined a semi-norm in Definition \ref{3.1.3}. For integer Sobolev spaces define the semi-norm 
$$|g|_{W_2^\tau (\Omega)}^2 
= \sum_{| \boldsymbol{\beta} | = \tau} \|D^{\boldsymbol{\beta}} g\|_{L^2(\Omega)}^2.$$
There exists a constant $C(\Omega, \tau) >0$, which is monotone increasing in $\tau$, such that
$$| g |_{W_2^\tau (\Omega)} \leq C(\Omega, \tau) \| g \|_{H^\tau (\Omega)} \quad \text{ for all } g \in W_2^\tau (\Omega).$$
\end{lemma}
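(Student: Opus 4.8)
The plan is to transport the estimate to the whole space $\mathbb{R}^d$, where the Fourier characterisation of $H^{\tau}$ already used in the proof of Proposition~\ref{3.1.9} makes it transparent, and only afterwards to worry about the $\tau$-dependence of the constant. Given $g \in W_2^{\tau}(\Omega)$, I would first extend it to $\mathcal{E}g \in W_2^{\tau}(\mathbb{R}^d) = H^{\tau}(\mathbb{R}^d)$ by the operator of Proposition~\ref{3.1.7}. Since $D^{\boldsymbol{\beta}}(\mathcal{E}g)$ restricted to $\Omega$ agrees a.e.\ with $D^{\boldsymbol{\beta}}g$, restricting any of the integrals defining $|\cdot|_{W_2^{\tau}(\Omega)}$ from $\mathbb{R}^d$ to $\Omega$ only decreases them, so $|g|_{W_2^{\tau}(\Omega)} \le |\mathcal{E}g|_{W_2^{\tau}(\mathbb{R}^d)}$, where the right-hand side denotes the analogous whole-space semi-norm. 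It then suffices to bound $|\mathcal{E}g|_{W_2^{\tau}(\mathbb{R}^d)}$ by $\|\mathcal{E}g\|_{H^{\tau}(\mathbb{R}^d)}$ and to use $\|\mathcal{E}g\|_{H^{\tau}(\mathbb{R}^d)} \le C_{\mathcal{E}}(\Omega,\tau)\,\|g\|_{H^{\tau}(\Omega)}$ from Proposition~\ref{3.1.7} together with the norm equivalences of Remark~\ref{3.1.10} (equivalently, taking the infimum over all extensions one can avoid the extension constant entirely).

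For the whole-space bound I would distinguish the two cases of Definitions~\ref{3.1.2} and~\ref{3.1.3}. If $\tau \in \mathbb{N}_{\ge 1}$, then by Plancherel and $\widehat{D^{\boldsymbol{\beta}}h}(\boldsymbol{\xi}) = (i\boldsymbol{\xi})^{\boldsymbol{\beta}}\hat{h}(\boldsymbol{\xi})$,
\begin{equation*}
|\mathcal{E}g|_{W_2^{\tau}(\mathbb{R}^d)}^2 = \sum_{|\boldsymbol{\beta}| = \tau}\int_{\mathbb{R}^d} |\boldsymbol{\xi}^{\boldsymbol{\beta}}|^2\,|\widehat{\mathcal{E}g}(\boldsymbol{\xi})|^2\,d\boldsymbol{\xi} \;\le\; \int_{\mathbb{R}^d}\|\boldsymbol{\xi}\|_2^{2\tau}\,|\widehat{\mathcal{E}g}(\boldsymbol{\xi})|^2\,d\boldsymbol{\xi} \;\le\; \|\mathcal{E}g\|_{H^{\tau}(\mathbb{R}^d)}^2,
\end{equation*}
where the middle step is the multinomial identity $\sum_{|\boldsymbol{\beta}|=\tau}\binom{\tau}{\boldsymbol{\beta}}|\boldsymbol{\xi}^{\boldsymbol{\beta}}|^2 = \|\boldsymbol{\xi}\|_2^{2\tau}$ together with $\binom{\tau}{\boldsymbol{\beta}} \ge 1$, and the last step is $\|\boldsymbol{\xi}\|_2^{2\tau} \le (1+\|\boldsymbol{\xi}\|_2^2)^{\tau}$; the constant here is $1$. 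If $\tau = m + \sigma$ with $m = \lfloor\tau\rfloor$ and $\sigma \in (0,1)$, I would apply to each top-order derivative $h = D^{\boldsymbol{\beta}}(\mathcal{E}g)$, $|\boldsymbol{\beta}| = m$, the classical Fourier identity for the Aronszajn--Slobodeckij semi-norm,
\begin{equation*}
\int_{\mathbb{R}^d\times\mathbb{R}^d}\frac{|h(\boldsymbol{x})-h(\boldsymbol{y})|^2}{\|\boldsymbol{x}-\boldsymbol{y}\|_2^{d+2\sigma}}\,d\boldsymbol{x}\,d\boldsymbol{y} = c(d,\sigma)\int_{\mathbb{R}^d}\|\boldsymbol{\xi}\|_2^{2\sigma}\,|\hat{h}(\boldsymbol{\xi})|^2\,d\boldsymbol{\xi},
\end{equation*}
with a finite constant $c(d,\sigma)>0$ (a standard computation, cf.\ \cite{AF}), and sum over $|\boldsymbol{\beta}|=m$ using again the multinomial bound to obtain $|\mathcal{E}g|_{W_2^{\tau}(\mathbb{R}^d)}^2 \le c(d,\sigma)\int_{\mathbb{R}^d}\|\boldsymbol{\xi}\|_2^{2\tau}|\widehat{\mathcal{E}g}(\boldsymbol{\xi})|^2\,d\boldsymbol{\xi} \le c(d,\sigma)\,\|\mathcal{E}g\|_{H^{\tau}(\mathbb{R}^d)}^2$. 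Altogether this yields $|g|_{W_2^{\tau}(\Omega)} \le C_0(\Omega,\tau)\,\|g\|_{H^{\tau}(\Omega)}$ with $C_0(\Omega,\tau) = C_{\mathcal{E}}(\Omega,\tau)$ for integer $\tau$ and $C_0(\Omega,\tau) = c(d,\tau-\lfloor\tau\rfloor)^{1/2}\,C_{\mathcal{E}}(\Omega,\tau)$ otherwise.

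It remains to make the constant monotone increasing in $\tau$, which I would do by passing to the running supremum $C(\Omega,\tau) := \sup\{\,C_0(\Omega,s) : d/2 < s \le \tau\,\}$; this is non-decreasing by construction and is finite as soon as $s \mapsto C_0(\Omega,s)$ is locally bounded, for which it helps to take the extension operator to be a single fixed (Stein-type) operator, whose norm on $W_2^{s}$ is then controlled on bounded ranges of $s$, while $c(d,\sigma)$ is continuous on $(0,1)$. I expect this last step to be the only genuinely delicate point: the fractional factor $c(d,\sigma)$ degenerates as $\sigma \to 0^{+}$ and $\sigma \to 1^{-}$, i.e.\ as $\tau$ approaches an integer, which merely reflects that the semi-norm of Definition~\ref{3.1.3} is not normalised by $\sigma(1-\sigma)$; to conclude one restricts to the $\tau$-ranges relevant later (where $\sigma$ stays away from $\{0,1\}$, so $C(\Omega,\tau)$ is a finite non-decreasing function), and for integer $\tau$ the bound is clean with constant $C_{\mathcal{E}}(\Omega,\tau)$. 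Everything else reduces to the two Fourier identities above and is routine.
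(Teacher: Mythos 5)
Your proposal is correct and follows essentially the same route as the paper: the paper's proof consists of a single citation to exercises 14.x.6 and 14.x.7 in \cite{BS}, which are precisely the Fourier-transform characterisations of the integer and Aronszajn--Slobodeckij semi-norms on $\mathbb{R}^d$ that you derive explicitly (Plancherel plus the multinomial identity for integer $\tau$, the identity $\int\int |h(\boldsymbol{x})-h(\boldsymbol{y})|^2\|\boldsymbol{x}-\boldsymbol{y}\|_2^{-d-2\sigma}\,d\boldsymbol{x}\,d\boldsymbol{y} = c(d,\sigma)\int\|\boldsymbol{\xi}\|_2^{2\sigma}|\hat h|^2\,d\boldsymbol{\xi}$ for the fractional part), combined with the extension operator of Proposition \ref{3.1.7}. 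In that sense your write-up supplies the details the paper outsources. The one point where you fall short of the lemma's literal claim is the constant: your $c(d,\sigma)$ degenerates as $\sigma\to 0^+$ and $\sigma\to 1^-$, so the running supremum $\sup\{C_0(\Omega,s): d/2<s\le\tau\}$ is infinite whenever the interval contains a neighbourhood of an integer, and "restricting to the relevant $\tau$-ranges" weakens the statement as written. You flag this honestly, and it is harmless for the applications (Assumption \ref{Assumption 3} makes $\{\tau(\theta_N)\}$ finite, so a maximum over finitely many values suffices), but note that the paper's quoted constant $\binom{\lfloor\tau\rfloor+d-1}{d-1}\max\{\int_{\Omega}(2-2\cos(r_1))\|r\|_2^{-d}\,dr,1\}$ is $\sigma$-independent and monotone by construction — whether its derivation from the cited exercises is airtight is not checkable from the paper, and your analysis suggests the uniformity in $\sigma$ deserves more care than either account gives it.
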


\begin{proof}
By solving 14.x.6 and 14.x.7 in \cite{BS} I found the constant
\[ \pushQED{\qed}
C(\Omega, \tau)^2 = \binom{\lfloor \tau \rfloor + d - 1}{d - 1} \max \left\{ \int_{\Omega} \frac{2 - 2 \cos(r_1)}{\|r\|_2^d} \, dr, 1 \right\}. \qedhere \]
\end{proof}


\subsection{Experimental setting}
The convergence results in this thesis are valid under specific conditions related to the observations, the kernel, and its hyperparameters. Here, we outline the experimental setting to formulate these results and define key values that will influence the convergence rate.


\subsubsection{Experimental design}
Let $\Omega \subset \mathbb{R}^d$ be a non-empty set. The convergence rate will depend on the distribution of the points $X = \{ \boldsymbol{x}_1, ..., \boldsymbol{x}_N \} \subset \Omega$. Some references assume the points are sampled from a probability distribution, while others assume they can be freely chosen, as in the case of Bayesian Optimization. In this context, we make no assumptions about the distribution of the points. Instead, the convergence rate depends on the following key values that describe how the points are spaced.

\begin{defi} \label{3.2.1}
Let $\Omega \subset \mathbb{R}^d$ be bounded. For a collection of points $X \subset \Omega$, we define the fill distance\index{Fill distance,} $h_{X, \Omega}$\index{\textit{$h_{X, \Omega}$,}}, separation radius\index{Separation radius,} $q_{X}$\index{\textit{$q_X$,}} and mesh ratio\index{Mesh ratio,} $\rho_{X,\Omega}$\index{\textit{$\rho_{X,\Omega}$,}} as
$$h_{X, \Omega}  = \sup_{\boldsymbol{x} \in \Omega} \inf_{ \boldsymbol{x}_i \in X} \| \boldsymbol{x}-\boldsymbol{x}_i \|_2, \qquad
q_X = \dfrac{1}{2} \min_{i \neq j} \| \boldsymbol{x}_i-\boldsymbol{x}_j \|_2, \qquad
\rho_{X, \Omega} = \dfrac{h_{X, \Omega}}{q_X} \geq 1.$$
We say a sequence of points $( X_N )_{N \in \mathbb{N}} \subset \Omega$ is quasi-uniform, if there exists a constant $C=C(\Omega) \geq 1$ such that
$$\rho_{X_N, \Omega} \leq C$$
for all $N \in \mathbb{N}_{\geq 2}$.
\end{defi}

Quasi-uniform points are used to evenly represent the set $\Omega$ and to achieve optimal convergence rates for various approximation methods. Intuitively, quasi-uniform points form a grid that is a variation of a regular grid, where the step size in each dimension is given by $N^{-1/d}$.

\begin{defi} \label{3.2.2}
We write $C \lesssim D$\index{\textit{$C \lesssim D$,}}, if $C$ can be bounded by a multiple of $D$ and define the equivalence relation $C \sim D$\index{\textit{$C \sim D$,}}, if $C \lesssim D$ and $D \lesssim C$.
\end{defi}

\begin{lemma} \label{3.2.3} 
Let $\text{Vol}_d (\Omega) > 0$. For quasi-uniform points is $q_{X_N} \sim N^{-1/d}$ and $h_{X_N, \Omega} \sim N^{-1/d}$, where the constants hidden in $\sim$ depend only on $\Omega$.
\end{lemma}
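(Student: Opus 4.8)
The plan is to establish the two equivalences $q_{X_N} \sim N^{-1/d}$ and $h_{X_N,\Omega} \sim N^{-1/d}$ by a volume-packing argument, exploiting quasi-uniformity to convert between fill distance and separation radius. Throughout, $\Omega \subset \mathbb{R}^d$ is bounded with $\mathrm{Vol}_d(\Omega) > 0$, and I write $V = \mathrm{Vol}_d(\Omega)$ and $\omega_d$ for the volume of the unit ball. Since the points are quasi-uniform, there is a constant $C = C(\Omega) \geq 1$ with $\rho_{X_N,\Omega} = h_{X_N,\Omega}/q_{X_N} \leq C$ for all $N \geq 2$; this is the only structural input beyond boundedness and positive volume.

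First I would bound the fill distance from below. The balls $B(\boldsymbol{x},h_{X_N,\Omega})$ centered at the $N$ points of $X_N$ cover $\Omega$: indeed, for any $\boldsymbol{x} \in \Omega$ the infimum defining $h_{X_N,\Omega}$ is at most $h_{X_N,\Omega}$, so $\boldsymbol{x}$ lies in the closed ball around its nearest sample point. Hence $V \leq N \cdot \omega_d\, h_{X_N,\Omega}^d$, giving $h_{X_N,\Omega} \geq (V/(\omega_d N))^{1/d} \sim N^{-1/d}$ from below. Next I would bound the separation radius from above: the open balls $B(\boldsymbol{x}_i, q_{X_N})$ are pairwise disjoint by definition of $q_{X_N}$, but they need not lie inside $\Omega$. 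To handle this I would either enlarge $\Omega$ to a fixed bounded superset $\Omega' \supset \{\boldsymbol{x} : \mathrm{dist}(\boldsymbol{x},\Omega) \leq \mathrm{diam}(\Omega)\}$ (note $q_{X_N} \leq \tfrac12 \mathrm{diam}(\Omega)$ is bounded, so all these balls sit in a fixed bounded set $\Omega'$), or observe that each ball meets $\Omega$ in a set of volume $\gtrsim q_{X_N}^d$ via the interior cone / boundedness geometry; the cleanest route is the superset one, yielding $N\,\omega_d\, q_{X_N}^d \leq \mathrm{Vol}_d(\Omega')$, hence $q_{X_N} \lesssim N^{-1/d}$.

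Now quasi-uniformity closes the loop. From $h_{X_N,\Omega} \leq C\, q_{X_N}$ and the upper bound $q_{X_N} \lesssim N^{-1/d}$ we get $h_{X_N,\Omega} \lesssim N^{-1/d}$; from $q_{X_N} \leq h_{X_N,\Omega}$ (since $\rho_{X_N,\Omega} \geq 1$) and the lower bound $h_{X_N,\Omega} \gtrsim N^{-1/d}$ we get $q_{X_N} \gtrsim N^{-1/d}$. Combining the four one-sided bounds gives $h_{X_N,\Omega} \sim N^{-1/d}$ and $q_{X_N} \sim N^{-1/d}$, with all hidden constants depending only on $V$, $\omega_d$, $\mathrm{diam}(\Omega)$, and $C(\Omega)$, i.e. only on $\Omega$.

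The main obstacle I anticipate is the separation-radius upper bound, specifically the fact that the disjoint balls $B(\boldsymbol{x}_i,q_{X_N})$ can stick out of $\Omega$ near its boundary, so one cannot directly write $N\omega_d q_{X_N}^d \leq V$. Passing to a fixed bounded enlargement $\Omega'$ (using that $q_{X_N}$ is itself bounded, which in turn needs $N \geq 2$ and $\mathrm{diam}(\Omega) < \infty$) resolves this cleanly without invoking boundary regularity; one should just make sure the enlargement is chosen once and for all, independently of $N$, so that the resulting constant genuinely depends only on $\Omega$. A minor secondary point is the degenerate case $N = 1$, which is excluded by the quasi-uniformity definition ($N \geq 2$), so $q_{X_N}$ is well defined throughout.
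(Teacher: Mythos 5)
Your proposal is correct and follows essentially the same route as the paper's proof: a covering argument with balls of radius $h_{X_N,\Omega}$ for the lower bound, a packing argument with disjoint balls of radius $q_{X_N}$ inside a fixed enlargement of $\Omega$ by $\mathrm{diam}(\Omega)$ for the upper bound, and quasi-uniformity to transfer the bounds between $h_{X_N,\Omega}$ and $q_{X_N}$. The boundary issue you flag is handled in the paper exactly as in your ``superset'' option.
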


\begin{proof}
By the definition of quasi-uniform points, there exists $C \geq 1$ such that
$$h_{X_N, \Omega} \leq C q_{X_N} \qquad \text{and} \qquad q_{X_N} \leq h_{X_N, \Omega}.$$
Therefore, $q_{X_N} \sim h_{X_N, \Omega}$. Denote by $B_{r}(\boldsymbol{x}_i) \subset \mathbb{R}^d$\index{\textit{$B_r(\boldsymbol{x}_i)$,}} the open ball of radius $r>0$ and center $\boldsymbol{x}_i \in X_N$. Then the set 
$$\bigcup_{i=1}^N B_{h_{X_N, \Omega}} (\boldsymbol{x}_i)$$
covers $\Omega$ and thus
$$\text{Vol}_d (\Omega) \leq N \cdot \text{Vol}_d \left(B_{h_{X_N, \Omega}} (\boldsymbol{x}_1)\right) \ \Longleftrightarrow \
(\text{Vol}_d (\Omega) / V_d)^{1/d} N^{-1/d} \leq h_{X_N, \Omega},$$
where $V_d$ denotes the volume of the $d$-dimensional unit ball. Define 
$$\Omega_0 = \{ \boldsymbol{x} + \boldsymbol{y} \ | \ \boldsymbol{x} \in \Omega, \| \boldsymbol{y} \|_2 < \text{diam}(\Omega) \},$$
where $\text{diam}(\Omega) = \sup \{ \| \boldsymbol{x}-\boldsymbol{y} \|_2 : \boldsymbol{x},\boldsymbol{y} \in \Omega \}$\index{\textit{$\text{diam}(\Omega)$,}} is the diameter of $\Omega$. It is
$$\bigcup_{i=1}^N B_{q_{X_N}}(\boldsymbol{x}_i) \subset \Omega_0$$
and thus
$$\text{Vol}_d (\Omega_0) \geq N \cdot \text{Vol}_d \left(B_{q_{X_N}} (\boldsymbol{x}_1)\right) \ \Longleftrightarrow \
(\text{Vol}_d (\Omega_0) / V_d)^{1/d} N^{-1/d} \geq q_{X_N} \geq C^{-1} h_{X_N, \Omega}.$$
This implies $h_{X_N, \Omega} \sim N^{-1/d}$, where the constants hidden in $\sim$ depend only on $\Omega$.
\end{proof}

\newcommand*{\bigtimes}{\mathop{\raisebox{-.4ex}{\hbox{\LARGE{$\times$}}}}}


\subsubsection{Gaussian Process}
To estimate $f:\Omega \rightarrow \mathbb{R}$ given a set of observation $\mathcal{D}_N = \{ (\boldsymbol{x}_1, y_1), ..., (\boldsymbol{x}_N, y_N) \}$, we use a Gaussian Process $\mathcal{GP}(0,k)$ with a kernel that provides fine control over differentiability.

\begin{defi}
A kernel $k:\Omega \times \Omega \rightarrow \mathbb{R}$ is called $\tau$-smooth\index{Kernel is $\tau$-smooth,}, if its RKHS is norm-equivalent to the Sobolev space $W_2^\tau(\Omega) = H^\tau (\Omega)$.
\end{defi}

As established in Section \ref{Chap. 2.2}, the posterior mean function of $\mathcal{GP}(0,k)$ lies in the RKHS of $k$. Consequently, the parameter $\tau$ of an $\tau$-smooth kernel controls the assumed differentiability of $f$.

\begin{example} \label{3.2.5}
According to Proposition \ref{3.1.9}, the Matérn kernel $k_{\nu, \ell}$ is $\tau$-smooth, if $\tau=\nu+d/2$. Additionally, if $\nu = n + 1/2$ for some $n \in \mathbb{N}_{\geq 0}$, the kernel can be computed using the formula in Remark \ref{2.3.5}.
\end{example}

In Section \ref{Chap. 2.3.3}, we examined three different methods for selecting the hyperparameters of a Gaussian Process. As it is common to learn hyperparameters as more data points are observed, we want the convergence results to account for this. We assume that the possible parameters are given by a set $\Theta \subset \mathbb{R}^{d_\Theta}$\index{\textit{$\Theta$,}} and denote the posterior mean function by $m'(\theta_N)$\index{\textit{$m'(\theta_N)$,}} to emphasise its dependence on the parameter values $\theta_N \in \Theta$\index{\textit{$\theta_N$,}}. If the kernel $k(\theta_N)$\index{\textit{$k(\theta_N)$,}} is $\tau (\theta_N)$-smooth, we have constants $C_\ell (\theta_N), C_u(\theta_N)>0$ such that \index{\textit{$C_\ell (\theta_N)$,}} \index{\textit{$C_u(\theta_N)$,}}
$$ C_\ell (\theta_N) \| g \|_{\mathcal{H} (\Omega)} \leq \| g \|_{H^{\tau (\theta_N)} (\Omega)} \leq C_u(\theta_N) \| g \|_{\mathcal{H} (\Omega)} \quad \text{for every } g \in W_2^{\tau (\theta_N)} (\Omega),$$
where $\mathcal{H} (\Omega)$ is the RKHS of the kernel $k(\theta_N)$. To formulate the convergence results, the following extreme values are relevant:
\begin{itemize}[topsep=5pt]
	\setlength\itemsep{0.1mm}
\item $C_n = \sup_{N \geq n} C_u(\theta_N) C_\ell(\theta_N)^{-1}$,
\item $\tau_n^- = \inf_{N \geq n} \tau (\theta_N)$, \index{\textit{$\tau_n^-$,}}
\item $\tau_n^+ = \sup_{N \geq n} \tau (\theta_N)$. \index{\textit{$\tau_n^+$,}}
\end{itemize}
They represent the critical values of the norm equivalence constants and the smoothness after the $n$-th data point is observed. We denote the set of extreme values by\index{\textit{$\Theta_n^*$,}}
$$\Theta_n^* = \{ C_n, \tau_n^-, \tau_n^+ \}.$$
To bound the extreme values, we need to ensure that the parameter selection methods used are appropriately chosen, regardless of the observed data.


\subsection{Convergence guarantees} \label{Chap. 3.3} 
To estimate $f:\Omega \rightarrow \mathbb{R}$ given a set of observation $\mathcal{D}_N = \{ (\boldsymbol{x}_1, y_1), ..., (\boldsymbol{x}_N, y_N) \}$, we use a Gaussian Process $\mathcal{GP}(0,k)$. 

\begin{defi}
Let $p \in [1, \infty]$ and $s \in [0, \infty)$. Suppose $f,m' \in W_p^s(\Omega)$, where $m'$ is posterior mean function of $\mathcal{GP}(0,k)$. We say that the Gaussian Process mean converges to $f$ with respect to the norm $\| \cdot \|_{W_p^s(\Omega)}$, if
$$\|f - m'(\theta_N)\|_{W_p^s (\Omega)} \rightarrow 0 \quad \text{as } N \rightarrow \infty.$$
Similarly, the convergence of $m'$ to $f$ in expectation is defined.
\end{defi}


While we assume a zero mean function in $\mathcal{GP}(0,k)$, it is important to note that the results in this section also apply to a non-zero mean function. However, it has not been determined whether the convergence rate is slower or faster with a non-zero mean function, as the triangle inequality is used to simplify the analysis to the zero mean case. The following convergence results depend on the smoothness of the approximating function, the smoothness of the true function, and the geometric properties of the design points. They address the combination of corrupted data, misspecified smoothness and misspecified likelihood. We describe by $X_N = \{ \boldsymbol{x}_1, ...,  \boldsymbol{x}_N \} \subset \Omega$ a finite set of $N$ elements and use the following notations:
\index{\textit{$(x)_+$,}} 
\index{\textit{$\tau_0$,}} 
\index{\textit{$\lceil x \rceil$,}}
\index{\textit{$\tau^*$,}}
$$(x)_+ = \max \{ x,0 \}, \quad \tau_0 = \tau - d (1/2 - 1/p)_+, \quad \lceil x \rceil = \min \{ m \in \mathbb{Z}: m \geq x \},$$
$$\tau^* = 
\begin{cases}
  \tau_0,  & \tau \in \mathbb{N}_{\geq 1}, \tau_0 \in \mathbb{N}_{\geq 0} \text{ and } 2 < p < \infty \\
  \tau_0, & \tau \in \mathbb{N}_{\geq 1} \text{ and } p=2 \\
  \lceil \tau_0 \rceil - 1, & \text{else}
\end{cases}.
$$


\subsubsection{Interpolation}
In this subsection, we assume the observations are noiseless, meaning $y_i = f(\boldsymbol{x}_i) + \varepsilon_i$ with $\varepsilon_i = 0$ for all observations. Consequently, we estimate $f(\boldsymbol{x})$ by the posterior Gaussian Process mean
$$m'(\boldsymbol{x}) = \boldsymbol{k}(\boldsymbol{x}) \boldsymbol{K}^{-1} \boldsymbol{f} \quad \text{for } \boldsymbol{x} \in \Omega,$$
as defined in equation (\ref{eq. 2.1.2}). For a given $n \in \mathbb{N}_{\geq 2}$ we make the following assumptions:
\begin{enumerate}
\item $\Omega$ is an $\mathcal{L}(R, \delta)$-domain. \label{Assumption 1}
\item The kernels $k(\theta_N)$ are $\tau(\theta_N)$-smooth for all $N \geq n$ and the extreme values in $\Theta_n^*$ are finite with $\tau_n^- > d/2$. \label{Assumption 2}
\item The set $\{ \tau (\theta_N) \}_{N \geq n}$ has finitely many values.  \label{Assumption 3}
\item $f \in W_2^{\tau_f}(\Omega)$ for some $\tau_f > d/2$. \index{\textit{$\tau_f$,}} \label{Assumption 4}
\item For all $N \geq n$ is $\tau(\theta_N) \in (d/2, \tau_f] \cup [ \lceil \tau_f \rceil, \infty)$. \label{Assumption 5}
\end{enumerate}

The first assumption is usually satisfied, as hyperrectangles like $\Omega = (0,1)^d$ are commonly considered. The fourth assumption is reasonable given the broad nature of Sobolev spaces. Because we suppose $\tau_f > d/2$, the convergence results will be more reliable in lower dimensions. The second, third and last assumptions depend on the permissible set of parameters and the choice of kernels.

\begin{example}
Let $M \subset \mathbb{N}_{\geq 0}$ be a finite set. If we define
$$\{ \tau (\theta_N) \}_{N \geq n} = \{ m + 1/2 + d/2 \}_{m \in M},$$
then assumptions \ref{Assumption 2} and \ref{Assumption 3} are satisfied for the corresponding Matérn kernels $k_{m+1/2, \ell_m}$. The $\tau(\theta_N)$-smoothness was discussed in Example \ref{3.2.5} and Lemma 3.4 of \cite{TE} shows
$$C_n \leq \max_{m \in M} \max \{ \ell_m, \ell_m^{-1} \}.$$
Here, we can apply the computation formula in Remark \ref{2.3.5}, which makes the Matérn kernels particularly appealing for small values of $m$. If $d\in \mathbb{N}_{\geq 1}$ is odd, then assumptions \ref{Assumption 5} is satisfied. For even $d\in \mathbb{N}_{\geq 1}$, assumption \ref{Assumption 5} may not hold. Nevertheless, due to the computational cost of evaluating the modified Bessel function for $k_{m+1, \ell_m}$, we may prefer the Matérn kernels $k_{m+1/2, \ell_m}$, even if the fifth assumption is not satisfied.
\end{example}


The following theorem describes the convergence rate of the posterior Gaussian Process mean $m'(\theta_N)$ to $f$. It distinguishes between the two cases, when the smoothness is well-specified ($\tau_f \geq \tau_n^+$) and when the smoothness is misspecified ($\tau_f < \tau_n^+$).

\begin{theorem} \label{3.3.2}
Let assumptions \ref{Assumption 1}-\ref{Assumption 4} hold, $p \in [1, \infty]$ and $s \in [0, \min\{ \tau_f, \tau_n^- \}^* ] $. There exist $C, h_0 >0$ such that for all $N \geq n$ and $X_N \subset \Omega$ with $h_{X_N, \Omega} \leq h_0$ is
\begin{itemize}[topsep=5pt]
	\setlength\itemsep{0.1mm}
\item $\tau_f \geq \tau_n^+ \Longrightarrow 
\|f - m'(\theta_N)\|_{W_p^s (\Omega)} \leq C h_{X_N, \Omega}^{\tau_n^- - s - d \left( \frac{1}{2} - \frac{1}{p} \right)_+} \| f \|_{W_2^{\tau_f}(\Omega)},$
\item $\tau_f < \tau_n^+ \Longrightarrow 
\|f - m'(\theta_N)\|_{W_p^s (\Omega)} \leq C h_{X_N, \Omega}^{\min \{\tau_f, \tau_n^- \} - s - d \left( \frac{1}{2} - \frac{1}{p} \right)_+} \rho_{X_N, \Omega}^{\tau_n^+ - \tau_f} \| f \|_{W_2^{\tau_f}(\Omega)},$
\end{itemize}
where $C(\Omega, \tau_f, p, s, \Theta_n^*)$ and $h_0 (R, \delta, d, \tau_f, \Theta_n^*)$ are constants.
\end{theorem}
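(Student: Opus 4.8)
The plan is to reduce the estimate to a purely deterministic sampling-inequality (Sobolev zero-lemma) argument, exactly in the spirit of Narcowich–Ward–Wendland and the subsequent work of Arcangéli et al. The key object is the interpolation error $f - m'(\theta_N)$, which for noiseless data is the kernel interpolant of $f$ on $X_N$ in the RKHS $\mathcal{H}(\Omega)$ of $k(\theta_N)$. First I would record the two standard facts about this interpolant: (a) it is the $\mathcal{H}(\Omega)$-orthogonal projection of $f$ onto $\mathrm{span}\{k(\theta_N)(\cdot,\boldsymbol{x}_i)\}$, so $\|m'(\theta_N)\|_{\mathcal{H}(\Omega)} \le \|f\|_{\mathcal{H}(\Omega)}$ and $(f-m'(\theta_N))(\boldsymbol{x}_i)=0$ for all $i$; and (b) via the norm equivalence of Assumption \ref{Assumption 2} (with constants controlled uniformly by $C_n \in \Theta_n^*$) together with Proposition \ref{3.1.9} and the extension operator of Proposition \ref{3.1.7}, one has $\|f-m'(\theta_N)\|_{W_2^{\tau(\theta_N)}(\Omega)} \lesssim C_n \|f\|_{W_2^{\tau(\theta_N)}(\Omega)}$. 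The point of (b) is that the interpolation error is controlled in the \emph{native} smoothness $\tau(\theta_N)$, but vanishes on $X_N$, which is precisely the setup for a sampling inequality.

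Next I would invoke the sampling inequality for Sobolev functions on an $\mathcal{L}(R,\delta)$-domain: there exist $h_0 = h_0(R,\delta,d,\tau,\ldots)$ and $C$ such that for any $g \in W_2^{\tau}(\Omega)$ with $g|_{X_N}=0$ and any admissible $(p,s)$,
$$\|g\|_{W_p^s(\Omega)} \le C\, h_{X_N,\Omega}^{\tau - s - d(1/2-1/p)_+}\, \|g\|_{W_2^{\tau}(\Omega)},$$
valid for $h_{X_N,\Omega}\le h_0$ and $s \le \tau^*$ (this is where the exponents $\tau_0$, $\tau^*$, and the condition $\tau > d/2$ enter — they are exactly the range in which the embedding $W_2^\tau \hookrightarrow W_p^s$ used inside the sampling inequality holds). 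Applying this with $g = f-m'(\theta_N)$ and $\tau = \tau(\theta_N)$, then bounding $\|f-m'(\theta_N)\|_{W_2^{\tau(\theta_N)}(\Omega)}$ by $C_n\|f\|_{W_2^{\tau(\theta_N)}(\Omega)}$ from step (b), yields a rate $h_{X_N,\Omega}^{\tau(\theta_N) - s - d(1/2-1/p)_+}$ — provided $f$ actually lies in $W_2^{\tau(\theta_N)}(\Omega)$. When $\tau_f \ge \tau_n^+$, Assumption \ref{Assumption 5} forces $\tau(\theta_N)\le\tau_f$, so $f\in W_2^{\tau_f}(\Omega)\subseteq W_2^{\tau(\theta_N)}(\Omega)$ by Proposition \ref{3.1.6}, and since $\tau(\theta_N)\ge\tau_n^-$ one obtains the first bullet with exponent $\tau_n^- - s - d(1/2-1/p)_+$. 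Taking the infimum/uniform constant over the finitely many values of $\tau(\theta_N)$ (Assumption \ref{Assumption 3}) makes $C$ and $h_0$ independent of $N$.

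For the misspecified case $\tau_f < \tau_n^+$, when $\tau(\theta_N) > \tau_f$ we can no longer claim $f \in W_2^{\tau(\theta_N)}(\Omega)$, so the argument above fails directly; this is the main obstacle. The fix is an inverse/bandwidth-interpolation trick: replace the sampling inequality by one that measures the interpolant in its native norm at the \emph{lower} regularity $\tau_f$ but pays for it with the separation radius. Concretely, one uses an inverse inequality on the finite-dimensional interpolation space — $\|v\|_{W_2^{\tau(\theta_N)}(\Omega)} \lesssim q_{X_N}^{\tau_f - \tau(\theta_N)}\|v\|_{W_2^{\tau_f}(\Omega)}$ for $v$ in that space — combined with the $\mathcal{H}$-stability $\|m'(\theta_N)\|_{\mathcal{H}} \le \|f\|_{\mathcal{H}}$ and a Lebesgue-constant-type bound so that $\|m'(\theta_N)\|_{W_2^{\tau_f}(\Omega)} \lesssim \|f\|_{W_2^{\tau_f}(\Omega)}$; applying the $\tau_f$-level sampling inequality to $f - m'(\theta_N)$ then gives $h_{X_N,\Omega}^{\tau_f - s - d(1/2-1/p)_+}$ times $\|f\|_{W_2^{\tau_f}}$, and the mesh-ratio factor $\rho_{X_N,\Omega}^{\tau_n^+ - \tau_f} = (h_{X_N,\Omega}/q_{X_N})^{\tau_n^+ - \tau_f}$ absorbs the conversion between $h$ and $q$ in the inverse estimate, uniformly over the finitely many $\tau(\theta_N)$. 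Writing $\min\{\tau_f,\tau_n^-\}$ in the exponent covers both $\tau(\theta_N)\le\tau_f$ and $\tau(\theta_N)>\tau_f$ simultaneously. Throughout, the uniformity of $C$ and $h_0$ in $N$ rests on Assumptions \ref{Assumption 2}–\ref{Assumption 3}: finitely many smoothness values and finite extreme constants $\Theta_n^*$; the bookkeeping of which constant depends on which parameter ($C$ on $\Omega,\tau_f,p,s,\Theta_n^*$ and $h_0$ on $R,\delta,d,\tau_f,\Theta_n^*$) is then just tracking the dependencies through the sampling and inverse inequalities.
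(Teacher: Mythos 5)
Your treatment of the well-specified case is essentially the paper's: the sampling inequality of Arcangéli et al.\ on an $\mathcal{L}(R,\delta)$-domain applied to $f-m'(\theta_N)$ (which vanishes on $X_N$), followed by the RKHS projection property $\|f-m'(\theta_N)\|_{\mathcal{H}}\le\|f\|_{\mathcal{H}}$ and the norm equivalence controlled by $C_n$, with uniformity over $N$ coming from Assumption \ref{Assumption 3}. (One small remark: for $\tau_f\ge\tau_n^+$ you do not need Assumption \ref{Assumption 5}; $\tau(\theta_N)\le\tau_n^+\le\tau_f$ already follows from the definition of $\tau_n^+$, and indeed the theorem only assumes \ref{Assumption 1}--\ref{Assumption 4}.)

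In the misspecified case $\tau_f<\tau(\theta_N)$ there is a genuine gap. You correctly identify that the obstacle is $f\notin\mathcal{H}(\Omega)=W_2^{\tau(\theta_N)}(\Omega)$, but the mechanism you propose to get around it does not close. The $\mathcal{H}$-stability $\|m'(\theta_N)\|_{\mathcal{H}}\le\|f\|_{\mathcal{H}}$ is exactly the statement that becomes meaningless here, since the right-hand side is infinite (or undefined); and the ``Lebesgue-constant-type bound'' $\|m'(\theta_N)\|_{W_2^{\tau_f}(\Omega)}\lesssim\|f\|_{W_2^{\tau_f}(\Omega)}$ is not an auxiliary fact you can invoke --- it is essentially equivalent to the escape-from-the-native-space stability result that has to be \emph{proved}, and proving it is the entire content of this step. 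The paper (following Narcowich--Ward--Wendland) resolves this by introducing a third object: after extending $f$ to $\mathcal{E}f\in W_2^{\tau_f}(\mathbb{R}^d)$ via Proposition \ref{3.1.7}, one constructs a \emph{band-limited} function $\mathcal{E}f_\sigma\in W_2^{\tau(\theta_N)}(\mathbb{R}^d)$ with $\left.\mathcal{E}f_\sigma\right|_{X_N}=\left.f\right|_{X_N}$, $\|\mathcal{E}f-\mathcal{E}f_\sigma\|_{W_2^{\tau_f}}\lesssim\|f\|_{W_2^{\tau_f}}$, and the Bernstein bound $\|\mathcal{E}f_\sigma\|_{W_2^{\tau(\theta_N)}}\lesssim q_{X_N}^{\tau_f-\tau(\theta_N)}\|\mathcal{E}f_\sigma\|_{W_2^{\tau_f}}$. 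Because $\mathcal{E}f_\sigma$ agrees with $f$ on $X_N$, it has the \emph{same} interpolant $m'(\theta_N)$, and it genuinely lies in the native space, so the well-specified estimate applies to $\mathcal{E}f_\sigma-m'(\theta_N)$; the triangle inequality through $f-\mathcal{E}f_\sigma$ and the combination $h_{X_N,\Omega}^{\tau(\theta_N)-\tau_f}q_{X_N}^{\tau_f-\tau(\theta_N)}=\rho_{X_N,\Omega}^{\tau(\theta_N)-\tau_f}$ then produce the mesh-ratio penalty. Your inverse inequality on the span of kernel translates is the right kind of tool, but without the band-limited intermediary (or an equivalent stability argument) the chain from $\|f\|_{W_2^{\tau_f}(\Omega)}$ to a bound on $|f-m'(\theta_N)|_{W_2^{\tau_f}(\Omega)}$ is not established.
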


The results hold only if the points in $X_N$ sufficiently cover $\Omega$, measured by $h_0$. If we set $s=0$, we recover results with respect to the norm $\| \cdot \|_{L^p(\Omega)}$. Additionally, we derive the following:
\begin{itemize}[topsep=5pt]
	\setlength\itemsep{0.1mm}
\item The convergence rate is influenced by the experimental design. The optimal experimental design is achieved if $\rho_{X_N, \Omega}$ is bounded and $h_{X_N, \Omega} = \mathcal{O} (N^{-1/d})$. This occurs, for example, with a sequence of quasi-uniform points.
\item Parameters other than kernel smoothness affect only the constants in the convergence rate. The larger the smoothness $\tau_f$ of the function $f$, the faster the convergence rate can be. If the smoothness is well-specified ($\tau_f \geq \tau_n^+$), the extreme value $\tau_n^-$ limits the convergence rate. If the smoothness is misspecified ($\tau_f < \tau_n^+$), the term $\rho_{X_N, \Omega}^{\tau_n^+ - \tau_f}$ penalizes the overestimation of the smoothness. By using a sequence of quasi-uniform points, we can bound $\rho_{X_N, \Omega}$ by a constant.
\item The optimal convergence rate is $\mathcal{O} (N^{-\tau_f/d})$ with respect to the norm $\| \cdot \|_{L^2(\Omega)} $. This rate is achieved when the smoothness is correctly estimated as $\tau_f = \tau_n^+ = \tau_n^-$ and the points are selected such that $h_{X_N, \Omega} = \mathcal{O} (N^{-1/d})$. The use of quasi-uniform points helps to mitigate the effects of smoothness misspecification.
\end{itemize}

\begin{remark}
Briol et al. also investigate the scenario where the likelihood function is misspecified. This means the assumption $\varepsilon_i=0$ is incorrect, but we still use the interpolation setting. The posterior mean function becomes
$$m'(\boldsymbol{x}) = \boldsymbol{k}(\boldsymbol{x}) \boldsymbol{K}^{-1} (\boldsymbol{f} + \boldsymbol{\varepsilon} ) \quad \text{for } \boldsymbol{x} \in \Omega,$$
where $\boldsymbol{\varepsilon} = [\varepsilon_1, ..., \varepsilon_N]^T$. We allow $\varepsilon_i$ to be deterministic or to follow a probability distribution. It is crucial to address the issue of a misspecified likelihood function because, in most applications, observations are affected by measurement error. Nevertheless, given that this measurement error is usually minimal, we employ an interpolation setting. Assume now that assumptions \ref{Assumption 1}-\ref{Assumption 5} hold. Examining the expected norm $\mathbb{E}[\|f - m'(\theta_N)\|_{W_p^s (\Omega)}]$ in Theorem \ref{3.3.2} enables us to adjust the convergence rate by adding the term
$$C_0 h_{X_N, \Omega}^{d/\max \{ 2,p \} -s} \rho_{X_N, \Omega}^{\tau_n^+ - d/2} \mathbb{E}[\boldsymbol{ \| \varepsilon} \|_2 ]$$
to both upper bounds, where $C_0(\Omega, \tau_f, p, s, \Theta_n^*)$ is a positive constant. Verification of this statement can be found in Theorem 7 of \cite{BGW}. We derive the following conclusion:
\begin{itemize}[topsep=5pt]
	\setlength\itemsep{0.1mm}
\item If the likelihood function is misspecified, the convergence rate is penalized by a term that depends on $\mathbb{E}[\boldsymbol{ \| \varepsilon} \|_2 ]$ and on the experimental setting. Keeping the extreme value $\tau_n^+$ small and using quasi-uniform points helps to mitigate the effects of likelihood misspecification.
\end{itemize}
\end{remark}


We conclude the interpolation setting with an example to assess the accuracy of the upper bound.

\begin{example} \label{3.3.4}
Consider $N$ equispaced points in $\Omega = (-1,1)^3$ with a step size of $N^{-1/3}$ in each dimension, the function
$$f:(-1,1)^3 \rightarrow \mathbb{R}, f(\boldsymbol{x})= |x_1|^{3.5} + x_2^2 - |x_3| $$
and an interpolation setting given by the data $\mathcal{D}_N = \{ (\boldsymbol{x}_1, f(\boldsymbol{x}_1)), ..., (\boldsymbol{x}_N, f(\boldsymbol{x}_N)) \}$. Using the Matérn kernel $k_{1/2,1}$, we approximate $f \in W^2_2 ( \Omega )$ by the posterior mean function $m'$ of the Gaussian Process $\mathcal{GP}(0,k_{1/2,1})$. With the smoothness parameter $\tau_f = 2 = \tau_n^+$ being well-specified, Theorem \ref{3.3.2} allows to bound the relative error by
$$\dfrac{\|f - m'\|_{L^2 (\Omega)}}{\|f\|_{W_2^2 (\Omega)}} = \mathcal{O} (N^{-2/3}) \quad \text{as } N\rightarrow \infty.$$
We aim to evaluate this convergence rate. The weak derivatives of $f$ and their norms can be easily evaluated. To approximate the $L^2$ norm, we use $M=10,000$ equispaced points with a step size of $M^{-1/3}$ and the Trapezoidal Rule. Specifically, we compute
\begin{align*}
\dfrac{1}{M} \sum_{i=1}^{M^{1/3}} \sum_{j=1}^{M^{1/3}} \sum_{\ell=1}^{M^{1/3}} \omega_{ij \ell} (f(z_i, z_j , z_\ell) - m'(z_i, z_j , z_\ell))^2
&\approx \int_{(-1,1)^3} (f(\boldsymbol{z})-m'(\boldsymbol{z}))^2 \, d\boldsymbol{z} \\
&= \|f-m'\|_{L^2 (\Omega)}^2,
\end{align*}
where $\omega_{ij \ell} \in \{ 1, 1/2, 1/4, 1/8 \}$ are weights assigned to interior, face, edge and corner points. Figure \ref{fig. 9} illustrates the calculated relative error compared to the functions $g(N)=N^{-2/3}$ and $h(N)=3 N^{-1}$. The curves suggest that the estimated convergence rate of $\mathcal{O} (N^{-2/3})$ is reliable, with the actual convergence rate appearing slightly faster, particularly at the initial stages.

\begin{figure}[h]
	\centering
	\includegraphics[width=0.5\textwidth]{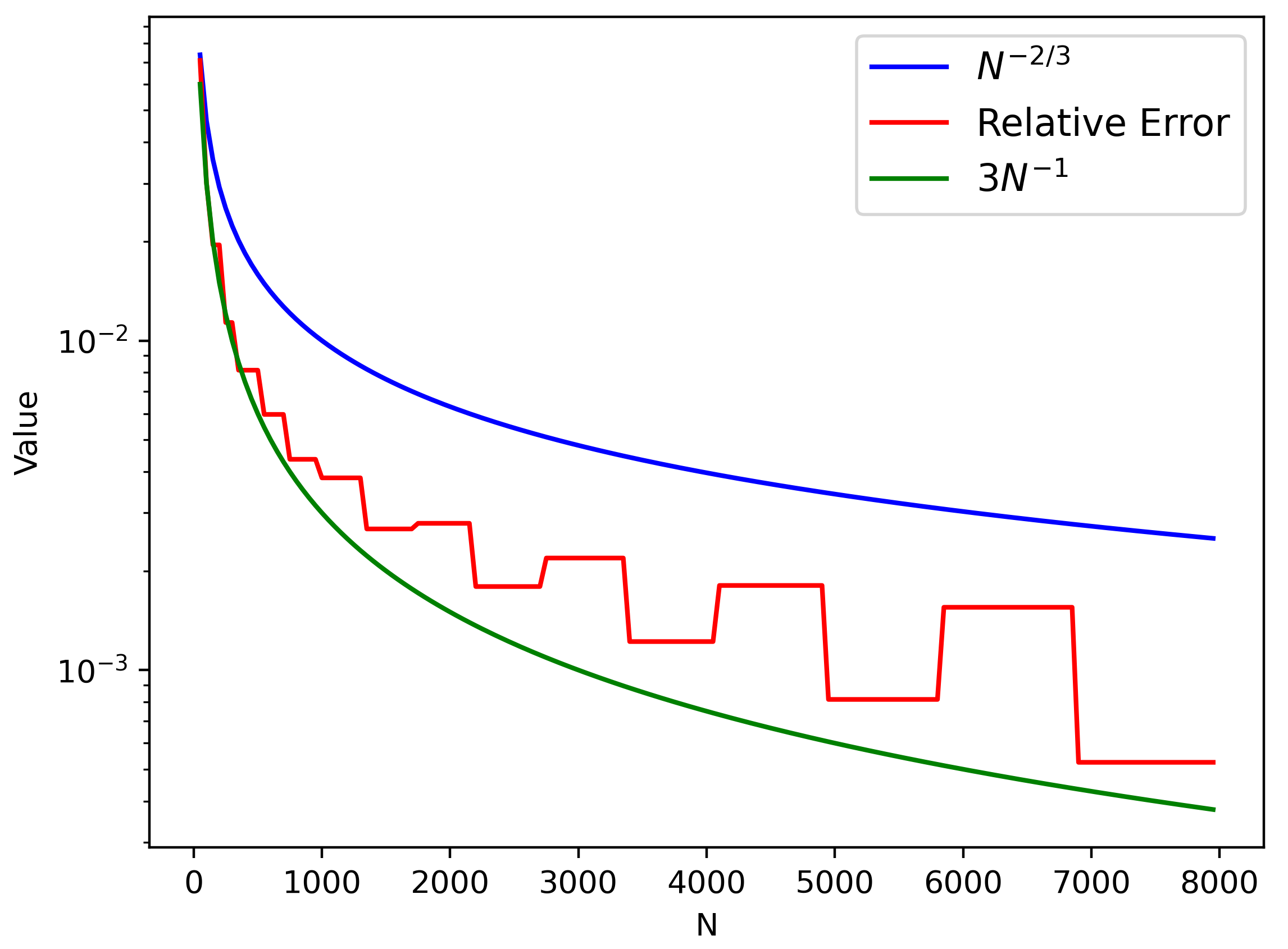}
	\caption{The blue curve represents $g(N)= N^{-2/3}$, the green curve represents $h(N) =3 N^{-1}$ and the red curve describes the relative error of $m'$, computed for $N \in \{ 50,100,150,..., 7950, 8000 \}$.} \label{fig. 9}
\end{figure}
\end{example}

The remainder of this subsection proves Theorem \ref{3.3.2}, as described in \cite{BGW}. This proof builds upon the work in \cite{AL} and \cite{N}, but is too lengthy and digressive to present here in full. While Briol et al. emphasize the theorems used, my focus is on the proof itself. Interested readers can explore the referenced theorems for more details.


\begin{proof}[Proof of Theorem \ref{3.3.2}]
By applying Remark 3.4/Theorem 3.2 in \cite{AL} to the functions $f - m'(\theta_N)$, we obtain $C_1,h_1>0$ such that for all $\tau(\theta_N)$ with $N \geq n$, $s \in [0, \min \{\tau_f, \tau_n^- \}^*]$ and $X_N \subset \Omega$ with $h_{X_N,\Omega} \leq h_1$ is
$$\|f - m'(\theta_N)\|_{W_p^s (\Omega)} 
\leq C_1 h_{X_N, \Omega}^{\min \{\tau_f, \tau(\theta_N) \} - s - d \left( \frac{1}{2} - \frac{1}{p} \right)_+} | f - m'(\theta_N) |_{W_2^{\min \{\tau_f, \tau(\theta_N) \}}(\Omega)},$$
where $C_1(\Omega, \tau_f, p, s, \Theta_n^*)$ and $h_1(R, \delta, d, \tau_f, \Theta_n^*)$ are constants. To find an upper bound for the last term, we distinguish between $\tau_f \geq \tau(\theta_N)$ and $\tau_f < \tau(\theta_N)$. \\

\textbf{Step 1:} Let $\tau_f \geq \tau(\theta_N)$.\\[5mm]
Using Lemma \ref{3.1.11}, we have
$$| f - m'(\theta_N) |_{W_2^{\min \{\tau_f, \tau(\theta_N) \}}(\Omega)}
= | f - m'(\theta_N) |_{W_2^{ \tau(\theta_N) }(\Omega)}
\leq C_2 \| f - m'(\theta_N) \|_{H^{\tau(\theta_N)}(\Omega)},$$
where $C_2(\Omega, \tau_f)>0$ is a constant. Using the norm-equivalence of the RKHS, we get
\begin{align*}
\| f - m'(\theta_N) \|_{H^{\tau(\theta_N)}(\Omega)}
&\leq C_u(\theta_N) \| f - m'(\theta_N) \|_{\mathcal{H}_{ k (\tau(\theta_N)) }(\Omega)}  \\
&\leq C_u(\theta_N) \| f \|_{\mathcal{H}_{ k (\tau(\theta_N)) }(\Omega)} \\
&\leq C_u(\theta_N) C_\ell(\theta_N)^{-1} \| f \|_{H^{ \tau(\theta_N) }(\Omega)} \\
&\leq C_n \| f \|_{H^{ \tau(\theta_N) }(\Omega)}
\leq C_n \| f \|_{H^{ \tau_f }(\Omega)} \leq C_3 \| f \|_{W_2^{ \tau_f }(\Omega)},
\end{align*}
where Corollary 10.25 in \cite{W} is used for the second inequality and $C_3(\Omega, \tau_f, \Theta_n^*)>0$ is a constant. Thus, we obtain
\begin{equation}
\|f - m'(\theta_N)\|_{W_p^s (\Omega)} \leq C_1 C_2 C_3 h_{X_N, \Omega}^{\tau(\theta_N) - s - d \left( \frac{1}{2} - \frac{1}{p} \right)_+} \| f \|_{W_2^{\tau_f}(\Omega)}. \label{eq. 3.3.1}
\end{equation}


\textbf{Step 2:} Let $\tau_f < \tau(\theta_N)$.\\[5mm]
Using Proposition \ref{3.1.7}, we can extend $f$ to a function $\mathcal{E}f \in W_2^{\tau_f}(\mathbb{R}^d)$ satisfying $\left.\mathcal{E}f\right|_\Omega \equiv \left.f\right|_\Omega$ and
$$\| \mathcal{E}f \|_{W_2^{\tau_f}(\mathbb{R}^d)} \leq C_4 \| f \|_{W_2^{\tau_f}(\Omega)},$$
where $C_4(\Omega, \tau_f) >0$ is a constant. According to the definition of band-limited functions on page 178 in \cite{N} and Theorem 3.4 in \cite{N}, there exists $\mathcal{E}f_{\sigma} \in W_2^{\tau(\theta_N)} (\mathbb{R}^d)$ such that 
$$\left.\mathcal{E}f_{\sigma}\right|_{X_N} \equiv \left.\mathcal{E}f\right|_{X_N} \equiv \left.f\right|_{X_N}, \quad 
\|\mathcal{E}f - \mathcal{E}f_{\sigma}\|_{W_2^{\tau_f} (\mathbb{R}^d)} 
\leq C_5  \|\mathcal{E}f\|_{W_2^{\tau_f} (\mathbb{R}^d)} \leq C_5 C_4 \| f \|_{W_2^{\tau_f}(\Omega)}$$
and
$$\| \mathcal{E}f_{\sigma} \|_{W_2^{\tau(\theta_N)} (\mathbb{R}^d)} 
\leq C_6 q_{X_N}^{\tau_f - \tau(\theta_N)} \| \mathcal{E} f_{\sigma} \|_{W_2^{\tau_f} (\mathbb{R}^d)},$$
where $C_5(\Omega, \tau_f)>0$ and $C_6(\Omega, \tau_f, \Theta_n^*)>0$ are constants. Since $\left.\mathcal{E}f_{\sigma}\right|_{X_N} \equiv \left.f\right|_{X_N}$, the prediction $m'(\theta_N)$ for $f$ is also the prediction for $\mathcal{E}f_{\sigma}$ and we can use equation (\ref{eq. 3.3.1}) to obtain
\begin{align*}
\| \mathcal{E}f_{\sigma} - m'(\theta_N) \|_{W_2^{ \tau_f }(\Omega)} 
&\leq C_1 C_2 C_3 h_{X_N, \Omega}^{\tau(\theta_N) - \tau_f} \| \mathcal{E}f_{\sigma} \|_{W_2^{\tau(\theta_N)}(\Omega)} \\
&\leq C_1 C_2 C_3 h_{X_N, \Omega}^{\tau(\theta_N) - \tau_f} \| \mathcal{E}f_{\sigma} \|_{W_2^{\tau(\theta_N)}(\mathbb{R}^d)} \\
&\leq C_1 C_2 C_3 C_6 h_{X_N, \Omega}^{\tau(\theta_N) - \tau_f} q_{X_N}^{\tau_f - \tau(\theta_N)} \| \mathcal{E}f_{\sigma} \|_{W_2^{\tau_f}(\mathbb{R}^d)} \\
&= C_1 C_2 C_3 C_6 \rho_{X_N, \Omega}^{\tau(\theta_N) - \tau_f} \| \mathcal{E}f_{\sigma} - \mathcal{E}f + \mathcal{E}f \|_{W_2^{\tau_f}(\mathbb{R}^d)} \\
&\leq C_1 C_2 C_3 C_6 C_4 (C_5+1) \rho_{X_N, \Omega}^{\tau(\theta_N) - \tau_f} \| f \|_{W_2^{\tau_f}(\Omega)}.
\end{align*}
Using the established inequalities, it follows
\begin{align*}
&\| f - m'(\theta_N) \|_{W_2^{\min \{\tau_f, \tau(\theta_N) \}}(\Omega)} \\
= &\|f - m'(\theta_N)\|_{W_2^{\tau_f} (\Omega)} \\
= &\|f - \mathcal{E}f_{\sigma} + \mathcal{E}f_{\sigma} - m'(\theta_N)\|_{W_2^{\tau_f} (\Omega)} \\
\leq &\|f - \mathcal{E}f_{\sigma} \|_{W_2^{\tau_f} (\Omega)} + \| \mathcal{E}f_{\sigma} - m'(\theta_N)\|_{W_2^{\tau_f} (\Omega)} \\
=  &\| \mathcal{E}f - \mathcal{E}f_{\sigma} \|_{W_2^{\tau_f}(\Omega)} + \| \mathcal{E}f_{\sigma} - m'(\theta_N)\|_{W_2^{\tau_f} (\Omega)} \\
\leq  &C_5 C_4 \|f\|_{W_2^{\tau_f}(\Omega)} + C_1 C_2 C_3 C_6 C_4 (C_5+1) \rho_{X_N, \Omega}^{\tau(\theta_N) - \tau_f} \| f \|_{W_2^{\tau_f}(\Omega)} \\
\leq  &C_7 \rho_{X_N, \Omega}^{\tau(\theta_N) - \tau_f} \|f\|_{W_2^{\tau_f}(\Omega)},
\end{align*}
where $C_7(\Omega, \tau_f, p, s, \Theta_n^*)>0$ is a constant and we used $\rho_{X_N, \Omega} \geq 1$ for the last inequality. Thus, we have
\begin{equation}
\|f - m'(\theta_N)\|_{W_p^s (\Omega)} \leq C_1 C_7 h_{X_N, \Omega}^{\tau_f - s - d \left( \frac{1}{2} - \frac{1}{p} \right)_+} \rho_{X_N, \Omega}^{\tau_n^+ - \tau_f} \| f \|_{W_2^{\tau_f}(\Omega)}. \label{eq. 3.3.2}
\end{equation}

\textbf{Step 3:} Let $\tau_f$ be arbitrary.\\[5mm]
The bounds (\ref{eq. 3.3.1}) and (\ref{eq. 3.3.2}) imply that for $h_{X_N, \Omega} \leq \min \{h_1,  h_1/ |h_1| \}$ is
$$\|f - m'(\theta_N)\|_{W_p^s (\Omega)} \leq C_1 \max \{C_2 C_3, C_7 \} h_{X_N, \Omega}^{\min \{\tau_f, \tau_n^- \} - s - d \left( \frac{1}{2} - \frac{1}{p} \right)_+} \rho_{X_N, \Omega}^{(\tau_n^+ - \tau_f)_+} \| f \|_{W_2^{\tau_f}(\Omega)},$$
where we use $\rho_{X_N, \Omega} \geq 1$ and $h_{X_N, \Omega} \leq 1$. Distinguishing between the cases of smoothness verifies the theorem.
\end{proof}


\subsubsection{Regression}
Here we handle the case, where the observations have independently and identically distributed  Gaussian noise, meaning $y_i = f(\boldsymbol{x}_i) + \varepsilon_i$ with $\varepsilon_i \sim \mathcal{N}(0,\sigma^2)$ for all observations. Consequently, we estimate $f(\boldsymbol{x})$ by the posterior Gaussian Process mean
$$m'(\boldsymbol{x}) = \boldsymbol{k}(\boldsymbol{x}) (\boldsymbol{K} + \sigma^2 \boldsymbol{I}_N)^{-1} \boldsymbol{y} \quad \text{for } \boldsymbol{x} \in \Omega,$$
as defined in equation (\ref{eq. 2.1.2}). Because $\varepsilon_i$ is a sample from the normal distribution $\mathcal{N}(0,\sigma^2)$, we have a probability measure $\mathcal{P}$ that allows us to determine the likelihood of $\varepsilon_i$ falling within a specific range. For example, the probability of $|\varepsilon_i| \leq 1.96 \sigma$ is given by
$$\mathcal{P}(|\varepsilon_i| \leq 1.96 \sigma) = \mathcal{P}(-1.96 \sigma \leq \varepsilon_i \leq 1.96 \sigma) \approx 95 \%.$$ 
This is because the interval $[-1.96 \sigma, 1.96 \sigma]$ is known as the $95 \% $ confidence interval for a centered normal distribution. For general intervals $I = [a,b] \subset \mathbb{R}$, we can numerically calculate the probability using the density function:
$$\mathcal{P} ( a \leq \varepsilon_i  \leq b) = \dfrac{1}{\sqrt{2 \pi \sigma^2}} \int_a^b e^{-\frac{x^2}{2 \sigma^2}} \, dx.$$
Similarly, assuming $f \sim \mathcal{GP}(0,k)$ implies that $f$ is a sample from the Gaussian Process. The realizations of this Gaussian Process are not real values, instead they are functions $g:\Omega \rightarrow \mathbb{R}$. We have a probability measure $\Pi_k$\index{\textit{$\Pi_k$,}} that determines the likelihood of $f$ falling within a suitable subset of the sample space. For example, if $\Omega = \{ \boldsymbol{x}_1, ..., \boldsymbol{x}_m \}$ has a finite number of elements, then the function values of $f$ are multivariate normal distributed. Therefore,
\begin{align*}
\Pi_k (\| f \|_\infty \leq c) 
&= \Pi_k (f(\boldsymbol{x}_1),...,f(\boldsymbol{x}_m) \in [-c,c]) \\
&= \dfrac{1}{(2\pi)^{m/2} | \boldsymbol{K} |^{1/2} } \int_{-c}^c \dots \int_{-c}^c e^{-\frac{1}{2} \boldsymbol{z}^T \boldsymbol{K}^{-1} \boldsymbol{z} } \, dz_1 \cdots dz_m
\end{align*}
with $\boldsymbol{K} = [k(\boldsymbol{x}_i, \boldsymbol{x}_j)]_{1 \leq i,j \leq m}$. In general, $\Omega$ has an infinite amount of elements, and $\Pi_k$ is the product measure induced by measures of multivariate normal distributions. For this reason, it is difficult to determine the likelihood of $f$ falling within a specific range when an infinite number of function values is involved. For instance, the probability 
$$\Pi_k(\| f - m' \|_{L^\infty(\Omega)} \leq c)$$ 
is of interest, but not easy to determine. For a given $n \in \mathbb{N}_{\geq 2}$ we formulate the following assumptions:
\begin{enumerate}[topsep=-2pt]
	\setcounter{enumi}{5}
\item Let $\Pi_{k(\theta_N)}$ denote the probability measure associated with $\mathcal{GP}(0,k(\theta_N))$. There exist $c, a_n >0$ such that $\Pi_{k(\theta_N)} (\| f \|_{L^\infty (\overline{\Omega}) } \leq c) \leq e^{-a_n}$ for all $N \geq n$. \label{Assumption 6}
\item Let $\tau_f = \ell + s$ with $n \in \mathbb{N}_{\geq 0}$ and $s \in (0,1]$. Then $f$ has an extension $f^\circ \in C^{\ell, s}(\mathbb{R}^d) \cap W_2^{\tau_f} (\mathbb{R}^d)$, where $C^{\ell, s}(\mathbb{R}^d)$ is the space of Hölder continuous functions. \label{Assumption 7}
\end{enumerate}


Assumption \ref{Assumption 6} describes a small ball property, ensuring that $f$ cannot be uniformly small with exceedingly high probability, a condition sometimes tacitly assumed in the literature. Assumption \ref{Assumption 7} requires that $f$ has sufficient regularity. Together, these assumptions enable us to establish an upper bound for the term
$$\mathbb{E} \big[\| \left.(f - m'(\theta_N))\right|_{X_N} \|_2 \big],$$
which equals zero in the interpolation setting. 

\begin{example} \label{3.3.5}
Consider the Matérn kernel $k_{\nu_N, \ell_N}$. It is common practice to introduce a parameter $s_N^2>0$ as a scaling factor, leading to the Matérn kernel $k(\theta_N) = s_N^2 k_{\nu_N, \ell_N}$ for use in a Gaussian Process. Assume there exists a constant $s_{\text{min}}^2 >0$ such that $s_N^2 \geq s_{\text{min}}^2$ for all $N \geq n$. Given that $f$ is continuous at $\boldsymbol{x}_1 \in \Omega$, we have
\begin{align*}
\Pi_{k(\theta_N)} (\| f \|_{L^\infty (\overline{\Omega}) } \leq 1.96 s_{\text{min}}) 
& \leq \Pi_{k(\theta_N)} (| f(\boldsymbol{x}_1) | \leq 1.96 s_{\text{min}}) \\
& \leq \Pi_{k(\theta_N)} (| f(\boldsymbol{x}_1) | \leq 1.96 s_N) \\
&\approx 0.95 < e^{-0.001}.
\end{align*}
This implies Assumption \ref{Assumption 6} is satisfied if we can ensure that the signal variance $s^2$ is bounded away from zero. Therefore, when using parameter selection methods such as maximum likelihood, it is advisable to enforce a lower bound on $s^2$.
\end{example}

For completeness and a better understanding of Assumption \ref{Assumption 7}, the definition of the Hölder space is provided here.

\begin{defi}
Let $\Omega \subseteq \mathbb{R}^d$ be an open set, $\ell \in \mathbb{N}_{\geq 0}$ and $s \in (0,1]$. Then the Hölder space\index{Hölder space,} $C^{\ell, s}(\Omega)$\index{\textit{$C^{\ell, s}(\Omega)$,}} consists of functions $g \in C^\ell(\Omega)$ that satisfy
$$\| g \|_{C^{\ell, s}(\Omega)} 
= \| D^{\boldsymbol{\beta}} g \|_{C^\ell (\Omega)} + \max_{0 \leq |\boldsymbol{\beta}| \leq \ell} \sup_{\boldsymbol{x}, \boldsymbol{y} \in \Omega, \boldsymbol{x} \neq \boldsymbol{y}} \dfrac{| D^{\boldsymbol{\beta}} g (\boldsymbol{x}) - D^{\boldsymbol{\beta}} g (\boldsymbol{y}) |}{ \|\boldsymbol{x} - \boldsymbol{y} \|_2^s}
< \infty,$$
where
$$\| D^{\boldsymbol{\beta}} g \|_{C^\ell (\Omega)} = \max_{0 \leq |\boldsymbol{\beta}| \leq \ell} \| D^{\boldsymbol{\beta}} g \|_{\infty}.$$
\end{defi}

The following theorem describes the convergence rate of the posterior Gaussian Process mean $m'(\theta_N)$ to $f$. We distinguish between the two cases $\tau_f \geq \tau_n^+$ and $\tau_f < \tau_n^+$.

\begin{theorem} \label{3.3.7}
Let assumptions \ref{Assumption 1}-\ref{Assumption 7} hold, $p \in [1, \infty]$ and $s \in [0, \min\{ \tau_f, \tau_n^- \}^* ] $. There exist $C, h_0 >0$ such that for all $N \geq n$ and $X_N \subset \Omega$ with $h_{X_N, \Omega} \leq h_0$ is
\begin{itemize}[topsep=5pt]
	\setlength\itemsep{0.1mm}
\item $ \begin{aligned}[t] \tau_f \geq \tau_n^+ \Longrightarrow
\mathbb{E}[ \|f - m'(\theta_N)\|_{W_p^s (\Omega)} ] \leq &C h_{X_N, \Omega}^{\frac{d}{\max \{ 2,p\}} - s} \Big( h_{X_N, \Omega}^{\tau_n^- - \frac{d}{2} } \| f \|_{W_2^{\tau_f}(\Omega)} \\
&+ h_{X_N, \Omega}^{\tau_n^- - \frac{d}{2} } N^{\frac{1}{2}} + N^{\frac{d}{4 \tau_n^- }} \Big), \end{aligned}$
\item $ \begin{aligned}[t] \tau_f < \tau_n^+ \Longrightarrow
\mathbb{E} [ \|f - m'(\theta_N)\|_{W_p^s (\Omega)} ] \leq &C h_{X_N, \Omega}^{\frac{d}{\max \{ 2,p\}} - s} \Big( h_{X_N, \Omega}^{\min \{ \tau_f, \tau_n^- \} - \frac{d}{2}} \rho_{X_N, \Omega}^{ \tau_n^+ - \tau_f } \| f \|_{W_2^{\tau_f}(\Omega)} \\
&+ h_{X_N, \Omega}^{\tau_n^- - \frac{d}{2} } N^{\frac{1}{2}} + N^{ \max \big\{ \frac{1}{2} - \frac{\tau_f}{2 \tau_n^+}, \frac{d}{4 \tau_n^-} \big\} } \Big), \end{aligned}$
\end{itemize}
where $C \big(\Omega, \tau_f, p, s, \| f \|_{W_2^{\tau_f}(\Omega)}, \Theta_n^* \big)$ and $h_0 (R, \delta, d, \tau_f, \Theta_n^*)$ are constants.
\end{theorem}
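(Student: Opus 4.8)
The plan is to run the argument of Theorem \ref{3.3.2} again, applying the sampling inequality of Remark~3.4/Theorem~3.2 in \cite{AL} to $f-m'(\theta_N)$, but now retaining the data term that vanished in the interpolation setting. For $N\ge n$, $p\in[1,\infty]$, $s\in[0,\min\{\tau_f,\tau_n^-\}^*]$ and $X_N$ with $h_{X_N,\Omega}$ small we get
\begin{equation*}
\|f-m'(\theta_N)\|_{W_p^s(\Omega)}\lesssim h_{X_N,\Omega}^{\,\min\{\tau_f,\tau(\theta_N)\}-s-d(1/2-1/p)_+}\,|f-m'(\theta_N)|_{W_2^{\min\{\tau_f,\tau(\theta_N)\}}(\Omega)}+h_{X_N,\Omega}^{-s-d(1/2-1/p)_+}\big\|(f-m'(\theta_N))|_{X_N}\big\|_2.
\end{equation*}
Taking expectations, the first (seminorm) term is treated exactly as in the interpolation proof, using that the regression mean is the minimal-penalty kernel ridge estimator: plugging the competitor $g=f$ (in the well-specified range $\tau_f\ge\tau(\theta_N)$, where $W_2^{\tau_f}\subseteq W_2^{\tau(\theta_N)}=\mathcal H_{k(\theta_N)}$) into the regularized functional gives $\|m'(\theta_N)\|_{\mathcal H}\le\|f\|_{\mathcal H}+\sigma^{-1}\|\boldsymbol\varepsilon\|_2$, hence, via Lemma \ref{3.1.11} and the RKHS norm equivalence, $\mathbb E[|f-m'(\theta_N)|_{W_2^{\tau(\theta_N)}}]\lesssim\|f\|_{W_2^{\tau_f}}+\mathbb E\|\boldsymbol\varepsilon\|_2\lesssim\|f\|_{W_2^{\tau_f}}+N^{1/2}$; in the misspecified range $\tau_f<\tau(\theta_N)$ one again extends $f$ by Proposition \ref{3.1.7} and replaces it with a band-limited surrogate (Theorem~3.4 in \cite{N}), picking up the factor $\rho_{X_N,\Omega}^{\tau(\theta_N)-\tau_f}$.

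The genuinely new ingredient is the bound on $\mathbb E\big[\|(f-m'(\theta_N))|_{X_N}\|_2\big]$. Writing $\hat{\boldsymbol K}=\boldsymbol K+\sigma^2\boldsymbol I_N$ one splits
\begin{equation*}
(f-m'(\theta_N))|_{X_N}=\boldsymbol f-\boldsymbol K\hat{\boldsymbol K}^{-1}(\boldsymbol f+\boldsymbol\varepsilon)=\sigma^2\hat{\boldsymbol K}^{-1}\boldsymbol f-\boldsymbol K\hat{\boldsymbol K}^{-1}\boldsymbol\varepsilon
\end{equation*}
into a deterministic bias part and a mean-zero stochastic part. For the stochastic part, $\mathbb E\|\boldsymbol K\hat{\boldsymbol K}^{-1}\boldsymbol\varepsilon\|_2\le\sigma\,\operatorname{tr}\!\big((\boldsymbol K\hat{\boldsymbol K}^{-1})^2\big)^{1/2}$, and the trace — the effective degrees of freedom $\sum_j(\lambda_j/(\lambda_j+\sigma^2))^2$ — is estimated from the eigenvalue decay of the Gram matrix of a $\tau$-smooth kernel: Narcowich–Ward-type lower bounds on $\lambda_{\min}(\boldsymbol K)$ in terms of $q_{X_N}$ together with matching upper bounds on the eigenvalue counting function (phrased through $h_{X_N,\Omega}$, hence through $\rho_{X_N,\Omega}$, and uniform over the finitely many values of $\tau(\theta_N)$ via $\Theta_n^*$) give an effective dimension of order $N^{d/(2\tau_n^-)}$ and thus the $N^{d/(4\tau_n^-)}$ contribution. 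For the bias part $\sigma^2\hat{\boldsymbol K}^{-1}\boldsymbol f$ one couples the coefficient vector $\boldsymbol f$ to the kernel spectrum using the same spectral estimates together with the $W_2^{\tau_f}$/Hölder regularity of $f$; this is where Assumptions \ref{Assumption 6} and \ref{Assumption 7} enter — the small-ball property ruling out a degenerate (essentially vanishing) signal, and the Hölder extension $f^\circ$ supplying the pointwise regularity needed for the coupling — and, when $\tau_f<\tau_n^+$, the mismatch between the $\tau_f$-decay of $\boldsymbol f$ and the $\tau_n^+$-decay of $\boldsymbol K$ produces the additional $N$-exponent $\tfrac12-\tfrac{\tau_f}{2\tau_n^+}$, competing with $\tfrac{d}{4\tau_n^-}$.

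Finally I would assemble the pieces: substitute the bounds on $\mathbb E[|f-m'(\theta_N)|_{W_2^{\min\{\tau_f,\tau(\theta_N)\}}}]$ and on $\mathbb E[\|(f-m'(\theta_N))|_{X_N}\|_2]$ into the expectation of the sampling inequality, use $h_{X_N,\Omega}\le 1$, $\rho_{X_N,\Omega}\ge 1$ and the covering bound $N\gtrsim h_{X_N,\Omega}^{-d}$ to absorb cross-terms, collect the common prefactor $h_{X_N,\Omega}^{d/\max\{2,p\}-s}$, and split into $\tau_f\ge\tau_n^+$ and $\tau_f<\tau_n^+$, choosing $h_0$ small enough and keeping the constant uniform in $N\ge n$ through $\Theta_n^*$. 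The main obstacle is the spectral bookkeeping of the middle step: producing eigenvalue bounds for $\boldsymbol K$ that are simultaneously sharp enough to yield the $N^{d/(4\tau_n^-)}$ rate, valid for arbitrary (not necessarily quasi-uniform) designs so that $\rho_{X_N,\Omega}$ surfaces in the correct place, uniform over the admissible hyperparameter set, and fine enough to expose the bias–variance tradeoff $N^{\max\{1/2-\tau_f/(2\tau_n^+),\,d/(4\tau_n^-)\}}$ under smoothness misspecification.
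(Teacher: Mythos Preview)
Your overall architecture is right: apply the sampling inequality of \cite{AL} to $f-m'(\theta_N)$, control the Sobolev seminorm term via the kernel-ridge optimality of $m'$ and the RKHS norm equivalence, and separately bound the data term $\mathbb E\big[\|(f-m')|_{X_N}\|_2\big]$. Two points, however, deviate from the paper and one of them is a real gap.

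\medskip
\textbf{The data term.} The paper does \emph{not} derive the bound on $\mathbb E\big[\|(f-m')|_{X_N}\|_2\big]$ via spectral analysis of the Gram matrix. It invokes Proposition~21 of \cite{BGW} as a black box, and that result is obtained through Bayesian posterior-contraction / information-rate machinery in the style of van der Vaart--van Zanten \cite{V}. This is precisely where Assumptions \ref{Assumption 6} and \ref{Assumption 7} enter: the small-ball condition $\Pi_{k(\theta_N)}(\|f\|_{L^\infty}\le c)\le e^{-a_n}$ controls the concentration function of the Gaussian prior (the denominator in the posterior contraction argument), and the H\"older extension $f^\circ\in C^{\ell,s}(\mathbb R^d)\cap W_2^{\tau_f}(\mathbb R^d)$ is the regularity input that drives the contraction rate $N^{-\tau_f/(2\tau_n^+)}$. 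Your reading of Assumption \ref{Assumption 6} as ``ruling out a degenerate signal'' in a deterministic bias bound is not how it is used; the bias term $\sigma^2\hat{\boldsymbol K}^{-1}\boldsymbol f$ involves no randomness in the signal, so a prior small-ball condition cannot help there. The spectral route you sketch (effective degrees of freedom, eigenvalue decay of $\boldsymbol K$) is a plausible heuristic for the $N^{d/(4\tau_n^-)}$ variance contribution but does not, as stated, produce the bias exponent $\tfrac12-\tfrac{\tau_f}{2\tau_n^+}$, and it is unclear how to make it uniform over arbitrary designs.

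\medskip
\textbf{The misspecified case.} Your one-line description (``extend $f$, replace by a band-limited surrogate, pick up $\rho^{\tau(\theta_N)-\tau_f}$'') misses the key manoeuvre. The paper splits
\[
\|f-m'(\theta_N)\|_{W_p^s}\le \|f-m'_{\sigma=0}(\theta_N)\|_{W_p^s}+\|m'_{\sigma=0}(\theta_N)-m'(\theta_N)\|_{W_p^s},
\]
bounds the first term by the misspecified interpolation estimate of Theorem~\ref{3.3.2}, and then observes that since $m'_{\sigma=0}$ interpolates $f$ at $X_N$, the regression mean $m'$ is \emph{also} the posterior mean for the smooth target $m'_{\sigma=0}\in W_2^{\tau(\theta_N)}$. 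This lets Step~1 (the well-specified bound) be applied verbatim to the second term, with the band-limited machinery used only to control $\|m'_{\sigma=0}\|_{W_2^{\tau(\theta_N)}}$. Crucially, $(m'_{\sigma=0}-m')|_{X_N}=(f-m')|_{X_N}$, so the data-term bound from Proposition~21 in \cite{BGW} still applies directly without having to verify Assumptions \ref{Assumption 6}--\ref{Assumption 7} for the surrogate.

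\medskip
A minor slip: the data-term prefactor in the sampling inequality is $h_{X_N,\Omega}^{\,d/\max\{2,p\}-s}$, not $h_{X_N,\Omega}^{\,-s-d(1/2-1/p)_+}$; you are off by $h^{d/2}$.
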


As in Theorem \ref{3.3.2} the results hold only if the points in $X_N$ sufficiently cover $\Omega$, measured by $h_0$. If we set $s=0$, we recover results with respect to the norm $\| \cdot \|_{L^p(\Omega)}$. Additionally, we derive the following:


\begin{itemize}[topsep=5pt]
	\setlength\itemsep{0.1mm}
\item The first two conclusions, made for the interpolation setting, are also valid here. 
\item The optimal convergence rate is $\mathcal{O} \big( N^{-\frac{\tau_f}{2 \tau_f + d}} \big)$ with respect to the norm $\| \cdot \|_{L^p(\Omega)}$, where $p \in [1,2]$. This rate coincides with the minimax rate (see page 2097 in \cite{V}) and is achieved when the smoothness is correctly estimated as $\tau_f + d/2 = \tau_n^+ = \tau_n^-$ and quasi-uniform points are used. Note that this falls under the case $\tau_f < \tau_n^+$, making it inappropriate to refer to this case as misspecified smoothness.
\item Due to the equation
\begin{equation}
h_{X_N, \Omega}^{\frac{d}{\max \{ 2,p\}} - s} h_{X_N, \Omega}^{\tau_n^- - \frac{d}{2} } 
= h_{X_N, \Omega}^{\tau_n^- - s - d \left( \frac{1}{2} - \frac{1}{\max \{ 2,p\}} \right)}
= h_{X_N, \Omega}^{\tau_n^- - s - d \left( \frac{1}{2} - \frac{1}{p} \right)_+}, \label{eq. 3.3.3}
\end{equation}
the first term of the upper bound is identical to the upper bound in Theorem \ref{3.3.2}, which corresponds to the interpolation setting. The second and third terms can therefore be interpreted as penalties for the regression setting, which usually decrease at a slower rate. Selecting points such that $h_{X_N, \Omega} = \mathcal{O} (N^{-1/d})$ helps to mitigate this penalty.
\end{itemize}

\begin{remark}
In the interpolation setting, the smoothness is correctly estimated if $\tau_f = \tau_n^+ = \tau_n^-$, while in the regression setting, the smoothness is for $\tau_f + d/2 = \tau_n^+ = \tau_n^-$ correctly estimated. This difference can be justified by Assumption \ref{Assumption 7}. Using the interpolation setting, we assume $f \in W_2^{\tau_f}(\Omega)$ and match the smoothness by selecting $m' \in W_2^{\tau_f}(\Omega)$. However, in the regression setting we write $\tau_f = \ell + s$ and use the additional assumption $f \in C^{\ell, s}(\Omega)$. Using $m' \in W_2^{\tau_f}(\Omega)$ is not sufficient since it only guarantees $f \in C^0(\Omega)$. Instead, we select $m' \in W_2^{\tau_f + d/2}(\Omega)$ and according to Theorem 4.12 in \cite{AF}, we can ensure $m' \in C^{\ell, s}(\Omega)$.
\end{remark}

\begin{remark} \label{3.3.9}
Briol et al. also investigate the scenario where the likelihood function is misspecified. This means the assumption $\varepsilon_i \sim \mathcal{N} (0,\sigma^2)$ is incorrect, but we still use the regression setting. In the posterior mean function, we allow the variance to vary with $N$, leading to
$$m'(\boldsymbol{x}) = \boldsymbol{k}(\boldsymbol{x}) (\boldsymbol{K} + \sigma_N^2 \boldsymbol{I}_N)^{-1} (\boldsymbol{f} + \boldsymbol{\varepsilon} ) \quad \text{for } \boldsymbol{x} \in \Omega,$$
where $\boldsymbol{\varepsilon} = [\varepsilon_1, ..., \varepsilon_N]^T$. The $\varepsilon_i$ can either be deterministic or follow a probability distribution. Assume now that assumptions \ref{Assumption 1}-\ref{Assumption 5} hold. Due to these fewer assumptions, we achieve a different convergence rate than in Theorem \ref{3.3.7}, which depends on $\sigma_N$ and $\mathbb{E}[\boldsymbol{ \| \varepsilon} \|_2 ]$. More details can be found in Theorem 4 of \cite{BGW}, although the actual convergence rate is expected to be faster. However, we derive the following conclusions:
\begin{itemize}[topsep=5pt]
	\setlength\itemsep{0.1mm}
\item If the likelihood function is misspecified, the convergence rate is penalized by terms depending on $\sigma_N$, $\mathbb{E}[\boldsymbol{ \| \varepsilon} \|_2 ]$ and the experimental setting. Keeping the extreme value $\tau_n^+$ small and using quasi-uniform points helps mitigate the effects of likelihood misspecification. 
\item To improve numerical stability or ensure the kernel matrix is invertible, it is common to select $\sigma_N >0$, with larger values providing greater stability. To optimize the convergence rate when $\tau_f = \tau_n^- = \tau_n^+$, it is recommended to take $\sigma_N \sim h_{X_N, \Omega}^{\tau_f - d/2}$. Using quasi-uniform points, we can guarantee a convergence rate of $\mathcal{O}(N^{-1/2} \mathbb{E}[\boldsymbol{ \| \varepsilon} \|_2 ])$ with respect to the norm $\| \cdot \|_{L^p(\Omega)}$, where $p \in [1,2]$.
\end{itemize}
\end{remark}


Now, consider an example to evaluate the accuracy of the upper bound in the regression setting, where the likelihood function is correctly specified.

\begin{example} \label{3.3.10}
Consider $N$ equispaced points in $\Omega = (-1,1)^3$ with a step size of $N^{-1/3}$ in each dimension, the function
$$f:(-1,1)^3 \rightarrow \mathbb{R}, f(\boldsymbol{x})= |x_1|^{3.5} + x_2^2 - |x_3| $$
and a regression setting given by $\mathcal{D}_N = \{ (\boldsymbol{x}_1, f(\boldsymbol{x}_1) + \varepsilon_1), ..., (\boldsymbol{x}_N, f(\boldsymbol{x}_N) + \varepsilon_N) \}$ with $\varepsilon_i \sim \mathcal{N}(0,1)$. Using the Matérn kernel $k_{3/2,1}$, we approximate $f \in W^{3/2}_2 ( \Omega )$ by the posterior mean function $m'$ of the Gaussian Process $\mathcal{GP}(0,k_{3/2,1})$. Because of $\tau_f + d/2 = 3 = \tau_n^+$, Theorem \ref{3.3.7} allows to bound the relative error by
$$\dfrac{\|f - m'\|_{L^2 (\Omega)}}{\|f\|_{W_2^2 (\Omega)}} = \mathcal{O} (N^{-1/4}) \quad \text{as } N\rightarrow \infty.$$
We aim to evaluate this convergence rate. As in Example \ref{3.3.4}, the weak derivatives of $f$ and their norms can be easily evaluated. We approximate the $L^2$ norm using $M=10,000$ equispaced points with a step size of $M^{-1/3}$ and the Trapezoidal Rule. Figure \ref{fig. 10} illustrates the calculated relative error compared to the functions $g(N)= \frac{1}{5}N^{-1/4}$ and $h(N) = N^{-2/4}$. The curves suggest that the estimated convergence rate of $\mathcal{O} (N^{-1/4})$ is reliable, with the actual convergence rate appearing slightly faster at the initial stages.

\begin{figure}[h]
	\centering
	\includegraphics[width=0.5\textwidth]{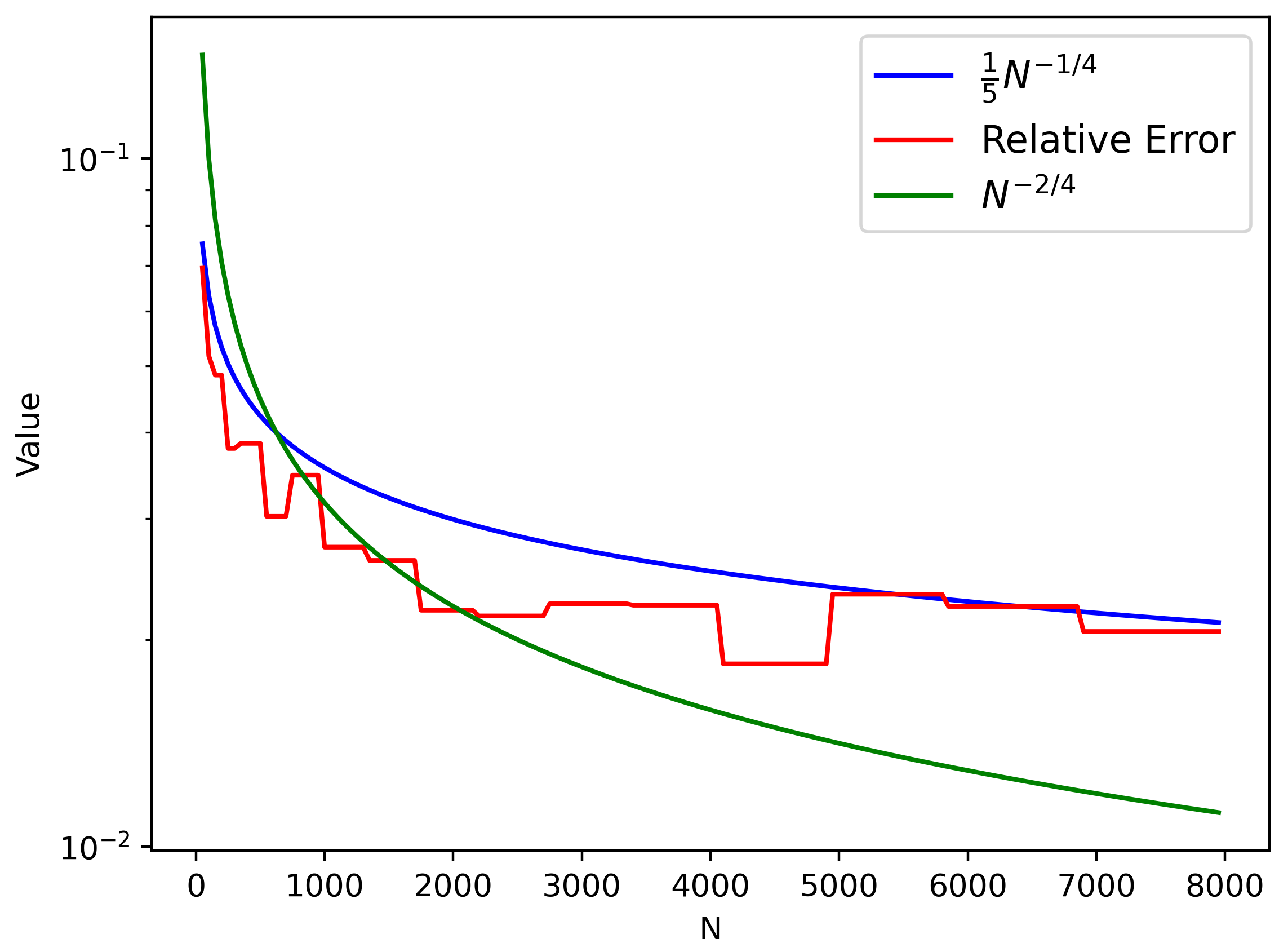}
	\caption{The blue curve represents $g(N)= \frac{1}{5}N^{-1/4}$, the green curve represents $h(N) = N^{-2/4}$ and the red curve describes the relative error of $m'$, computed for $N \in \{ 50,100,150,..., 7950, 8000 \}$.} \label{fig. 10}
\end{figure}
\end{example}

The remainder of this subsection proves Theorem \ref{3.3.7}, as outlined in \cite{BGW}. This proof draws on the foundational work presented in \cite{AL}, \cite{BGW}, \cite{N} and \cite{V}, but is too lengthy and digressive to present here in full. Briol et al. highlight the theorems involved, whereas my focus is on the proof itself. Readers seeking more details are encouraged to consult the referenced theorems.


\begin{proof}[Proof of Theorem \ref{3.3.7}]
By applying Remark 3.4/Theorem 3.2 in \cite{AL} to the functions $f - m'(\theta_N)$, we obtain $C_1,h_1>0$ such that for all $\tau(\theta_N)$ with $N \geq n$, $s \in [0, \min \{\tau_f, \tau_n^- \}^*]$ and $X_N \subset \Omega$ with $h_{X_N,\Omega} \leq h_1$ is
\begin{align*}
\|f - m'(\theta_N)\|_{W_p^s (\Omega)} 
\leq &C_1 h_{X_N, \Omega}^{\min \{\tau_f, \tau(\theta_N) \} - s - d \left( \frac{1}{2} - \frac{1}{p} \right)_+} | f - m'(\theta_N) |_{W_2^{\min \{\tau_f, \tau(\theta_N) \}}(\Omega)} \\
&+ C_1 h_{X_N, \Omega}^{\frac{d}{ \max \{ 2,p \}} - s} \| \left.(f - m'(\theta_N))\right|_{X_N} \|_2 ,
\end{align*}
where $C_1(\Omega, \tau_f, p, s, \Theta_n^*)$ and $h_1(R, \delta, d, \tau_f, \Theta_n^*)$ are constants. \\

\textbf{Step 1:} Let $\tau_f \geq \tau(\theta_N)$.\\[5mm]
Using Lemma \ref{3.1.11}, we have
$$| f - m'(\theta_N) |_{W_2^{\min \{\tau_f, \tau(\theta_N) \}}(\Omega)}
= | f - m'(\theta_N) |_{W_2^{\tau(\theta_N)}(\Omega)}
\leq C_2\| f - m'(\theta_N) \|_{H^{\tau(\theta_N)}(\Omega)},$$
where $C_2(\Omega, \tau_f)>0$ is a constant. With the norm-equivalence of the RKHS, we get
\begin{align*}
\| f - m'(\theta_N) \|_{H^{\tau(\theta_N)}(\Omega)}
&\leq \| f \|_{H^{\tau(\theta_N)}(\Omega)} + \| m'(\theta_N) \|_{H^{\tau(\theta_N)}(\Omega)}\\
&\leq \| f \|_{H^{\tau_f}(\Omega)} + C_u (\theta_N) \| m'(\theta_N) \|_{\mathcal{H}_{k(\theta_N)}(\Omega)}.
\end{align*}
As $m'$ solves the optimization problem outlined in section 6.2.2 of \cite{RW} (alternatively section 2.1 in \cite{BGW}), we obtain
\begin{align*}
\| m'(\theta_N) \|_{\mathcal{H}_{k(\theta_N)}(\Omega)}^2
&\leq \| m'(\theta_N) \|_{\mathcal{H}_{k(\theta_N)}(\Omega)}^2 + \sigma^{-2} \sum_{i=1}^N (m'(\theta_N)(\boldsymbol{x}_i) - y_i)^2  \\
&\leq \| f \|_{\mathcal{H}_{k(\theta_N)}(\Omega)}^2 + \sigma^{-2} \| \boldsymbol{\varepsilon} \|_2^2 \\
&\leq C_\ell(\theta_N)^{-2} \| f \|_{H^{\tau(\theta_N)}(\Omega)}^2 + \sigma^{-2} \| \boldsymbol{\varepsilon} \|_2^2 \\
&\leq C_\ell(\theta_N)^{-2} \| f \|_{H^{\tau_f}(\Omega)}^2 + \sigma^{-2} \| \boldsymbol{\varepsilon} \|_2^2.
\end{align*}
From this, we derive
\begin{align*}
| f - m'(\theta_N) |_{W_2^{\min \{\tau_f, \tau(\theta_N) \}}(\Omega)}
&\leq C_2 \left( \| f \|_{H^{\tau_f}(\Omega)} + C_u(\theta_N) (C_\ell(\theta_N)^{-1} \| f \|_{H^{\tau_f}(\Omega)} + \sigma^{-1} \| \boldsymbol{\varepsilon} \|_2) \right) \\
&\leq C_3 \left( \| f \|_{W_2^{\tau_f}(\Omega)} + \sigma^{-1} \| \boldsymbol{\varepsilon} \|_2 \right),
\end{align*}
where $C_3(\Omega, \tau_f, \Theta_n^*)>0$ is a constant. Consequently, we have
\begin{align*}
\mathbb{E} \Big[ | f - m'(\theta_N) |_{W_2^{\min \{\tau_f, \tau(\theta_N) \}}(\Omega)} \Big]
&\leq C_3 \left( \| f \|_{W_2^{\tau_f}(\Omega)} + \sigma^{-1} \mathbb{E} \Bigg[ \sqrt{\sum_{i=1}^N \varepsilon_i^2} \Bigg] \right) \\
&= C_3 \left( \| f \|_{W_2^{\tau_f}(\Omega)} + \mathbb{E} \Bigg[ \sqrt{ \sum_{i=1}^N \Big( \dfrac{\varepsilon_i}{\sigma} \Big)^2 } \Bigg] \right) \\
&\leq C_3 \left( \| f \|_{W_2^{\tau_f}(\Omega)} + N^{1/2} \right),
\end{align*}
where we use that the chi distribution has mean
\begin{align*}
\sqrt{2} \dfrac{\Gamma ((N+1)/2)}{\Gamma(N/2)} 
&\leq \sqrt{2} \dfrac{\Gamma (N/2+1)^{1/2} \Gamma (N/2)^{1/2}}{\Gamma(N/2)} \\
&=  \sqrt{2} \dfrac{\sqrt{N}}{\sqrt{2}}\dfrac{\Gamma (N/2)^{1/2} \Gamma (N/2)^{1/2}}{\Gamma(N/2)}
= N^{1/2}
\end{align*} 
and the inequality follows from the log-convexity of the Gamma function. According to Proposition 21 in \cite{BGW} is


\begin{equation}
\mathbb{E} \big[\| \left.(f - m'(\theta_N))\right|_{X_N} \|_2 \big] 
\leq C_4 N^{ \max \big\{ \frac{1}{2} - \frac{\tau_f}{2\tau_n^+}  , \frac{d}{4 \tau_k^-} \big\} }, \label{eq. 3.3.4}
\end{equation}
where $C_4 \big(\| f \|_{W_2^{\tau_f}(\Omega)}, \Theta_n^* \big)>0$ is a constant. Using the derived inequalities and equation (\ref{eq. 3.3.3}), it follows that
\begin{align}
\begin{split}
\mathbb{E}[ \|f - m'(\theta_N)\|_{W_p^s (\Omega)} ] 
\leq &C_5 h_{X_N, \Omega}^{\frac{d}{\max \{ 2,p\}} - s} \Big( h_{X_N, \Omega}^{\tau_n^- - \frac{d}{2} } \| f \|_{W_2^{\tau_f}(\Omega)} \\
&+ h_{X_N, \Omega}^{\tau_n^- - \frac{d}{2} } N^{\frac{1}{2}} + N^{ \max \big\{ \frac{1}{2} - \frac{\tau_f}{2\tau_n^+} , \frac{d}{4 \tau_k^-} \big\} } \Big), \label{eq. 3.3.5}
\end{split}
\end{align}
where $C_5 \big(\Omega, \tau_f, p, s, \| f \|_{W_2^{\tau_f}(\Omega)}, \Theta_n^* \big)>0$ is a constant.\\

\textbf{Step 2:} Let $\tau_f < \tau(\theta_N)$.\\[5mm]
Define $m'_{\sigma =0}(\theta_N)$ as the posterior mean function in the interpolation setting, while $m'(\theta_N)$ remains the posterior mean for the regression setting. Then,
\begin{align*}
\| f - m'(\theta_N) \|_{W_p^{s}(\Omega)} 
\leq \| f - m'_{\sigma =0}(\theta_N) \|_{W_p^{s}(\Omega)} + \| m'_{\sigma =0}(\theta_N) - m'(\theta_N) \|_{W_p^{s}(\Omega)}.
\end{align*}
The first term can be bounded using the misspecified case from Theorem \ref{3.3.2}. For the second term, note that $\left.m'_{\sigma =0}(\theta_N)\right|_{X_N} = \left.f\right|_{X_N}$, implying $m'(\theta_N)$ is also the prediction for $m'_{\sigma =0}(\theta_N)$. This allows us to use equation (\ref{eq. 3.3.5}) to bound the second term, but we must omit the application of equation (\ref{eq. 3.3.4}), as Assumptions \ref{Assumption 6} and \ref{Assumption 7} may not hold for $m'_{\sigma =0}(\theta_N)$.  Additionally, we replace the first term
$$h_{X_N, \Omega}^{\tau_n^- - \frac{d}{2} } 
\qquad \text{with} \qquad
h_{X_N, \Omega}^{\tau(\theta_N) - \frac{d}{2} },$$
which can be justified by investigating the first step of this proof. Consequently,
\begin{align*}
\mathbb{E}[ \|f - &m'(\theta_N)\|_{W_p^{s}(\Omega)} ]
\leq C_6 h_{X_N, \Omega}^{\min \{\tau_f, \tau_n^- \} - s - d \left( \frac{1}{2} - \frac{1}{p} \right)_+} \rho_{X_N, \Omega}^{\tau_n^+ - \tau_f} \| f \|_{W_2^{\tau_f}(\Omega)}
+ C_6 h_{X_N, \Omega}^{\frac{d}{\max \{ 2,p\}} - s} \\
&\Big( h_{X_N, \Omega}^{\tau(\theta_N) - \frac{d}{2} } \| m'_{\sigma = 0}(\theta_N) \|_{W_2^{\tau(\theta_N)}(\Omega)} + h_{X_N, \Omega}^{\tau_n^- - \frac{d}{2} } N^{\frac{1}{2}} + \mathbb{E} \big[\| \left.(m'_{\sigma = 0}(\theta_N) - m'(\theta_N))\right|_{X_N} \|_2 \big] \Big),
\end{align*}
where $C_6(\Omega, \tau_f, p, s, \Theta_n^*)>0$ is a constant. Similarly to Step 2 within the proof of Theorem \ref{3.3.2}, we can follow that there exists a band-limited function $\mathcal{E}f_{\sigma} \in W_2^{\tau(\theta_N)}(\mathbb{R}^d)$ such that
$$\| \mathcal{E}f_{\sigma} \|_{W_2^{\tau(\theta_N)} (\mathbb{R}^d)} 
\leq C_7 q_{X_N}^{\tau_f - \tau(\theta_N)} \| f \|_{W_2^{\tau_f} (\Omega)}$$
and
$$\| \mathcal{E}f_{\sigma} - m'_{\sigma=0}(\theta_N) \|_{W_2^{\tau(\theta_N)} (\Omega)}
\leq C_8 q_{X_N}^{\tau_f - \tau(\theta_N)} \| f \|_{W_2^{\tau_f} (\Omega)},$$
where $C_7(\Omega, \tau_f, \Theta_n^*)>0$ and $C_8(\Omega, \tau_f, \Theta_n^*)>0$ are constants. Thus,
\begin{align*}
\| m'_{\sigma = 0}(\theta_N) \|_{W_2^{\tau(\theta_N)}(\Omega)} 
&\leq \| \mathcal{E}f_{\sigma} \|_{W_2^{\tau(\theta_N)} (\Omega)} + \| \mathcal{E}f_{\sigma} - m'_{\sigma=0}(\theta_N) \|_{W_2^{\tau(\theta_N)} (\Omega)} \\
&\leq (C_7 + C_8) q_{X_N}^{\tau_f - \tau(\theta_N)} \| f \|_{W_2^{\tau_f} (\Omega)}.
\end{align*}
respectively


\begin{align*}
h_{X_N, \Omega}^{\tau(\theta_N) - \frac{d}{2} } \| m'_{\sigma = 0}(\theta_N) \|_{W_2^{\tau(\theta_N)}(\Omega)} 
&\leq (C_7 + C_8) h_{X_N, \Omega}^{\tau_f - \frac{d}{2} } h_{X_N, \Omega}^{\tau(\theta_N) - \tau_f } q_{X_N}^{\tau_f - \tau(\theta_N)} \| f \|_{W_2^{\tau_f} (\Omega)} \\
&= (C_7 + C_8) h_{X_N, \Omega}^{\tau_f - \frac{d}{2} } \rho_{X_N, \Omega}^{\tau(\theta_N) - \tau_f } \| f \|_{W_2^{\tau_f} (\Omega)} \\
&\leq (C_7 + C_8) h_{X_N, \Omega}^{\min \{\tau_f, \tau_n^- \} - \frac{d}{2} } \rho_{X_N, \Omega}^{\tau_n^+ - \tau_f } \| f \|_{W_2^{\tau_f} (\Omega)},
\end{align*}
where we use $\rho_{X_N, \Omega} \geq 1$ and $h_{X_N, \Omega} \leq 1$. Since $m'_{\sigma=0}(\theta_N)$ interpolates $f$ and we can apply equation (\ref{eq. 3.3.4}), we obtain

\begin{align*}
\mathbb{E} \big[\| \left.(m'_{\sigma = 0}(\theta_N) - m'(\theta_N))\right|_{X_N} \|_2 \big]
= \mathbb{E} \big[\| \left.(f - m'(\theta_N))\right|_{X_N} \|_2 \big] 
\leq C_4 N^{ \max \big\{ \frac{1}{2} - \frac{\tau_f}{2\tau_n^+}  , \frac{d}{4 \tau_k^-} \big\} }.
\end{align*}
Combining the established inequalities and using equation (\ref{eq. 3.3.3}), it follows
\begin{align}
\begin{split}
\mathbb{E} [ \|f - m'(\theta_N)\|_{W_p^s (\Omega)} ] \leq &C_9 h_{X_N, \Omega}^{\frac{d}{\max \{ 2,p\}} - s} \Big( h_{X_N, \Omega}^{\min \{ \tau_f, \tau_n^- \} - \frac{d}{2}} \rho_{X_N, \Omega}^{ \tau_n^+ - \tau_f } \| f \|_{W_2^{\tau_f}(\Omega)} \\
&+ h_{X_N, \Omega}^{\tau_n^- - \frac{d}{2} } N^{\frac{1}{2}} + N^{ \max \big\{ \frac{1}{2} - \frac{\tau_f}{2\tau_n^+}  , \frac{d}{4 \tau_k^-} \big\} } \Big), \label{eq. 3.3.6}
\end{split}
\end{align}
where $C_9 \big(\Omega, \tau_f, p, s, \| f \|_{W_2^{\tau_f}(\Omega)}, \Theta_n^* \big)>0$ is a constant.\\

\textbf{Step 3:} Let $\tau_f$ be arbitrary.\\[5mm]
The bounds (\ref{eq. 3.3.5}) and (\ref{eq. 3.3.6}) imply
\begin{align*}
\mathbb{E} [ \|f - m'(\theta_N)\|_{W_p^s (\Omega)} ] \leq &\max \{ C_5, C_9 \} h_{X_N, \Omega}^{\frac{d}{\max \{ 2,p\}} - s} \Big( h_{X_N, \Omega}^{\min \{ \tau_f, \tau_n^- \} - \frac{d}{2}} \rho_{X_N, \Omega}^{ (\tau_n^+ - \tau_f)_+ } \| f \|_{W_2^{\tau_f}(\Omega)} \\
&+ h_{X_N, \Omega}^{\tau_n^- - \frac{d}{2} } N^{\frac{1}{2}} + N^{ \max \big\{ \frac{1}{2} - \frac{\tau_f}{2 \tau_n^+}, \frac{d}{4 \tau_n^-} \big\} } \Big).
\end{align*}
Distinguishing between the cases of smoothness verifies the theorem.
\end{proof}


\subsection{Summary} \label{Chap. 3.4} 
For readers interested in the conclusions rather than the theoretical details, this section distills the key insights from the convergence results. Specifically, we address the questions posed at the beginning of this chapter:
\begin{itemize}[topsep=5pt]
	\setlength\itemsep{0.1mm}
\item Convergence is assured for functions $f \in W_2^{\tau_f}(\Omega)$ where $\tau_f >d/2$ and $\Omega \subset \mathbb{R}^d$ has a Lipschitz boundary.
\item A $\tau$-smooth kernel, such as the Matérn kernel $k_{\tau-d/2, \ell}$, should be selected. The smoothness parameter is the primary hyperparameter that significantly impacts the convergence rate. Other parameters only influence constants.
\item The smoothness parameter $\tau$ should be fixed and not excessively high to ensure well-specified smoothness. Other parameters may vary with the number of observations $N$. If a signal variance parameter $s_N^2>0$ is used as a scaling factor, it should be bounded away from zero.
\item The fill distance $h_{X_N, \Omega}$ of the data points $X_N$ must be sufficiently small to ensure convergence. Using a sequence of points such that $h_{X_N,\Omega} = \mathcal{O}(N^{-1/d})$ improves the convergence rate, while employing a sequence of quasi-uniform points optimizes it.
\item A misspecified smoothness or likelihood results in a slower convergence rate due to a penalty term. To mitigate this penalty when such misspecification is suspected, it is advisable to use quasi-uniform points.
\item In the interpolation setting, the optimal convergence rate is $\mathcal{O}(N^{-\tau_f/d})$ with respect to the norm $\| \cdot \|_{L^2(\Omega)}$. This is achieved when $\tau=\tau_f$ and $h_{X_N, \Omega}=\mathcal{O}(N^{-1/d})$. In the regression setting, the optimal convergence rate is $\mathcal{O}(N^{-\tau_f/(2\tau_f+d)})$ with respect to the norm $\| \cdot \|_{L^2(\Omega)}$. It is achieved when $\tau=\tau_f+d/2$ and quasi-uniform points are used.
\item Utilizing the Matérn kernel $k_{\nu, \ell}$, the optimal convergence rate achievable is $\mathcal{O}(N^{-\nu/d - 1/2})$ in the interpolation setting and $\mathcal{O}(N^{-\nu/(2 \nu +d)})$ in the regression setting with respect to the norm $\| \cdot \|_{L^2(\Omega)}$.
\end{itemize}
\newpage
\section{Samplets} \label{Chap. 4}
In numerous scenarios, you encounter extensive datasets, prompting the requirement for data compression. As the datasets grow continuously over the years this problem will not vanish. Amidst this abundance of data, the quest is to identify the most relevant information and store it efficiently. Samplets offer a flexible approach for compressing data, while keeping the computational costs low. Specially, it allows for compression of particular kernel matrices to a sparse format with $\mathcal{O}(N \log N)$ entries, which can be useful for Gaussian Process computation.\\

Samplets, as introduced by Harbrecht and Multerer in \cite{HM1}, adapt the wavelet construction from \cite{TW} to discrete data. The foundational concept of wavelets is to emulate the behavior of a function through a linear combination of simpler functions. This concept is age-old, for instance Taylor series represent functions as sum of polynomials, while Fourier series represent functions as sums of frequencies. However, these representations may not always be suitable, requiring numerous building blocks to mimic the function's behavior accurately. To achieve a reliable approximation, we seek only a handful of simple functions capable of capturing the function's essence. When dealing with signals, polynomial representations are inadequate and instead, oscillating and non-stationary functions are preferred that align with the signal's nature. Wavelets address this by offering compactly supported functions with wave-like oscillation, colloquial called "small waves". Remarkably, even a few wavelets can capture a significant amount of information, making them suitable for approximating signals. To control the scale of the wavelets, \cite{TW} proposes a hierarchical decomposition of the space $[0,1]^3$, achieving compression of integral operators.

Subsequent sections will adapt this approach to discrete data, as outlined in reference \cite{HM1}. The objective is to represent discrete data using a small amount of straightforward functions. Thus, compactly supported functions are introduced, resembling samples extracted from the discrete data. Drawing inspiration from the term wavelet, Harbrecht and Multerer dub these functions samplets. By extending the successful concepts from wavelets, it is observed that this adaptation enables compression of discrete data as well as the desired compression of kernel matrices. Figure \ref{fig. 11} visualizes wavelets compressing a signal and samplets compressing discrete data.

\begin{figure}[H]
	\centering
	\includegraphics[width=0.76\textwidth]{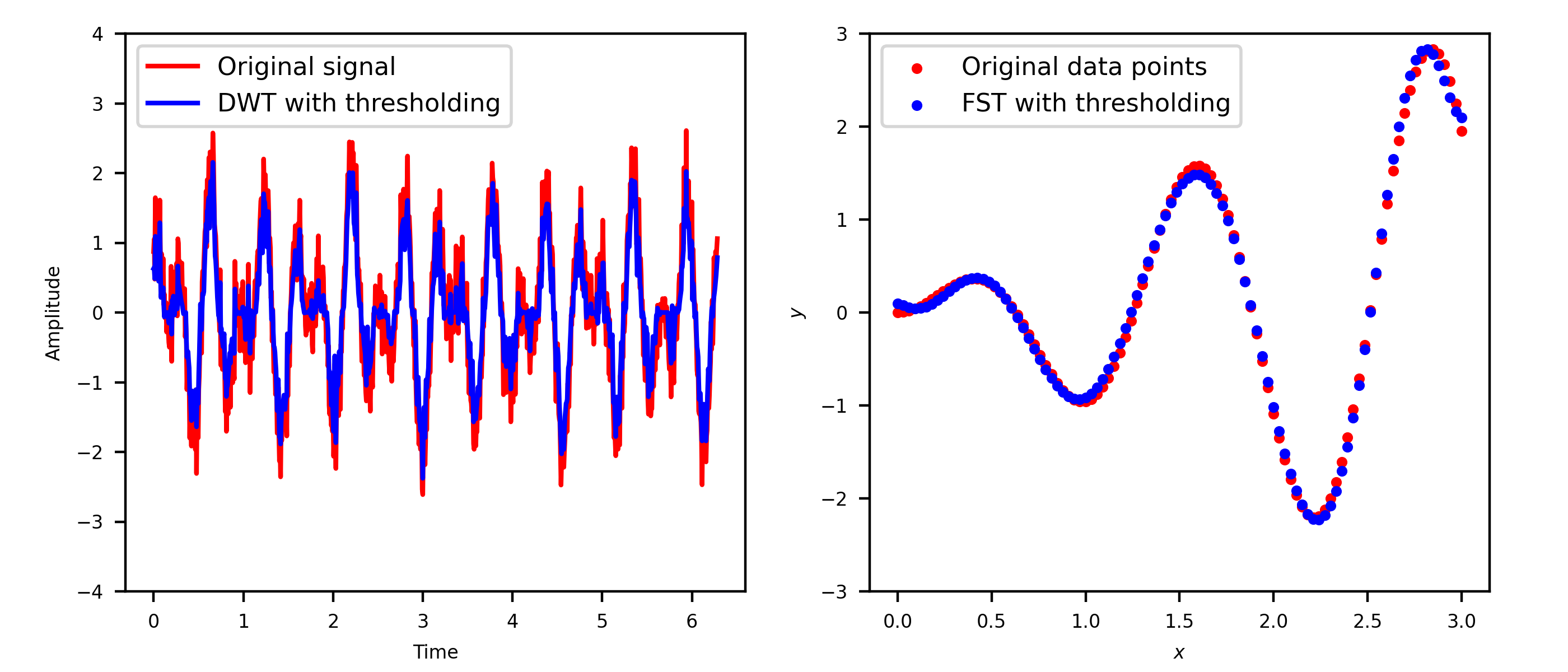}
	\caption{The left panel used DWT with Daubechies wavelets for signal compression via coefficient thresholding. Meanwhile, the right panel employed fast samplet transform (FST) for compressing discrete data points, also via coefficient thresholding.}\label{fig. 11}
\end{figure}


\subsection{Definition and Example} \label{Chap 4.1}
Let $X=\{ \boldsymbol{x}_1, ...,  \boldsymbol{x}_N\} \subset \Omega$ be a set of points for $\Omega \subseteq \mathbb{R}^d$ and let
$$\delta_{\boldsymbol{x}_i} (\boldsymbol{x}) = \begin{cases}
  1, & \text{if } \boldsymbol{x}= \boldsymbol{x}_i \\
  0, & \text{otherwise} \index{\textit{$\delta_{\boldsymbol{x}}$,}}
\end{cases}$$
be the dirac measure\index{Dirac measure,} for $\boldsymbol{x}_i \in X$. We extend the notation by denoting $\delta_{\boldsymbol{x}_i}: C(\Omega) \rightarrow \mathbb{R}$ also for the point evaluation\index{Point evaluation,}, defined by
$$(f,\delta_{\boldsymbol{x}_i})_{\Omega} := \delta_{\boldsymbol{x}_i}(f) = f(\boldsymbol{x}_i) \index{\textit{$(f,\delta_{\boldsymbol{x}_i})_{\Omega}$,}}$$
for all continuous functions $f \in C(\Omega)$. Consider the $N$-dimensional real vector space $V = \text{span} \{ \delta_{\boldsymbol{x}_1}, ..., \delta_{\boldsymbol{x}_N} \}$\index{\textit{$V$,}} and define an inner product by
\begin{equation}
\langle u,v \rangle_V = \sum_{i=1}^N u_i v_i, \ \ \ \ 
\text{for } u=\sum_{i=1}^N u_i \delta_{\boldsymbol{x}_i} \text{ and } v=\sum_{i=1}^N v_i \delta_{\boldsymbol{x}_i}.\label{4.1 (1)} \index{\textit{$\langle u,v \rangle_V$,}}
\end{equation}
This space is isometrically isomorph to $\mathbb{R}^N$ equipped with the canonical inner product, as the linear defined function $\Psi:V \rightarrow \mathbb{R}^N, \Psi (\delta_{\boldsymbol{x}_i}) = e_i$ is an isomorphism, where $\{e_1, ..., e_N\}$ is the canonical basis of $\mathbb{R}^N$. Analogous to wavelet constructions, we utilize a sequence $\{ V_j \}_{j=0}^J$\index{\textit{$V_j$,}} forming a multiresolution analysis\index{Multiresolution analysis,}
$$V_0 \subset V_1 \subset ... \subset V_J = V.$$
Samplets are designed to capture information between the levels $j$ and $j+1$, that means their purpose is to explain the detail spaces\index{Detail spaces,} $S_j$\index{\textit{$S_j$,}} defined by
$$V_j \oplus S_j = V_{j+1}. $$

\begin{defi} \label{4.1.1}
Let $\boldsymbol{\Sigma}_j$ be an orthonormal basis of $S_j$ for each $j=0,1,...,J-1$, with respect to the inner product in (\ref{4.1 (1)}). Let $\boldsymbol{\Phi}_0$\index{\textit{$\boldsymbol{\Phi}_0$,}} be an orthonormal basis of $V_0$. Then, the set\index{\textit{$\boldsymbol{\Sigma}_J$,}}
$$\boldsymbol{\Sigma}_J = \boldsymbol{\Phi}_0 \cup \bigcup_{j=0}^{J-1} \boldsymbol{\Sigma}_j$$
is termed samplet basis of $V$, with elements in $\boldsymbol{\Sigma}_0,...,\boldsymbol{\Sigma}_{J-1}$ referred to as samplets\index{Samplets,}.
\end{defi}

When considering data compression, it is advantageous for the samplets to exhibit vanishing moments. To achieve this, recall that $\mathcal{P}_q (\Omega)$\index{\textit{$\mathcal{P}_q (\Omega)$,}} is the space of polynomials on $\Omega$ with total degree of at most $q \in \mathbb{N}_{\geq 0}$. It is spanned by monomials of the form $\boldsymbol{z}^{\boldsymbol{\alpha}}:=z_1^{\alpha_1} z_2^{\alpha_2} \cdot ... \cdot z_d^{\alpha_d}$\index{\textit{$\boldsymbol{z}^{\boldsymbol{\alpha}}$,}}, where $\boldsymbol{\alpha} = (\alpha_1, ..., \alpha_d) \in \mathbb{N}_{\geq 0}^d$ satisfies $0 \leq | \boldsymbol{\alpha} | \leq q$.

\begin{defi} \label{4.1.2}
Samplet $\sigma_{j,k} \in \boldsymbol{\Sigma}_j$ has vanishing moments\index{Vanishing moments,} up to degree $q \in \mathbb{N}_{0}$, if
$$(p, \sigma_{j,k})_{\Omega} = 0 \ \ \ \text{for all } p \in \mathcal{P}_q (\Omega),$$
where $\mathcal{P}_q (\Omega)$ is the space of polynomials on $\Omega$ with total degree at most $q$.
\end{defi}


\begin{example} \label{4.1.3}
Consider $d=1$, $V = \text{span} \{ \delta_{1}, \delta_{2}, \delta_{3}, \delta_{4} \}$ and
\begin{align*}
V_0 = \text{span} \{ \delta_{1} + \delta_{2} + \delta_{3} +\delta_{4} \} \subset 
V_1 = \text{span} \{ \delta_{1} + \delta_{2} + \delta_{3} +\delta_{4}, \delta_{3} +\delta_{4} \} 
\subset V_2 = V.
\end{align*}
Then the sets
\begin{align*}
\boldsymbol{\Phi}_0 &= \{\delta_1 /2 + \delta_{2} /2 + \delta_{3} /2 + \delta_4 /2 \}, \\ 
\boldsymbol{\Sigma}_0 &= \{ \delta_1 /2 + \delta_{2} /2 - \delta_{3} /2 - \delta_4 /2 \}, \\
\boldsymbol{\Sigma}_1 &= \{ \delta_1 / \sqrt{2} - \delta_2 / \sqrt{2}, \delta_3 / \sqrt{2} - \delta_4 / \sqrt{2} \}
\end{align*}
form an orthonormal basis, where samplets have vanishing moments up to degree $0$.
\end{example}

Notice that the samplets in Example \ref{4.1.3} are localized relative to their discretization level. Specifically, a higher index $j$ indicates smaller samplets in $\Sigma_j$. This property is desirable for compression and is formally defined as follows.

\begin{defi} \label{4.1.4}
Recall that we write $C \lesssim D$, if $C$ can be bounded by a multiple of $D$ and $C \sim D$, if $C \lesssim D$ and $D \lesssim C$. A samplet $\sigma_{j,k} \in \boldsymbol{\Sigma}_j$ is localized with respect to his discretization level\index{Localized samplets,}, if 
$$\text{diam}(\text{supp } \sigma_{j,k}) \sim 2^{-j/d}.$$
This means that the diameter of the support of $\sigma_{j,k}$ is of the same magnitude as $2^{-j/d}$.
\end{defi}


\subsection{Construction of Samplets}
Let $X=\{ \boldsymbol{x}_1, ...,  \boldsymbol{x}_N\}$ and $V = \text{span} \{ \delta_{\boldsymbol{x}_1}, ..., \delta_{\boldsymbol{x}_N} \}$ be defined as in section \ref{Chap 4.1}. We want to create a multiresolution analysis
$$V_0 \subset V_1 \subset ... \subset V_J = V$$
and localized samplets that demonstrate vanishing moments. To achieve this, we will employ hierarchical clustering and based on a specific cluster tree, we will construct a samplet basis of $V$ that exhibits the desired properties.


\subsubsection{Cluster trees}
\begin{defi}
Let $\mathcal{T}=(P,E)$\index{\textit{$\mathcal{T}$,}} be a tree with nodes $P$ (also called vertices) and edges $E$. The tree is called binary, if every $v \in P$ has at most two sons. The set of leaves is\index{\textit{$\mathcal{L}(\mathcal{T})$,}}\index{Leaf cluster,}
$$\mathcal{L}(\mathcal{T}) = \{ v \in P \ | \ v \text{ has no sons} \}. \index{\textit{$\mathcal{L}(\mathcal{T})$,}}$$
$\mathcal{T}$ is called cluster tree\index{Cluster tree,} for $X$, if $X$ is the root node and all $v \in P \backslash \mathcal{L}(\mathcal{T})$ are disjoint unions of their sons.\\
The level\index{Level,} $j_v$ of $v \in P$ is the number of son relations required traveling from $X$ to $v$. The depth\index{Depth,} of $\mathcal{T}$ is the maximum level of all nodes. The set of clusters\index{Cluster,} on level $j$ is defined as\index{\textit{$T_j$,}}
$$T_j = \{ v \in P \ | \ v \text{ has level } j \}. \index{\textit{$T_j$,}}$$
The bounding box\index{Bounding box,} $B_v$\index{\textit{$B_v$,}} of $v$ is the smallest axis-parallel hyperrectangle that contains all it points.
\end{defi}


We will consider only binary trees, although other choices are possible. To construct local samplets, especially balanced binary trees are of our interest.

\begin{defi} \label{4.2.2}
A cluster tree $\mathcal{T}=(P,E)$ on $X$ with depth $J$ is called balanced binary tree\index{Balanced binary tree,}, if for all $v \in P$ holds:
\begin{enumerate}
\item $v$ has no sons if $j_v = J$ and else it has exactly two sons,
\item $|v| \sim 2^{J-j_v}$.
\end{enumerate}
\end{defi}

A straightforward method for creating a balanced binary tree involves a top-down approach, where a cluster is divided into two sub-clusters of comparable size, until their cardinality falls under a given threshold. To achieve local clusters, we split a clusters bounding box along its longest edge, ensuring that each resulting box contains a nearly equal number of points. 

\begin{example} \label{4.2.3}
Let $d=2$ and $X=\{ (1,1), (1,2), (2,1), (2,2), (3,1) \}$. Figure \ref{fig. 12} shows a balanced binary tree for $X$ and $\text{threshold} = 2$.
\end{example}

\begin{figure}[h]
	\centering
	\includegraphics[width=0.9\textwidth]{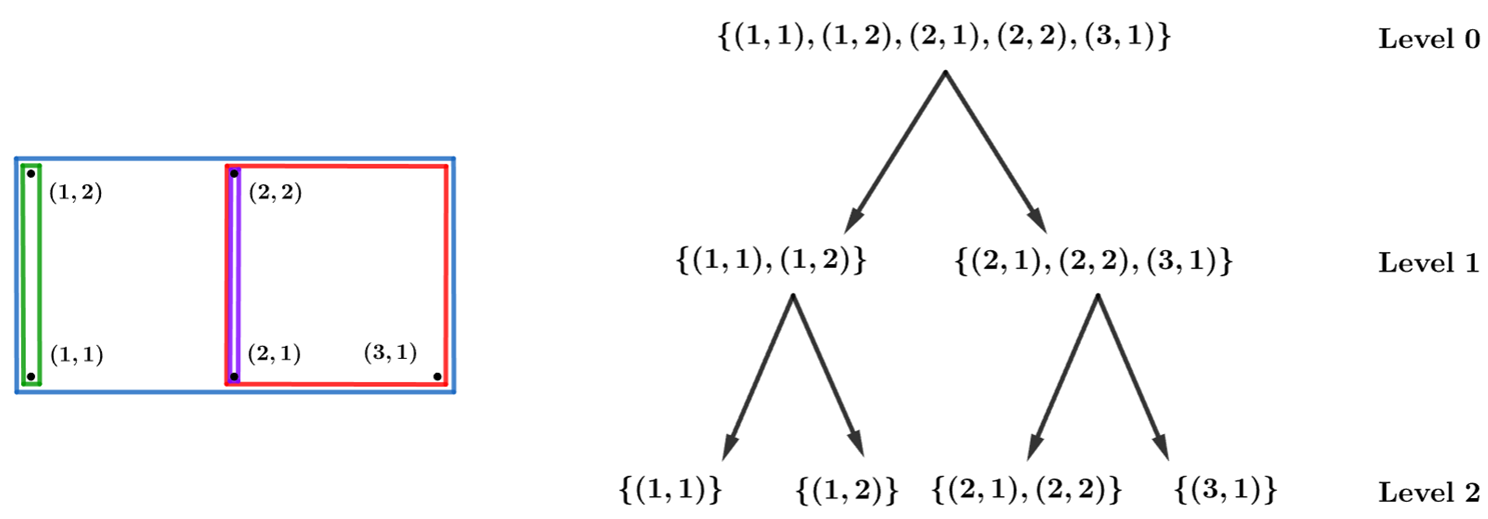}
	\caption{Balanced binary tree for $X=\{ (1,1), (1,2), (2,1), (2,2), (3,1) \}$ with its bounding boxes.}\label{fig. 12}
\end{figure}

\begin{proposition} \label{4.2.4}
This balanced binary tree can be computed in $\mathcal{O}(N \log N)$ time.
\end{proposition}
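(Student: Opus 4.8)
The plan is to analyze the top-down recursive construction described just before the statement: starting from the root cluster $X$, repeatedly take a cluster $v$, compute its bounding box $B_v$, find the longest edge of $B_v$, and split the points of $v$ into two sub-clusters of (nearly) equal cardinality according to their coordinate along that edge, recursing until $|v|$ falls below the fixed threshold. I would first bound the cost of processing a single cluster $v$ of size $|v|$: computing $B_v$ costs $\mathcal{O}(d\,|v|)$ (one pass over the points, tracking coordinatewise min/max); identifying the longest edge costs $\mathcal{O}(d)$; and splitting $v$ into two halves of size $\lfloor |v|/2\rfloor$ and $\lceil |v|/2\rceil$ along the chosen coordinate can be done in $\mathcal{O}(|v|)$ time using a linear-time median/selection algorithm (e.g.\ median-of-medians) followed by a partition, or in $\mathcal{O}(|v|\log|v|)$ if one simply sorts. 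Since $d$ is a fixed constant here, the per-cluster cost is $\mathcal{O}(|v|)$ with the selection-based split (or $\mathcal{O}(|v|\log|v|)$ with sorting).

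Next I would sum this cost over the whole recursion tree. By Definition 4.2.2 and the halving split, the clusters on level $j$ partition $X$, so $\sum_{v \in T_j} |v| = N$, and the depth is $J = \mathcal{O}(\log N)$ because each split roughly halves the cardinality and recursion stops at a constant threshold. Hence the total work is
\[
\sum_{j=0}^{J} \sum_{v \in T_j} \mathcal{O}(|v|) = \sum_{j=0}^{J} \mathcal{O}(N) = \mathcal{O}(N\log N),
\]
which is exactly the claimed bound. (With the cruder sorting-based split one gets $\sum_j \sum_{v\in T_j}\mathcal{O}(|v|\log|v|) \le \sum_j \mathcal{O}(N\log N) = \mathcal{O}(N\log^2 N)$, so to hit $\mathcal{O}(N\log N)$ on the nose one should either use linear-time selection at each node or pre-sort the points once per coordinate at the start in $\mathcal{O}(dN\log N)$ time and thereafter maintain sorted order through the partitions in linear time per level.) I would also note that the resulting tree satisfies $|v| \sim 2^{J - j_v}$ by the halving property, so it is indeed a balanced binary tree in the sense of Definition 4.2.2, and the storage is $\mathcal{O}(N)$ since the total size of all clusters on each of the $\mathcal{O}(\log N)$ levels is $N$.

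The main obstacle is making the depth bound $J = \mathcal{O}(\log N)$ and the level-sum bound $\sum_{v\in T_j}|v| = N$ fully rigorous: because the split is only \emph{nearly} equal (sizes $\lfloor|v|/2\rfloor$ and $\lceil|v|/2\rceil$), one has to check that the imbalance does not accumulate — here it does not, since $\lceil m/2\rceil \le 2^{\lceil \log_2 m\rceil - 1}$ ensures the subtree rooted at any cluster of size $m$ has depth $\le \lceil\log_2 m\rceil + \mathcal{O}(1)$ — and that stopping at a constant threshold leaves $\mathcal{O}(\log N)$ levels. The other subtlety worth a remark is the choice of splitting subroutine: this is where the bare "sort and split" idea would cost an extra logarithmic factor, so the proof should explicitly invoke a linear-time selection routine (or a one-time global sort with linear-time stable partitioning thereafter) to obtain the stated $\mathcal{O}(N\log N)$ rather than $\mathcal{O}(N\log^2 N)$.
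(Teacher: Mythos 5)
Your proposal is correct and follows essentially the same route as the paper: bound the depth by $\mathcal{O}(\log N)$, show each level costs $\mathcal{O}(N)$ to subdivide, and multiply. You are in fact more careful than the paper's own proof on one point — the paper simply asserts the per-level cost is $\mathcal{O}(N)$ without addressing how the median is found, whereas you correctly observe that a linear-time selection routine (rather than sorting at each node) is needed to avoid an extra logarithmic factor.
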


\begin{proof}
Let the threshold be $\tau$. Since there exists a leaf cluster on level $J$ with less than $\tau$ elements, the parent on level $J-1$ contains fewer than $2 \tau$ elements. By induction, we deduce that the root cluster on level $0$ has less than $2^J \tau$ elements. As the amount of elements in the root cluster is $N$, the depth satisfies
$$ N/2^J  < \tau \ \Longleftrightarrow \ 
J \in \Bigg\{ \left\lceil \dfrac{ \log(N/ \tau) }{\log 2} \right\rceil,  
\left\lceil \dfrac{ \log(N/ \tau) }{\log 2} \right\rceil + 1 \Bigg\},$$ 
resulting in a total of $\mathcal{O}(\log N)$ levels. To subdivide a cluster, we identify the longest edge of its bounding box, which determines the splitting direction. Considering a cluster $v= \{ \boldsymbol{x}_1, ..., \boldsymbol{x}_{|v|} \}$ with elements $\boldsymbol{x}_i = (x_{i, 1}, ..., x_{i, d}) \in \mathbb{R}^d$, the length of the edge along dimension $\ell$ is
$$L_\ell = \max_{i} \, x_{i, \ell} - \min_{j } \, x_{j, \ell}.$$
Consequently, the splitting direction is
$$\ell_{\text{split}} = \underset{\ell \in \{ 1, ..., d\} }{\argmax} \, L_\ell$$
with corresponding value
$$m_{\text{split}} = \text{median of } \{ x_{1, \ell_{\text{split}}}, ..., x_{|v|, \ell_{\text{split}}} \}.$$
Maintaining an equal number of elements, the two children are defined as
$$v_1 = \{ \boldsymbol{x}_i \in v \ | \ x_{i, \ell_{\text{split}}} \geq m_{\text{split}} \} \quad \text{and} \quad v_2 = \{ \boldsymbol{x}_i \in v \ | \ x_{i, \ell_{\text{split}}} < m_{\text{split}} \}.$$
Because this subdividing procedure costs $\mathcal{O}(N)$ for each level, constructing the balanced binary tree entails an overall cost of $\mathcal{O}(N \log N)$.
\end{proof}


The cluster splitting procedure is formulated in Algorithm \ref{alg. 2}.

\begin{algorithm}[h]
\DontPrintSemicolon
\caption{Cluster Splitting}\label{alg. 2}
\KwData{Cluster $v = \{ \boldsymbol{x}_1, ..., \boldsymbol{x}_{|v|} \}$, threshold $1 < \tau < N$}
\KwResult{Children $v_{\text{son} 1}, v_{\text{son} 2}$ of $v$}
\SetKwFunction{FMain}{SplitCluster}
\SetKwProg{Fn}{Function}{:}{}
\Fn{\FMain{$v$}}{
	\eIf{$|v| \geq \tau$}{
    			\For{$\ell = 1$  \KwTo $d$} {
    				$L_\ell=\max \{ x_{1, \ell}, ..., x_{|v|, \ell} \} - \min \{ x_{1, \ell}, ..., x_{|v|, \ell}\}$
    			}
    			$\ell_{\text{split}} = \underset{\ell \in \{ 1, ..., d\} }{\argmax} \, L_\ell$\;
    			$m_{\text{split}} = \text{median} \{ x_{1, \ell_{\text{split}}}, ..., x_{|v|, 	
    			\ell_{\text{split}}} \}$\;
    			$v_{\text{son} 1} = \{ \boldsymbol{x}_i \in v \ | \ x_{i, \ell_{\text{split}}} > 
    			m_{\text{split}} \}$\;
    			\ForEach{$\boldsymbol{x}_i \in v$} {
    				\uIf{$|v_{\text{son} 1}| \leq |v|/2$ \normalfont{\textbf{and}} $\boldsymbol{x}
    				_{i, \ell_{\text{split}}} = 	m_{\text{split}}$}{
    					$v_{\text{son} 1} = v_{\text{son} 1} \cup \{ \boldsymbol{x}_i \}$
    				}\ElseIf{$\boldsymbol{x}_{i, \ell_{\text{split}}} \leq m_{\text{split}}$}{
    					$v_{\text{son} 2} = v_{\text{son} 2} \cup \{ \boldsymbol{x}_i \}$
    				}
    			}
  		}
  			{$v_{\text{son} 1}, v_{\text{son} 2}  = \{ \}$
  		}
  	\KwRet $v_{\text{son} 1}, v_{\text{son} 2}$\;
}
\textbf{end}
\end{algorithm}

The next lemma helps us to investigate the diameter of a cluster and establishes a connection to the diameter of the clusters bounding box.

\begin{lemma} \label{4.2.5}
Let $v$ be a non-singleton cluster of this balanced binary tree. We have
$$\text{diam}(v) \sim \text{diam}(B_v) \sim R_v,$$ 
where $R_v$ denotes the length of $B_v$'s longest edge and the constants in $\sim$ depend only on $\Omega$.
\end{lemma}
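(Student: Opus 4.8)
The statement $\operatorname{diam}(v)\sim\operatorname{diam}(B_v)\sim R_v$ decomposes into two comparisons, and the plan is to handle each chain of inequalities separately. The easy direction in the first comparison is $\operatorname{diam}(v)\le\operatorname{diam}(B_v)$: every point of $v$ lies in its bounding box, so the diameter of the point set is at most the diameter of the box. For the first comparison I would also note that the diameter of an axis-parallel box with edge lengths $L_1,\dots,L_d$ equals $\bigl(\sum_{\ell}L_\ell^2\bigr)^{1/2}$, which lies between $R_v=\max_\ell L_\ell$ and $\sqrt{d}\,R_v$; hence $\operatorname{diam}(B_v)\sim R_v$ unconditionally, with constants depending only on $d$ (and $\Omega$, as required). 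So the real content is the reverse estimate $R_v\lesssim\operatorname{diam}(v)$.

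To prove $R_v\lesssim\operatorname{diam}(v)$ for a non-singleton cluster, I would argue by induction on the level, going down the tree from a singleton's parent, or equivalently track how the bounding box shrinks under the splitting rule of Algorithm \ref{alg. 2}. The key geometric fact is the splitting rule: a cluster $w$ is split along its \emph{longest} edge at the median coordinate $m_{\mathrm{split}}$, producing children $w_1,w_2$ whose points lie on either side of $m_{\mathrm{split}}$ in that coordinate. Along the split direction, each child's extent is at most that of the parent; along every other direction nothing changes. The crucial point is that after splitting along the longest edge, the longest edge of a child is controlled: either it is still (a sub-interval of) the old split direction, or it is one of the unchanged directions, whose length was $\le R_w$ to begin with. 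Iterating, the bounding box of a deep cluster $v$ has longest edge $R_v$ bounded by the longest edge at a controlled number of levels up; combined with the balancedness property $|v|\sim 2^{J-j_v}$ from Definition \ref{4.2.2}, the depth from $v$ up to a cluster of bounded cardinality is $\mathcal{O}(1)$ levels only when $v$ itself is small, so this needs care.

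The cleanest route is probably the following: since $v$ is non-singleton, it contains at least two points $\boldsymbol{x},\boldsymbol{y}$, so $\operatorname{diam}(v)>0$. Consider the parent $w$ of $v$ (or the chain of ancestors): along the direction $\ell$ realizing $R_v$, all points of $v$ share no common coordinate collapse, and one shows the spread of $v$ in \emph{some} coordinate is comparable to $R_v$ because the bounding box is, by construction, the \emph{tight} axis-parallel box — i.e. in each coordinate $\ell$ there exist points of $v$ attaining $\min_i x_{i,\ell}$ and $\max_i x_{i,\ell}$, so the spread of $v$ in coordinate $\ell$ is \emph{exactly} $L_\ell$, hence $\ge R_v$ for the maximizing $\ell$. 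Then $\operatorname{diam}(v)\ge$ spread of $v$ in that coordinate $=R_v$. This observation actually trivializes the lower bound: tightness of the bounding box gives $R_v=\max_\ell L_\ell=\max_\ell(\text{spread of }v\text{ in coord }\ell)\le\operatorname{diam}(v)$ directly. So in fact no induction is needed — the bounding box being the \emph{smallest} enclosing axis-parallel hyperrectangle is exactly what forces $R_v\le\operatorname{diam}(v)\le\operatorname{diam}(B_v)\le\sqrt d\,R_v$.

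The main obstacle, then, is not the geometry but making sure the hidden constants depend only on $\Omega$ (equivalently only on $d$, since $\Omega\subset\mathbb{R}^d$) and not on $v$ or $N$ — which is immediate here since the only constants are $1$ and $\sqrt d$ — and being careful that the "non-singleton" hypothesis is genuinely used (it guarantees $\operatorname{diam}(v)>0$, so the relation $\sim$ is meaningful and not $0\sim 0$ vacuously; a singleton has $R_v=0$ as well, so the statement would still hold trivially, but the intended regime is the non-degenerate one). I would close by assembling the chain
$$R_v\;\le\;\operatorname{diam}(v)\;\le\;\operatorname{diam}(B_v)\;=\;\Bigl(\textstyle\sum_{\ell=1}^d L_\ell^2\Bigr)^{1/2}\;\le\;\sqrt d\,R_v,$$
which gives all three quantities equivalent with constants depending only on $d$.
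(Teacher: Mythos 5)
Your proposal is correct and follows essentially the same route as the paper: both arguments rest on the tightness of the bounding box (so that the coordinate spread of $v$ in the longest-edge direction equals $R_v$ and is dominated by $\operatorname{diam}(v)$) together with the elementary bound $\operatorname{diam}(B_v)\le\sqrt{d}\,R_v$. The initial detour through induction on the splitting rule is unnecessary, as you yourself conclude.
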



\begin{proof}
First, we establish $\text{diam}(v) \sim \text{diam}(B_v)$. Since the elements of $v$ are contained in the bounding box $B_v$, it is
$$\text{diam}(v) \leq \text{diam}(B_v).$$
We have $B_v = \bigtimes_{i=1}^d [\min_{\boldsymbol{x} \in v} x_i, \max_{\boldsymbol{y} \in v} y_i]$ and therefore
$$\text{diam}(B_v)^2 = \sum_{i=1}^d (\max_{\boldsymbol{y} \in v} y_i - \min_{\boldsymbol{x} \in v} x_i)^2.$$
Without loss of generality let $i=1$ be the direction of $B_v$'s longest edge. Then,
\begin{align*}
\text{diam}(v)^2 = \max \bigg\{ \sum_{i=1}^d (&y_i - x_i)^2 : \boldsymbol{x},\boldsymbol{y} \in v \bigg\} 
\geq \left( \max_{\boldsymbol{y} \in v} y_1 - \min_{\boldsymbol{x} \in v} x_1 \right)^2 \\
&\geq \dfrac{1}{d} \sum_{i=1}^d \left( \max_{\boldsymbol{y} \in v} y_i - \min_{\boldsymbol{x} \in v} x_i \right)^2 
= \dfrac{1}{d} \text{diam}(B_v)^2.
\end{align*} 
Next, we concentrate on proving $\text{diam}(B_v) \sim R_v$, where $R_v$ is the length of $B_v$'s longest edge. Clearly, we have
$$\text{diam}(B_v) \geq R_v.$$
Regarding the other direction, note that
\[ \pushQED{\qed} 
\text{diam}(B_v)^2 = \sum_{i=1}^d (\max_{\boldsymbol{y} \in v} y_i - \min_{\boldsymbol{x} \in v} x_i)^2 \leq \sum_{i=1}^d R_v^{2} = d R_v^{2} = (\sqrt{d}R_v)^2. \qedhere \]
\end{proof}

We use the preceding lemma to investigate the diameter of a cluster. In particular, the diameter depends on the fill distance and on the separation radius, introduced in Definition \ref{3.2.1}.

\begin{proposition} \label{4.2.6}
Let $(X_N)_{N \in \mathbb{N}}$ be a sequence of points in $\Omega$ and $v$ be a non-singleton cluster of this balanced binary tree.
\begin{enumerate}[topsep=5pt]
	\setlength\itemsep{0.1mm}
\item It is $\text{diam}(v) \geq C_1 N^{1/d} q_{X_N} 2^{-j_v/d}$, where $C_1(\Omega)>0$ is a constant.  \label{4.2.6 (1)}
\item Let $\Omega$ be bounded. For $C_2 = 2 \sqrt{d} \cdot \text{diam}(\Omega)$ is \label{4.2.6 (2)}
$$\sum_{v \text{ is a cluster on level } j} \text{diam}(v) \leq C_2 2^{j-j/d}.$$\vspace{-5mm}
\item Let $\Omega$ be convex and bounded, and let $q_{X_N} \sim h_{X_N, \Omega} \sim N^{-1/d}$. Then it is $\text{diam}(v) \leq C_3  2^{-j_v/d}$, where $C_3(\Omega)>0$ is a constant.  \label{4.2.6 (3)}
\end{enumerate}
\end{proposition}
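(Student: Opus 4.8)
The plan is to establish the three bounds separately, using Lemma \ref{4.2.5} and the defining properties of the balanced binary tree in Definition \ref{4.2.2} together with the geometric quantities from Definition \ref{3.2.1} and Lemma \ref{3.2.3}. Throughout, I will write $J$ for the depth of the tree and freely use that $|v| \sim 2^{J - j_v}$ and $N \sim 2^J$ (the latter because the root has level $0$ and $|X_N| = N$).

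For part \ref{4.2.6 (1)}, the idea is that a non-singleton cluster $v$ contains at least two distinct points, so $\operatorname{diam}(v) \geq 2 q_{X_N}$ trivially, but this does not yet produce the factor $2^{-j_v/d}$; instead I want to count points. Since $|v| \sim 2^{J - j_v} \sim N 2^{-j_v}$, the cluster contains $\gtrsim N 2^{-j_v}$ points, all pairwise separated by at least $2 q_{X_N}$. Hence the balls $B_{q_{X_N}}(\boldsymbol{x}_i)$ for $\boldsymbol{x}_i \in v$ are disjoint and contained in the $q_{X_N}$-neighbourhood of $v$, whose volume is at most $\operatorname{Vol}_d(\operatorname{supp}$-ball of radius $\operatorname{diam}(v) + q_{X_N}) \lesssim (\operatorname{diam}(v) + q_{X_N})^d$. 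Comparing volumes gives $|v| \, V_d q_{X_N}^d \lesssim (\operatorname{diam}(v) + q_{X_N})^d$, so $\operatorname{diam}(v) + q_{X_N} \gtrsim |v|^{1/d} q_{X_N} \gtrsim (N 2^{-j_v})^{1/d} q_{X_N} = N^{1/d} q_{X_N} 2^{-j_v/d}$; absorbing the lower-order $q_{X_N}$ term (using $N^{1/d} 2^{-j_v/d} \geq 1$ since $j_v \leq J$ and $2^J \lesssim N$) yields the claim. The main obstacle here is being careful that the hidden constants depend only on $\Omega$ and not on $N$ or $j_v$, which is fine because the quasi-uniformity/cardinality constants are uniform by Definition \ref{4.2.2}.

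For part \ref{4.2.6 (2)}, I would use Lemma \ref{4.2.5}: $\operatorname{diam}(v) \leq \sqrt{d} R_v \leq \sqrt{d}\operatorname{diam}(B_v) \leq \sqrt{d}\operatorname{diam}(\Omega)$ for every cluster, since all clusters are contained in $\Omega$. The number of clusters on level $j$ is at most $2^j$ (a binary tree), so
$$\sum_{v \text{ cluster on level } j} \operatorname{diam}(v) \leq 2^j \cdot \sqrt{d}\operatorname{diam}(\Omega).$$
To get the sharper exponent $2^{j - j/d}$ one notes the bounding boxes $B_v$ at level $j$ have pairwise disjoint interiors only after refinement — actually the cleaner route is: each $B_v$ contains $\gtrsim N 2^{-j}$ points separated by $2 q_{X_N}$, so $\operatorname{Vol}_d(B_v) \gtrsim N 2^{-j} q_{X_N}^d$, while also $\operatorname{diam}(B_v) \leq \sqrt{d} R_v$ and $\operatorname{Vol}_d(B_v) \leq R_v \cdot \operatorname{diam}(\Omega)^{d-1}$; combining, $R_v \gtrsim N 2^{-j} q_{X_N}^d / \operatorname{diam}(\Omega)^{d-1}$ — hmm, this gives a lower bound, not what I need. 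The efficient argument is instead: by part of Lemma \ref{4.2.5}, $\operatorname{diam}(v) \leq \sqrt d R_v$ and $R_v \le \operatorname{diam}(B_v)$, and since each point of $X_N$ lies in exactly one level-$j$ cluster, $\sum_v |v| = N$; using $|v| \sim 2^{J-j}$ the number of clusters is $\sim 2^j$. Then bound each $\operatorname{diam}(v)$ by $\operatorname{diam}(B_v) \le \sqrt d R_v$ and $R_v \le \operatorname{diam}(B_v) \le \operatorname{diam}(\Omega)$; but to extract $2^{-j/d}$ from $R_v$, recall from part \ref{4.2.6 (3)}'s type of estimate that $R_v \lesssim 2^{-j/d}$ under convexity. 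So in fact part \ref{4.2.6 (2)} as stated with the exponent $j - j/d$ should follow from $\#\{v \text{ on level } j\} \le 2^j$ and $\operatorname{diam}(v) \le 2\sqrt d \operatorname{diam}(\Omega) 2^{-j/d}$; the latter needs the bounded (not necessarily convex) case, which I would get by the volume argument: $B_v \subseteq \Omega$ plus $|v| \gtrsim N 2^{-j}$ forces the long edge $R_v$ to satisfy $R_v^d \gtrsim$ — again a lower bound. I will therefore prove \ref{4.2.6 (2)} by the crude bound $\operatorname{diam}(v) \le \sqrt d \operatorname{diam}(\Omega)$ combined with a refined count, or simply verify directly that the subdivision along the longest edge halves $R_v^d$-volume every $d$ levels so that $R_v \lesssim \operatorname{diam}(\Omega) 2^{-j/d}$; then $\sum_{v} \operatorname{diam}(v) \le 2^j \cdot \sqrt d \cdot \operatorname{diam}(\Omega) 2^{-j/d} = \sqrt d \operatorname{diam}(\Omega) 2^{j - j/d} \le C_2 2^{j-j/d}$ with $C_2 = 2\sqrt d \operatorname{diam}(\Omega)$. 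This ``longest-edge halving'' estimate is the technical heart and where I expect to spend the most care.

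For part \ref{4.2.6 (3)}, assuming $\Omega$ convex and bounded with $q_{X_N} \sim h_{X_N,\Omega} \sim N^{-1/d}$, I would again use Lemma \ref{4.2.5} to reduce to bounding $R_v$. Convexity lets me relate volumes of bounding boxes to point counts cleanly: $\bigcup_{\boldsymbol{x}_i \in v} B_{q_{X_N}}(\boldsymbol{x}_i)$ is disjoint and, by convexity, contained in an $O(q_{X_N})$-fattening of $B_v$ whose volume is $\lesssim \prod_{i}(L_i + 2q_{X_N})$ with $L_i$ the edge lengths of $B_v$; since the subdivision keeps boxes roughly isotropic — more precisely $R_v$ is the max edge and, because we always cut the longest edge, after the construction all edges are comparable up to a factor $2$ — we get $\operatorname{Vol}_d(B_v) \lesssim R_v^d$. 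Meanwhile $|v| \sim 2^{J-j_v} \sim N 2^{-j_v}$ and each ball has volume $\sim q_{X_N}^d \sim N^{-1}$, so $N 2^{-j_v} \cdot N^{-1} \lesssim R_v^d + N^{-1}(\text{lower order})$, giving $R_v^d \lesssim 2^{-j_v}$, i.e. $R_v \lesssim 2^{-j_v/d}$, hence $\operatorname{diam}(v) \le \sqrt d R_v \le C_3 2^{-j_v/d}$ with $C_3 = C_3(\Omega)$. The main obstacle across all three parts is controlling anisotropy of the bounding boxes — ensuring the longest-edge splitting rule of Algorithm \ref{alg. 2} keeps $\operatorname{Vol}_d(B_v)$ comparable to $R_v^d$ — and making sure all constants genuinely depend only on $\Omega$; once that is in hand, each bound is a short volume-counting argument.
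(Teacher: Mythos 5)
Your part \ref{4.2.6 (1)} is essentially correct and matches the paper's idea: the paper packs axis-parallel cubes of side $q_{X_N}/\sqrt{d}$ (each holding at most one point of $X_N$) to force $R_v \geq (|v|^{1/d}-1)\,q_{X_N}/\sqrt{d}$, while you pack disjoint balls of radius $q_{X_N}$; both are the same volume count. Your absorption of the additive $q_{X_N}$ term needs a touch more care for small clusters (for $|v|=2$ and $d\geq 2$ the quantity $|v|^{1/d}-2$ is negative), but this is repaired either by the paper's factor $1-2^{-1/d}$ applied to $|v|^{1/d}-1$, or by your own trivial bound $\text{diam}(v)\geq 2q_{X_N}$ for clusters of bounded cardinality.

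Part \ref{4.2.6 (2)} has a genuine gap. Your final computation multiplies the cluster count $2^j$ by a per-cluster bound $R_v \lesssim \text{diam}(\Omega)\,2^{-j/d}$, but that per-cluster bound is false under the hypotheses of this part ($\Omega$ bounded only): Remark \ref{4.2.8} exhibits point sets in a bounded interval for which some cluster on every level has diameter at least $1$. Splitting the longest edge at the median of the point coordinates does not halve that edge --- one child can inherit almost the entire edge when the points are unevenly distributed --- so ``longest-edge halving'' per cluster cannot be verified. What does hold, and what the paper exploits, is a statement about the \emph{sum}: the two sub-intervals produced by a split have combined length at most $R_v$, so among the $2^d$ descendants of $v$ lying $d$ generations deeper one can form $2^{d-1}$ pairs with $R_{w_1}+R_{w_2}\leq R_v$, giving $\sum_{\text{level }j+d}R_w \leq 2^{d-1}\sum_{\text{level }j}R_v$; induction from the base levels $j\in\{0,\dots,d\}$ (where $\sum R_v \leq 2^j\text{diam}(\Omega) \leq 2\,\text{diam}(\Omega)\,2^{j-j/d}$) then yields the claimed level-sum bound, even though individual clusters may stay large.

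Part \ref{4.2.6 (3)} also has a genuine gap: your volume comparison runs in the wrong direction. Packing disjoint balls of radius $q_{X_N}$ around the points of $v$ inside a fattening of $B_v$ gives $|v|\,q_{X_N}^d \lesssim (R_v+2q_{X_N})^d$, i.e.\ the \emph{lower} bound $R_v\gtrsim 2^{-j_v/d}$ --- this is part \ref{4.2.6 (1)} again. You then pass from $2^{-j_v}\lesssim R_v^d$ to ``$R_v^d\lesssim 2^{-j_v}$'', silently reversing the inequality. The upper bound requires the fill distance rather than the separation radius: by convexity $\text{Vol}_d(B_v)\gtrsim r_v^d$ (with $r_v$ the shortest edge) is covered by $\lesssim |v|$ balls of radius $h_{X_N,\Omega}$, whence $r_v \lesssim |v|^{1/d}h_{X_N,\Omega}\sim 2^{-j_v/d}$, and one separately needs the isotropy estimate $R_v\leq C_4 r_v$, which the paper derives from convexity together with $q_{X_N}\sim h_{X_N,\Omega}\sim N^{-1/d}$; ``we always cut the longest edge'' alone does not prevent arbitrarily thin boxes, as the examples in Remark \ref{4.2.8} show.
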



\begin{proof} 
$ \ $
\begin{enumerate}[topsep=5pt]
	\setlength\itemsep{0.1mm}
\item For clarity, I will omit the use of the floor and ceiling functions. The open ball $B_{q_{X_N}}(\boldsymbol{0}) \subset \mathbb{R}^d$ covers $[0,q_{X_N} / \sqrt{d})^d$ and therefore, this cube contains at most one point from $X_N$. By merging $|v|-1$ of such cubes in a pattern as
\begin{figure}[H]
	\centering
	\includegraphics[width=0.4\textwidth]{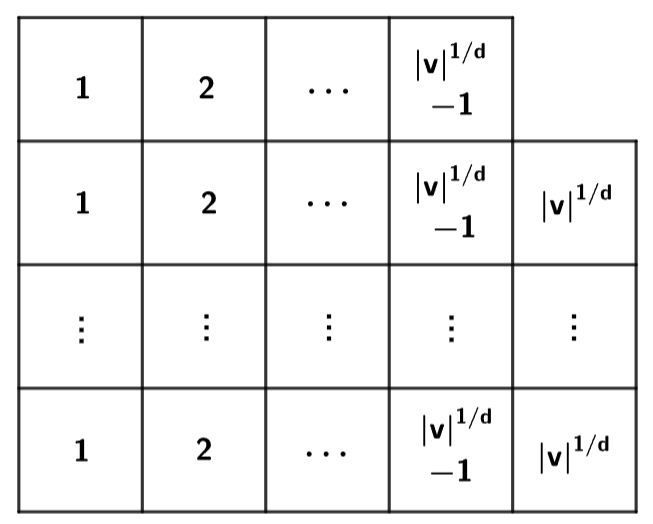}
\end{figure}
\noindent
it follows that
$$\left[0, |v|^{1/d} \dfrac{q_{X_N}}{\sqrt{d}} \right)^{d-1} \times \left[0, (|v|^{1/d} - 1) \dfrac{q_{X_N}}{\sqrt{d}} \right)$$
contains at most $|v|-1$ points from $X_N$. Because the bounding box contains $|v|$ points, we find that
$$R_v \geq (|v|^{1/d}-1) \dfrac{q_{X_N}}{\sqrt{d}} \geq C_1 |v|^{1/d} \dfrac{q_{X_N}}{\sqrt{d}}$$
for $C_1 = 1- 2^{-1/d}>0$. Using $|v| \sim N 2^{-j_v}$ and Lemma \ref{4.2.5}, we obtain
$$\text{diam}(v) \gtrsim R_v \gtrsim N^{1/d} 2^{-j_v/d} q_{X_N}.$$

\item For $j \in \{ 0,1,2, ..., d \}$ we have
$$\sum_{v \text{ is a cluster on level } j} R_v \leq 2^j \text{diam}(\Omega) \leq 2\text{diam}(\Omega) \cdot  2^{j-j/d}.$$
Our splitting procedure divides $B_v$'s longest edge into two edges, ensuring that their combined length does not exceed $R_v$. After cutting the longest edge, at most $d$ additional divisions are required to cut a second time along a direction. From this fact can be followed, that for each $d$-child $v_{d \text{-son} 1}$ of $v$, there exists a unique corresponding $d$-child $v_{d \text{-son} 2}$ of $v$ such that
$$R_{v_{d \text{-son} 1}} + R_{v_{d\text{-son} 2}} \leq R_v.$$
Using that a cluster has $2^d$ $d$-childs and induction, it follows
\begin{align*}
2 \sum_{v \text{ is a cluster on level } j+d} R_v 
&= \sum_{v_{d \text{-son} 1} \text{ is a cluster on level } j+d} R_{v_{d \text{-son} 1}} + R_{v_{d \text{-son} 2}} \\
&\leq 2^d \sum_{v \text{ is a cluster on level } j} R_v \\
&\leq 2^d 2 \text{diam}(\Omega) 2^{j-j/d} = 2 \text{diam}(\Omega) \cdot 2^{j+d-j/d}.
\end{align*}
This implies
$$\sum_{v \text{ is a cluster on level } j+d} R_v \leq 2 \text{diam}(\Omega) \cdot 2^{j+d-(j+d)/d}$$
and using Lemma \ref{4.2.5}, we obtain the result.

\item Denote by $R_v$ the length of $B_v$'s longest edge and by $r_v$ the length of its smallest edge. Since $q_{X_N} \sim h_{X_N, \Omega} \sim N^{-1/d}$ and $\Omega$ is convex, our dividing strategy ensures the existance of a constant $C_4(\Omega)>0$ with $R_v \leq C_4 r_v$. For the same reason, there is a constant $C_5(\Omega)>0$ with
$$C_5 \text{Vol}_d (B_v) \leq |v| \cdot \text{Vol}_d (B_{h_{X_N, \Omega}} (\boldsymbol{x}_1)) 
\Longrightarrow C_5 r_v^d \leq |v| V_d h_{X_N, \Omega}^d,$$
where $V_d$ denotes the volume of the $d$-dimensional unit ball. This implies that
$$r_v \lesssim |v|^{1/d} h_{X_N, \Omega}.$$
Using $|v| \sim N 2^{-j_v}$ and $h_{X_N, \Omega} \sim N^{-1/d}$, we obtain
$$R_v \lesssim r_v \lesssim (N 2^{-j_v})^{1/d} N^{-1/d} = 2^{-j_v / d}.  $$
The statement then follows by applying Lemma \ref{4.2.5}. \qedhere
\end{enumerate}
\end{proof}

Statement \ref{4.2.6 (2)} of Proposition \ref{4.2.6} guarantees that clusters at higher levels, when considered collectively, have shrinking diameters. However, it does not ensure that an individual cluster at a higher level has a smaller diameter than a cluster at a lower level. To achieve the localization of samplets, it is essential for individual clusters to exhibit this individual property, which can be ensured by using quasi-uniform points. Remember that the use of quasi-uniform points implies that $\Omega$ is bounded.

\begin{corollary} \label{4.2.7}
Let $\Omega$ be convex, $\text{Vol}_d(\Omega)>0$ and let $(X_N)_{N \in \mathbb{N}}$ be a sequence of quasi-uniform points in $\Omega$. For each non-singleton cluster $v$ of this balanced binary tree is
$$\text{diam}(v) \sim \text{diam}(B_v) \sim 2^{-j_v/d},$$
where the constants in $\sim$ depend only on $\Omega$.
\end{corollary}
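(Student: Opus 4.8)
The plan is to obtain Corollary~\ref{4.2.7} as a direct assembly of Lemma~\ref{3.2.3}, Lemma~\ref{4.2.5}, and statements \ref{4.2.6 (1)} and \ref{4.2.6 (3)} of Proposition~\ref{4.2.6}; no genuinely new estimate is required. First I would note that quasi-uniformity of $(X_N)_{N\in\mathbb{N}}$ already presupposes that $\Omega$ is bounded (this is built into Definition~\ref{3.2.1}), and that $\text{Vol}_d(\Omega)>0$ is assumed, so Lemma~\ref{3.2.3} applies and gives $q_{X_N}\sim N^{-1/d}\sim h_{X_N,\Omega}$ with constants depending only on $\Omega$. In particular $q_{X_N}\sim h_{X_N,\Omega}\sim N^{-1/d}$, so together with convexity and boundedness of $\Omega$ the hypotheses of statement~\ref{4.2.6 (3)} of Proposition~\ref{4.2.6} are met; this yields, for every non-singleton cluster $v$ of the balanced binary tree, the upper bound $\text{diam}(v)\le C_3\,2^{-j_v/d}$ with $C_3=C_3(\Omega)>0$.

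For the matching lower bound I would invoke statement~\ref{4.2.6 (1)} of Proposition~\ref{4.2.6}, which holds without extra hypotheses: $\text{diam}(v)\ge C_1 N^{1/d} q_{X_N} 2^{-j_v/d}$ with $C_1=C_1(\Omega)>0$. Since $q_{X_N}\sim N^{-1/d}$, the factor $N^{1/d}q_{X_N}$ is bounded above and below by positive constants depending only on $\Omega$, so $\text{diam}(v)\gtrsim 2^{-j_v/d}$. Combining the two bounds gives $\text{diam}(v)\sim 2^{-j_v/d}$ with constants depending only on $\Omega$. Finally, Lemma~\ref{4.2.5} provides $\text{diam}(v)\sim\text{diam}(B_v)\sim R_v$ for non-singleton clusters, again with constants depending only on $\Omega$; chaining this with the equivalence just established yields $\text{diam}(v)\sim\text{diam}(B_v)\sim 2^{-j_v/d}$, which is the assertion.

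The proof is essentially bookkeeping, and the only point that needs a little care — the closest thing to an obstacle — is tracking the hidden constants: one must check that each use of $\sim$ or $\gtrsim$ above carries a constant depending on $\Omega$ alone and not on $N$ or $j_v$. This holds because Lemma~\ref{3.2.3}, Lemma~\ref{4.2.5}, and Proposition~\ref{4.2.6} already phrase their constants in exactly this way, and a finite composition of such equivalences preserves the property. A minor secondary check is that statement~\ref{4.2.6 (3)} of Proposition~\ref{4.2.6} indeed requires no more than what we have verified (convexity, boundedness, and the fill/separation equivalence), which is immediate from its statement.
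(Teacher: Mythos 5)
Your proposal is correct and follows essentially the same route as the paper: Lemma \ref{3.2.3} to get $q_{X_N}\sim h_{X_N,\Omega}\sim N^{-1/d}$, then statements \ref{4.2.6 (1)} and \ref{4.2.6 (3)} of Proposition \ref{4.2.6} for the two-sided bound on $\text{diam}(v)$, with Lemma \ref{4.2.5} supplying the equivalence to $\text{diam}(B_v)$. The paper's version is just a terser statement of the same assembly, so there is nothing to add.
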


\begin{proof}
After Lemma \ref{3.2.3} is $q_{X_N} \sim h_{X_N, \Omega} \sim N^{-1/d}$, where the constants in $\sim$ depend solely on $\Omega$. The corollary follows with \ref{4.2.6 (1)} and \ref{4.2.6 (3)} of Proposition \ref{4.2.6}.
\end{proof}

\begin{remark} \label{4.2.8}
Since localized samplets are crucial, one might consider whether the assumptions in Corollary \ref{4.2.7} can be relaxed. However, this is not the case. For instance, let $\Omega = [0,1]$ and consider
$$X_N = \Big\{ 0, N^{-2}, \dfrac{1}{N-2}, \dfrac{2}{N-2}, ...,  \dfrac{N-2}{N-2} \Big\}.$$
We have $h_{X_N, \Omega} = \dfrac{1}{2(N-2)} \sim N^{-1}$, but there exists a cluster $v = \{ 0, N^{-2} \}$ with 
$$\text{diam}(v) = N^{-2} \ \cancel{\gtrsim} \ 2N^{-1} = 2^{-j_v}.$$
This shows that the condition $q_{X_N} \sim N^{-1/d}$ cannot be omitted. Similarly, let $\Omega = [-1,1]$ and consider
$$X_N = \Big\{ -1, \dfrac{0}{N-2}, \dfrac{1}{N-2}, \dfrac{2}{N-2}, ...,  \dfrac{N-2}{N-2} \Big\}.$$
Then is $q_{X_N} = \dfrac{1}{2(N-2)} \sim N^{-1}$, but at every level, there exists a cluster $v$ with $\text{diam}(v) \geq 1$. This indicates that the condition $h_{X_N, \Omega} \sim N^{-1/d}$ cannot be dropped. Lastly, let $\Omega$ and $X_{N+N^{1/2}}$ be as illustrated in the accompanying graphic.
\begin{figure}[H]
	\centering
	\includegraphics[width=0.8\textwidth]{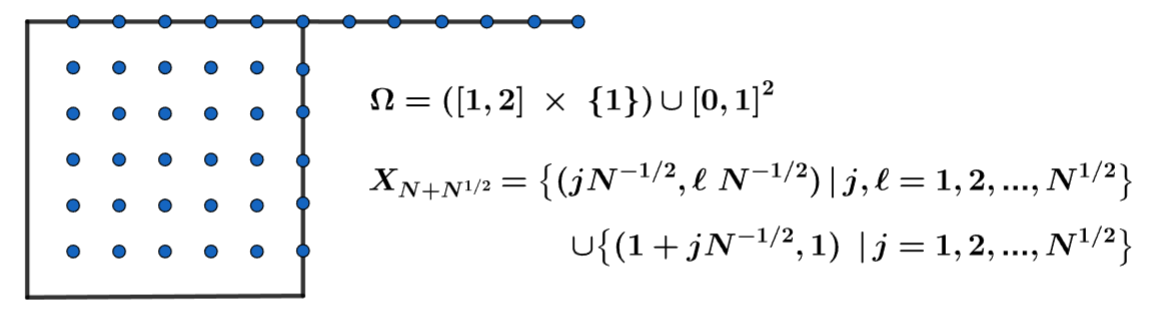}
\end{figure}

\noindent
While $X_{N+N^{1/2}}$ forms a sequence of quasi-uniform points, up to level
$$j = \dfrac{ \log(N^{1/2}+1)}{\log 2}-1 \Longleftrightarrow \dfrac{N+N^{1/2}}{2^j} = 2 N^{1/2},$$
there is always a cluster $v$ with $\text{diam}(v) \geq 1$. This demonstrates that the convexity of $\Omega$ cannot be omitted.
\end{remark}


\subsubsection{Multiscale samplet basis} \label{Chap. 4.2.2}

In the following, we use a cluster tree $\mathcal{T}=(P,E)$ of $X$ to build a samplet basis based on its hierarchical structure. Let $v$ denote a cluster on level $j$. Our approach utilizes scaling functions $\boldsymbol{\Phi}_{j+1}^v = \{ \varphi_{j+1,k}^v \}$ from $v$'s child clusters to create scaling functions $\boldsymbol{\Phi}_{j}^v = \{ \varphi_{j,k}^v \}$\index{\textit{$\boldsymbol{\Phi}_j^v$,}} and samplets $\boldsymbol{\Sigma}_j^v = \{ \sigma_{j,k}^v \}$\index{\textit{$\boldsymbol{\Sigma}_j^v$,}} for cluster $v$. More precisely:
\begin{enumerate}[label=(\roman*)]
\item If $v$ is a leaf cluster, set $\boldsymbol{\Phi}_{j}^v = \{ \delta_{\boldsymbol{x}} \ | \ \boldsymbol{x} \in v \}$ and $\boldsymbol{\Sigma}_j^v = \emptyset$,
\item If $v$ has sons, define $[\boldsymbol{\Phi}_{j}^v \, | \, \boldsymbol{\Sigma}_j^v] = \boldsymbol{\Phi}_{j+1}^v \boldsymbol{Q}_j^v$ for a matrix $\boldsymbol{Q}_j^v \in \mathbb{R}^{n \times n}$\index{\textit{$\boldsymbol{Q}_j^v$,}}.
\end{enumerate}
In this context we perceive $\boldsymbol{\Phi}_{j}^v = [\varphi_{j,1}^v, ..., \varphi_{j, | \boldsymbol{\Phi}_{j}^v |}^v]$ and $\boldsymbol{\Sigma}_j^v = [\sigma_{j,1}^v, ..., \sigma_{j,| \boldsymbol{\Sigma}_j^v |}^v]$ as row matrices, and employ the notation
\begin{equation*}
[\boldsymbol{\Phi}_{j}^v \, | \, \boldsymbol{\Sigma}_j^v] := [\varphi_{j,1}^v, ..., \varphi_{j, | \boldsymbol{\Phi}_{j}^v |}^v, \sigma_{j,1}^v, ..., \sigma_{j, | \boldsymbol{\Sigma}_j^v | }^v]. \index{\textit{$[\boldsymbol{\Phi}_{j}^v \, \vert \, \boldsymbol{\Sigma}_j^v]$,}}
\end{equation*}
For simplicity, let $n = | \boldsymbol{\Phi}_{j+1}^v |$. To ensure orthonormality and vanishing moments for the samplets, the transformation $\boldsymbol{Q}_j^v$ must be appropriately constructed. For the vanishing moments
$$(p, \sigma_{j,k})_{\Omega} = 0 \ \ \text{for all } p\in \mathcal{P}_q(\Omega),$$
it is sufficient to ensure that 
$$(\boldsymbol{z}^{\boldsymbol{\alpha}}, \sigma_{j,k})_{\Omega} = 0 \text{ for all } \boldsymbol{\alpha}=(\alpha_1, ..., \alpha_d) \in \mathbb{N}^d_{\geq 0} \text{ with }  | \boldsymbol{\alpha} | \leq q,$$
since $\mathcal{P}_q(\Omega)$ is spanned by monomials of maximum degree $q$. After \ref{Chap. 8.3} in the Appendix, there exist 


\begin{equation*}
m_q = \binom{q+d}{d} = \dfrac{(q+1)(q+2)...(q+d)}{1 \cdot 2 \cdot ... \cdot d} \leq (q+1)^d \index{\textit{$m_q$,}}
\end{equation*}
different $\boldsymbol{\alpha} \in \mathbb{N}^d_{\geq 0}$ with $| \boldsymbol{\alpha} | \leq q$, such that the dimension of $\mathcal{P}_q(\Omega)$ is $m_q$. Lets examine the moment matrix\index{Moment matrix,}
\begin{equation*}
\boldsymbol{M}_{j+1, \boldsymbol{\Phi}}^v 
= \begin{bmatrix}
	(\boldsymbol{z}^{\boldsymbol{0}}, \varphi_{j+1, 1}^v)_{\Omega} & \cdots & (\boldsymbol{z}^{\boldsymbol{0}}, \varphi_{j+1, n}^v)_{\Omega} \\ 
	\vdots & \ddots & \vdots \\ 
	(\boldsymbol{z}^{\boldsymbol{\alpha}}, \varphi_{j+1, 1}^v)_{\Omega} & \cdots & (\boldsymbol{z}^{\boldsymbol{\alpha}}, \varphi_{j+1, n}^v)_{\Omega}
\end{bmatrix}_{|\boldsymbol{\alpha}| \leq q} 
= [(\boldsymbol{z}^{\boldsymbol{\alpha}}, \boldsymbol{\Phi}_{j+1}^v)_{\Omega}]_{|\boldsymbol{\alpha}| \leq q}	\in \mathbb{R}^{m_q \times n},
\index{\textit{$\boldsymbol{M}_{j+1, \boldsymbol{\Phi}}^v$,}}
\index{\textit{$[(\boldsymbol{z}^{\boldsymbol{\alpha}}, \boldsymbol{\Phi}_{j+1}^v)_{\Omega}]_{ \vert\boldsymbol{\alpha} \vert \leq q}$,}}
\end{equation*}
where the rows can be arbitrary ordered. To attain a zero moment matrix $\boldsymbol{M}_{j, \boldsymbol{\Sigma}}^v$ for samplets, we employ the QR decomposition $(\boldsymbol{M}_{j+1, \boldsymbol{\Phi}}^v )^T = \boldsymbol{Q}_j^v \boldsymbol{R}$, where $\boldsymbol{Q}_j^v \in \mathbb{R}^{n \times n}$ is an orthogonal matrix and $\boldsymbol{R} \in \mathbb{R}^{n \times m_q}$ an upper triangular matrix. Using $\boldsymbol{Q}_j^v$ to construct functions, as explained earlier in this section, implies
\begin{align*}
[\boldsymbol{M}_{j, \boldsymbol{\Phi}}^v \, | \, \boldsymbol{M}_{j, \boldsymbol{\Sigma}}^v] 
= [(\boldsymbol{x}^{\boldsymbol{a}}, [\boldsymbol{\Phi}_{j}^v \, | \, \boldsymbol{\Sigma}_{j}^v] &)_{\Omega}]_{|\boldsymbol{a}| \leq q}
= [(\boldsymbol{x}^{\boldsymbol{a}}, \boldsymbol{\Phi}_{j+1}^v  \boldsymbol{Q}_j^v)_{\Omega}]_{|\boldsymbol{a}| \leq q} \\
&= [(\boldsymbol{x}^{\boldsymbol{a}}, \boldsymbol{\Phi}_{j+1}^v )_{\Omega}]_{|\boldsymbol{a}| \leq q} \boldsymbol{Q}_j^v 
= \boldsymbol{M}_{j+1, \boldsymbol{\Phi}}^v \boldsymbol{Q}_j^v = \boldsymbol{R}^T,
\end{align*}
where the third equation can be confirmed through direct calculation. Consequently, this combined moment matrix is a lower triangular matrix. Defining only the first $m_q$ entries of
$$[\boldsymbol{\Phi}_{j}^v \, | \, \boldsymbol{\Sigma}_j^v] = \boldsymbol{\Phi}_{j+1}^v \boldsymbol{Q}_j^v \in \mathbb{R}^{1 \times n}$$
as scaling functions, ensures that the samplets are orthonormal and possess vanishing moments up to degree $q$.

\begin{lemma} \label{4.2.9}
Let $v$ be a cluster with scaling functions $\boldsymbol{\Phi}_{j}^v$ and samplets $\boldsymbol{\Sigma}_{j}^v$ constructed from scaling functions $\boldsymbol{\Phi}_{j+1}^v$. If each leaf cluster has at most $m_q$ elements, we have the following bounds:
\begin{enumerate}[topsep=5pt]
	\setlength\itemsep{0.1mm}
\item $| \boldsymbol{\Phi}_{j+1}^v | \leq m_q \Rightarrow | \boldsymbol{\Sigma}_{j}^v| = 0$,
\item $|\boldsymbol{\Phi}_{j}^v | \leq m_q$, $| \boldsymbol{\Sigma}_{j}^v | \leq m_q$, \label{4.2.9 (ii)}
\item $|\boldsymbol{\Phi}_{j}^v | + | \boldsymbol{\Sigma}_{j}^v| = | \boldsymbol{\Phi}_{j+1}^v | \leq 2 m_q$.
\end{enumerate}
\end{lemma}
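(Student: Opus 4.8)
The plan is to unwind the recursive construction described in this section and track the sizes $|\boldsymbol{\Phi}_j^v|$ and $|\boldsymbol{\Sigma}_j^v|$ using the QR factorization $(\boldsymbol{M}_{j+1,\boldsymbol{\Phi}}^v)^T = \boldsymbol{Q}_j^v \boldsymbol{R}$ together with the stipulation that the \emph{first} $m_q$ columns of $\boldsymbol{\Phi}_{j+1}^v \boldsymbol{Q}_j^v$ are declared to be scaling functions and the remaining columns are declared to be samplets. Throughout, write $n = |\boldsymbol{\Phi}_{j+1}^v|$, so that $\boldsymbol{Q}_j^v \in \mathbb{R}^{n\times n}$ and the total count splits as $|\boldsymbol{\Phi}_j^v| + |\boldsymbol{\Sigma}_j^v| = n$. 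The first ingredient I would establish, by induction on the cluster tree from the leaves upward, is that $|\boldsymbol{\Phi}_{j+1}^v| \le 2 m_q$ for every cluster $v$ with sons: the base case is that a leaf cluster has at most $m_q$ elements by hypothesis, hence $|\boldsymbol{\Phi}^{\mathrm{leaf}}| \le m_q$; and if $v$ has two sons $v_1, v_2$ then $\boldsymbol{\Phi}_{j+1}^v$ is the concatenation of $\boldsymbol{\Phi}^{v_1}$ and $\boldsymbol{\Phi}^{v_2}$, each of size at most $m_q$ by the inductive step applied to the children (their scaling-function counts are bounded by $m_q$, which is exactly item \ref{4.2.9 (ii)} for the children, so this bound must be threaded through the induction simultaneously). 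This gives item (iii)'s inequality $|\boldsymbol{\Phi}_{j+1}^v| \le 2 m_q$, and the equality $|\boldsymbol{\Phi}_j^v| + |\boldsymbol{\Sigma}_j^v| = |\boldsymbol{\Phi}_{j+1}^v|$ is immediate from $\boldsymbol{Q}_j^v$ being square of size $n$.

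Next I would handle item (i). If $|\boldsymbol{\Phi}_{j+1}^v| = n \le m_q$, then $(\boldsymbol{M}_{j+1,\boldsymbol{\Phi}}^v)^T \in \mathbb{R}^{n \times m_q}$ has only $n$ rows, so in its QR decomposition $\boldsymbol{R} \in \mathbb{R}^{n\times m_q}$ is ``wide'' and all $n$ columns of $\boldsymbol{Q}_j^v$ are needed to span the column space — equivalently, by the prescription, all $n \le m_q$ columns of $\boldsymbol{\Phi}_{j+1}^v\boldsymbol{Q}_j^v$ are labelled scaling functions and none is a samplet, giving $|\boldsymbol{\Sigma}_j^v| = 0$. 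Then item (ii): the number of scaling functions at $v$ is by construction $\min\{n, m_q\}$, hence $|\boldsymbol{\Phi}_j^v| \le m_q$ always; and $|\boldsymbol{\Sigma}_j^v| = n - \min\{n, m_q\} = (n - m_q)_+ \le 2m_q - m_q = m_q$ using the bound $n \le 2m_q$ from the first paragraph. That closes all three items.

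The only genuinely delicate point — and the place I would spend care — is making the induction self-consistent: item (ii) for a cluster $v$ (namely $|\boldsymbol{\Phi}_j^v| \le m_q$) is precisely the hypothesis one needs about $v$'s children when bounding $|\boldsymbol{\Phi}_{j+1}^v|$ for the parent, so the statement to be proved by structural induction must be the conjunction ``$|\boldsymbol{\Phi}_j^v| \le m_q$ and $|\boldsymbol{\Sigma}_j^v| \le m_q$ and $|\boldsymbol{\Phi}_j^v| + |\boldsymbol{\Sigma}_j^v| = |\boldsymbol{\Phi}_{j+1}^v|$'' for all $v$, with the leaf case covered by the hypothesis ``each leaf cluster has at most $m_q$ elements'' (so $\boldsymbol{\Phi}^{\mathrm{leaf}} = \{\delta_{\boldsymbol{x}} : \boldsymbol{x}\in v\}$ has size $\le m_q$ and $\boldsymbol{\Sigma}^{\mathrm{leaf}} = \emptyset$). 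One should also note the harmless edge case that $v$'s child may itself be a leaf, in which case $|\boldsymbol{\Phi}^{\mathrm{child}}| \le m_q$ holds directly without invoking the QR step. Everything else is bookkeeping on matrix dimensions, so no lengthy computation is required.
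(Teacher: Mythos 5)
Your proposal is correct and follows essentially the same route as the paper's (very terse) proof: the paper likewise reads items (i) and the first half of (ii) off the construction, and obtains $|\boldsymbol{\Phi}_{j+1}^v|\le 2m_q$ and hence $|\boldsymbol{\Sigma}_j^v|\le m_q$ from the binary-tree structure and the fact that each son contributes at most $m_q$ scaling functions. You merely make explicit the simultaneous structural induction that the paper leaves implicit, which is a faithful elaboration rather than a different argument.
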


\begin{proof}
The initial statement and the first inequality in \ref{4.2.9 (ii)} follow dirctly from the construction. For the third statement notice, that we focus solely on binary trees and that each son cluster cannot provide more than $m_q$ scaling functions. Considering these factors, also the second inequality in \ref{4.2.9 (ii)} emerges.
\end{proof}

The scaling functions of clusters on level $j$ span the spaces
\begin{equation}
V_j = \text{span} \{ \varphi_{j,1}^v, ..., \varphi_{j, | \boldsymbol{\Phi}_{j}^v |}^v \ | \ v \in T_j \} \label{tag 4.2.2}
\end{equation}
and the samplets of clusters on level $j$ generate the detail spaces
\begin{equation}
S_j = \text{span} \{ \sigma_{j,1}^v, ..., \sigma_{j, | \boldsymbol{\Sigma}_{j}^v |}^v \ | \ v \in T_j \} \label{tag 4.2.3}
\end{equation}
satisfying $V_j \oplus S_j = V_{j+1}$.
Combining the scaling functions of the root cluster with the samplets from all clusters gives rise to the samplet basis
\begin{equation}
\boldsymbol{\Sigma}_J = \boldsymbol{\Phi}_0^X \cup \bigcup_{v \in P} \boldsymbol{\Sigma}_{j_v}^v.  \label{eq. 4.2.9}
\end{equation}
In the following, we also write $\boldsymbol{\Sigma}_J = \{ \sigma_j \ | \ 1 \leq j \leq N \}$, with samplets belonging to the same cluster grouped together and those at finer levels assigned higher indices.


\begin{example} \label{4.2.10}
Let us revisit the balanced binary tree that we examined in Example \ref{4.2.3}. For $q=0$ or rather $m_q = 1$, we obtain scaling functions and samplets depicted in Figure \ref{fig. 13}. Notice, that utilizing a different QR decomposition of a moment matrix results in different functions.

\begin{figure}[H]
	\centering
	\includegraphics[width=0.95\textwidth]{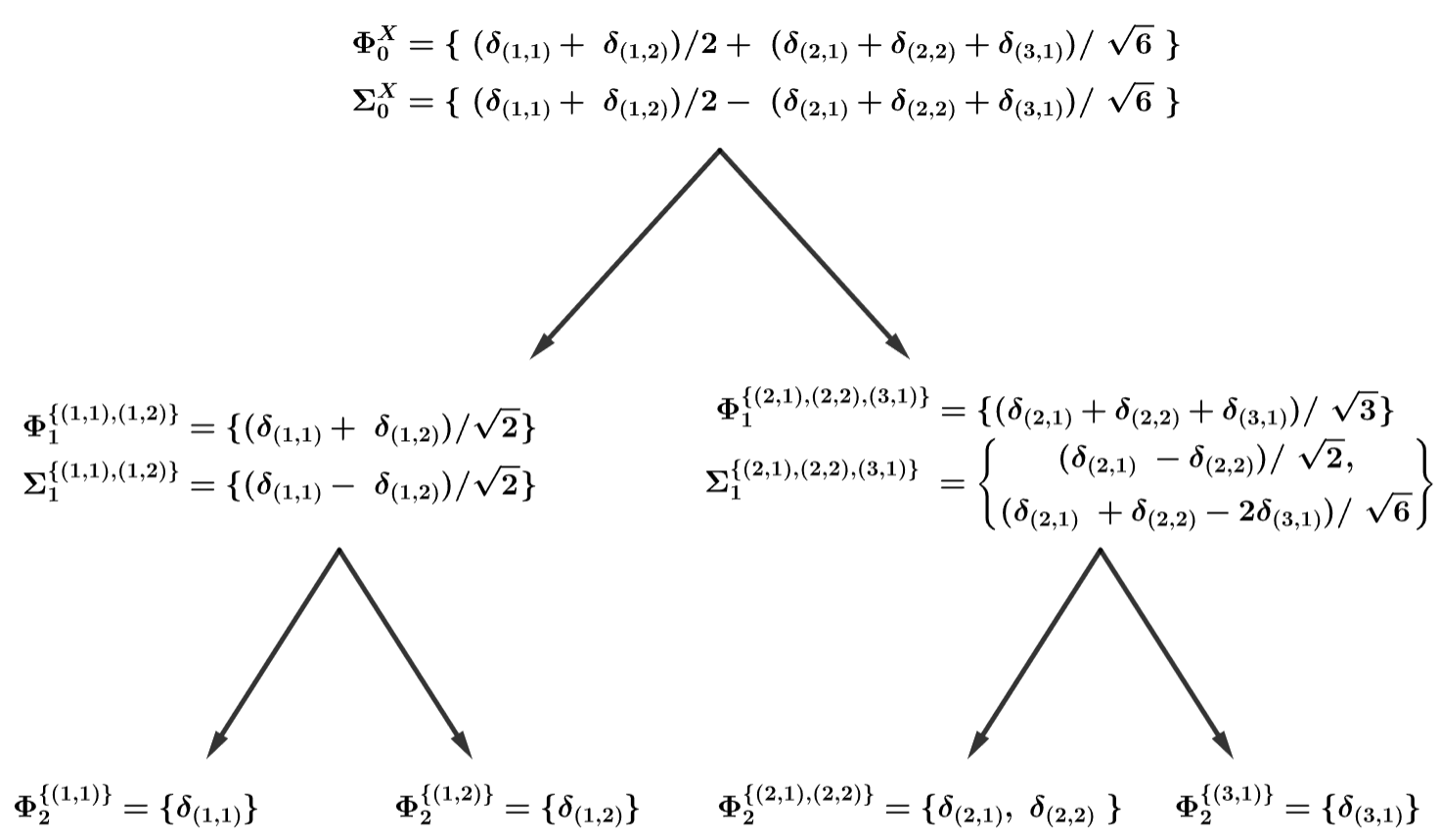}
	\caption{Scaling functions and samplets for the balanced binary tree in Figure \ref{fig. 12}.}\label{fig. 13}
\end{figure}
\end{example}

The procedure to compute the transformations and the samplet basis $\boldsymbol{\Sigma}_J$ is formulated in Algorithm \ref{alg. 3}.

\begin{algorithm}[h]
\DontPrintSemicolon
\caption{Samplet basis}\label{alg. 3}
\KwData{Cluster tree $\mathcal{T}$, cluster $v$, degree $q$}
\KwResult{Transformations $(\boldsymbol{Q}_j^v)_{v \in P \backslash \mathcal{L(T)}}$ and samplet basis $\boldsymbol{\Sigma}_J$}
\SetKwFunction{FMain}{SampletBasis}
\SetKwProg{Fn}{Function}{:}{}
\Fn{\FMain{$v$}}{
	\eIf{children $v_{\text{son} 1}, v_{\text{son} 2}$ of $v$ are leaf clusters}{
	$\boldsymbol{\Phi}^v_{j+1} = \{ \delta_{\boldsymbol{x}} \ | \ \boldsymbol{x} \in v_{\text{son} 1} \cup v_{\text{son} 2} 
	\}$\;} 
	{
	$\boldsymbol{\Phi}^v_{j+1} = \texttt{SampletBasis}(v_{\text{son} 1}) \cup 
	\texttt{SampletBasis}(v_{\text{son} 2})$\;
	}
	$\boldsymbol{M}^v_{j+1, \boldsymbol{\Phi}} = \texttt{MomentMatrix}(\boldsymbol{\Phi}^v_{j+1}, 
	q)$\;
	$\boldsymbol{Q}^v_j = \texttt{QR} \left( (\boldsymbol{M}^v_{j+1, \boldsymbol{\Phi}})^T 
	\right)$\;
	$[\boldsymbol{\Phi}^v_{j} \, | \, \boldsymbol{\Sigma}^v_j] = \boldsymbol{\Phi}^v_{j+1} 
	\boldsymbol{Q}^v_j $\;
	store $(\boldsymbol{Q}^v_j, \boldsymbol{\Sigma}^v_j)$\;
  	\KwRet $\boldsymbol{\Phi}^v_{j}$\;
}
\textbf{end}\;
store $\boldsymbol{\Phi}^X_{0} = \texttt{SampletBasis}(X)$\;
\end{algorithm}


\begin{remark} \label{4.2.11}
To construct a binary cluster tree, we must choose a threshold $\tau$ and to create a samplet basis, we additionally specify the degree $q$. However, these two parameters might not always be well selected. For instance, with $N=200$, $q=2$, $d=3$ and $\tau=30$, we find $m_q=10$, leading to a disproportionate number of samplets per level. This example is illustrated in Figure \ref{fig. 14}.\\
\noindent
One approach to address this issue is to set $q$ such that $m_q$ and $\tau$ are of similar magnitude. However, fixing $q$ or $\tau$ eliminates flexibility, so we choose a different approach. We aim to increase $q$ only for clusters on level $J-1$ to generate samplets with vanishing moments up to higher degrees. Directly increasing $q$ for these clusters would shift the problem to level $J-2$. Hence, we augment $q$ to $\tilde{q}$ while ensuring $m_{\tilde{q}} \leq 2 \tau$, but still define only the first $m_q$ functions as scaling functions. Consequently, we obtain samplets on level $J-1$ with vanishing moments up to a degree ranging from $q$ to $\tilde{q}$.\\
\noindent
Applying this approach to the scenario in Figure \ref{fig. 14} yields the same amount of scaling functions and samplets, but a cluster on level $2$ now has the following $40$ samplets:
\begin{itemize}[topsep=5pt]
	\setlength\itemsep{0.1mm}
\item $10$ samplets with vanishing moments up to degree $2$,
\item $15$ samplets with vanishing moments up to degree $3$,
\item $15$ samplets with vanishing moments up to degree $4$.
\end{itemize}

\begin{figure}[H]
	\centering
	\includegraphics[width=0.8\textwidth]{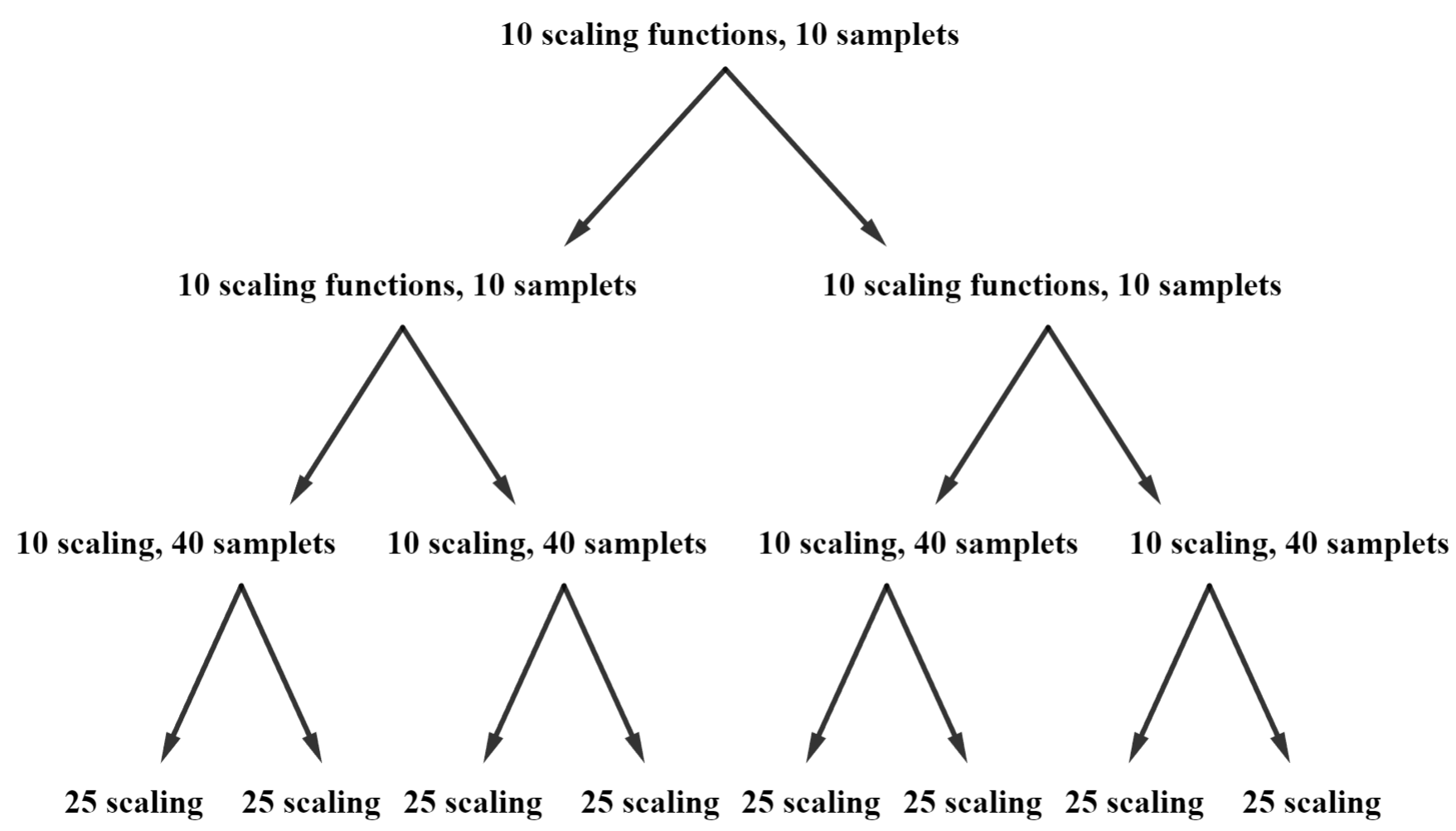}
	\caption{Amount of scaling functions and samplets per cluster, if we consider a binary cluster tree for $N=200$, $m_q=10$ and $\tau=30$.}\label{fig. 14}
\end{figure}
\end{remark}


\subsubsection{Properties of constructed samplets} \label{Chap. 4.2.3}
We have proven that a balanced binary tree can be generated in $\mathcal{O}(N \log N)$ time. The expenses involved in forming the corresponding samplet basis are of comparable magnitude.

\begin{proposition} \label{4.2.12}
The samplet basis of a balanced binary tree of $X$ can be computed in $\mathcal{O}(N)$ time.
\end{proposition}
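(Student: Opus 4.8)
The plan is to account for the cost cluster by cluster and show the per-cluster work is bounded by a constant (depending only on $d$ and $q$), after which summing over all $\mathcal{O}(N)$ clusters of the balanced binary tree gives the claim. The key structural input is Lemma \ref{4.2.9}: when the leaf clusters have at most $m_q$ elements, every cluster $v$ feeds at most $2m_q$ scaling functions into the construction at its parent, i.e.\ $|\boldsymbol{\Phi}_{j+1}^v| = n \leq 2m_q$. Since $m_q = \binom{q+d}{d}$ is a fixed constant, the matrices $\boldsymbol{M}_{j+1,\boldsymbol{\Phi}}^v \in \mathbb{R}^{m_q \times n}$ and $\boldsymbol{Q}_j^v \in \mathbb{R}^{n\times n}$ have size bounded independently of $N$.

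First I would walk through the recursion in Algorithm \ref{alg. 3} for a single non-leaf cluster $v$ on level $j$. Three things happen: (i) the moment matrix $\boldsymbol{M}_{j+1,\boldsymbol{\Phi}}^v = [(\boldsymbol{z}^{\boldsymbol{\alpha}},\boldsymbol{\Phi}_{j+1}^v)_\Omega]_{|\boldsymbol{\alpha}|\le q}$ is assembled; (ii) a QR decomposition $(\boldsymbol{M}_{j+1,\boldsymbol{\Phi}}^v)^T = \boldsymbol{Q}_j^v \boldsymbol{R}$ is computed; (iii) the product $\boldsymbol{\Phi}_{j+1}^v \boldsymbol{Q}_j^v$ is formed to split off scaling functions and samplets. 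Step (ii) costs $\mathcal{O}(n\, m_q^2) = \mathcal{O}(1)$ since both dimensions are $\le 2m_q$. Step (iii) is a multiplication of a $1\times n$ row of functionals by an $n\times n$ matrix, again $\mathcal{O}(n^2) = \mathcal{O}(1)$ once we agree that each entry of $\boldsymbol{\Phi}_{j+1}^v$ is stored as a coefficient vector over the $\delta_{\boldsymbol{x}}$ supported on the (at most $2m_q$, by the balanced-tree property applied at the leaves) relevant points — or, more carefully, as coefficients over the child scaling functions, so the representation stays bounded. The subtle point is step (i): I would use the moment recursion mentioned in \ref{Chap. 8.3}/the text, namely that the monomial moments $(\boldsymbol{z}^{\boldsymbol{\alpha}}, \varphi_{j+1,k}^v)_\Omega$ of a child cluster are obtained from the child's own moment matrix $[\boldsymbol{M}_{j+1,\boldsymbol{\Phi}}^{v_{\text{son}}} \mid \boldsymbol{M}_{j+1,\boldsymbol{\Sigma}}^{v_{\text{son}}}] = \boldsymbol{R}^T$ by a shift-of-origin transformation (re-expanding $\boldsymbol{z}^{\boldsymbol{\alpha}}$ about the parent's reference point), a linear map of size $m_q \times m_q$. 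Hence the moment matrix of $v$ is built from those of its two children in $\mathcal{O}(m_q^2) = \mathcal{O}(1)$ time, with no direct summation over the points of $v$.

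Then I would finish by summing: a balanced binary tree of depth $J = \mathcal{O}(\log N)$ with $|v|\sim 2^{J-j_v}$ has $2^{j}$ clusters on level $j$, hence $\sum_{j=0}^{J} 2^j = \mathcal{O}(2^J) = \mathcal{O}(N)$ clusters in total; at the leaves the per-cluster cost is $\mathcal{O}(m_q) = \mathcal{O}(1)$ (just gathering the $\delta_{\boldsymbol{x}}$). Multiplying constant work by $\mathcal{O}(N)$ clusters gives the $\mathcal{O}(N)$ bound. I expect the main obstacle — and the thing deserving the most care in the write-up — to be item (i): making precise that the moment matrices can be propagated \emph{bottom-up} through the tree by constant-size shift operators, so that no step ever touches all points of a cluster individually. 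If one instead recomputed $(\boldsymbol{z}^{\boldsymbol{\alpha}},\delta_{\boldsymbol{x}})_\Omega$ from scratch for every $\boldsymbol{x}\in v$ at every cluster $v$, the cost would degrade to $\mathcal{O}(N\log N)$, so the recursion on moments is exactly what buys the linear bound. A secondary, minor point is to confirm that the leaf-size assumption ("at most $m_q$ elements per leaf", as in Lemma \ref{4.2.9}) is consistent with the threshold $\tau$ chosen for Algorithm \ref{alg. 2}, or else to invoke the enlarged-degree variant of Remark \ref{4.2.11}; either way the per-leaf and per-cluster bounds remain constant in $N$.
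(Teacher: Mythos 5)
Your proposal is correct and takes essentially the same approach as the paper: constant work per cluster (bounded moment matrix, QR, and basis update, all of size $\mathcal{O}(m_q)$ by Lemma \ref{4.2.9}), multiplied by the $\mathcal{O}(N)$ clusters of the balanced binary tree. The one place you work harder than necessary is the moment propagation: since the monomials $\boldsymbol{z}^{\boldsymbol{\alpha}}$ are taken with respect to a fixed global origin, no shift-of-origin operator is needed — the parent's input moment matrix is just the concatenation of the children's stored moment matrices, and the parent's own moments come for free as $\boldsymbol{R}^T$ from the QR step, which is exactly how the paper (and Algorithm \ref{alg. 3}) avoids ever touching all points of a cluster.
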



\begin{proof}
If $v$ is a leaf cluster, then $|v| \sim 1$, indicating that $|v|$ is bounded by a constant. Consequently, computing the moment matrix for the $|v|$ many scaling functions and performing the QR-decomposition, requires a constant level of effort. \\
Let $v$ be a non-leaf cluster on level $j$. The number of scaling functions $|\boldsymbol{\Phi}_{j+1}|$ contributed by the two child clusters, is limited by a constant. Hence, the following tasks incur only a constant time: 
\begin{itemize}[topsep=5pt]
	\setlength\itemsep{0.1mm}
\item Gathering the computed moments in $\boldsymbol{M}_{j+1, \boldsymbol{\Phi}}^v \in \mathbb{R}^{m_q \times |\boldsymbol{\Phi}_{j+1}|}$,
\item Computing the QR-decomposition $(\boldsymbol{M}_{j+1, \boldsymbol{\Phi}}^v )^T = \boldsymbol{Q}_j^v \boldsymbol{R}$,
\item Deriving $[\boldsymbol{\Phi}_{j}^v \, | \, \boldsymbol{\Sigma}_j^v] = \boldsymbol{\Phi}_{j+1}^v \boldsymbol{Q}_j^v \in \mathbb{R}^{1 \times |\boldsymbol{\Phi}_{j+1}|}$,
\item Calculating the moments in $\boldsymbol{M}_{j, \boldsymbol{\Phi}}^v$.
\end{itemize}
Considering the constant effort needed for each cluster and the total number of clusters being
$$\mathcal{O}(1+2+2^2+...+N) = \mathcal{O} \left(\sum_{k=0}^{\log_2 N} 2^k \right) 
= \mathcal{O}  \left( \dfrac{1-2^{\log_2 N + 1}}{1-2} \right)= \mathcal{O}(N),$$ 
we receive an overall runtime of $\mathcal{O}(N)$. Notice that the actual total number of clusters is $2^{J+1}-1$, where $J$ is the depth of the balanced binary tree.
\end{proof}

The following Theorem compiles the results of section \ref{Chap. 4.2.2} for the constructed multiscale analysis and adds further statements.

\begin{theorem} \label{4.2.13}
The spaces $V_j$ defined in (\ref{tag 4.2.2}) define a multiresolution analysis
$$V_0 \subset V_1 \subset ... \subset V_J = V$$
with corresponding detail spaces $S_j$ defined by (\ref{tag 4.2.3}) satisfying $V_j \oplus S_j = V_{j+1}$ for all $j \in \{0,1,...,J \}$. The samplet basis $\boldsymbol{\Sigma}_J$ defined in (\ref{eq. 4.2.9}) forms an orthonormal basis of $V$. In addition, there holds:
\begin{enumerate}[topsep=5pt]
	\setlength\itemsep{0.1mm}
\item The samplets have vanishing moments up to degree $q$. \label{4.2.13 (1)}
\item Each samplet $\sigma_{j,k} \in \boldsymbol{\Sigma}_j$ is supported within a specific cluster $v$ at level $j$. \label{4.2.13 (2)}
\item Let $\Omega$ be convex, $\text{Vol}_d(\Omega)>0$ and $(X_N)_{N \in \mathbb{N}}$ be a sequence of quasi-uniform points. Then the samplets are localized. \label{4.2.13 (3)}
\item Let samplet $\sigma_{j}$ be supported in cluster $v$. Then is $\| \omega_{j} \|_1 \leq \sqrt{|v|}$ for the coefficient vector $\omega_{j} = [\omega_{j,1}, ..., \omega_{j,N}] \in \mathbb{R}^N$ of the samplet. \label{4.2.13 (4)}
\item Let $\Omega \subseteq \mathbb{R}^d$ be open and convex, $f \in C^{q+1}(\Omega)$ and $\sigma_{j}$ be a samplet supported on cluster $v$ with coefficient vector $\omega_{j} \in \mathbb{R}^N$. Then,
$$| (f, \sigma_{j})_{\Omega} | \leq \left( \dfrac{d}{2} \right)^{q+1} \dfrac{ \text{diam}(v)^{q+1} }{ (q+1)! } \| f \|_{ C^{q+1}(\Omega) } \| \omega_{j} \|_1.$$ \label{4.2.13 (5)}
\end{enumerate}
\end{theorem}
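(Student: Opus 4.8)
The plan is to exploit the two defining properties of the samplet $\sigma_j$: it is supported on the cluster $v$ (statement \ref{4.2.13 (2)}), so only the coefficients $\omega_{j,i}$ with $\boldsymbol{x}_i \in v$ are nonzero, and it has vanishing moments up to degree $q$ (statement \ref{4.2.13 (1)}), so $(p,\sigma_j)_\Omega = 0$ for every polynomial $p \in \mathcal{P}_q(\Omega)$. First I would fix an arbitrary anchor point $\boldsymbol{x}_0 \in v$ (for instance the center of the bounding box $B_v$, or simply one of the $\boldsymbol{x}_i \in v$) and write $(f,\sigma_j)_\Omega = (f - p, \sigma_j)_\Omega$, where $p$ is the degree-$q$ Taylor polynomial of $f$ about $\boldsymbol{x}_0$; the vanishing-moment property kills the polynomial part exactly. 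Then $(f,\sigma_j)_\Omega = \sum_{\boldsymbol{x}_i \in v} \omega_{j,i}\bigl(f(\boldsymbol{x}_i) - p(\boldsymbol{x}_i)\bigr)$ by the definition of the point-evaluation pairing $(f,\delta_{\boldsymbol{x}_i})_\Omega = f(\boldsymbol{x}_i)$.

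Next I would bound the Taylor remainder pointwise on $v$. Since $\Omega$ is open and convex and $f \in C^{q+1}(\Omega)$, the Lagrange/integral form of the multivariate Taylor remainder gives, for each $\boldsymbol{x}_i \in v$,
$$
|f(\boldsymbol{x}_i) - p(\boldsymbol{x}_i)| \le \frac{1}{(q+1)!} \sum_{|\boldsymbol{\beta}| = q+1} \binom{q+1}{\boldsymbol{\beta}} \, \|\boldsymbol{x}_i - \boldsymbol{x}_0\|_\infty^{q+1} \, \| D^{\boldsymbol{\beta}} f \|_{L^\infty(\Omega)}
$$
or a similar estimate; the key points are that $\|\boldsymbol{x}_i - \boldsymbol{x}_0\|_2 \le \operatorname{diam}(v)$ since both points lie in $v$, that passing from the $2$-norm to componentwise bounds on the $d$ coordinates is where a factor involving $d$ enters, and that the multinomial coefficients in the degree-$(q+1)$ term of the Taylor expansion sum to $d^{q+1}$. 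Collecting these, one should get $|f(\boldsymbol{x}_i) - p(\boldsymbol{x}_i)| \le \bigl(\tfrac{d}{2}\bigr)^{q+1} \tfrac{\operatorname{diam}(v)^{q+1}}{(q+1)!} \|f\|_{C^{q+1}(\Omega)}$ — the factor $(d/2)^{q+1}$ presumably arising from choosing $\boldsymbol{x}_0$ as the bounding-box center, so that $\|\boldsymbol{x}_i - \boldsymbol{x}_0\|_\infty \le \tfrac{1}{2}R_v \le \tfrac{1}{2}\operatorname{diam}(v)$ componentwise, together with the $d^{q+1}$ from the multinomial sum.

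Finally, inserting this uniform bound and pulling it out of the sum,
$$
|(f,\sigma_j)_\Omega| \le \sum_{\boldsymbol{x}_i \in v} |\omega_{j,i}| \cdot \Bigl(\tfrac{d}{2}\Bigr)^{q+1} \frac{\operatorname{diam}(v)^{q+1}}{(q+1)!} \|f\|_{C^{q+1}(\Omega)} = \Bigl(\tfrac{d}{2}\Bigr)^{q+1} \frac{\operatorname{diam}(v)^{q+1}}{(q+1)!} \|f\|_{C^{q+1}(\Omega)} \, \|\omega_j\|_1,
$$
which is the claim. The main obstacle I anticipate is bookkeeping the constants in the Taylor remainder so that they collapse exactly to $(d/2)^{q+1}$: one has to be careful about which norm is used on $\|\boldsymbol{x}_i-\boldsymbol{x}_0\|$, how $\|D^{\boldsymbol\beta} f\|_{L^\infty}$ relates to $\|f\|_{C^{q+1}(\Omega)}$ as defined in the paper's Hölder/$C^\ell$ norm, and whether the $1/2$ genuinely comes from centering at the bounding-box midpoint — this is the only delicate point, the rest being the two bookkeeping steps (vanishing moments eliminate the polynomial, support restricts the sum to $v$) which are routine.
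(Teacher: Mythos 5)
Your plan is correct and is exactly the univariate specialization of the argument the paper uses for Proposition \ref{4.3.4} (Taylor expansion about the bounding-box midpoint, vanishing moments annihilating the degree-$q$ polynomial part, the factor $2^{-(q+1)}$ from $|x_{i,\ell}-x_{0,\ell}|\leq \operatorname{diam}(v)/2$ componentwise, and $d^{q+1}$ from $\sum_{|\boldsymbol{\alpha}|=q+1}(q+1)!/\boldsymbol{\alpha}!=d^{q+1}$), which is precisely the proof the paper defers to for this statement. The constant bookkeeping you flag as the delicate point does collapse as you anticipate, so no gap remains.
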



\begin{proof}
The statements up number \ref{4.2.13 (2)} are direct consequences of the construction outlined in section \ref{Chap. 4.2.2}.
\begin{enumerate}[topsep=5pt]
	\setlength\itemsep{0.1mm}
\setcounter{enumi}{2}
\item Use statement \ref{4.2.13 (2)} and Corollary \ref{4.2.7}.
\item Without loss of generality write 
$$\sigma_{j} = \sum_{i=1}^{|v|} \omega_{j,i} \delta_{\boldsymbol{x}_i}.$$
Because of the orthonormality is $\| \omega_{j} \|_2 = 1$ and by Cauchy-Schwarz
$$\| \omega_{j} \|_1 
= \langle \begin{pmatrix} | \omega_{j,1} | \\ \vdots \\ | \omega_{j,|v|} | \end{pmatrix},
\begin{pmatrix} 1 \\ \vdots \\ 1 \end{pmatrix} \rangle
\leq \| \omega_{j} \|_2 \cdot \| \overbrace{(1, ..., 1)}^{|v|-\text{times}} \|_2 = \sqrt{|v|}.$$
\item See the proof of Lemma 3.10 in \cite{HM1}. \qedhere
\end{enumerate}
\end{proof}

I omitted the verification of the last statement for two reasons. Firstly, it closely resembles the proof of Proposition \ref{4.3.4}, which we address later on. Secondly, this statement is not crucial for my thesis. Nevertheless, it is fundamental for data compression and singularity detection. 

Let $f:\Omega \rightarrow \mathbb{R}$ be an unknown function and let $\mathcal{D}_N = \{ (\boldsymbol{x}_1, f(\boldsymbol{x}_1) ), ..., (\boldsymbol{x}_N, f(\boldsymbol{x}_N) ) \}$ denote a set of observations. Our understanding of the function can be expressed as
\begin{equation}
f = \sum_{i=1}^N f_i \delta_{\boldsymbol{x}_i}, \label{eq. 4.2.6}
\end{equation}
where $f_i := f(\boldsymbol{x}_i) = (f, \delta_{\boldsymbol{x}_i})_{\Omega}$. Let
\begin{equation}
f = \sum_{i=1}^N f_i^{ \boldsymbol{\Sigma} } \sigma_i \label{eq. 4.2.7}
\end{equation}
be the representation with respect to the samplet basis. The subsequent lemma establishes $f_i^{ \boldsymbol{\Sigma} } = (f, \sigma_i)_{\Omega}$ such that the last statement of Theorem \ref{4.2.13} provides a bound for the coefficients in this representation. If $f$ is smooth, the samplet coefficients tend to be small, allowing them to be set to zero without affecting accuracy. Conversely, if the coefficients are large, it indicates singularity in the region supported by the samplets.

\begin{lemma} \label{4.2.14}
It is $f_i^{\boldsymbol{\Sigma}} = (f, \sigma_i)_{\Omega}$ for all $i \in \{ 1, ..., N \}$.
\end{lemma}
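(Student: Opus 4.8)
The plan is to use the fact that the samplet basis $\boldsymbol{\Sigma}_J = \{\sigma_1, \ldots, \sigma_N\}$ is an orthonormal basis of $V$ with respect to the inner product $\langle \cdot, \cdot \rangle_V$ defined in (\ref{4.1 (1)}), together with the observation that the pairing $(\cdot, \cdot)_\Omega$ with a fixed element of $V$ is precisely the realization of that inner product once we identify $f$ with its coefficient vector. Concretely, I would first write $f$ in the Dirac basis as in (\ref{eq. 4.2.6}), so that $f = \sum_{i=1}^N f_i \delta_{\boldsymbol{x}_i} \in V$, and note that $f_i = (f, \delta_{\boldsymbol{x}_i})_\Omega = f(\boldsymbol{x}_i)$.

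The key step is to check that for any $v = \sum_{i=1}^N v_i \delta_{\boldsymbol{x}_i} \in V$ and any continuous function $g$, the point-evaluation pairing behaves bilinearly and that $(\delta_{\boldsymbol{x}_i}, \delta_{\boldsymbol{x}_j})_\Omega$ is to be read as $\langle \delta_{\boldsymbol{x}_i}, \delta_{\boldsymbol{x}_j}\rangle_V = \delta_{ij}$ (Kronecker delta). Then, expanding $f$ in the samplet basis as in (\ref{eq. 4.2.7}), $f = \sum_{k=1}^N f_k^{\boldsymbol{\Sigma}} \sigma_k$, I compute
\begin{align*}
(f, \sigma_i)_\Omega
&= \left( \sum_{k=1}^N f_k^{\boldsymbol{\Sigma}} \sigma_k, \sigma_i \right)_\Omega
= \sum_{k=1}^N f_k^{\boldsymbol{\Sigma}} \langle \sigma_k, \sigma_i \rangle_V
= \sum_{k=1}^N f_k^{\boldsymbol{\Sigma}} \delta_{ki}
= f_i^{\boldsymbol{\Sigma}},
\end{align*}
using orthonormality of $\boldsymbol{\Sigma}_J$ for the third equality. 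This gives the claim directly.

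The only subtlety — and the place where a little care is needed — is to justify that $(f, \sigma_i)_\Omega$ really does coincide with $\langle f, \sigma_i \rangle_V$ when $f$ is viewed as an element of $V$; that is, one must observe that extending the point-evaluation pairing $(\cdot, \delta_{\boldsymbol{x}_i})_\Omega$ bilinearly to all of $V \times V$ reproduces exactly the inner product in (\ref{4.1 (1)}), since $(\delta_{\boldsymbol{x}_j}, \delta_{\boldsymbol{x}_i})_\Omega = \delta_{\boldsymbol{x}_i}(\delta_{\boldsymbol{x}_j})$ is interpreted via the isometric isomorphism $\Psi: V \to \mathbb{R}^N$. Once this identification is in place the result is immediate; I do not anticipate any genuine obstacle, only the bookkeeping of matching the two notations for the pairing. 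An alternative, equivalent route is to write the coefficient transformation explicitly via the orthogonal matrix assembled from the $\boldsymbol{Q}_j^v$ blocks and invoke orthogonality of that matrix, but the basis-free argument above is cleaner.
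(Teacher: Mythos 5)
Your proposal is correct and follows essentially the same route as the paper's proof: both expand $f$ in the samplet basis, use bilinearity of the pairing, and reduce to the observation that $(\sigma_n, \sigma_i)_{\Omega}$ equals $\langle \sigma_n, \sigma_i\rangle_V$ because $(\delta_{\boldsymbol{x}_n}, \delta_{\boldsymbol{x}_j})_{\Omega}$ is the Kronecker delta. The "subtlety" you flag is exactly the step the paper carries out explicitly by writing each samplet in the Dirac basis, so there is no gap.
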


\begin{proof}
Write 
$$\sigma_i = \sum_{n=1}^N \omega_{i,n} \delta_{\boldsymbol{x}_n},$$ 
then
\begin{align*}
(\sigma_i, \sigma_{\ell})_{\Omega} 
= \left(\sum_{n=1}^N \omega_{i,n} \delta_{\boldsymbol{x}_n}, \sum_{j=1}^N \omega_{\ell,j} \delta_{\boldsymbol{x}_j} \right)_{\Omega} 
&= \sum_{n=1}^N \sum_{j=1}^N \omega_{i,n} \omega_{\ell,j} (\delta_{\boldsymbol{x}_n}, \delta_{\boldsymbol{x}_j})_{\Omega}\\
&= \sum_{n=1}^N \omega_{i,n} \omega_{\ell,n}
= \langle \sigma_i, \sigma_{\ell} \rangle_V = \begin{cases}
  1, & \text{if } i = \ell \\
  0, & \text{if } i \neq \ell 
\end{cases}
\end{align*}
and therefore
\[ \pushQED{\qed} 
(f, \sigma_i)_{\Omega} = \left(\sum_{n=1}^N f_n^{ \boldsymbol{\Sigma} } \sigma_n, \sigma_i \right)_{\Omega} 
= \sum_{n=1}^N f_n^{ \boldsymbol{\Sigma} } (\sigma_n, \sigma_i)_{\Omega} 
= f_i^{ \boldsymbol{\Sigma} }. \qedhere \]
\end{proof}


\subsubsection{Fast samplet transform} \label{Chap. 4.2.4}
We aim to efficiently convert between the representations (\ref{eq. 4.2.6}) and (\ref{eq. 4.2.7}) for an unknown function $f:\Omega \rightarrow \mathbb{R}$, given a set of observations $\mathcal{D}_N = \{ (\boldsymbol{x}_1, f_1 ), ..., (\boldsymbol{x}_N, f_N ) \}$. The fast samplet transform recursively applies the relation
\begin{equation}
[(f, \boldsymbol{\Phi}_{j}^v )_{\Omega} \, | \, (f, \boldsymbol{\Sigma}_{j}^v )_{\Omega}]
= (f, [\boldsymbol{\Phi}_{j}^v \, | \, \boldsymbol{\Sigma}_{j}^v] )_{\Omega}
= (f, \boldsymbol{\Phi}_{j+1}^v )_{\Omega} \boldsymbol{Q}_j^v \label{eq. 4.2.12}
\end{equation}
handled in section \ref{Chap. 4.2.2}, to attain the samplet representation. Here, we employ
$$(f, \boldsymbol{\Phi}_{j}^v )_{\Omega} =  [(f, \varphi_{j,1}^v )_{\Omega}, ..., (f, \varphi_{j, | \boldsymbol{\Phi}_{j}^v| }^v )_{\Omega}] \index{\textit{$(f, \boldsymbol{\Phi}_{j}^v )_{\Omega}$,}}$$
for $\boldsymbol{\Phi}_{j}^v = [\varphi_{j,1}^v, ..., \varphi_{j, | \boldsymbol{\Phi}_{j}^v|}^v]$.\\
The samplet transformation begins at the finest level $J$, because if $v = \{ \boldsymbol{x}_{i_1}, ..., \boldsymbol{x}_{i_{ |v| }} \}$ is a leaf cluster, then the coefficient vector
$$(f, \boldsymbol{\Phi}_{J}^v )_{\Omega} 
= [(f, \delta_{\boldsymbol{x}_{i_1}} )_{\Omega}, ..., (f, \delta_{\boldsymbol{x}_{i_{ |v| }}} )_{\Omega}] 
= [f_{i_1}, ..., f_{i_{ |v| }}]$$
is readily available. Using equation (\ref{eq. 4.2.12}), we can compute the coefficient vectors of its parent cluster. Utilizing this coefficient vectors at level $J-1$, we can determine the coefficient vectors on level $J-2$, and this recursive process continues until we calculate all coefficient vectors up to level $0$. With the definitions
\begin{align*}
\boldsymbol{f}_j^{\boldsymbol{\Sigma}} = \text{Samplet coefficient vector on level } j,\\
\boldsymbol{f}_j^{\boldsymbol{\Phi}} = \text{Scaling coefficient vector on level } j,
\end{align*}
we can visualize the fast samplet transform using a fishbone diagram, as illustrated in Figure \ref{fig. 100}.

\begin{figure}[H]
	\centering
	\includegraphics[width=0.8\textwidth]{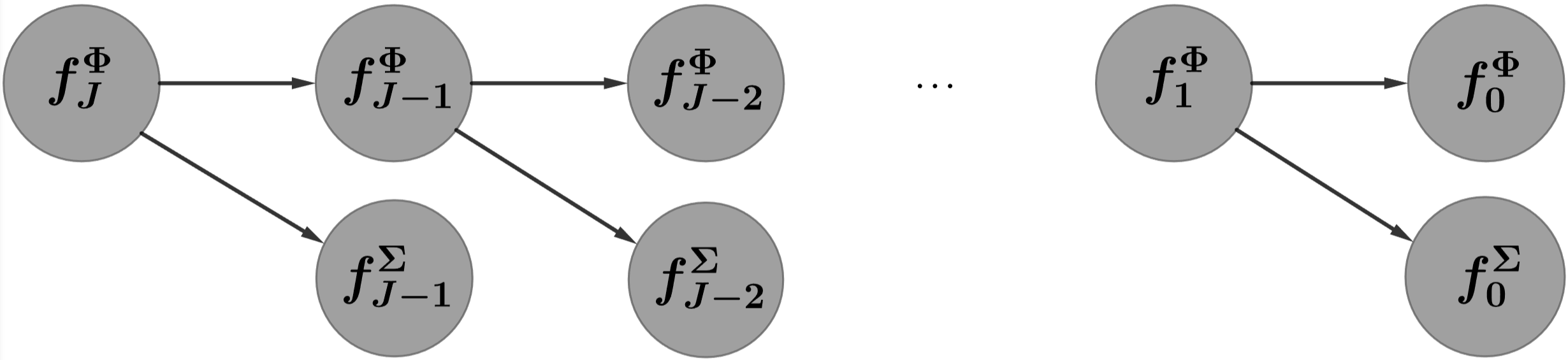}
	\caption{Visualization of the fast samplet transform. This figure is inspired by the fishbone graphic in \cite{HM1}.}\label{fig. 100}
\end{figure}


\begin{algorithm}[h]
\DontPrintSemicolon
\caption{Fast samplet transform}\label{alg. 4}
\KwData{Cluster tree $\mathcal{T}$, cluster $v$, coefficients $[f_1,...,f_N]$, $(\boldsymbol{Q}_j^v)_{v \in P \backslash \mathcal{L(P)}}$}
\KwResult{Samplet coefficients $[f_1^{ \boldsymbol{\Sigma} }, ..., f_N^{ \boldsymbol{\Sigma} }]$}
\SetKwFunction{FMain}{SampletTransform}
\SetKwProg{Fn}{Function}{:}{}
\Fn{\FMain{$v$}}{
	\eIf{children $v_{\text{son} 1}, v_{\text{son} 2}$ of $v$ are leaf clusters }{
	$\boldsymbol{f}_{j+1}^{v} = [ f_{i_1}, ..., f_{i_{| v |}} ], \text{ where } v_{\text{son} 1} 
	\cup v_{\text{son} 2} = \{ \boldsymbol{x}_{i_1}, ..., \boldsymbol{x}_{i_{| v |}} \}$\;} 
	{
	$\boldsymbol{f}_{j+1}^{v_{\text{son} 1}} = \texttt{SampletTransform}(v_{\text{son} 1})$\;
	$\boldsymbol{f}_{j+1}^{v_{\text{son} 2}} = \texttt{SampletTransform}(v_{\text{son} 2})$\;
	$\boldsymbol{f}_{j+1}^{v} = \texttt{append}(\boldsymbol{f}_{j+1}^{v_{\text{son} 1}}, 
	\boldsymbol{f}_{j+1}^{v_{\text{son} 2}})$ \;
	}
	$[(f, \boldsymbol{\Phi}_{j}^v )_{\Omega} \, | \, (f, \boldsymbol{\Sigma}_{j}^v )_{\Omega}] = 
	\boldsymbol{f}_{j+1}^{v} \boldsymbol{Q}_j^v$\;
	store $[f_{j, 1}^{\boldsymbol{\Sigma}}, ..., f_{j ,  | \boldsymbol{\Sigma}_j^v |  }^{\boldsymbol{\Sigma}}] = (f, \boldsymbol{\Sigma}_{j}^v )_{\Omega}$\;
  	\KwRet $(f, \boldsymbol{\Phi}_{j}^v )_{\Omega}$\;
}
\textbf{end}\;
store $(f, \boldsymbol{\Phi}_{0}^X )_{\Omega} = \texttt{SampletTransform}(X)$
\end{algorithm}

The fast samplet transform is formulated in Algorithm \ref{alg. 4}. Analogious we obtain the inverse transformation by reversing the steps and computing
$$(f, \boldsymbol{\Phi}_{j+1}^v )_{\Omega} = [(f, \boldsymbol{\Phi}_{j}^v )_{\Omega} \, | \, (f, \boldsymbol{\Sigma}_{j}^v )_{\Omega}] (\boldsymbol{Q}_j^v)^T.$$
Since the coefficients $(f, \boldsymbol{\Phi}_{0}^X )_{\Omega}$ and $(f, \boldsymbol{\Sigma}_{j}^v )_{\Omega}$ are for every $v \in P$ easily accessible, the inverse transformation initiates at level 0 and recursively determines the coefficients at levels $1,2, ..., J$. This process is outlined in Algorithm \ref{alg. 5}.

The following Porposition demonstrates, that the transformations entail a low computational effort, so that we are able to efficiently convert between the two representations of $f$.\\

\begin{algorithm}[H]
\DontPrintSemicolon
\caption{Inverse transform}\label{alg. 5}
\KwData{Cluster tree $\mathcal{T}$, cluster $v$, samplet coefficients $[f_1^{\boldsymbol{\Sigma}},..., f_N^{\boldsymbol{\Sigma}}]$, $(\boldsymbol{Q}_j^v)_{v \in P \backslash \mathcal{L(P)}}$}
\KwResult{Coefficients $[f_1, ..., f_N]$}
\SetKwFunction{FMain}{InverseTransform}
\SetKwProg{Fn}{Function}{:}{}
\Fn{\FMain{$v$, $(f, \boldsymbol{\Phi}_{j}^v )_{\Omega}$}}{
	$(f, \boldsymbol{\Phi}_{j+1}^v )_{\Omega} = [(f, \boldsymbol{\Phi}_{j}^v )_{\Omega} \, | 
	\, (f, \boldsymbol{\Sigma}_{j}^v )_{\Omega}] (\boldsymbol{Q}_{j}^v)^T$\;
	\eIf{$v = \{ \boldsymbol{x}_{i_1}, ..., \boldsymbol{x}_{i_{ |v| }} \}$ is the parent of a 
	leaf}{
	$ [f_{i_1}, ..., f_{i_{ |v| }}] = (f, \boldsymbol{\Phi}_{j+1}^v )_{\Omega}$\;
	\text{store } $ [f_{i_1}, ..., f_{i_{ |v| }}]$
	} 
	{
	\ForEach{son $v_{\text{son}}$ of $v$} {
    				assign part of $(f, \boldsymbol{\Phi}_{j+1}^v )_{\Omega}$ belonging to 
    				$v_{\text{son}}$ to $(f, \boldsymbol{\Phi}_{j + 1}^{v_{\text{son}}} 
    				)_{\Omega}$\;
    				$\texttt{InverseTransform}(v_{\text{son}}, (f, \boldsymbol{\Phi}
    				_{j + 1}^{v_{\text{son}}} )_{\Omega})$
    			}
	}
}
\textbf{end}\;

$\texttt{InverseTransform}(X, (f, \boldsymbol{\Phi}_0^{X} )_{\Omega})$\;
\end{algorithm}

\begin{proposition} \label{4.2.15}
The computational workload of both the fast samplet transform and its inverse counterpart is $\mathcal{O}(N)$.
\end{proposition}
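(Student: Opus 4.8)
The plan is to count, level by level, the arithmetic operations performed by Algorithm~\ref{alg. 4} (and the analogous bookkeeping in Algorithm~\ref{alg. 5}), and to show that the work attributable to each cluster is bounded by a constant depending only on $q$ and~$d$, so that summing over the $2^{J+1}-1 = \mathcal{O}(N)$ clusters of the balanced binary tree yields total cost $\mathcal{O}(N)$. This mirrors the accounting already used in the proof of Proposition~\ref{4.2.12}, the only difference being that there we counted the cost of \emph{building} the matrices $\boldsymbol{Q}_j^v$, whereas here we count the cost of \emph{applying} them to coefficient vectors.

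First I would recall from Lemma~\ref{4.2.9} that for every cluster $v$ we have $|\boldsymbol{\Phi}_{j+1}^v| \le 2m_q$ and $|\boldsymbol{\Phi}_j^v|, |\boldsymbol{\Sigma}_j^v| \le m_q$, where $m_q = \binom{q+d}{d}$ is a constant. Hence each matrix $\boldsymbol{Q}_j^v \in \mathbb{R}^{n\times n}$ with $n \le 2m_q$ has at most $(2m_q)^2$ entries, and forming the product $\boldsymbol{f}_{j+1}^v \boldsymbol{Q}_j^v$ in line~7 of Algorithm~\ref{alg. 4} costs $\mathcal{O}(m_q^2) = \mathcal{O}(1)$ floating point operations. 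The only other per-cluster work is: at a leaf's parent, reading off the $\le 2m_q$ values $[f_{i_1},\dots,f_{i_{|v|}}]$; at an internal cluster, the \texttt{append} of the two child scaling-coefficient vectors, which concatenates two arrays of length $\le m_q$; and the storing of the $\le m_q$ samplet coefficients. Each of these is $\mathcal{O}(1)$. Thus the cost charged to a single cluster is bounded by a constant $c(q,d)$.

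Next I would invoke that the balanced binary tree has $2^{J+1}-1$ clusters, and by the geometric-sum computation already carried out in the proof of Proposition~\ref{4.2.12} (using $|v|\sim 2^{J-j_v}$ and $|X|=N$, so $2^J \sim N$) this number is $\mathcal{O}(N)$. Summing the per-cluster bound $c(q,d)$ over all clusters gives total cost $\mathcal{O}(N)$ for the forward transform. For the inverse transform (Algorithm~\ref{alg. 5}) the argument is identical: each recursive call performs one multiplication by $(\boldsymbol{Q}_j^v)^T$, which is again $\mathcal{O}(m_q^2)=\mathcal{O}(1)$, plus the $\mathcal{O}(1)$ splitting of $(f,\boldsymbol{\Phi}_{j+1}^v)_\Omega$ into the parts belonging to the two sons; summing over $\mathcal{O}(N)$ clusters yields $\mathcal{O}(N)$ as well.

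The only subtlety — and the point I would be most careful about — is making sure no hidden non-constant factor creeps in: one must use that the number of scaling functions per cluster stays bounded by $2m_q$ (so the $\boldsymbol{Q}_j^v$ blocks do not grow), and that the total length of all coefficient vectors handled across one level is $\mathcal{O}(|T_j|) = \mathcal{O}(2^j)$ rather than $\mathcal{O}(N)$ per level times $\mathcal{O}(\log N)$ levels; it is precisely the geometric decay of $|T_j|$ down the tree, together with the $\mathcal{O}(1)$ work per cluster, that collapses the naive $\mathcal{O}(N\log N)$ estimate to $\mathcal{O}(N)$. I would state this explicitly and then conclude.
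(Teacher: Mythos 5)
Your proposal is correct and follows essentially the same route as the paper: constant work per cluster (bounded via $m_q$ from Lemma \ref{4.2.9}) summed over the $\mathcal{O}(N)$ clusters of the balanced binary tree, exactly as in the accounting for Proposition \ref{4.2.12}. The paper's own proof is just a one-sentence appeal to this argument, so your more explicit version adds detail but no new ideas.
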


\begin{proof}
Given that the balanced binary tree has $\mathcal{O}(N)$ clusters and we exert a constant effort per cluster, similar to Proposition \ref{4.2.12}, we deduce that the computational workload for calculating the respective coefficients is $\mathcal{O}(N)$.
\end{proof}


\subsection{Compression of the kernel matrix}
Let $k: \Omega \times \Omega \rightarrow \mathbb{R}$ denote a kernel on $\Omega \subseteq \mathbb{R}^d$ and for points $X = \{ \boldsymbol{x}_1, ..., \boldsymbol{x}_N \} \subset \Omega$ let $\boldsymbol{K}=[k(\boldsymbol{x}_i, \boldsymbol{x}_j)]_{1 \leq i,j \leq N}$ be the kernel matrix. To ease the computation of Gaussian Processes, we are interested in approximately solving linear equations of the form
$$(\boldsymbol{K} + \sigma^2 \boldsymbol{I}_N) \boldsymbol{c} = \boldsymbol{y} \quad \text{for } \sigma^2 \geq 0.$$
This is necessary, for example, to compute the posterior mean in equation (\ref{eq. 2.2.1}). The idea is to compress the matrix
$$\boldsymbol{K} + \sigma^2 \boldsymbol{I}_N \quad \text{for } \sigma^2 \geq 0.$$ 
We will do this, by using a change of basis for $\boldsymbol{K}$, such that most entries of the transformed matrix
$$\boldsymbol{T} \boldsymbol{K} \boldsymbol{T}^{-1} +  \sigma^2 \boldsymbol{I}_N, \quad \boldsymbol{T} \in \text{GL}_N(\mathbb{R})$$
will be very small and we can set them to zero without affecting accuracy. To perform the change of basis, we will switch from the samplet basis of $V = \text{span} \{ \delta_{\boldsymbol{x}_1}, ..., \delta_{\boldsymbol{x}_N} \}$ to the canonical basis. We start by establishing the theoretical foundation to confirm that this concept is meaningful.

As samplets exhibit vanishing moments up to a certain degree, it is desirable that higher-order derivatives of the kernel are very small. The combination of vanishing moments and small derivatives will imply that compressing the kernel matrix is effective. Not all kernels fulfill this requirement, but the significant class of Matérn kernels, which I explore in my thesis, are capable of doing so.

\begin{defi} \label{4.3.1}
Let $\Omega \subseteq \mathbb{R}^d$ be open. The kernel $k$ is considered $q+1$ asymptotically smooth\index{Asymptotically smooth,} if there exists constants $C(\Omega, k)>0$ and $r > 0$, such that
$$\Big| \dfrac{\partial^{  | \boldsymbol{\alpha}  +  \boldsymbol{\beta} | } }{ \partial \boldsymbol{x}^{\boldsymbol{\alpha}} \boldsymbol{y}^{\boldsymbol{\beta}} } k (\boldsymbol{x}_0, \boldsymbol{y}_0) \Big| \leq 
C \dfrac{| \boldsymbol{\alpha}  +  \boldsymbol{\beta} |!}{(r \cdot \| \boldsymbol{x}_0-\boldsymbol{y}_0 \|_2)^{ | \boldsymbol{\alpha}  +  \boldsymbol{\beta} |  }}
\quad \text{for all } \boldsymbol{x}_0, \boldsymbol{y}_0 \in \Omega \text{ with } \boldsymbol{x}_0 \neq \boldsymbol{y}_0$$
and $\boldsymbol{\alpha} = (\alpha_1,...,\alpha_d), \boldsymbol{\beta} = (\beta_1,...,\beta_d) \in \mathbb{N}^d_{\geq 0}$ with $| \boldsymbol{\alpha} |, | \boldsymbol{\beta} | \leq q+1$.
\end{defi}

For the verification of the following result,  I have based my approach on the proof of Theorem E.8 in \cite{Ha}.

\begin{proposition} \label{4.3.2}
Let $\Omega \subset \mathbb{R}^d$ be open and bounded, $\ell >0$ and $\nu = n+ 1/2$ with $n \in \mathbb{N}_{\geq 0}$. The Matérn kernels $k_{\nu, \ell}$ are infinitely asymptotically smooth.
\end{proposition}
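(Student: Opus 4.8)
The plan is to reduce the statement to the half-integer representation of the Matérn kernel given in Remark~\ref{2.3.5}, and then to bound mixed partial derivatives directly. Writing $r = \|\boldsymbol{x}_0 - \boldsymbol{y}_0\|_2$ and $\boldsymbol{\tau} = \boldsymbol{x}_0 - \boldsymbol{y}_0$, Remark~\ref{2.3.5} expresses $k_{\nu,\ell}(\boldsymbol{x}_0,\boldsymbol{y}_0)$ as a product of $\exp(-c_1 r/\ell)$ with a polynomial of degree $n$ in $r/\ell$, where $c_1 = \sqrt{2\nu}$. So $k_{\nu,\ell}$ is a function of the single scalar $r = \|\boldsymbol{\tau}\|_2$ of the form $g(r) = P(r)e^{-c_1 r/\ell}$ with $P$ a polynomial of degree $n$. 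The first step is therefore to understand the derivatives of the map $\boldsymbol{\tau} \mapsto g(\|\boldsymbol{\tau}\|_2)$: since $g$ is smooth away from $r = 0$ (indeed real-analytic there, being a product of analytic functions), and the Euclidean norm is analytic on $\mathbb{R}^d \setminus \{0\}$, the composition is real-analytic on the set $\boldsymbol{x}_0 \neq \boldsymbol{y}_0$. This is what makes "infinitely" asymptotically smooth plausible — we need the constant $C$ and radius $r$ in Definition~\ref{4.3.1} to be uniform over all multi-indices, not just $|\boldsymbol{\alpha}|, |\boldsymbol{\beta}| \le q+1$ for fixed $q$.

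The key estimate is a Cauchy-type bound on the Taylor coefficients. First I would observe that $\partial^{\boldsymbol{\alpha}}_{\boldsymbol{x}} \partial^{\boldsymbol{\beta}}_{\boldsymbol{y}} k(\boldsymbol{x}_0,\boldsymbol{y}_0) = (-1)^{|\boldsymbol{\beta}|}(\partial^{\boldsymbol{\alpha}+\boldsymbol{\beta}} h)(\boldsymbol{\tau})$ where $h(\boldsymbol{\tau}) = g(\|\boldsymbol{\tau}\|_2)$, so it suffices to bound $|(\partial^{\boldsymbol{\gamma}} h)(\boldsymbol{\tau}_0)|$ for all $\boldsymbol{\gamma} \in \mathbb{N}^d_{\ge 0}$ in terms of $|\boldsymbol{\gamma}|!$ and $\|\boldsymbol{\tau}_0\|_2^{-|\boldsymbol{\gamma}|}$. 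The cleanest route is to use that $h$ extends holomorphically to a complex polydisc around $\boldsymbol{\tau}_0$: the function $z \mapsto g(z)$ extends to an entire function of $z \in \mathbb{C}$ (polynomial times exponential), and $\|\cdot\|_2^2 = \sum \tau_i^2$ extends to a polynomial, so $h$ extends holomorphically to a neighborhood of $\boldsymbol{\tau}_0$ of polyradius comparable to $\|\boldsymbol{\tau}_0\|_2$ (one needs the complexified square root to stay away from its branch point, which forces the polyradius $\sim \|\boldsymbol{\tau}_0\|_2$). On such a polydisc of radius $\rho \|\boldsymbol{\tau}_0\|_2$ one gets a bound $\sup |h| \le M(\boldsymbol{\tau}_0)$, and $M$ can be taken uniform in $\boldsymbol{\tau}_0$ over the bounded set $\Omega - \Omega$ up to a factor polynomial in $\|\boldsymbol{\tau}_0\|_2/\ell$ (using $\Omega$ bounded); the multivariate Cauchy estimate then yields
\[
|(\partial^{\boldsymbol{\gamma}} h)(\boldsymbol{\tau}_0)| \le \frac{\boldsymbol{\gamma}!\, M}{(\rho\|\boldsymbol{\tau}_0\|_2)^{|\boldsymbol{\gamma}|}} \le \frac{|\boldsymbol{\gamma}|!\, M}{(\rho\|\boldsymbol{\tau}_0\|_2)^{|\boldsymbol{\gamma}|}},
\]
since $\boldsymbol{\gamma}! \le |\boldsymbol{\gamma}|!$. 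Setting $C = M$ (absorbing the $\Omega$-dependence and the scale-factor estimate) and $r_{\mathrm{Def}} = \rho$ gives exactly the bound in Definition~\ref{4.3.1} for every multi-index, hence infinite asymptotic smoothness.

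Alternatively, and perhaps more elementary for a thesis, one can avoid complex analysis: use the Faà di Bruno / multivariate chain rule to write $\partial^{\boldsymbol{\gamma}}(g \circ \|\cdot\|_2)$ as a sum over partitions of $\boldsymbol{\gamma}$ of products of derivatives of $g$ (evaluated at $r$) times derivatives of $\|\boldsymbol{\tau}\|_2$. One then needs (i) $|g^{(m)}(r)| \le C' m! (c/\ell)^m$-type bounds, which follow from Leibniz on $g = P e^{-c_1 r/\ell}$ together with $r \le \mathrm{diam}(\Omega)$ and $P$ of fixed degree $n$ so that all but finitely many derivatives of $P$ vanish; and (ii) bounds $|\partial^{\boldsymbol{\delta}}\|\cdot\|_2(\boldsymbol{\tau}_0)| \le C'' |\boldsymbol{\delta}|! \|\boldsymbol{\tau}_0\|_2^{1-|\boldsymbol{\delta}|}$, which is a standard estimate for the norm function (itself $1$-asymptotically smooth away from the origin). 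Combining via the combinatorial identity for the number of partitions gives the geometric-in-$|\boldsymbol{\gamma}|$ growth and the factor $|\boldsymbol{\gamma}|!$. The main obstacle in either approach is keeping the constants \emph{uniform across all orders of differentiation} — the naive Faà di Bruno expansion produces a sum whose number of terms grows factorially, and one must check that the partition combinatorics, together with the cancellation coming from $P$ having finitely many nonzero derivatives and from the $\|\boldsymbol{\tau}_0\|_2$-homogeneity of the norm derivatives, collapses this to a single geometric factor $r^{-|\boldsymbol{\gamma}|}$ with an order-independent $C$ and $r$. I would handle this by following the bookkeeping in the proof of Theorem~E.8 of \cite{Ha}, adapting it to the explicit polynomial-times-exponential form here.
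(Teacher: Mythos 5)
Your main route is essentially the paper's own proof: both exploit the half-integer closed form from Remark \ref{2.3.5}, extend the kernel holomorphically in the shifted argument up to the branch points of the complexified square root, which sit at distance comparable to $\|\boldsymbol{x}_0-\boldsymbol{y}_0\|_2$, and conclude via Cauchy estimates together with $\boldsymbol{\gamma}!\le|\boldsymbol{\gamma}|!$. The only real difference is organizational: the paper first uses stationarity, the symmetry $\partial_{x_i}k=\partial_{y_i}k$, and rotations to reduce everything to a pure derivative in a single direction, so that a one-variable Cauchy estimate on a disc of radius $\|\boldsymbol{x}\|_2/2$ suffices, whereas you apply the multivariate Cauchy estimate on a polydisc of polyradius $\sim\|\boldsymbol{\tau}_0\|_2$ directly (your secondary Fa\`a di Bruno sketch is not pursued in the paper and is unnecessary given the holomorphic argument).
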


\begin{proof}
Without loss of generality let $\Omega = B_{\rho}(\boldsymbol{0}) \subset \mathbb{R}^d$ be the open ball of radius $\rho >0$ and center $\boldsymbol{0} \in \mathbb{R}^d$. After Remark \ref{2.3.5} a Matérn kernel can be written as
$$k_{\nu,\ell}(s)  
= \text{exp} \, \left( - \dfrac{\sqrt{2\nu} s}{\ell} \right)
\dfrac{\Gamma(n+1)}{\Gamma(2n+1)} \sum_{i=0}^n \dfrac{(n+i)!}{i!(n-i)!}
\left( \dfrac{\sqrt{8\nu} s}{\ell} \right)^{n-i}$$
for $s = \| \boldsymbol{x}_0 - \boldsymbol{y}_0 \|_2$. The further proof will be done in two steps.\\


\textbf{1. Step:} Reduction of the statement to inequality (\ref{eq. 4.3.13}).\\[5mm]
As $k_{\nu,\ell}$ is a stationary kernel, we have
$$\dfrac{\partial^{  | \boldsymbol{\alpha}  +  \boldsymbol{\beta} | } }{ \partial \boldsymbol{x}^{\boldsymbol{\alpha}} \boldsymbol{y}^{\boldsymbol{\beta}} } k (\boldsymbol{x}_0, \boldsymbol{y}_0)
= \dfrac{\partial^{  | \boldsymbol{\alpha}  +  \boldsymbol{\beta} | } }{ \partial \boldsymbol{x}^{\boldsymbol{\alpha}} \boldsymbol{y}^{\boldsymbol{\beta}} } k (\boldsymbol{x}_0 - \boldsymbol{y}_0, \boldsymbol{0})$$
such that it is enough to show the Proposition for $\boldsymbol{x}_0 \in B_{2 \rho}(\boldsymbol{0})-\boldsymbol{0}$ and $\boldsymbol{y}_0 = \boldsymbol{0}$. Using the symmetry of $k_{\nu,\ell}$, we receive
\begin{align*}
\dfrac{\partial}{\partial x_i} k_{\nu,\ell} (\boldsymbol{x}_0, \boldsymbol{0})
= \dfrac{\partial}{\partial t} k_{\nu,\ell} (  \boldsymbol{x}_0 + t\boldsymbol{e}_i, \boldsymbol{0}  ) \Big|_{t=0}
&= \dfrac{\partial}{\partial t} k_{\nu,\ell} ( \boldsymbol{0}, \boldsymbol{x}_0 + t\boldsymbol{e}_i  ) \Big|_{t=0}\\
&= \dfrac{\partial}{\partial y_i} k_{\nu,\ell} (\boldsymbol{0}, \boldsymbol{x}_0) 
= \dfrac{\partial}{\partial y_i} k_{\nu,\ell} (\boldsymbol{x}_0, \boldsymbol{0})
\end{align*}
for every $i \in \{ 1, ..., d \}$. Let $\{\boldsymbol{e}_1, ..., \boldsymbol{e}_d \}$ be the canonical basis of $\mathbb{R}^d$, then for every $i \in \{ 1, ..., d \}$ exists an orthogonal matrix $\boldsymbol{Q}_i \in \mathbb{R}^{d \times d}$ with $\boldsymbol{Q}_i \boldsymbol{e}_i = \boldsymbol{e}_1$. This implies

\begin{align*}
\dfrac{\partial}{\partial y_i} k_{\nu,\ell} (&\boldsymbol{x}_0, \boldsymbol{0}) 
= \dfrac{\partial}{\partial t} k_{\nu,\ell} (  \boldsymbol{x}_0, t \boldsymbol{e}_i ) \Big|_{t=0}
= \dfrac{\partial}{\partial t} k_{\nu,\ell} ( \| \boldsymbol{x}_0 - t \boldsymbol{e}_i \|_2 ) \Big|_{t=0}\\
&= \dfrac{\partial}{\partial t} k_{\nu,\ell} ( \| \boldsymbol{Q}_i \boldsymbol{x}_0 - t \boldsymbol{Q}_i \boldsymbol{e}_i \|_2 ) \Big|_{t=0}
= \dfrac{\partial}{\partial t} k_{\nu,\ell} ( \| \boldsymbol{Q}_i \boldsymbol{x}_0 - t \boldsymbol{e}_1 \|_2 ) \Big|_{t=0}
= \dfrac{\partial}{\partial y_1} k_{\nu,\ell} (\boldsymbol{Q}_i \boldsymbol{x}_0, \boldsymbol{0}).
\end{align*}
It follows

\begin{align*}
\dfrac{\partial^{  | \boldsymbol{\alpha}  +  \boldsymbol{\beta} | } }{ \partial \boldsymbol{x}^{\boldsymbol{\alpha}} \boldsymbol{y}^{\boldsymbol{\beta}} } k (\boldsymbol{x}_0, \boldsymbol{0})
= \dfrac{\partial^{  | \boldsymbol{\alpha}  +  \boldsymbol{\beta} | } }{ \partial \boldsymbol{y}^{\boldsymbol{\alpha} + \boldsymbol{\beta}} } k (\boldsymbol{x}_0, \boldsymbol{0})
= \dfrac{\partial^{  | \boldsymbol{\alpha}  +  \boldsymbol{\beta} | } }{ \partial y_1^{| \boldsymbol{\alpha} + \boldsymbol{\beta} | } } k (\boldsymbol{Q}_2^{\alpha_2+\beta_2}...\boldsymbol{Q}_d^{\alpha_d+\beta_d} \boldsymbol{x}_0, \boldsymbol{0}).
\end{align*}
Therefore, it is sufficient to verify

\begin{equation}
\Big| \dfrac{\partial^{  m } }{ \partial y_1^m } k (\boldsymbol{x}, \boldsymbol{0}) \Big|
\leq C \dfrac{m!}{  r^m \| \boldsymbol{x} \|_2^m } \quad \text{for all }  \boldsymbol{x} \in B_{2 \rho}(\boldsymbol{0})-\boldsymbol{0} \label{eq. 4.3.13}
\end{equation}
and $m \in \mathbb{N}_0$.\\

\textbf{2. Step:} Inequality (\ref{eq. 4.3.13}) is valid.\\[5mm]
The Matérn kernels $k_{\nu, \ell}$ can be defined not only for $s \geq 0$, but also for complex numbers $z \in \mathbb{C}$. Since $k_{\nu, \ell}(z)$ is a product of an exponential function and a polynomial, both of which are holomorphic, $k_{\nu, \ell}$ is also holomorphic. For $R>0$, $\boldsymbol{x} \in B_{2 \rho}(\boldsymbol{0})-\boldsymbol{0}$ and $\delta^2 = x_2^2 + ...+ x_d^2 \geq 0$ define
$$f_{\boldsymbol{x}}: B_R (0) \subset \mathbb{C} \rightarrow \mathbb{C}, f_{\boldsymbol{x}}(z) = k_{\nu, \ell} \left(\sqrt{(x_1-z)^2 + \delta^2} \right),$$
where $\sqrt{z} = e^{\log(z)/2}$ is the square root of a complex number. The function $f_{\boldsymbol{x}}$ is well-defined and holomorphic only if
$$(x_1-z)^2 + \delta^2 \neq 0$$
for all $z \in B_R (0)$. The corresponding equation has the two roots
$$z_{1,2} = x_1 \pm i \delta$$
and therefore we choose $R(\boldsymbol{x}) = |z_{1,2} | /2 = \left(\sqrt{x_1^2 + \delta^2} \right)/2 >0$ to guarantee that $f_{\boldsymbol{x}}$ is holomorphic on $\overline{B_{R(\boldsymbol{x})} (0)}$. Notice that
$$R(\boldsymbol{x}) = \| \boldsymbol{x} \|_2 /2 < \rho.$$
Since $k_{\nu, \ell}$ is continuous, there exists some constant $C(\Omega, k_{\nu, \ell}) >0$ with


$$\sup_{ \xi \in \partial B_{R(\boldsymbol{x})} (0) } | f_{\boldsymbol{x}} (\xi) | 
\leq \sup_{ \boldsymbol{x} \in B_{2 \rho} (\boldsymbol{0}) , \xi \in \overline{B_{\rho} (0)} } \big| k_{\nu, \ell} \left(\sqrt{(x_1-\xi)^2 + \delta^2} \right) \big|
\leq \sup_{ z \in \overline{B_{3 \rho} (0)} } | k_{\nu, \ell}(z) | \leq C.$$
Setting $r = 2^{-1}$ and using the Cauchy estimate yields

$$\Big| \dfrac{\partial^{  m } }{ \partial y_1^m } k (\boldsymbol{x}, \boldsymbol{0}) \Big|
= \Big| \dfrac{\partial^{  m } }{ \partial z^m } f_{\boldsymbol{x}}(0) \Big|
= \big| f_{\boldsymbol{x}}^{(m)}(0) \big| \leq \dfrac{m!}{R(\boldsymbol{x})^m} \sup_{ \xi \in \partial B_{R(\boldsymbol{x})} (0) } | f_{\boldsymbol{x}} (\xi) |  
\leq C \dfrac{m!}{  r^m \| \boldsymbol{x} \|_2^m }. \qedhere$$
\end{proof}

Denote by $\delta_{\boldsymbol{x}_i} \otimes \delta_{\boldsymbol{x}_j}: C(\Omega \times \Omega) \rightarrow \mathbb{R}$\index{\textit{$\delta_{\boldsymbol{x}_i} \otimes \delta_{\boldsymbol{x}_j}$,}} the two point evaluation defined by
\begin{equation}
(\mathcal{K}, \delta_{\boldsymbol{x}_i} \otimes \delta_{\boldsymbol{x}_j})_{\Omega \times \Omega} := ( \delta_{\boldsymbol{x}_i} \otimes \delta_{\boldsymbol{x}_j} ) (\mathcal{K}) = \mathcal{K} (\boldsymbol{x}_i, \boldsymbol{x}_j). \label{eq. 4.3.14} \index{\textit{$(\mathcal{K}, \delta_{\boldsymbol{x}_i} \otimes \delta_{\boldsymbol{x}_j})_{\Omega \times \Omega}$,}}
\end{equation}
The notation is appropriately chosen since for every $\mathcal{K} \in C(\Omega \times \Omega)$, a bilinear map exists, which facilitates a tensor product defined by (\ref{eq. 4.3.14}):

\begin{figure}[H]
	\centering
	\includegraphics[width=0.4\textwidth]{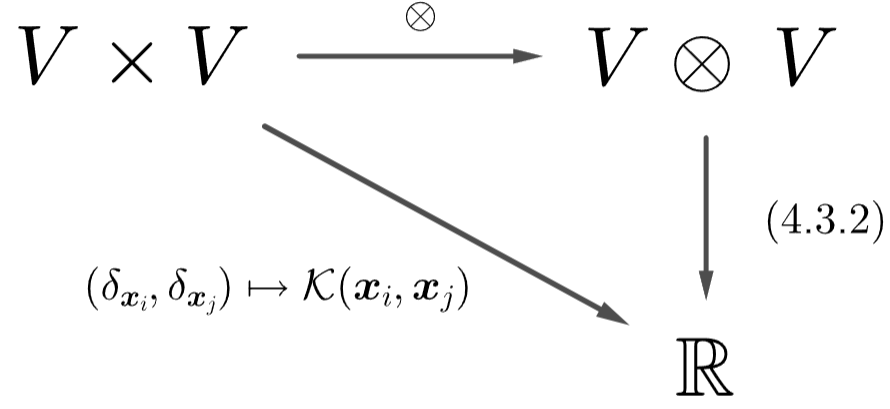}
\end{figure}

\noindent
For a kernel $k: \Omega \times \Omega \rightarrow \mathbb{R}$, this notation enables us to express the kernel matrix as
$$\boldsymbol{K} = [(k, \delta_{\boldsymbol{x}_i} \otimes \delta_{\boldsymbol{x}_j})_{\Omega \times \Omega}]_{1 \leq i,j \leq N}.$$
Let $\sigma_j$ and $\sigma_{j'}$ be samplets. Similar to the previous case where we focused on the coefficients $(f,\sigma_j)_\Omega$ of a function $f:\Omega \rightarrow \mathbb{R}$ for data compression, we are now interested in the coefficients $(k,\sigma_j \otimes \sigma_{j'})_{\Omega \times \Omega}$ to achieve matrix compression. It is desirable for these coefficients to be very small, allowing us to compress the kernel matrix in the samplet representation without affecting accuracy. To investigate these coefficients, we need the calculation rules provided in the following Lemma.

\begin{lemma} \label{4.3.3}
Let $\varphi_j, \varphi_{j'} \in V$ and $\mathcal{K} \in C(\Omega \times \Omega)$.
\begin{enumerate}[label=(\roman*),topsep=5pt]
\item If $\mathcal{K}(\boldsymbol{x},\boldsymbol{y}) = f(\boldsymbol{x})g(\boldsymbol{y})$ for $f,g:C(\Omega) \rightarrow \mathbb{R}$, then \label{4.3.3 (1)}
$$(\mathcal{K}, \varphi_{j} \otimes \varphi_{j'})_{\Omega \times \Omega} = (f, \varphi_j)_{\Omega} \cdot (g, \varphi_{j'})_{\Omega}.$$ \vspace{-0.85cm}
\item It is\label{4.3.3 (2)}
$|(\mathcal{K}, \varphi_{j} \otimes \varphi_{j'})_{\Omega \times \Omega}| 
\leq (| \mathcal{K}|, | \varphi_{j} | \otimes | \varphi_{j'} |)_{\Omega \times \Omega}.$
\item If $\mathcal{K} = \mathcal{K}_1 \mathcal{K}_2$ and $\varphi_j, \varphi_{j'}$ have support in clusters $v,v'$, then  \label{4.3.3 (3)}
$$(| \mathcal{K}|, | \varphi_{j} | \otimes | \varphi_{j'} |)_{\Omega \times \Omega} 
\leq (| \mathcal{K}_1|, | \varphi_{j} | \otimes | \varphi_{j'} |)_{\Omega \times \Omega}
\cdot \sup_{\boldsymbol{x} \in v,\boldsymbol{y} \in v'} |\mathcal{K}_2(\boldsymbol{x}, \boldsymbol{y})|.$$ \vspace{-0.85cm}
\item Let $\boldsymbol{T}$ be the change of basis matrix from samplet basis $\boldsymbol{\Sigma}_J = \{ \sigma_j \ | \ 1 \leq j \leq N \}$ to canonical basis. Then, \label{4.3.3 (4)} $\boldsymbol{T}\boldsymbol{K} \boldsymbol{T}^{-1} = [(k, \sigma_{j} \otimes \sigma_{j'})_{\Omega \times \Omega}]_{1 \leq j,j' \leq N}$.
\end{enumerate}
\end{lemma}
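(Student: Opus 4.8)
\textbf{Proof plan for Lemma \ref{4.3.3}.}

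The plan is to handle the four statements in order, since each is essentially a direct computation once one unwinds the definitions of the two-point evaluation (\ref{eq. 4.3.14}) and the inner product $(\cdot,\cdot)_{\Omega}$ acting on elements of $V$. Throughout I would expand a generic $\varphi_j = \sum_{n=1}^N a_n \delta_{\boldsymbol{x}_n}$ and $\varphi_{j'} = \sum_{m=1}^N b_m \delta_{\boldsymbol{x}_m}$ in the Dirac basis, so that by bilinearity of the tensor product and (\ref{eq. 4.3.14}) one has
$(\mathcal{K}, \varphi_j \otimes \varphi_{j'})_{\Omega \times \Omega} = \sum_{n,m} a_n b_m\, \mathcal{K}(\boldsymbol{x}_n, \boldsymbol{x}_m)$. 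This single identity is the workhorse for all four parts.

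For \ref{4.3.3 (1)}, substituting $\mathcal{K}(\boldsymbol{x},\boldsymbol{y}) = f(\boldsymbol{x}) g(\boldsymbol{y})$ gives $\sum_{n,m} a_n b_m f(\boldsymbol{x}_n) g(\boldsymbol{x}_m) = \big(\sum_n a_n f(\boldsymbol{x}_n)\big)\big(\sum_m b_m g(\boldsymbol{x}_m)\big) = (f,\varphi_j)_\Omega (g,\varphi_{j'})_\Omega$, using that $(f,\delta_{\boldsymbol{x}_n})_\Omega = f(\boldsymbol{x}_n)$. For \ref{4.3.3 (2)}, apply the triangle inequality to the finite sum: $|\sum_{n,m} a_n b_m \mathcal{K}(\boldsymbol{x}_n,\boldsymbol{x}_m)| \le \sum_{n,m} |a_n|\,|b_m|\,|\mathcal{K}(\boldsymbol{x}_n,\boldsymbol{x}_m)|$, and recognize the right-hand side as $(|\mathcal{K}|, |\varphi_j| \otimes |\varphi_{j'}|)_{\Omega \times \Omega}$, where $|\varphi_j| := \sum_n |a_n| \delta_{\boldsymbol{x}_n}$. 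For \ref{4.3.3 (3)}, since $\varphi_j, \varphi_{j'}$ are supported in $v, v'$ the sum ranges only over $\boldsymbol{x}_n \in v$, $\boldsymbol{x}_m \in v'$; bound each factor $|\mathcal{K}_2(\boldsymbol{x}_n, \boldsymbol{x}_m)| \le \sup_{\boldsymbol{x}\in v, \boldsymbol{y}\in v'} |\mathcal{K}_2(\boldsymbol{x},\boldsymbol{y})|$, pull this supremum out of the sum, and what remains is $(|\mathcal{K}_1|, |\varphi_j| \otimes |\varphi_{j'}|)_{\Omega \times \Omega}$.

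Part \ref{4.3.3 (4)} requires slightly more care with the change of basis. Writing $\sigma_j = \sum_n T_{jn}\delta_{\boldsymbol{x}_n}$ — so that the rows of $\boldsymbol{T}$ are exactly the coefficient vectors of the samplets in the Dirac basis, which is the correct reading of "change of basis matrix from $\boldsymbol{\Sigma}_J$ to canonical basis" — part \ref{4.3.3 (1)}-style bilinearity gives $(k, \sigma_j \otimes \sigma_{j'})_{\Omega \times \Omega} = \sum_{n,m} T_{jn} T_{j'm} k(\boldsymbol{x}_n, \boldsymbol{x}_m) = (\boldsymbol{T}\boldsymbol{K}\boldsymbol{T}^T)_{jj'}$. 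It then remains to argue $\boldsymbol{T}^T = \boldsymbol{T}^{-1}$, i.e. that $\boldsymbol{T}$ is orthogonal: this is exactly the statement that $\boldsymbol{\Sigma}_J$ is an orthonormal basis of $V$ with respect to $\langle\cdot,\cdot\rangle_V$ (Definition \ref{4.1.1}, Theorem \ref{4.2.13}), combined with the identity $\langle \sigma_j, \sigma_{j'}\rangle_V = \sum_n T_{jn} T_{j'n} = (\boldsymbol{T}\boldsymbol{T}^T)_{jj'}$ already observed in the proof of Lemma \ref{4.2.14}. Hence $\boldsymbol{T}\boldsymbol{T}^T = \boldsymbol{I}_N$ and $\boldsymbol{T}\boldsymbol{K}\boldsymbol{T}^{-1} = \boldsymbol{T}\boldsymbol{K}\boldsymbol{T}^T = [(k,\sigma_j\otimes\sigma_{j'})_{\Omega\times\Omega}]_{j,j'}$. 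The only genuine subtlety — and the one place I would be careful — is keeping the bookkeeping of "rows versus columns" of $\boldsymbol{T}$ consistent so that the orthogonality argument and the matrix-product identity line up; everything else is routine expansion.
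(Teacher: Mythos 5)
Your proposal is correct and follows essentially the same route as the paper: expand $\varphi_j,\varphi_{j'}$ in the Dirac basis, reduce everything to the identity $(\mathcal{K},\varphi_j\otimes\varphi_{j'})_{\Omega\times\Omega}=\sum_{n,m}a_nb_m\,\mathcal{K}(\boldsymbol{x}_n,\boldsymbol{x}_m)$, and for part (iv) identify the rows of $\boldsymbol{T}$ with the samplet coefficient vectors and use their orthonormality to get $\boldsymbol{T}^{-1}=\boldsymbol{T}^T$. No gaps.
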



\begin{proof}
Let $\varphi_j, \varphi_{j'} \in V$ have coefficient vectors $\omega_j=[\omega_{j,i}]_{i=1}^N, \omega_{j'}=[\omega_{j',i'}]_{i'=1}^N$ in $\mathbb{R}^N$.
\begin{enumerate}[label=(\roman*),topsep=5pt]
\item \makebox[\linewidth]{\(\begin{aligned}[t]
(\mathcal{K}, \varphi_{j} \otimes \varphi_{j'})_{\Omega \times \Omega} 
&= \sum_{i=1}^N \sum_{i'=1}^N \omega_{j,i} \omega_{j',i'} (fg, \delta_{\boldsymbol{x}_i} \otimes \delta_{\boldsymbol{x}_{i'}})_{\Omega \times \Omega} \\
&= \sum_{i=1}^N \sum_{i'=1}^N \omega_{j,i} \omega_{j',i'} f(\boldsymbol{x}_{i}) g(\boldsymbol{x}_{i'}) 
= \sum_{i=1}^N \omega_{j,i} f(\boldsymbol{x}_{i}) \sum_{i'=1}^N \omega_{j',i'} g(\boldsymbol{x}_{i'}) \\
&= \sum_{i=1}^N \omega_{j,i} (f, \delta_{\boldsymbol{x}_i})_{\Omega} \sum_{i'=1}^N \omega_{j',i'} (g, \delta_{\boldsymbol{x}_{i'}})_{\Omega}
= (f, \varphi_j)_{\Omega} \cdot (g, \varphi_{j'})_{\Omega}.
\end{aligned}\)}

\item Notice, that $| \varphi_{j} | = \sum_{i=1}^N | \omega_{j,i}| \delta_{\boldsymbol{x}_i}.$
Therefore, 
\begin{align*}
|(\mathcal{K}, \varphi_{j} \otimes \varphi_{j'})_{\Omega \times \Omega}| 
&= \big| \sum_{i=1}^N \sum_{i'=1}^N \omega_{j,i} \omega_{j',i'} \mathcal{K}(\boldsymbol{x}_i, \boldsymbol{x}_{i'}) \big| \\
&\leq \sum_{i=1}^N \sum_{i'=1}^N | \omega_{j,i}| | \omega_{j',i'} | | \mathcal{K}(\boldsymbol{x}_i, \boldsymbol{x}_{i'})| \\
&= \sum_{i=1}^N \sum_{i'=1}^N | \omega_{j,i}| | \omega_{j',i'} | (| \mathcal{K} |, \delta_{\boldsymbol{x}_i} \otimes \delta_{\boldsymbol{x}_{i'}})_{\Omega \times \Omega} \hspace{3.5cm} \\
&= (| \mathcal{K}|, | \varphi_{j} | \otimes | \varphi_{j'} |)_{\Omega \times \Omega}.
\end{align*}

\item \makebox[\linewidth]{\(\begin{aligned}[t]
(| \mathcal{K}|, | \varphi_{j} | \otimes | \varphi_{j'} |)_{\Omega \times \Omega} 
&= \sum_{i=1}^N \sum_{i'=1}^N | \omega_{j,i}| | \omega_{j',i'} | | \mathcal{K}_1(\boldsymbol{x}_i, \boldsymbol{x}_{i'})| | \mathcal{K}_2(\boldsymbol{x}_i, \boldsymbol{x}_{i'})| \\
&\leq \sup_{\boldsymbol{x} \in v,\boldsymbol{y} \in v'} |\mathcal{K}_2(\boldsymbol{x}, \boldsymbol{y})| 
\cdot \sum_{i=1}^N \sum_{i'=1}^N | \omega_{j,i}| | \omega_{j',i'} | | \mathcal{K}_1(\boldsymbol{x}_i, \boldsymbol{x}_{i'})| \hspace{1cm} \\
&= \sup_{\boldsymbol{x} \in v,\boldsymbol{y} \in v'} |\mathcal{K}_2(\boldsymbol{x}, \boldsymbol{y})| 
\cdot (| \mathcal{K}_1|, | \varphi_{j} | \otimes | \varphi_{j'} |)_{\Omega \times \Omega}.
\end{aligned}\)}

\item Let $\sigma_j, \sigma_{j'} \in \boldsymbol{\Sigma}_J$ have coefficient vectors $\omega_j=[\omega_{j,i}]_{i=1}^N, \omega_{j'}=[\omega_{j',i'}]_{i'=1}^N$ in $\mathbb{R}^N$. 
\begin{align*}
\begin{bmatrix} \omega_{j,1} & \cdots & \omega_{j,N} \end{bmatrix} 
\boldsymbol{K}
\begin{bmatrix} \omega_{j',1} \\ \vdots \\ \omega_{j',N} \end{bmatrix} 
&= \sum_{i=1}^N \omega_{j,i} \sum_{i'=1}^N (k, \delta_{\boldsymbol{x}_i} \otimes \delta_{\boldsymbol{x}_{i'}})_{\Omega \times \Omega}  \omega_{j',i'} \\
&= \sum_{i=1}^N \sum_{i'=1}^N \omega_{j,i} \omega_{j',i'} (k, \delta_{\boldsymbol{x}_i} \otimes \delta_{\boldsymbol{x}_{i'}})_{\Omega \times \Omega} \\
&= (k, \sigma_{j} \otimes \sigma_{j'})_{\Omega \times \Omega} 
\end{align*}

The Matrix $\boldsymbol{T}$ is the change of basis matrix from $\boldsymbol{\Sigma}_J$ to canonical basis, i. e.
$$\boldsymbol{T} = \begin{bmatrix}
\omega_{1,1} & \cdots & \omega_{1,N}\\
\vdots & \ddots & \vdots \\
\omega_{N,1} & \cdots & \omega_{N,N}
\end{bmatrix}.$$
As the coefficient vectors are orthonormal, we have $\boldsymbol{T}^{-1} = \boldsymbol{T}^T$ and therefore $\boldsymbol{T} \boldsymbol{K} \boldsymbol{T}^{-1} = [(k, \sigma_{j} \otimes \sigma_{j'})_{\Omega \times \Omega}]_{1 \leq j,j' \leq N}$. \qedhere
\end{enumerate}
\end{proof}


The following proposition extends statement \ref{4.2.13 (5)} in Theorem \ref{4.2.13} to the case of a kernel with two input variables. It facilitates the compression of specific kernel matrices, with its proof provided by \cite{HM1}.

\begin{proposition} \label{4.3.4}
Let $\Omega \subseteq \mathbb{R}^d$ be open and convex. Consider two samplets $\sigma_{j}, \sigma_{j'}$ with vanishing moments up to degree $q$, support in clusters $v, v'$ and coefficient vectors $\omega_j, \omega_{j'}$. If $k$ is $q+1$ asymptotically smooth and $\text{dist}(B_v,B_{v'})>0$, then
$$| (k, \sigma_{j} \otimes \sigma_{j'} )_{\Omega} | \leq 
C \dfrac{d^{2(q+1)} \text{diam}(v)^{q+1} \text{diam}(v')^{q+1} }{ (r \cdot \text{dist}(B_v,B_{v'}))^{2(q+1)} } \| \omega_j \|_1 \| \omega_{j'} \|_1,$$
where $C(\Omega, k),r>0$ are the constants from the asymptotically smoothness property.
\end{proposition}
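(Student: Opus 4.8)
<br>

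The plan is to combine the vanishing-moment property of the samplets with a Taylor expansion of the kernel, exactly in the spirit of statement \ref{4.2.13 (5)} of Theorem \ref{4.2.13}, but now carried out in both variables. First I would use the calculation rules of Lemma \ref{4.3.3}: by \ref{4.3.3 (2)} it suffices to bound $(|k|, |\sigma_j| \otimes |\sigma_{j'}|)_{\Omega \times \Omega}$, but actually I want to keep the cancellation, so instead I would work directly with $(k, \sigma_j \otimes \sigma_{j'})_{\Omega \times \Omega} = \sum_{i,i'} \omega_{j,i}\omega_{j',i'} k(\boldsymbol{x}_i, \boldsymbol{x}_{i'})$, where the sums run over the points in $v$ and $v'$ respectively (since $\sigma_j, \sigma_{j'}$ are supported there).

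The key step is to fix centers $\boldsymbol{x}_v \in B_v$ and $\boldsymbol{x}_{v'} \in B_{v'}$ and Taylor-expand $k(\boldsymbol{x}, \boldsymbol{y})$ around $(\boldsymbol{x}_v, \boldsymbol{x}_{v'})$ up to total degree $q$ in each variable separately, with an integral remainder of order $q+1$ in each variable. The polynomial part of the expansion is, for each fixed value of the other variable, a polynomial of degree $\le q$ in $\boldsymbol{x}$ (resp.\ $\boldsymbol{y}$); since $\sigma_j$ and $\sigma_{j'}$ both annihilate $\mathcal{P}_q(\Omega)$, all the polynomial terms — including the "mixed low/low" terms — drop out when we apply $\sigma_j \otimes \sigma_{j'}$. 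What survives is only the doubly-high-order remainder, which is controlled by $\partial^{\boldsymbol{\alpha}+\boldsymbol{\beta}} k$ with $|\boldsymbol{\alpha}| = |\boldsymbol{\beta}| = q+1$ evaluated at interior points of $B_v \times B_{v'}$; there $\|\boldsymbol{x}-\boldsymbol{y}\|_2 \ge \text{dist}(B_v, B_{v'}) > 0$, so Definition \ref{4.3.1} applies and bounds each such derivative by $C\,(2(q+1))!/(r\,\text{dist}(B_v,B_{v'}))^{2(q+1)}$. Combining this with $|x_i^{(\ell)} - x_v^{(\ell)}| \le \text{diam}(v)$ for each coordinate (giving a factor $\text{diam}(v)^{q+1}$ up to the number $m_{q+1}\le d^{q+1}$ of multi-indices, and similarly for $v'$), and pulling out $\sum_i |\omega_{j,i}| = \|\omega_j\|_1$, $\sum_{i'} |\omega_{j',i'}| = \|\omega_{j'}\|_1$, yields a bound of the claimed shape $C\, d^{2(q+1)} \text{diam}(v)^{q+1}\text{diam}(v')^{q+1} (r\,\text{dist}(B_v,B_{v'}))^{-2(q+1)} \|\omega_j\|_1\|\omega_{j'}\|_1$, after absorbing the factorials $(2(q+1))!/((q+1)!)^2$ and the Taylor constants into $C$ (possibly redefining $r$ up to an absolute constant, as in the statement's convention where $C, r$ are "the constants from the asymptotically smoothness property").

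The bookkeeping obstacle — and the part I expect to be most delicate — is making the two-variable Taylor argument rigorous: one has to expand in $\boldsymbol{x}$ first (treating $\boldsymbol{y}$ as a parameter, using that $k(\cdot, \boldsymbol{y})$ is smooth enough away from the diagonal because $\text{dist}(B_v, B_{v'})>0$), apply $\sigma_j$ to kill the degree-$\le q$ part, then observe that the $\boldsymbol{x}$-remainder is still, as a function of $\boldsymbol{y}$, smooth on $B_{v'}$, expand it in $\boldsymbol{y}$, and apply $\sigma_{j'}$ to kill its degree-$\le q$ part; only then does the genuinely mixed remainder appear, and one must verify the remainder integrals stay inside $B_v \times B_{v'}$ so that the asymptotic-smoothness estimate is legitimately applicable. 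Since this is essentially the computation behind Lemma 3.10 / the compression estimates in \cite{HM1}, I would cite the structure there and present the two-step expansion carefully while deferring the most routine constant-chasing. An alternative, slightly cleaner route is to bound $|(k,\sigma_j\otimes\sigma_{j'})_\Omega|$ by first using the one-variable result (the analogue of \ref{4.2.13 (5)}) applied to the function $\boldsymbol{x} \mapsto (k(\boldsymbol{x}, \cdot), \sigma_{j'})_\Omega$, whose $C^{q+1}$-norm on $v$ is itself controlled by the one-variable estimate applied in the $\boldsymbol{y}$ variable; iterating the single-variable Theorem \ref{4.2.13}\,\ref{4.2.13 (5)} twice reproduces the same bound with less new machinery, and I would likely present it this way.
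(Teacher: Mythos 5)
Your proposal follows essentially the same route as the paper's proof: a two-variable Taylor expansion about centers of $B_v$ and $B_{v'}$, the vanishing moments of $\sigma_j$ and $\sigma_{j'}$ (via Lemma \ref{4.3.3}) annihilating the polynomial parts so that only the doubly-high-order mixed remainder survives, the asymptotic smoothness bound applied at intermediate points which stay in $B_v\times B_{v'}$ by convexity of the bounding boxes, and the $\ell^1$-norms of the coefficient vectors absorbing the sums. The only cosmetic difference is in the bookkeeping of the $d^{2(q+1)}$ factor, which the paper obtains from the multinomial identity $\sum_{|\boldsymbol{\alpha}|=q+1}(q+1)!/\boldsymbol{\alpha}! = d^{q+1}$ together with $\binom{2(q+1)}{q+1}\le 4^{q+1}$ rather than from a count of multi-indices, but this is exactly the constant-chasing you deferred.
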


\begin{proof}
The Taylor expansion of $k$ in $\boldsymbol{x}_0 \in \Omega$ is
$$k(\boldsymbol{x}, \boldsymbol{y}) 
= \sum_{ | \boldsymbol{\alpha} | \leq q } \dfrac{(\boldsymbol{x}-\boldsymbol{x}_0)^{\boldsymbol{\alpha}}}{ \boldsymbol{\alpha} ! } \dfrac{\partial^{ | \boldsymbol{\alpha} |} }{ \partial \boldsymbol{x}^{\boldsymbol{\alpha}} } k(\boldsymbol{x}_0, \boldsymbol{y}) 
+  
\overbrace{\sum_{ | \boldsymbol{\alpha} | = q + 1 } 
\dfrac{(\boldsymbol{x}-\boldsymbol{x}_0)^{\boldsymbol{\alpha}}}{ \boldsymbol{\alpha} ! }
\dfrac{\partial^{ | \boldsymbol{\alpha} |} }{ \partial \boldsymbol{x}^{\boldsymbol{\alpha}} } k(\boldsymbol{\xi}_{\boldsymbol{x}_0} (\boldsymbol{x}), \boldsymbol{y})}^{R_{\boldsymbol{x}_0} (\boldsymbol{x}, \boldsymbol{y})},$$
where $\boldsymbol{ \xi}_{\boldsymbol{x}_0} (\boldsymbol{x}) = \boldsymbol{x}_0 + s(\boldsymbol{x} - \boldsymbol{x}_0)$ for some $s \in [0,1]$. The taylor expansion of $k$ in $\boldsymbol{y}_0 \in \Omega$ allows writing
$$R_{\boldsymbol{x}_0} (\boldsymbol{x}, \boldsymbol{y}) = 
\sum_{ | \boldsymbol{\alpha} | = q + 1 }  \sum_{ | \boldsymbol{\beta} | \leq q }
\dfrac{(\boldsymbol{y}-\boldsymbol{y}_0)^{\boldsymbol{\beta}}}{ \boldsymbol{\beta} ! }
\dfrac{(\boldsymbol{x}-\boldsymbol{x}_0)^{\boldsymbol{\alpha}}}{ \boldsymbol{\alpha} ! }
 \dfrac{\partial^{ | \boldsymbol{\alpha} + \boldsymbol{\beta} |} }{ \partial \boldsymbol{x}^{\boldsymbol{\alpha}} \boldsymbol{y}^{\boldsymbol{\beta}} } k(\boldsymbol{\xi}_{\boldsymbol{x}_0}(\boldsymbol{x}), \boldsymbol{y}_0) 
+ R_{\boldsymbol{x}_0, \boldsymbol{y}_0} (\boldsymbol{x}, \boldsymbol{y})$$
with
$$R_{\boldsymbol{x}_0, \boldsymbol{y}_0} (\boldsymbol{x}, \boldsymbol{y}) = 
\sum_{ | \boldsymbol{\alpha} |, \boldsymbol{\beta} | = q + 1 }
\dfrac{(\boldsymbol{y}-\boldsymbol{y}_0)^{\boldsymbol{\beta}}}{ \boldsymbol{\beta} ! }
\dfrac{(\boldsymbol{x}-\boldsymbol{x}_0)^{\boldsymbol{\alpha}}}{ \boldsymbol{\alpha} ! }
\dfrac{\partial^{ 2(q+1)} }{ \partial \boldsymbol{x}^{\boldsymbol{\alpha}} \boldsymbol{y}^{\boldsymbol{\beta}} } k(\boldsymbol{\xi}_{\boldsymbol{x}_0}(\boldsymbol{x}), \boldsymbol{\xi}_{\boldsymbol{y}_0}(\boldsymbol{y})),$$
where $\boldsymbol{ \xi}_{\boldsymbol{y}_0} (\boldsymbol{y}) = \boldsymbol{y}_0 + t(\boldsymbol{y} - \boldsymbol{y}_0)$ for some $t \in [0,1]$. We obtain a decomposition
$$k(\boldsymbol{x}, \boldsymbol{y}) = \sum_{ | \boldsymbol{\alpha} | \leq q } f_{\boldsymbol{\alpha}}(\boldsymbol{x}) b_{\boldsymbol{\alpha}} (\boldsymbol{y}) 
+ \sum_{ | \boldsymbol{\beta} | \leq q } g_{\boldsymbol{\beta}}(\boldsymbol{y}) c_{\boldsymbol{\beta}} (\boldsymbol{x}) 
+ R_{\boldsymbol{x}_0, \boldsymbol{y}_0} (\boldsymbol{x}, \boldsymbol{y}),$$
where 
$$f_{\boldsymbol{\alpha}}(\boldsymbol{x}) = \dfrac{(\boldsymbol{x}-\boldsymbol{x}_0)^{\boldsymbol{\alpha}}}{ \boldsymbol{\alpha} ! }, \qquad
g_{\boldsymbol{\beta}}(\boldsymbol{y}) = \dfrac{(\boldsymbol{y}-\boldsymbol{y}_0)^{\boldsymbol{\beta}}}{ \boldsymbol{\beta} ! }$$
are polynomials of maximum degree $q$. With declaration \ref{4.3.3 (1)} in Lemma \ref{4.3.3} and the property that the samplets possess vanishing moments up to degree $q$, we receive
$$(k, \sigma_{j} \otimes \sigma_{j'} )_{\Omega \times \Omega} = (R_{\boldsymbol{x}_0, \boldsymbol{y}_0}, \sigma_{j} \otimes \sigma_{j'} )_{\Omega \times \Omega}.$$
By using the asymptotically smoothness of the kernel and \ref{4.3.3 (2)} in Lemma \ref{4.3.3}, we get
\begin{align*}
| (R_{\boldsymbol{x}_0, \boldsymbol{y}_0}, \sigma_{j} &\otimes \sigma_{j'} )_{\Omega \times \Omega} | \\ \leq 
&\dfrac{C}{r^{2(q+1)}}
\sum_{ | \boldsymbol{\alpha} |, \boldsymbol{\beta} | = q + 1 } 
\dfrac{|\boldsymbol{\alpha}+ \boldsymbol{\beta}|!}{\boldsymbol{\alpha}! \boldsymbol{\beta}!}
\left( \dfrac{| (\boldsymbol{y}-\boldsymbol{y}_0)^{\boldsymbol{\beta}}(\boldsymbol{x}-\boldsymbol{x}_0)^{\boldsymbol{\alpha}} |}
{ \| \boldsymbol{\xi}_{\boldsymbol{x}_0}(\boldsymbol{x}) - \boldsymbol{\xi}_{\boldsymbol{y}_0}(\boldsymbol{y}) \|_2^{2(q+1)}},| \sigma_{j} | \otimes | \sigma_{j'} | \right)_{\Omega \times \Omega} .
\end{align*}
In the following we examine particular terms to receive the final upper bound. Let $\boldsymbol{x},\boldsymbol{x}_0 \in B_v$ and $\boldsymbol{y},\boldsymbol{y}_0 \in B_{v'}$ then are $\boldsymbol{\xi}_{\boldsymbol{x}_0}(\boldsymbol{x}) \in B_v$ and $\boldsymbol{\xi}_{\boldsymbol{y}_0}(\boldsymbol{y}) \in B_{v'}$. Therefore,
\begin{align*}
\sup_{\boldsymbol{x} \in v,\boldsymbol{y} \in v'} \, \| \boldsymbol{\xi}_{\boldsymbol{x}_0}(\boldsymbol{x}) - \boldsymbol{\xi}_{\boldsymbol{y}_0} (\boldsymbol{y}) \|_2^{ - 2(q+1)  } \leq \text{dist}(B_v,B_{v'})^{-2(q+1)}.
\end{align*}
Next is
\begin{align*}
\sum_{ | \boldsymbol{\alpha} |, \boldsymbol{\beta} | = q + 1 } \dfrac{|\boldsymbol{\alpha}+ \boldsymbol{\beta}|!}{\boldsymbol{\alpha}! \boldsymbol{\beta}!}
&= \sum_{ | \boldsymbol{\alpha} |, \boldsymbol{\beta} | = q + 1 } \binom{| \boldsymbol{\alpha}+ \boldsymbol{\beta}|}{| \boldsymbol{\beta} |} 
\dfrac{|\boldsymbol{\alpha}|! | \boldsymbol{\beta}|!}{\boldsymbol{\alpha}! \boldsymbol{\beta}!}\\
&= \binom{2(q+1)}{q+1} \sum_{ | \boldsymbol{\alpha} |, \boldsymbol{\beta} | = q + 1 } 
\dfrac{(q+1)! (q+1)!}{\boldsymbol{\alpha}! \boldsymbol{\beta}!} \\
&\leq 4^{q+1} \sum_{ | \boldsymbol{\alpha} |, \boldsymbol{\beta} | = q + 1 } 
\dfrac{(q+1)! (q+1)!}{\boldsymbol{\alpha}! \boldsymbol{\beta}!} = 4^{q+1} d^{2(q+1)},
\end{align*}
where we used
$$\sum_{n=0}^{2(q+1)} \binom{2(q+1)}{n} = (1+1)^{2(q+1)} = 4^{q+1}$$
for the first inequality and the formula

$$d^{q+1}=\left(\sum_{i=1}^d 1 \right)^{q+1} = \sum_{| \boldsymbol{\alpha} | = q+1} \dfrac{(q+1)!}{ \boldsymbol{\alpha} !},$$
provided by the multinomial theorem, for the last equality. Furthermore
\begin{align*}
(|(\boldsymbol{x}-\boldsymbol{x}_0 )^{\boldsymbol{\alpha}}|, | \sigma_j | )_{\Omega} 
&= \sum_{i=1}^N | \omega_{j,i} | |(\boldsymbol{x}_i-\boldsymbol{x}_0)^{\boldsymbol{\alpha}}| \\
&\leq \sum_{i=1}^N | \omega_{j,i} | \left(\dfrac{\text{diam}(v)}{2} \right)^{ | \boldsymbol{\alpha} |} 
= \dfrac{\text{diam}(v)^{ | \boldsymbol{\alpha} | }}{2^{ | \boldsymbol{\alpha} | }} \| \omega_{j} \|_1,
\end{align*}
where the inequality follows by choosing $\boldsymbol{x}_0$ as $B_v$'s midpoint, i. e.
$$\boldsymbol{x}_0 = \left(\max_{\boldsymbol{x} \in v} \, x_{1} /2 + \min_{\boldsymbol{y} \in v} \, y_{1} /2, ..., \max_{\boldsymbol{x} \in v} \, x_{d} /2 + \min_{\boldsymbol{y} \in v} \, y_{d} /2 \right)^T.$$
By combining all these inequalities and using \ref{4.3.3 (1)}, \ref{4.3.3 (3)} of Lemma \ref{4.3.3}, we receive
\begin{align*}
|(k, \sigma_{j} &\otimes \sigma_{j'} )_{\Omega \times \Omega}| \\
&\leq \dfrac{C}{r^{2(q+1)}} 4^{q+1}d^{2(q+1)} \text{dist}(B_v,B_{v'})^{-2(q+1)} \dfrac{\text{diam}(v)^{ q+1 } \text{diam}(v')^{ q+1 }}{2^{ q+1 } 2^{ q+1 }} \| \omega_{j} \|_1 \| \omega_{j'} \|_1 \\
&= C \dfrac{d^{2(q+1)} \text{diam}(v)^{q+1} \text{diam}(v')^{q+1} }{ (r \cdot \text{dist}(B_v,B_{v'}))^{2(q+1)} } \| \omega_{j} \|_1 \| \omega_{j'} \|_1 \qedhere
\end{align*}
\end{proof}

Let $\boldsymbol{T}$ be the change of basis matrix from samplet basis $\boldsymbol{\Sigma}_J = \{ \sigma_1, ..., \sigma_N \}$ to canonical basis. Using statement \ref{4.3.3 (4)} of Lemma \ref{4.3.3}, we define the kernel matrix in samplet representation as
$$\boldsymbol{K}^{\boldsymbol{\Sigma}} 
:= \boldsymbol{T} \boldsymbol{K} \boldsymbol{T}^{-1}
= [(k, \sigma_{j} \otimes \sigma_{j'})_{\Omega \times \Omega}]_{1 \leq j,j' \leq N}. \index{\textit{$\boldsymbol{K}^{\boldsymbol{\Sigma}}$,}}
$$
We compress the matrix $\boldsymbol{K}^{\boldsymbol{\Sigma}}$ by setting its small entries to zero through thresholding. Directly calculating all entries only to zero out most of them would be computationally expensive, so we seek for a simple criterion to determine which entries to skip. For $\eta>0$\index{\textit{$\eta$,}} and two clusters $v$ and $v'$, we consider the admissibility condition
\begin{equation}
\text{dist}(B_v,B_{v'}) \geq \eta \max \{ \text{diam}(B_v), \text{diam}(B_{v'}) \}. \label{eq. 4.3.15}
\end{equation}
This inequality is straightforward to check since for $B_v = \bigtimes_{i=1}^d [a_{v,i}, b_{v,i}]$ is
$$\text{diam}(B_v) = \left( \sum_{i=1}^d (b_{v,i} - a_{v,i})^2 \right)^{1/2}$$
and
$$\text{dist}(B_v, B_{v'}) = \left( \sum_{i=1}^d \max \{ 0, a_{v,i} - b_{v',i} \}^2 + \max \{ 0, a_{v',i} - b_{v,i} \}^2 \right)^{1/2}.$$
Storing the bounding boxes for each cluster in Algorithm \ref{alg. 2} ensures that computing these values for a single cluster has a constant cost. However, with $\mathcal{O}(N)$ clusters, the overall computation cost would be $\mathcal{O}(N^2)$. Later, we will demonstrate that the actual computational cost is lower, specifically $\mathcal{O}(N \log N)$.\\

The following Theorem, with its proof provided by \cite{HM1}, demonstrates that the compression strategy is decent, with the parameter $\eta$ determining the extent of compression. For the remainder of this chapter, we adopt the same settings as in Proposition \ref{4.3.4}. Additionally, we require that the cluster tree is sufficiently deep by setting the threshold to less than $m_q +1$, allowing us to apply Lemma \ref{4.2.9} and ensuring that no cluster provides more than $m_q$ samplets.

\begin{theorem} \label{4.3.5}
Let $(X_N)_{N\in \mathbb{N}}$ be a sequence of points with $q_{X_N} \geq C_1 N^{-1/d}$ for some constant $C_1(\Omega)>0$ and set all coefficients of
$$\boldsymbol{K}^{\boldsymbol{\Sigma}} = [(k, \sigma_{j} \otimes \sigma_{j'})_{\Omega \times \Omega}]_{1 \leq j,j' \leq N}$$
to zero that satisfy the admissibility condition (\ref{eq. 4.3.15}). Then the compressed matrix $\boldsymbol{K}^{\boldsymbol{\Sigma}}_{\eta}$\index{\textit{$\boldsymbol{K}^{\boldsymbol{\Sigma}}_{\eta}$,}} satisfies
$$\| \boldsymbol{K}^{\boldsymbol{\Sigma}} - \boldsymbol{K}^{\boldsymbol{\Sigma}}_{\eta} \|_F \leq C \eta^{-2(q+1)} N \sqrt{\log N}$$
for some constant $C( \Omega, k, q) >0$, where $ \| \cdot \|_F$\index{\textit{$\lVert \, \cdot \, \rVert_F$,}} denotes the Frobenius norm.
\end{theorem}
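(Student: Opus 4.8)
The plan is to estimate the Frobenius norm $\| \boldsymbol{K}^{\boldsymbol{\Sigma}} - \boldsymbol{K}^{\boldsymbol{\Sigma}}_{\eta} \|_F$ by summing the squares of exactly those entries $(k, \sigma_j \otimes \sigma_{j'})_{\Omega \times \Omega}$ that we zero out, i.e.\ the admissible pairs of clusters. First I would organize the sum by the levels $j, j'$ of the two clusters supporting $\sigma_j$ and $\sigma_{j'}$, writing
$$\| \boldsymbol{K}^{\boldsymbol{\Sigma}} - \boldsymbol{K}^{\boldsymbol{\Sigma}}_{\eta} \|_F^2 = \sum_{j'' = 0}^J \sum_{j''' = 0}^J \ \sum_{\substack{v \in T_{j''},\, v' \in T_{j'''} \\ (v,v') \text{ admissible}}} \ \sum_{\substack{\sigma \text{ on } v \\ \sigma' \text{ on } v'}} |(k, \sigma \otimes \sigma')_{\Omega \times \Omega}|^2,$$
and then bound each term using Proposition \ref{4.3.4}. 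For an admissible pair, the admissibility condition (\ref{eq. 4.3.15}) gives $\text{dist}(B_v, B_{v'}) \geq \eta \max\{\text{diam}(B_v), \text{diam}(B_{v'})\}$, so the bound from Proposition \ref{4.3.4} becomes
$$|(k, \sigma \otimes \sigma')_{\Omega \times \Omega}| \lesssim C \eta^{-2(q+1)} \frac{\text{diam}(v)^{q+1} \text{diam}(v')^{q+1}}{\max\{\text{diam}(B_v), \text{diam}(B_{v'})\}^{2(q+1)}} \|\omega\|_1 \|\omega'\|_1,$$
and using $\text{diam}(v) \sim \text{diam}(B_v)$ (Lemma \ref{4.2.5}) together with $\|\omega\|_1 \leq \sqrt{|v|}$, $\|\omega'\|_1 \leq \sqrt{|v'|}$ (statement \ref{4.2.13 (4)} of Theorem \ref{4.2.13}), each entry is bounded by a constant times $\eta^{-2(q+1)} \min\{\text{diam}(v)/\text{diam}(v'), \text{diam}(v')/\text{diam}(v)\}^{q+1} \sqrt{|v||v'|}$.

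Next I would carry out the counting. By Lemma \ref{4.2.9} each cluster carries at most $m_q$ samplets (here I use the standing assumption $\text{threshold} < m_q + 1$), and $|v| \sim 2^{J - j_v}$ by Definition \ref{4.2.2}, so $\sqrt{|v||v'|} \sim 2^{J - (j_v + j_{v'})/2}$. The number of clusters on level $j$ is $\sim 2^j$, but for a \emph{fixed} cluster $v$ on level $j''$ the number of clusters $v'$ on level $j'''$ that can be non-admissible (i.e.\ whose entries we keep) is $\mathcal{O}(1)$ when $j''' \geq j''$, by a standard packing argument using $q_{X_N} \geq C_1 N^{-1/d}$ and $\text{diam}(v) \lesssim 2^{-j''/d}$: the non-admissible clusters lie within a ball of radius $\sim \eta\,\text{diam}(B_v)$ around $B_v$ and have bounded overlap, so their number is controlled. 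Hence for each level pair $(j'', j''')$ with, say, $j'' \leq j'''$, the number of admissible pairs actually contributing is at most $\sim 2^{j''} \cdot (\text{number of } v' \text{ per } v)$, but more carefully: every pair of clusters on these levels contributes, and there are $\sim 2^{j'' + j'''}$ of them; the decay $\min\{\dots\}^{q+1} \sim 2^{-(q+1)|j'' - j'''|/d}$ and the factor $2^{J - (j''+j''')/2}$ must be summed over all level pairs. I would split into the diagonal-ish band $|j'' - j'''| = \mathcal{O}(1)$ and the off-diagonal part, and check that the geometric series in $|j'' - j'''|$ converges (this is where $q \geq 0$, hence $q+1 \geq 1 > 0$, is used), leaving a sum over the total level $j'' + j'''$ of something like $2^{2J - \text{(total level)}} \cdot 2^{\text{(total level)}}$, i.e.\ $\sim 2^{2J}$ per value, times $\mathcal{O}(J) = \mathcal{O}(\log N)$ values of the total level. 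Since $2^J \sim N$ (depth of the balanced binary tree from Proposition \ref{4.2.4}, using the threshold) this yields $\| \boldsymbol{K}^{\boldsymbol{\Sigma}} - \boldsymbol{K}^{\boldsymbol{\Sigma}}_{\eta} \|_F^2 \lesssim \eta^{-4(q+1)} N^2 \log N$, and taking square roots gives the claimed bound with $C(\Omega, k, q) > 0$ (absorbing $m_q^2$, the geometric-series constant, and the constants from Proposition \ref{4.3.4} and Lemma \ref{4.2.5}).

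The main obstacle I anticipate is the combinatorial bookkeeping in the double sum over levels: one must be careful that summing the per-entry bound $\eta^{-2(q+1)} 2^{-(q+1)|j''-j'''|/d} 2^{J-(j''+j''')/2}$ against the entry-count $\sim 2^{j''+j'''}$ genuinely produces only a single logarithmic factor and not a higher power of $\log N$, and that the packing argument bounding the non-admissible neighbours of a fixed cluster is used in the right place (to get the count of \emph{contributing} admissible pairs, versus all admissible pairs — actually every admissible pair contributes to the error, so the packing bound is what controls the \emph{complement}, and the error sum is over \emph{all} far pairs, which is why the decay in $|j''-j'''|$ is essential rather than the packing count). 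I would need to reconcile these two viewpoints carefully; the cleanest route is to bound the full double sum $\sum_{v,v'} (\text{per-entry bound})^2$ directly using the decay, the cardinalities $|v| \sim 2^{J-j_v}$, and $\#T_j \sim 2^j$, which is exactly the computation sketched above and is the heart of Theorem 3.12 in \cite{HM1}, to which I would also refer for the details omitted here.
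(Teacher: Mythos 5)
Your overall strategy coincides with the paper's: apply Proposition \ref{4.3.4}, insert the admissibility condition to produce the factor $\eta^{-2(q+1)}$ and the ratio $\theta = \min\{\operatorname{diam}(B_v),\operatorname{diam}(B_{v'})\}/\max\{\operatorname{diam}(B_v),\operatorname{diam}(B_{v'})\}$, control $\|\omega\|_1 \leq \sqrt{|v|}$ via statement \ref{4.2.13 (4)}, and then sum over level pairs using a convergent geometric series in $|j-j'|$ to land on $\eta^{-4(q+1)}N^2 J$ with $J = \mathcal{O}(\log N)$. The arithmetic of your level-by-level count is correct.

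However, there is one genuine gap: your decay estimate $\theta^{q+1} \sim 2^{-(q+1)|j''-j'''|/d}$, and likewise the statement $\operatorname{diam}(v) \lesssim 2^{-j''/d}$ invoked in your packing discussion, require the \emph{two-sided} bound $\operatorname{diam}(v)\sim 2^{-j_v/d}$ for each individual cluster. By Corollary \ref{4.2.7} this holds only under quasi-uniformity of $(X_N)$ together with convexity of $\Omega$, whereas Theorem \ref{4.3.5} assumes only the separation bound $q_{X_N}\geq C_1 N^{-1/d}$. Remark \ref{4.2.8} gives an explicit example with $q_{X_N}\sim N^{-1}$ where every level contains a cluster of diameter $\geq 1$, so the per-cluster upper bound on the diameter genuinely fails under the stated hypothesis and your argument as written only proves the theorem under a strictly stronger assumption. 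The paper circumvents this by combining the individual \emph{lower} bound $\operatorname{diam}(v)\gtrsim N^{1/d}q_{X_N}2^{-j_v/d}\gtrsim 2^{-j_v/d}$ (Proposition \ref{4.2.6}, statement \ref{4.2.6 (1)}, which uses only the separation radius) to control the denominator of $\theta$, with the bound on the \emph{sum} of diameters over a level, $\sum_{v\in T_j}\operatorname{diam}(v)\lesssim 2^{j-j/d}$ (statement \ref{4.2.6 (2)}, valid for any bounded $\Omega$), to control the numerators collectively; note this also only uses $\theta$ to the first power, since $\theta^{q+1}\leq\theta\leq 1$. A minor further point: your worry about reconciling the packing argument with the error sum is a red herring — the packing count belongs to the sparsity statement (Proposition \ref{4.3.8}), while the Frobenius-norm bound sums over \emph{all} admissible pairs and is driven entirely by the decay of $\theta$, as you eventually conclude yourself.
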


\begin{proof}
Let $\sigma_{j,i}$, $\sigma_{j',i'}$ be two samplets with support in clusters $v$, $v'$, coefficient vectors $\omega_{j,i}$, $\omega_{j',i'}$ and that satisfy condition (\ref{eq. 4.3.15}). After Proposition \ref{4.3.4} is
\begin{align*}
| (k, \sigma_{j,i} \otimes \sigma_{j',i'} )_{\Omega} | 
&\leq C_2 \dfrac{d^{2(q+1)} \text{diam}(v)^{q+1} \text{diam}(v')^{q+1} }{ (r \cdot \text{dist}(B_v,B_{v'}))^{2(q+1)} } \| \omega_{j,i} \|_1 \| \omega_{j',i'} \|_1 \\
&\leq  C_2 \dfrac{d^{2(q+1)} \text{diam}(B_v)^{q+1} \text{diam}(B_{v'})^{q+1} }{ (r \cdot \text{dist}(B_v,B_{v'}))^{2(q+1)} } \| \omega_{j,i} \|_1 \| \omega_{j',i'} \|_1 \\
&\leq C_2 \dfrac{d^{2(q+1)} \min \{ \text{diam}(B_v), \text{diam}(B_{v'}) \}^{q+1} }{ (\eta r)^{2(q+1)} \max \{ \text{diam}(B_v), \text{diam}(B_{v'}) \}^{q+1} } \| \omega_{j,i} \|_1 \| \omega_{j',i'} \|_1 \\
&\leq C_2 \left( \dfrac{d}{\eta r} \right)^{2(q+1)} \theta_{(j,i),(j',i')} \| \omega_{j,i} \|_1 \| \omega_{j',i'} \|_1,
\end{align*}
where $C_2(\Omega, k),r>0$ are constants and we denote
$$\theta_{(j,i),(j',i')} = \dfrac{\min \{ \text{diam}(B_v), \text{diam}(B_{v'}) \}}{\max \{ \text{diam}(B_v), \text{diam}(B_{v'}) \}} \leq 1.$$
For $j \in \{0,...,J-1 \}$ let $\boldsymbol{\Sigma}_j = \{ \sigma_{j,1}, ..., \sigma_{j,|\boldsymbol{\Sigma}_j |} \}$ be the set of samplets that the clusters on level $j$ provide. There are $2^j$ clusters on this level, each of them provides not more than $m_q$ samplets and the amount of elements in a cluster is on average $N/2^j$. Using these facts and statement \ref{4.2.13 (4)} of Theorem \ref{4.2.13} yields
$$\sum_{i=1}^{| \boldsymbol{\Sigma}_j|} \| \omega_{j,i} \|_1^2 \theta_{(j,i), (j',i')}^2  
\leq \sum_{i=1}^{| \boldsymbol{\Sigma}_j|} |v_{j,i}| \theta_{(j,i), (j',i')}
\leq \sum_{i=1}^{2^j m_q} \dfrac{N}{2^j} \theta_{(j,i), (j',i')}
= \dfrac{N}{2^j} \sum_{i=1}^{2^j m_q} \theta_{(j,i), (j',i')},$$
where $\sigma_{j,i}$ has support in cluster $v_{j,i}$ on level $j$. Using statements \ref{4.2.6 (1)} and \ref{4.2.6 (2)} of Proposition \ref{4.2.6}, we receive
\begin{align*}
\sum_{i=1}^{2^j m_q} \sum_{i'=1}^{2^{j'} m_q} \theta_{(j,i), (j',i')}
&= m_q^2 \sum_{\substack{ v \text{ is a cluster on level } j \\ v' \text{ is a cluster on level } j'}} \dfrac{\min \{ \text{diam}(B_v), \text{diam}(B_{v'}) \}}{\max \{ \text{diam}(B_v), \text{diam}(B_{v'}) \}} \\
&\lesssim \sum_{\substack{ v \text{ is a cluster on level } j \\ v' \text{ is a cluster on level } j'}} \dfrac{ \min \{ \text{diam}(B_v), \text{diam}(B_{v'})  \}}{ N^{1/d} q_{X_N} \cdot \max \{ 2^{-j/d}, 2^{-j'/d} \} } \\
&\lesssim \dfrac{ \min \{ 2^{j'} 2^{j-j/d}, 2^{j} 2^{j'-j'/d}  \}}{ \max \{ 2^{-j/d}, 2^{-j'/d} \} }
= 2^{j+j'} \dfrac{ \min \{ 2^{-j/d},  2^{-j'/d}  \}}{ \max \{ 2^{-j/d}, 2^{-j'/d} \} },
\end{align*}
where the constants in $\lesssim$ depend on $\Omega$ and $q$. We further use the estimate
\begin{align*}
\sum_{j'=0}^{J-1} \dfrac{ \min \{ 2^{-j/d},  2^{-j'/d}  \}}{ \max \{ 2^{-j/d}, 2^{-j'/d} \} }
&= \sum_{j'=0}^{j} \dfrac{ 2^{-j/d} }{ 2^{-j'/d} } + \sum_{j'=j+1}^{J-1} \dfrac{ 2^{-j'/d} }{ 2^{-j/d} } \\
&= \sum_{j'=0}^{j} 2^{-j'/d} + \sum_{j'=1}^{J-j-1} 2^{-j'/d} 
\leq 2 \sum_{j=0}^{\infty} 2^{-j'/d} = \dfrac{2}{1-2^{-1/d}}.
\end{align*}
Therefore,
\begin{align*}
\| \boldsymbol{K}^{\boldsymbol{\Sigma}} - \boldsymbol{K}^{\boldsymbol{\Sigma}}_{\eta} \|_F^2 
&= \sum_{(j,i) \text{ and } (j',i') \text{ satisfy } (\ref{eq. 4.3.15})} \sum_{i=1}^{| \boldsymbol{\Sigma}_j|} \sum_{i'=1}^{| \boldsymbol{\Sigma}_{j'}|} | (k, \sigma_{j,i} \otimes \sigma_{j',i'} )_{\Omega} |^2\\
&\leq C_2^2 \left( \dfrac{d}{\eta r} \right)^{4(q+1)} \sum_{j,j'=0}^{J-1} \sum_{i=1}^{| \boldsymbol{\Sigma}_j|} \sum_{i'=1}^{| \boldsymbol{\Sigma}_{j'}|} \theta_{(j,i),(j',i')}^2 \| \omega_{j,i} \|_1^2 \| \omega_{j',i'} \|_1^2\\
&\leq C_2^2 \left( \dfrac{d}{\eta r} \right)^{4(q+1)} \sum_{j,j'=0}^{J-1} \dfrac{N^2}{2^{j+j'}} \sum_{i=1}^{2^j m_q} \sum_{i'=1}^{2^{j'} m_q} \theta_{(j,i),(j',i')} \\
&\lesssim \eta^{-4(q+1)} N^2 \sum_{j=0}^{J-1} \sum_{j'=0}^{J-1} \dfrac{ \min \{ 2^{-j/d},  2^{-j'/d}  \}}{ \max \{ 2^{-j/d}, 2^{-j'/d} \} } 
\lesssim \eta^{-4(q+1)} N^2 J,
\end{align*}
where the constants in $\lesssim$ depend on $\Omega, k$ and $q$. Note that the root cluster does not satisfy (\ref{eq. 4.3.15}) with respect to any other cluster, so the scaling functions within $\boldsymbol{\Sigma}_J$ do not need to be considered. The proof of Proposition \ref{4.2.4} states that the total amount of levels is not more than


$$\bigg\lceil \dfrac{\log (N/2)}{\log 2} \bigg\rceil + 1 
= \bigg\lceil \dfrac{\log N - \log 2}{\log 2} \bigg\rceil + 1
= \bigg\lceil \dfrac{\log N}{\log 2} \bigg\rceil 
< 2 \log N.$$
Taking this into account, we have
$$\| \boldsymbol{K}^{\boldsymbol{\Sigma}} - \boldsymbol{K}^{\boldsymbol{\Sigma}}_{\eta} \|_F \leq C \eta^{-2(q+1)} N \sqrt{\log N}$$
for a constant $C(\Omega, k ,q)>0$.
\end{proof}

A direct consequence is the following corollary, which bounds the relative error of the compressed matrix.

\begin{corollary} \label{4.3.6}
Let $(X_N)_{N\in \mathbb{N}}$ be a sequence of points that satisfies $q_{X_N} \geq C_1 N^{-1/d}$ and let $\| \boldsymbol{K}^{\boldsymbol{\Sigma}} \|_F \geq C_2 N$ for some constants $C_1(\Omega), C_2(\overline{\Omega}, k)>0$. Then
$$\dfrac{\| \boldsymbol{K}^{\boldsymbol{\Sigma}} - \boldsymbol{K}^{\boldsymbol{\Sigma}}_{\eta} \|_F}{\| \boldsymbol{K}^{\boldsymbol{\Sigma}} \|_F} 
\leq C \eta^{-2(q+1)} \sqrt{\log N},$$
where $C(\overline{\Omega}, k, q)>0$ is a constant.
\end{corollary}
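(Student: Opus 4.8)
The plan is to combine the absolute error bound from Theorem \ref{4.3.5} with the stated lower bound on the Frobenius norm of the full samplet-transformed kernel matrix. This is a short and direct argument: Theorem \ref{4.3.5} gives
$$\| \boldsymbol{K}^{\boldsymbol{\Sigma}} - \boldsymbol{K}^{\boldsymbol{\Sigma}}_{\eta} \|_F \leq C' \eta^{-2(q+1)} N \sqrt{\log N}$$
for a constant $C'(\Omega, k, q)>0$, under the assumption $q_{X_N} \geq C_1 N^{-1/d}$, which is exactly the hypothesis we are given. Dividing both sides by $\| \boldsymbol{K}^{\boldsymbol{\Sigma}} \|_F$ and using the hypothesis $\| \boldsymbol{K}^{\boldsymbol{\Sigma}} \|_F \geq C_2 N$ yields
$$\dfrac{\| \boldsymbol{K}^{\boldsymbol{\Sigma}} - \boldsymbol{K}^{\boldsymbol{\Sigma}}_{\eta} \|_F}{\| \boldsymbol{K}^{\boldsymbol{\Sigma}} \|_F}
\leq \dfrac{C' \eta^{-2(q+1)} N \sqrt{\log N}}{C_2 N}
= \dfrac{C'}{C_2}\, \eta^{-2(q+1)} \sqrt{\log N},$$
so the claim holds with $C = C'/C_2$, which depends only on $\overline{\Omega}$, $k$ and $q$ (the constant $C_2$ is the one from the hypothesis and $C'$ is the constant from Theorem \ref{4.3.5}).

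The only point that deserves a sentence of justification is that the hypotheses of Theorem \ref{4.3.5} are genuinely met: the point sequence condition $q_{X_N} \geq C_1 N^{-1/d}$ is assumed verbatim, the cluster tree is taken deep enough (threshold less than $m_q+1$) as stipulated for this whole subsection, and $\boldsymbol{K}^{\boldsymbol{\Sigma}}_{\eta}$ is by definition the matrix obtained from $\boldsymbol{K}^{\boldsymbol{\Sigma}}$ by zeroing all entries satisfying the admissibility condition (\ref{eq. 4.3.15}). Hence Theorem \ref{4.3.5} applies directly and nothing further needs to be re-derived.

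I do not anticipate a real obstacle here; the corollary is purely a normalization of Theorem \ref{4.3.5}. The only mild subtlety worth flagging is the domain dependence of the constant: $C'$ depends on $\Omega$, while $C_2$ is assumed to depend on $\overline{\Omega}$ (since $\| \boldsymbol{K}^{\boldsymbol{\Sigma}} \|_F = \| \boldsymbol{K} \|_F$ involves kernel values which, for a continuous stationary kernel, are controlled on the closure), so the resulting constant $C = C'/C_2$ should be recorded as depending on $\overline{\Omega}$, $k$ and $q$, exactly as in the statement. One should also note in passing that the assumption $\| \boldsymbol{K}^{\boldsymbol{\Sigma}} \|_F \geq C_2 N$ is reasonable: since $\boldsymbol{T}$ is orthogonal, $\| \boldsymbol{K}^{\boldsymbol{\Sigma}} \|_F = \| \boldsymbol{K} \|_F \geq \big( \sum_{i=1}^N k(\boldsymbol{x}_i,\boldsymbol{x}_i)^2 \big)^{1/2}$, and for a stationary kernel $k(\boldsymbol{x}_i,\boldsymbol{x}_i) = k(\boldsymbol{0})$ is a fixed positive constant, giving $\| \boldsymbol{K} \|_F \geq k(\boldsymbol{0}) \sqrt{N} \cdot \sqrt{N}/\sqrt{N}$; more simply the diagonal alone contributes $k(\boldsymbol{0})\sqrt{N}$, and off-diagonal positivity near the diagonal pushes this to order $N$ for quasi-uniform points. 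This remark is optional and not needed for the proof, which is complete with the three displayed lines above.
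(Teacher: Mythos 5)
Your proposal is correct and coincides with the paper's argument: the corollary is stated there as a direct consequence of Theorem \ref{4.3.5}, obtained exactly by dividing the absolute bound $C' \eta^{-2(q+1)} N \sqrt{\log N}$ by the assumed lower bound $\| \boldsymbol{K}^{\boldsymbol{\Sigma}} \|_F \geq C_2 N$. Your optional closing remark on why the hypothesis $\| \boldsymbol{K}^{\boldsymbol{\Sigma}} \|_F \geq C_2 N$ is reasonable is the content of the paper's separate Lemma \ref{4.3.7} (which uses positivity of all Matérn kernel entries to get order $N$, since the diagonal alone only gives order $\sqrt{N}$), but this is not needed for the corollary itself.
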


As the Matérn kernels are of interest in my thesis, it is important to note that they satisfy the condition to bound the relative error.

\begin{lemma} \label{4.3.7}
For a Matérn kernel exists $C(\Omega, k)>0$ with $\| \boldsymbol{K}^{\boldsymbol{\Sigma}} \|_F \geq C N$.
\end{lemma}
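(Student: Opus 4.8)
The key observation is that the Frobenius norm is invariant under the orthogonal change of basis $\boldsymbol{T}$: since the samplet coefficient vectors form an orthonormal system, $\boldsymbol{T}$ is orthogonal, hence $\boldsymbol{T}^{-1} = \boldsymbol{T}^T$ and
$$\| \boldsymbol{K}^{\boldsymbol{\Sigma}} \|_F = \| \boldsymbol{T} \boldsymbol{K} \boldsymbol{T}^{-1} \|_F = \| \boldsymbol{K} \|_F.$$
So it suffices to bound $\| \boldsymbol{K} \|_F$ from below, where $\boldsymbol{K} = [k_{\nu,\ell}(\boldsymbol{x}_i, \boldsymbol{x}_j)]_{1 \leq i,j \leq N}$. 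First I would note $\| \boldsymbol{K} \|_F^2 = \sum_{i,j=1}^N k_{\nu,\ell}(\boldsymbol{x}_i, \boldsymbol{x}_j)^2 \geq \sum_{i=1}^N k_{\nu,\ell}(\boldsymbol{x}_i, \boldsymbol{x}_i)^2$, keeping only the diagonal terms.

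Next I would use that for a stationary Matérn kernel the diagonal entry is constant: $k_{\nu,\ell}(\boldsymbol{x}_i, \boldsymbol{x}_i) = k_{\nu,\ell}(\boldsymbol{0})$. From the normalization of the Matérn kernel (or directly from the representation in Remark \ref{2.3.5} in the half-integer case, evaluated at $\| \boldsymbol{\tau} \|_2 = 0$, where only the $i=n$ term of the sum survives and the exponential equals $1$), one gets $k_{\nu,\ell}(\boldsymbol{0}) = 1 > 0$ — a positive constant depending only on the kernel (i.e. on $\nu, \ell$, hence absorbed into $C(\Omega,k)$). Therefore
$$\| \boldsymbol{K}^{\boldsymbol{\Sigma}} \|_F^2 = \| \boldsymbol{K} \|_F^2 \geq \sum_{i=1}^N k_{\nu,\ell}(\boldsymbol{0})^2 = N\, k_{\nu,\ell}(\boldsymbol{0})^2,$$
so that $\| \boldsymbol{K}^{\boldsymbol{\Sigma}} \|_F \geq k_{\nu,\ell}(\boldsymbol{0}) \sqrt{N} \geq k_{\nu,\ell}(\boldsymbol{0}) N^{1/2}$. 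This is weaker than the claimed $CN$; to recover the linear bound one should keep more than the diagonal. The clean way is to bound $\| \boldsymbol{K} \|_F$ below by the largest eigenvalue, or better, by using that the kernel matrix has all diagonal entries equal to $1$ and is positive semi-definite: $\| \boldsymbol{K} \|_F \geq \operatorname{tr}(\boldsymbol{K})/\sqrt{N}\cdot\sqrt{N}$ does not immediately help either. Instead, I would invoke positive definiteness together with $\operatorname{tr}(\boldsymbol{K}) = N$: writing the eigenvalues $\lambda_1, \dots, \lambda_N \geq 0$, we have $\sum \lambda_i = N$ and $\| \boldsymbol{K}\|_F^2 = \sum \lambda_i^2 \geq \frac{1}{N}\big(\sum \lambda_i\big)^2 = N$ by Cauchy–Schwarz. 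Hence $\| \boldsymbol{K}^{\boldsymbol{\Sigma}} \|_F = \| \boldsymbol{K} \|_F \geq N^{1/2}$ — still only $\sqrt{N}$.

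To get the genuinely linear bound $\| \boldsymbol{K}^{\boldsymbol{\Sigma}} \|_F \geq CN$ one must exploit locality of the points: for quasi-uniform (or merely separated) points $X_N$, each point $\boldsymbol{x}_i$ has at least a fixed fraction of the other points within a bounded distance, so that $\sum_{j} k_{\nu,\ell}(\boldsymbol{x}_i,\boldsymbol{x}_j)^2$ is bounded below by a constant independent of $N$; actually, even simpler, if $\Omega$ is bounded then $\operatorname{diam}(\Omega) < \infty$ and $k_{\nu,\ell}(\boldsymbol{x}_i,\boldsymbol{x}_j) \geq k_{\nu,\ell}(\operatorname{diam}(\Omega)) =: c_0 > 0$ for all $i,j$ (the Matérn kernel is strictly positive and radially decreasing), whence $\| \boldsymbol{K} \|_F^2 \geq \sum_{i,j} c_0^2 = c_0^2 N^2$ and thus $\| \boldsymbol{K}^{\boldsymbol{\Sigma}} \|_F \geq c_0 N$ with $C := c_0 = k_{\nu,\ell}\big(\operatorname{diam}(\Omega)\big) = C(\Omega, k) > 0$. \emph{The main obstacle} is precisely recognizing that the boundedness of $\Omega$ (implicit in the ``$q_{X_N} \geq C_1 N^{-1/d}$'' / quasi-uniform setting used in Corollary \ref{4.3.6}, and explicit in Proposition \ref{4.3.2}) is what forces every kernel entry to be bounded away from zero, turning the trivial diagonal bound $\sqrt{N}$ into the required $N$; the orthogonal-invariance step and the strict positivity and monotonicity of $k_{\nu,\ell}$ are then routine.
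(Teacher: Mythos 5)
Your final argument is correct and is essentially the paper's own proof: Frobenius-norm invariance under the orthogonal samplet transform, followed by the observation that on a bounded $\Omega$ the positive, continuous Matérn kernel is bounded below by a constant $c_0>0$ on all of $\overline{\Omega}\times\overline{\Omega}$, so that all $N^2$ entries contribute and $\|\boldsymbol{K}\|_F^2 \geq c_0^2 N^2$. The earlier detours (diagonal-only and eigenvalue bounds, each yielding only $\sqrt{N}$) are correctly diagnosed as insufficient and can simply be dropped.
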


\begin{proof}
The Matérn kernel $k_{\nu,\ell}$ is continuous on $\overline{\Omega} \times \overline{\Omega}$ and positive, implying there exists $C = C(\overline{\Omega}, k)>0$ such that $k_{\nu, \ell} (\boldsymbol{x}, \boldsymbol{x}') \geq C$ for all $\boldsymbol{x}, \boldsymbol{x}' \in \Omega$. Now,
\[ \pushQED{\qed} 
\| \boldsymbol{K}^{\boldsymbol{\Sigma}} \|_F^2 
= \| \boldsymbol{T} \boldsymbol{K} \boldsymbol{T}^{-1} \|_F ^2
= \| \boldsymbol{K} \|_F^2
= \sum_{i,j=1}^N k_{\nu,\ell}(\boldsymbol{x}_i, \boldsymbol{x}_j)^2
\geq \sum_{i,j=1}^N C^2 
= N^2 C^2. \qedhere \]
\end{proof}

For an appropriate experimental design, the compressed matrix is sparse. This ensures that our compression strategy is effective.

\begin{proposition} \label{4.3.8}
Let $(X_N)_{N\in \mathbb{N}}$ be a sequence of quasi-uniform points in $\Omega$. The compressed matrix $\boldsymbol{K}^{\boldsymbol{\Sigma}}_{\eta}$ has $\mathcal{O}(N \log N)$ non-zero entries.
\end{proposition}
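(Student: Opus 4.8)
The plan is to count, level by level, the number of admissible pairs of clusters that are \emph{not} set to zero, i.e. the pairs $(v,v')$ that violate the admissibility condition (\ref{eq. 4.3.15}), and then to translate this count of cluster pairs into a count of matrix entries via Lemma \ref{4.2.9} (each cluster provides at most $m_q$ samplets, so a pair of clusters contributes at most $m_q^2$ entries). Since $(X_N)_{N\in\mathbb{N}}$ is quasi-uniform, Corollary \ref{4.2.7} applies: every non-singleton cluster $v$ on level $j$ satisfies $\operatorname{diam}(v)\sim\operatorname{diam}(B_v)\sim 2^{-j/d}$, with constants depending only on $\Omega$. So I would first fix two levels $j,j'$ with, say, $j\le j'$, and a cluster $v$ on level $j$; the goal is to bound the number of clusters $v'$ on level $j'$ for which
$$\operatorname{dist}(B_v,B_{v'}) < \eta\max\{\operatorname{diam}(B_v),\operatorname{diam}(B_{v'})\} \sim \eta\, 2^{-j/d}.$$
All such $B_{v'}$ lie in an enlarged box around $B_v$ of side length $\mathcal{O}((1+\eta)2^{-j/d})$; since the clusters on level $j'$ are essentially disjoint (they partition $X_N$) and, being quasi-uniform with $|v'|\sim N2^{-j'}$, occupy volume $\sim 2^{-j'}$ each, a volume-packing argument gives at most $\mathcal{O}((1+\eta)^d\, 2^{j'-j})$ non-admissible partners $v'$ on level $j'$ for each fixed $v$ on level $j$. (Here I would use boundedness of $\Omega$ and the lower bound $\operatorname{diam}(v)\gtrsim 2^{-j/d}$ together with the fact that distinct clusters on the same level cannot be too close, exactly as in the proof of Proposition \ref{4.2.6}.)

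Next I would sum over levels. The number of non-zero blocks is
$$\sum_{j=0}^{J-1}\sum_{j'=0}^{J-1}\ \#\{\text{non-admissible pairs }(v,v')\text{ with }v\in T_j,\ v'\in T_{j'}\}
\ \lesssim\ \sum_{j\le j'} 2^{j}\cdot 2^{j'-j}\cdot(1+\eta)^d,$$
wait — that over-counts; more carefully, there are $\mathcal{O}(2^j)$ clusters $v$ on level $j$, each with $\mathcal{O}(2^{j'-j})$ non-admissible partners on level $j'$, giving $\mathcal{O}(2^{j'})$ non-admissible pairs for the ordered pair of levels $(j,j')$ with $j\le j'$. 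Summing, $\sum_{j'=0}^{J-1}(j'+1)\,\mathcal{O}(2^{j'}) = \mathcal{O}(J\,2^{J}) = \mathcal{O}(N\log N)$, since $2^J\sim N$ and $J\sim\log N$ by Proposition \ref{4.2.4}. Multiplying by $m_q^2$ (a constant, as $q$ is fixed) converts blocks to entries and yields $\mathcal{O}(N\log N)$ non-zero entries of $\boldsymbol{K}^{\boldsymbol{\Sigma}}_{\eta}$. The scaling functions in $\boldsymbol{\Phi}_0^X$ contribute only $\mathcal{O}(m_q^2)$ further entries (the root being a single cluster), which is absorbed.

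The main obstacle I anticipate is the per-cluster partner count: showing that for a fixed cluster $v$ on the coarser level $j$, only $\mathcal{O}(2^{j'-j})$ clusters on the finer level $j'$ can fail the admissibility test. This requires both a \emph{lower} bound on the diameter of the coarse cluster (so that "within distance $\eta\operatorname{diam}(B_v)$" is a region of controlled size $\mathcal{O}(2^{-j/d})$ in each coordinate) and an argument that finer clusters are not arbitrarily crowded — i.e. that each one carries $\sim N2^{-j'}$ points which are $q_{X_N}\sim N^{-1/d}$-separated, so it occupies volume $\gtrsim 2^{-j'}$. Both ingredients are exactly the quasi-uniformity consequences already packaged in Proposition \ref{4.2.6} and Corollary \ref{4.2.7}, so the bound follows by a Vitali-type volume-packing count; the bookkeeping with the $(1+\eta)^d$ factor and the case $j>j'$ (symmetric) is routine. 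I would also remark that $\eta$ enters only through a constant $(1+\eta)^d$, matching the role $\eta$ plays in Theorem \ref{4.3.5}.
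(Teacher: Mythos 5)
Your proposal is correct and follows essentially the same route as the paper: use quasi-uniformity (Corollary \ref{4.2.7} and Proposition \ref{4.2.6}) to get two-sided bounds $\operatorname{diam}(B_v)\sim 2^{-j/d}$, deduce by a packing argument that each coarse cluster has only $\mathcal{O}(2^{j'-j})$ non-admissible partners at level $j'$, and then evaluate $\sum_{j}\sum_{j'\ge j}2^{j'}=\mathcal{O}(J\,2^{J})=\mathcal{O}(N\log N)$ before multiplying by the constant $m_q^2$. The only cosmetic difference is that the paper packs at the \emph{coarse} level (at most $\mathcal{O}(3^d\eta^d)$ same-level neighbours, each with $2^{j'-j}$ descendants) whereas you pack directly at the fine level via a volume bound; both are valid under the stated hypotheses.
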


\begin{proof}
Let $v_{j,i}$ be the $i$-th cluster at level $j$ and let $L_\ell$ be the length of $B_{v_{j,i}}$'s edge along dimension $\ell$. According to Corollary \ref{4.2.7} and the proof of statement \ref{4.2.6 (1)} in Proposition \ref{4.2.6}, there exists a constant $C_1 (\Omega)>0$ such that
$$L_\ell \geq C_1 2^{-j/d} \quad \text{for every } \ell \in \{ 1,...,d \}.$$
According to Corollary \ref{4.2.7}, there is also a constant $C_2(\Omega)>0$ such that
$$\text{diam}(B_{v_{j,i}}) \leq C_1 C_2 2^{-j/d}.$$
This means condition (\ref{eq. 4.3.15}) becomes at worst


$$\text{dist}(B_{v_{j,i}},B_{v_{j,i'}}) \geq \eta C_1 C_2 2^{-j/d}.$$
This condition fails for at most $3^d \eta^d C_2^d$ clusters at level $j$, as illustrated in the accompanying graphic:
\begin{figure}[H]
	\centering
	\includegraphics[width=0.8\textwidth]{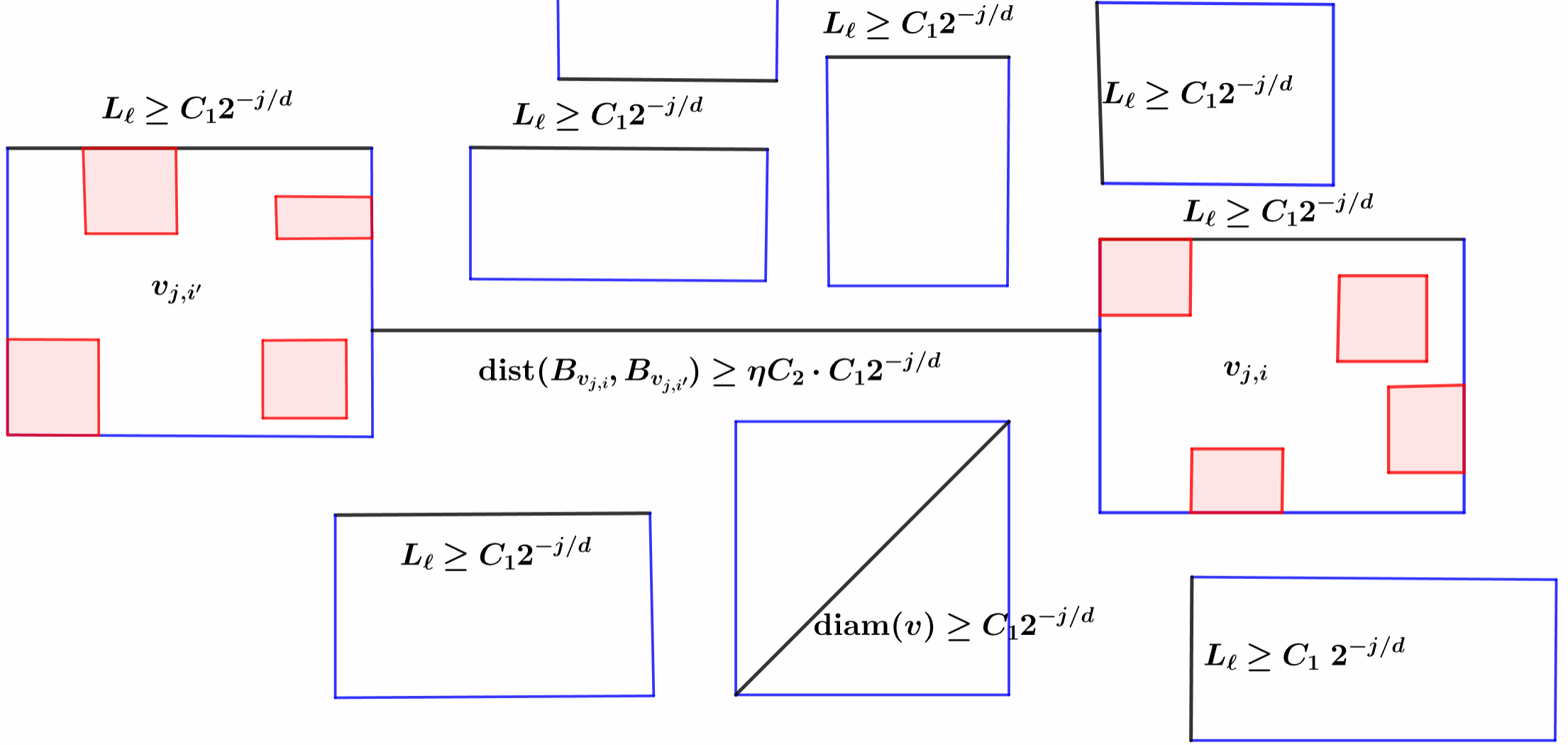}
\end{figure}
\noindent
Taking this into account and noting that $v_{j,i'}$ has $2^{j'-j}$ child clusters at level $j'$, there are $\mathcal{O}(3^d \eta^d C_2^d 2^{j'-j}) = \mathcal{O}(2^{j'-j})$ clusters at level $j'$ that do not satisfy the admissibility condition. Since each cluster contains at most $m_q$ samplets, the number of non-thresholded entries in the symmetric matrix $\boldsymbol{K}^{\boldsymbol{\Sigma}}_{\eta}$ is at most
$$\mathcal{O} \left( \sum_{j=0}^{J} 2^j \sum_{j' \geq j} 2^{j'-j} m_q^2 \right) 
= \mathcal{O} \left( \sum_{j=0}^{J} \sum_{j' \geq j} 2^{j'} \right)$$
In the proof of Proposition \ref{4.2.4}, we have established $J = \mathcal{O}(\log N)$, and clearly we have $2^J = \mathcal{O}(N)$ clusters at the last level. Using the identity
$$\sum_{j=0}^{J} \sum_{j' \geq j} 2^{j'} 
= \begin{array}{rrr} 2^0 + 2^1 + ... + 2^J \\ +2^1 + ... + 2^J \\ \vdots \\ + 2^J \end{array}
= \sum_{j=0}^{J} (j+1) 2^j,$$
we further derive
\begin{align*}
\mathcal{O} \left( \sum_{j=0}^{J} \sum_{j' \geq j} 2^{j'} \right)
&= \mathcal{O} \left( \sum_{j=0}^{J} (j+1) 2^j \right)
= \mathcal{O} \left( (J+1) 2^J \sum_{j=0}^{J} 2^{-j} \right) \\
&= \mathcal{O} \left( J 2^J \sum_{j=0}^{\infty} 2^{-j} \right)
= \mathcal{O} \left( J 2^J \dfrac{1}{1-1/2} \right)
= \mathcal{O} (N \log N).
\end{align*}
This implies that $\mathcal{O} (N \log N)$ entries in $\boldsymbol{K}^{\boldsymbol{\Sigma}}_{\eta}$ are not thresholded.
\end{proof}

It follows an example, where we visualize the sparsity pattern of a compressed kernel matrix.


\begin{example} \label{4.3.9}
Consider $N=10,000$ equispaced points in $(0,1)^2$ with a step size of $0.01$ in each dimension, $q=3$ vanishing moments, $\eta = 0.8$ and the Matérn kernel $k_{3/2,1}$. The kernel matrix $\boldsymbol{K}$ is dense, while the sparse matrix $\boldsymbol{K}^{\boldsymbol{\Sigma}}_\eta$ is visualized in the left panel of Figure \ref{fig. 16} and has a relative compression error of $1.8737 \cdot 10^{-5}$. \\
If we additionally threshold entries smaller than $10^{-6}$, we obtain an even sparser matrix. It is visualized in the right panel of Figure \ref{fig. 16}, with a relative compression error of $1.4865 \cdot 10^{-5}$. Since the difference between the errors is negligible, it is beneficial to further threshold small entries of $\boldsymbol{K}^{\boldsymbol{\Sigma}}_\eta$. For more details on the sparsity pattern, refer to the original work \cite{HM1}.

\begin{figure}[h]
	\centering
	\includegraphics[width=0.96\textwidth]{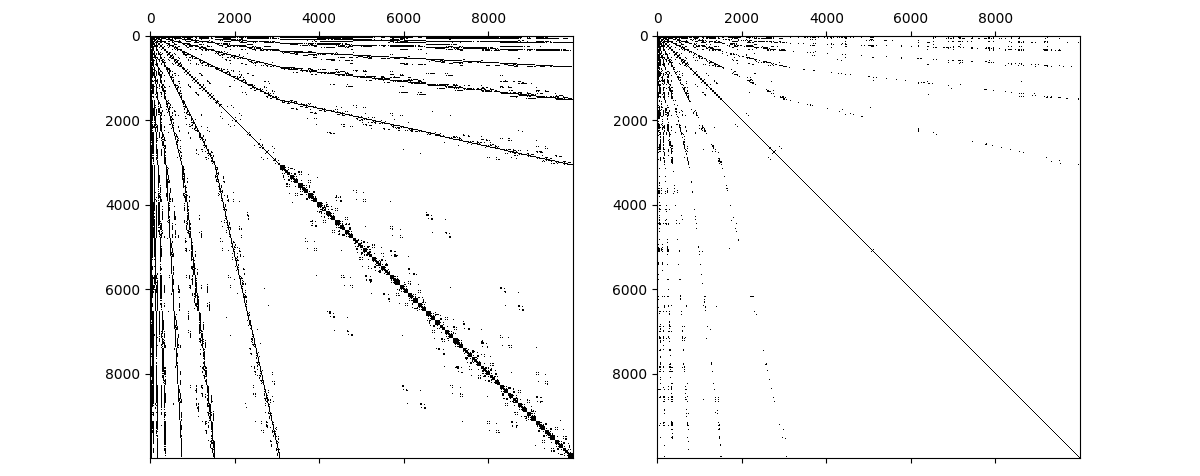}
	\caption{The left panel displays $\boldsymbol{K}^{\boldsymbol{\Sigma}}_\eta$ as described in Example \ref{4.3.9}, while the right panel shows $\boldsymbol{K}^{\boldsymbol{\Sigma}}_\eta$ with entries smaller than $10^{-6}$ thresholded.}\label{fig. 16}
\end{figure}
\end{example}


\subsection{Computation of the compressed kernel matrix}
A straightforward computation of the compressed kernel matrix $\boldsymbol{K}^{\boldsymbol{\Sigma}}_\eta$ would be too expensive, as it would involve calculating $\boldsymbol{K}^{\boldsymbol{\Sigma}} = \boldsymbol{TK} \boldsymbol{T}^{-1}$ and then checking the admissibility condition (\ref{eq. 4.3.15}) for every pair of clusters $(v,v')$. Instead, our setting enables the direct computation of $\boldsymbol{K}^{\boldsymbol{\Sigma}}_\eta$ using $\mathcal{H}^2$-matrix techniques. Specifically, we recursively compute the blocks of this matrix, where each block corresponds to a pair of cluster $(v,v')$. The next lemma demonstrates that certain blocks can be skipped, and in conjunction with Proposition \ref{4.3.8}, we conclude that for quasi-uniform points only $\mathcal{O}(N \log N)$ blocks need to be computed. Since each block will require a constant amount of computational effort, this provides an efficient strategy for computing $\boldsymbol{K}^{\boldsymbol{\Sigma}}_\eta$.

\begin{lemma} \label{4.4.1}
Let the clusters $v$ and $v'$ satisfy criterion (\ref{eq. 4.3.15}). Then for son clusters $v_{son}$ of $v$ and $v'_{son}$ of $v'$ is:
\begin{enumerate}[label=(\roman*),topsep=5pt]
	\setlength\itemsep{0.1mm}
\item $\text{dist}(B_{v_{son}},B_{v'}) \geq \eta \max \{ \text{diam}(B_{v_{son}}), \text{diam}(B_{v'}) \}$, \label{4.4.1 (i)}
\item $\text{dist}(B_{v_{son}},B_{v'_{son}}) \geq \eta \max \{ \text{diam}(B_{v_{son}}), \text{diam}(B_{v'_{son}}) \}$. \label{4.4.1 (ii)}
\end{enumerate}
\end{lemma}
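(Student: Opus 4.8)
The plan is to exploit the nesting of bounding boxes induced by the cluster tree. By definition every non-leaf cluster is the disjoint union of its sons, so $v_{son} \subseteq v$ and $v'_{son} \subseteq v'$; since a bounding box is the smallest axis-parallel hyperrectangle containing all points of a cluster, this yields the containments $B_{v_{son}} \subseteq B_v$ and $B_{v'_{son}} \subseteq B_{v'}$. I would record first two elementary monotonicity facts, immediate from the definitions of $\text{diam}$ (a supremum of $\|\cdot\|_2$ over pairs of points) and $\text{dist}$ (an infimum of $\|\cdot\|_2$ over pairs of points): enlarging a set cannot decrease its diameter, so $\text{diam}(B_{v_{son}}) \le \text{diam}(B_v)$ and $\text{diam}(B_{v'_{son}}) \le \text{diam}(B_{v'})$; and shrinking a set cannot decrease its distance to a fixed set, so $\text{dist}(B_{v_{son}},B_{v'}) \ge \text{dist}(B_v,B_{v'})$ and likewise $\text{dist}(B_{v_{son}},B_{v'_{son}}) \ge \text{dist}(B_v,B_{v'})$.

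For part \ref{4.4.1 (i)} I would then chain these facts with the admissibility hypothesis (\ref{eq. 4.3.15}) for the pair $(v,v')$:
\[
\text{dist}(B_{v_{son}},B_{v'}) \ge \text{dist}(B_v,B_{v'}) \ge \eta \max\{\text{diam}(B_v),\text{diam}(B_{v'})\} \ge \eta \max\{\text{diam}(B_{v_{son}}),\text{diam}(B_{v'})\}.
\]
For part \ref{4.4.1 (ii)} the same three-step chain works verbatim with $B_{v'_{son}}$ in place of $B_{v'}$, using $\text{dist}(B_{v_{son}},B_{v'_{son}}) \ge \text{dist}(B_v,B_{v'})$ and $\text{diam}(B_{v'_{son}}) \le \text{diam}(B_{v'})$; alternatively one applies part \ref{4.4.1 (i)} twice, first to the pair $(v,v')$ to pass from $v$ to $v_{son}$, and then to the pair $(v',v_{son})$ to pass from $v'$ to $v'_{son}$.

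There is no real obstacle here: the argument is pure monotonicity of $\sup$ and $\inf$ under enlarging the index set. The only point worth spelling out is the containment $B_{v_{son}} \subseteq B_v$, which is exactly where the cluster-tree structure (a son is a subset of its parent) enters; once that is in hand, both statements reduce to a single chain of three inequalities each.
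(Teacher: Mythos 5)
Your argument is correct and matches the paper's proof essentially verbatim: the paper also uses the containment $B_{v_{son}} \subseteq B_v$ to get the same three-inequality chain for part (i), and obtains part (ii) by applying part (i) twice. No gaps.
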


\begin{proof}
The bounding box $B_{v_{son}}$ lies within $B_v$ and therefore
\begin{align*}
\text{dist}(B_{v_{son}},B_{v'}) 
\geq \text{dist}(B_v,B_{v'}) 
&\geq \eta \max \{ \text{diam}(B_v), \text{diam}(B_{v'}) \} \\
&\geq \eta \max \{ \text{diam}(B_{v_{son}}), \text{diam}(B_{v'}) \}.
\end{align*}
We get \ref{4.4.1 (ii)} by using \ref{4.4.1 (i)} twice.
\end{proof}


\noindent
Let $\{ \boldsymbol{\xi}_1^v, \boldsymbol{\xi}_2^v, ..., \boldsymbol{\xi}_n^v \} \subset \Omega$ be a set of interpolation points for the cluster $v$, where the components satisfy $\xi_{i,j}^v \neq \xi_{i,j'}^v$ for $j \neq j'$. Define the corresponding tensor Lagrange polynomials\index{Lagrange polynomials,} as \index{\textit{$\ell_{\boldsymbol{\alpha}}^v$,}}
$$\ell_{\boldsymbol{\alpha}}^v(\boldsymbol{x}) = \prod_{j = 1}^d \prod_{i \neq \alpha_j}^n \dfrac{x_j-\xi_{i,j}^v}{\xi_{\alpha_j,j}^v - \xi_{i,j}^v} \quad \text{for } \boldsymbol{\alpha} \in \{ 1, 2, ..., n \}^d$$ 
(see section B.3.2 in \cite{Ha}). For an admissible pair of clusters $(v,v')$, we approximate the kernel $k$ via interpolation
$$k(\boldsymbol{x}, \boldsymbol{y}) \approx \sum_{ \boldsymbol{\alpha}, \boldsymbol{\beta} \in \{ 1,2, ..., n\}^d} \ell_{\boldsymbol{\alpha}}^v(\boldsymbol{x}) k(\boldsymbol{\xi}_{\boldsymbol{\alpha}}^v, \boldsymbol{\xi}_{\boldsymbol{\beta}}^{v'}) \ell_{\boldsymbol{\beta}}^{v'}(\boldsymbol{y}) \quad \text{for } \boldsymbol{x} \in B_v, \boldsymbol{y} \in B_{v'},$$
where \index{\textit{$\boldsymbol{\xi}_{\boldsymbol{\alpha}}^v$,}}
$$\boldsymbol{\xi}_{\boldsymbol{\alpha}}^v = (\xi_{\alpha_1, 1}^v, \xi_{\alpha_2, 2}^v, ..., \xi_{\alpha_d, d}^v).$$
This approximation is used only for admissible pairs of clusters, as the smoothness of the kernel is required for a good approximation, which can only be guaranteed when $\boldsymbol{x} \neq \boldsymbol{y}$. Specifically, the asymptotic smoothness property from Definition \ref{4.3.1} suggests that the polynomial approximation improves as $\| \boldsymbol{x} - \boldsymbol{y} \|$ increases. \\
Let $\varphi_{j}^v, \varphi_{j'}^{v'} \in V$ have coefficient vectors $[\omega_{j,i}]_{i=1}^N, [\omega_{j',i'}]_{i'=1}^N$ in $\mathbb{R}^N$ and support in clusters $v, v'$. Then is
\begin{align*}
(k, \varphi_{j}^v \otimes \varphi_{j'}^{v'})_{\Omega \times \Omega}
&= \sum_{i=1}^N \sum_{i'=1}^N \omega_{j,i} \omega_{j',i'} (k, \delta_{\boldsymbol{x}_i} \otimes \delta_{\boldsymbol{x}_{i'}})_{\Omega \times \Omega} \\
&\approx \sum_{s=1}^N \sum_{s'=1}^N \omega_{j,i} \omega_{j',i'} \sum_{ \boldsymbol{\alpha}, \boldsymbol{\beta} \in \{ 1,2, ..., n\}^d} \ell_{\boldsymbol{\alpha}}^v(\boldsymbol{x}_i) k(\boldsymbol{\xi}_{\boldsymbol{\alpha}}^v, \boldsymbol{\xi}_{\boldsymbol{\beta}}^{v'}) \ell_{\boldsymbol{\beta}}^{v'}(\boldsymbol{x}_{i'}) \\
&= \sum_{ \boldsymbol{\alpha}, \boldsymbol{\beta} \in \{ 1,2, ..., n\}^d} \left( \sum_{i=1}^N \omega_{j,i} \ell_{\boldsymbol{\alpha}}^v(\boldsymbol{x}_i) \right) k(\boldsymbol{\xi}_{\boldsymbol{\alpha}}^v, \boldsymbol{\xi}_{\boldsymbol{\beta}}^{v'}) \left( \sum_{i'=1}^N \omega_{j',i'} \ell_{\boldsymbol{\beta}}^{v'}(\boldsymbol{x}_{i'}) \right) \\
&= \sum_{ \boldsymbol{\alpha}, \boldsymbol{\beta} \in \{ 1,2, ..., n\}^d} (\ell_{\boldsymbol{\alpha}}^v, \varphi_{j}^v)_\Omega k(\boldsymbol{\xi}_{\boldsymbol{\alpha}}^v, \boldsymbol{\xi}_{\boldsymbol{\beta}}^{v'}) (\ell_{\boldsymbol{\beta}}^{v'}, \varphi_{j'}^{v'})_\Omega .
\end{align*}
This implies that if $(v,v')$ is admissible, we approximate a matrix block via
\begin{equation*}
\boldsymbol{K}^{\boldsymbol{\Phi}}_{v,v'} 
:= [ (k, \varphi_{j,i}^v \otimes \varphi_{j',i'}^{v'} )_{\Omega \times \Omega} ]_{1 \leq i \leq | \boldsymbol{\Phi}_j^v |, 1 \leq i' \leq | \boldsymbol{\Phi}_{j'}^{v'} |} 
\approx \boldsymbol{V}^{\boldsymbol{\Phi}}_v \boldsymbol{S}_{v,v'} \left( \boldsymbol{V}^{\boldsymbol{\Phi}}_{v'} \right)^T, \index{\textit{$\boldsymbol{K}^{\boldsymbol{\Phi}}_{v,v'}$,}}
\end{equation*}
where the matrices are defined as 
\index{\textit{$\boldsymbol{V}^{\boldsymbol{\Phi}}_v$,}} 
\index{\textit{$\boldsymbol{S}_{v,v'}$,}}
$$\boldsymbol{V}^{\boldsymbol{\Phi}}_v = [ (\ell_{\boldsymbol{\alpha}}^v, \varphi_{j,i}^v )_\Omega ]_{1 \leq i \leq | \boldsymbol{\Phi}_j^v |, \boldsymbol{\alpha} \in \{ 1, ..., n\}^d } \quad \text{and} \quad
\boldsymbol{S}_{v,v'} = [k(\boldsymbol{\xi}_{\boldsymbol{\alpha}}^v, \boldsymbol{\xi}_{\boldsymbol{\beta}}^{v'})]_{\boldsymbol{\alpha}, \boldsymbol{\beta} \in \{ 1,...,n \}^d }.$$
Similarly, we approximate the matrix block
\begin{equation*}
\boldsymbol{K}^{\boldsymbol{\Sigma}}_{v,v'} 
:= [ (k, \sigma_{j,i}^v \otimes \sigma_{j',i'}^{v'} )_{\Omega \times \Omega} ]_{1 \leq i \leq | \boldsymbol{\Sigma}_j^v |, 1 \leq i' \leq | \boldsymbol{\Sigma}_{j'}^{v'} |}
\approx \boldsymbol{V}^{\boldsymbol{\Sigma}}_v \boldsymbol{S}_{v,v'} \left( \boldsymbol{V}^{\boldsymbol{\Sigma}}_{v'} \right)^T, \index{\textit{$\boldsymbol{K}^{\boldsymbol{\Sigma}}_{v,v'}$,}}
\end{equation*}
even though we only require this matrix block for recursive calculations. Using the same notations as in sections \ref{Chap. 4.2.2} and \ref{Chap. 4.2.4}, we can also express \index{\textit{$\boldsymbol{V}^{\boldsymbol{\Sigma}}_v$,}} 
$$\left( \boldsymbol{V}^{\boldsymbol{\Sigma}}_v \right)^T = [(\ell_{\boldsymbol{\alpha}}^v, \boldsymbol{\Sigma}_j^v )_\Omega]_{\boldsymbol{\alpha} \in \{ 1, ..., n\}^d} \quad \text{and} \quad
\left( \boldsymbol{V}^{\boldsymbol{\Phi}}_v \right)^T = [(\ell_{\boldsymbol{\alpha}}^v, \boldsymbol{\Phi}_j^v )_\Omega]_{\boldsymbol{\alpha} \in \{ 1, ..., n\}^d}.$$
The following lemma demonstrates, that this basis matrices can be calculated recursively.


\begin{lemma} \label{4.4.2}
For a child cluster $v_{\text{son}}$ of $v$ define $\boldsymbol{T}_{v, v_{\text{son}}} = [\ell_{\boldsymbol{\alpha}}^v(\boldsymbol{\xi}_{\boldsymbol{\beta}}^{v_{\text{son}}})]_{\boldsymbol{\beta}, \boldsymbol{\alpha} \in \{ 1, ..., n \}^d}$. Then it holds \index{\textit{$\boldsymbol{T}_{v, v_{\text{son}}}$,}}
$$\begin{bmatrix} \boldsymbol{V}^{\boldsymbol{\Phi}}_v \\ \boldsymbol{V}^{\boldsymbol{\Sigma}}_v \end{bmatrix}
= \left( \boldsymbol{Q}_j^v \right)^T \begin{bmatrix}
\boldsymbol{V}^{\boldsymbol{\Phi}}_{v_{\text{son} 1}} \cdot \boldsymbol{T}_{v, v_{\text{son} 1}} \\
\boldsymbol{V}^{\boldsymbol{\Phi}}_{v_{\text{son} 2}} \cdot \boldsymbol{T}_{v, v_{\text{son} 2}}
\end{bmatrix},$$
where $\boldsymbol{Q}_j^v$ is the transformation matrix from section \ref{Chap. 4.2.2}.
\end{lemma}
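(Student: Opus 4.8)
The plan is to exploit the two defining relations already in play: the refinement relation for the Lagrange basis polynomials and the samplet construction $[\boldsymbol{\Phi}_j^v \mid \boldsymbol{\Sigma}_j^v] = \boldsymbol{\Phi}_{j+1}^v \boldsymbol{Q}_j^v$ from Section \ref{Chap. 4.2.2}. First I would record the key polynomial identity: since $\ell_{\boldsymbol{\alpha}}^v$ is a tensor polynomial of coordinate-degree $n-1$ and the points $\{\boldsymbol{\xi}_{\boldsymbol{\beta}}^{v_{\text{son}}}\}$ are an interpolation grid of the same size, the interpolation of $\ell_{\boldsymbol{\alpha}}^v$ on the son grid reproduces it exactly, i.e.
\begin{equation*}
\ell_{\boldsymbol{\alpha}}^v(\boldsymbol{x}) = \sum_{\boldsymbol{\beta} \in \{1,\dots,n\}^d} \ell_{\boldsymbol{\alpha}}^v(\boldsymbol{\xi}_{\boldsymbol{\beta}}^{v_{\text{son}}}) \, \ell_{\boldsymbol{\beta}}^{v_{\text{son}}}(\boldsymbol{x}) \quad \text{for all } \boldsymbol{x}.
\end{equation*}
This is exactly the statement that $\boldsymbol{T}_{v,v_{\text{son}}}$ expresses the coarse Lagrange functions in the fine Lagrange basis of the son.

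Next I would compute a single entry of $\boldsymbol{V}_{v_{\text{son}}}^{\boldsymbol{\Phi}} \boldsymbol{T}_{v,v_{\text{son}}}$. Recall $\boldsymbol{V}_v^{\boldsymbol{\Phi}} = [(\ell_{\boldsymbol{\alpha}}^v, \varphi_{j,i}^v)_\Omega]_{i,\boldsymbol{\alpha}}$. Using the polynomial identity above, linearity of the point-evaluation functionals, and the fact that every scaling function $\varphi_{j+1,i}^v$ of a son cluster is supported on that son,
\begin{equation*}
(\ell_{\boldsymbol{\alpha}}^v, \varphi_{j+1,i}^{v_{\text{son}}})_\Omega = \sum_{\boldsymbol{\beta}} \ell_{\boldsymbol{\alpha}}^v(\boldsymbol{\xi}_{\boldsymbol{\beta}}^{v_{\text{son}}}) (\ell_{\boldsymbol{\beta}}^{v_{\text{son}}}, \varphi_{j+1,i}^{v_{\text{son}}})_\Omega,
\end{equation*}
so that $\boldsymbol{V}_{v_{\text{son}}}^{\boldsymbol{\Phi}} \boldsymbol{T}_{v,v_{\text{son}}}$ has rows $(\ell_{\boldsymbol{\alpha}}^v, \boldsymbol{\Phi}_{j+1}^{v_{\text{son}}})_\Omega$ re-expressed over the coarse index $\boldsymbol{\alpha}$. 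Stacking the two son blocks vertically then gives precisely $[(\ell_{\boldsymbol{\alpha}}^v, \boldsymbol{\Phi}_{j+1}^v)_\Omega]^T$, where $\boldsymbol{\Phi}_{j+1}^v = \boldsymbol{\Phi}_{j+1}^{v_{\text{son}1}} \cup \boldsymbol{\Phi}_{j+1}^{v_{\text{son}2}}$ is the concatenated set of son scaling functions used in the construction.

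Finally I would multiply by $(\boldsymbol{Q}_j^v)^T$ on the left. Since $[\boldsymbol{\Phi}_j^v \mid \boldsymbol{\Sigma}_j^v] = \boldsymbol{\Phi}_{j+1}^v \boldsymbol{Q}_j^v$, pairing against $\ell_{\boldsymbol{\alpha}}^v$ gives $(\ell_{\boldsymbol{\alpha}}^v, [\boldsymbol{\Phi}_j^v \mid \boldsymbol{\Sigma}_j^v])_\Omega = (\ell_{\boldsymbol{\alpha}}^v, \boldsymbol{\Phi}_{j+1}^v)_\Omega \boldsymbol{Q}_j^v$, which transposed reads
\begin{equation*}
\begin{bmatrix} (\ell_{\boldsymbol{\alpha}}^v, \boldsymbol{\Phi}_j^v)_\Omega^T \\ (\ell_{\boldsymbol{\alpha}}^v, \boldsymbol{\Sigma}_j^v)_\Omega^T \end{bmatrix}_{\boldsymbol{\alpha}} = (\boldsymbol{Q}_j^v)^T \, (\ell_{\boldsymbol{\alpha}}^v, \boldsymbol{\Phi}_{j+1}^v)_\Omega^T \big|_{\boldsymbol{\alpha}},
\end{equation*}
and the left-hand side is exactly $\big[\begin{smallmatrix}\boldsymbol{V}_v^{\boldsymbol{\Phi}}\\ \boldsymbol{V}_v^{\boldsymbol{\Sigma}}\end{smallmatrix}\big]$ by the definitions of $\boldsymbol{V}_v^{\boldsymbol{\Phi}}$ and $(\boldsymbol{V}_v^{\boldsymbol{\Sigma}})^T$. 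Combining the two displays yields the claim. The main obstacle is bookkeeping rather than mathematics: one must be careful that the transpose conventions in the definitions of $\boldsymbol{V}_v^{\boldsymbol{\Phi}}$ and $(\boldsymbol{V}_v^{\boldsymbol{\Sigma}})^T$ match up, that the ordering of son scaling functions in $\boldsymbol{\Phi}_{j+1}^v$ agrees with the block ordering of $\boldsymbol{Q}_j^v$, and — the one genuinely substantive point — that the exact reproduction identity for $\ell_{\boldsymbol{\alpha}}^v$ on the son interpolation nodes holds, which requires the son node set to have the same tensor size $n$ per coordinate as the parent's (so that polynomial interpolation is exact on the degree-$(n-1)$ tensor space). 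I would state that reproduction property explicitly as the first line of the proof and then let the rest follow by substitution.
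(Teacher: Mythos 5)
Your proposal is correct and follows essentially the same route as the paper's proof: the key step in both is the exact reproduction identity $\ell_{\boldsymbol{\alpha}}^v = \sum_{\boldsymbol{\beta}} \ell_{\boldsymbol{\alpha}}^v(\boldsymbol{\xi}_{\boldsymbol{\beta}}^{v_{\text{son}}})\, \ell_{\boldsymbol{\beta}}^{v_{\text{son}}}$ (which the paper justifies by showing the son's Lagrange polynomials form a basis of the $n^d$-dimensional tensor polynomial space, equivalent to your appeal to exactness of tensor interpolation), followed by pairing with $\boldsymbol{\Phi}_{j+1}^{v_{\text{son}}}$, stacking the two son blocks, and applying the refinement relation $[\boldsymbol{\Phi}_j^v \,|\, \boldsymbol{\Sigma}_j^v] = \boldsymbol{\Phi}_{j+1}^v \boldsymbol{Q}_j^v$. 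The bookkeeping caveats you flag (transpose conventions and block ordering) are exactly the points the paper handles implicitly, and your treatment of them is sound.
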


\begin{proof}
The Lagrange polynomials belong to the subspace
\begin{equation}
\text{span} \big\{ z_1^{\alpha_1} z_2^{\alpha_2} \cdot ... \cdot z_d^{\alpha_d} \ | \ \alpha_1, \alpha_2, ..., \alpha_d \in \{ 0, 1, ..., n-1 \} \big\} \subset \mathcal{P}_{nd} (\Omega) \label{eq. 4.4.1}
\end{equation}
which has dimension $n^d$. Using the identity
\begin{equation}
\ell_{\boldsymbol{\alpha}}^v(\boldsymbol{\xi}_{\boldsymbol{\beta}}^v) 
= \begin{cases}
  1, & \text{if } \boldsymbol{\alpha} = \boldsymbol{\beta} \\
  0, & \text{otherwise}
\end{cases} \label{eq. 4.4.2}
\end{equation}
it is straightforward to demonstrate that the Lagrange polynomials are linearly independent. Given that there are $n^d$ Lagrange polynomials, they form a basis for the subspace defined in  (\ref{eq. 4.4.1}). Therefore, any $\ell_{\boldsymbol{\alpha}}^v$ can be represented using the Lagrange polynomials $\ell_{\boldsymbol{\beta}}^{v_{\text{son}}}$ of one of its child clusters $v_{\text{son}}$. By utilizing equation (\ref{eq. 4.4.2}) for the child cluster $v_{\text{son}}$, the coefficients for this representation can be easily determined, leading to the expression
$$\ell_{\boldsymbol{\alpha}}^v (\boldsymbol{x}) = \sum_{\boldsymbol{\beta} \in \{ 1,...,n \}^d } \ell_{\boldsymbol{\alpha}}^v(\boldsymbol{\xi}_{\boldsymbol{\beta}}^{v_{\text{son}}}) \ell_{\boldsymbol{\beta}}^{v_{\text{son}}} (\boldsymbol{x}) \quad \text{for all } \boldsymbol{x} \in \Omega.$$
Specifically, it holds
\begin{align*}
[ (\ell_{\boldsymbol{\alpha}}^v, \boldsymbol{\Phi}_{j+1}^{v_{\text{son}}} )_\Omega ]_{\boldsymbol{\alpha} \in \{ 1, ..., n\}^d}
&= [\ell_{\boldsymbol{\alpha}}^v(\boldsymbol{\xi}_{\boldsymbol{\beta}}^{v_{\text{son}}})]_{\boldsymbol{\alpha}, \boldsymbol{\beta} \in \{ 1, ..., n \}^d} \cdot [(\ell_{\boldsymbol{\beta}}^{v_{\text{son}}}, \boldsymbol{\Phi}_{j+1}^{v_{\text{son}}} )_\Omega]_{\boldsymbol{\beta} \in \{ 1, ..., n\}^d} \\
&= \left( \boldsymbol{V}^{\boldsymbol{\Phi}}_{v_{\text{son}}} \cdot \boldsymbol{T}_{v, v_{\text{son}}} \right)^T.
\end{align*}
Utilizing this equality and equation (\ref{eq. 4.2.12}), we get
\begin{align*}
\begin{bmatrix} \boldsymbol{V}^{\boldsymbol{\Phi}}_v \\ \boldsymbol{V}^{\boldsymbol{\Sigma}}_v \end{bmatrix}^T
= \left[ \left( \boldsymbol{V}^{\boldsymbol{\Phi}}_v \right)^T \big| \left( \boldsymbol{V}^{\boldsymbol{\Sigma}}_v \right)^T \right] 
&= [ (\ell_{\boldsymbol{\alpha}}^v, \boldsymbol{\Phi}_j^v )_\Omega \, | \, (\ell_{\boldsymbol{\alpha}}^v, \boldsymbol{\Sigma}_j^v )_\Omega]_{\boldsymbol{\alpha} \in \{ 1, ..., n\}^d} \\
&= [(\ell_{\boldsymbol{\alpha}}^v, \boldsymbol{\Phi}_{j+1}^v )_\Omega]_{\boldsymbol{\alpha} \in \{ 1, ..., n\}^d} \boldsymbol{Q}_j^v \\
&= [(\ell_{\boldsymbol{\alpha}}^v, \boldsymbol{\Phi}_{j+1}^{v_{\text{son} 1}} )_\Omega \, | \, (\ell_{\boldsymbol{\alpha}}^v, \boldsymbol{\Phi}_{j+1}^{v_{\text{son} 2}} )_\Omega]_{\boldsymbol{\alpha} \in \{ 1, ..., n\}^d} \boldsymbol{Q}_j^v \\
&= \left[ \left( \boldsymbol{V}^{\boldsymbol{\Phi}}_{v_{\text{son} 1}} \cdot \boldsymbol{T}_{v, v_{\text{son} 1}} \right)^T \big| \left( \boldsymbol{V}^{\boldsymbol{\Phi}}_{v_{\text{son} 2}} \cdot \boldsymbol{T}_{v, v_{\text{son} 2}} \right)^T \right] \boldsymbol{Q}_j^v \\
&= \begin{bmatrix}
\boldsymbol{V}^{\boldsymbol{\Phi}}_{v_{\text{son} 1}} \cdot \boldsymbol{T}_{v, v_{\text{son} 1}} \\
\boldsymbol{V}^{\boldsymbol{\Phi}}_{v_{\text{son} 2}} \cdot \boldsymbol{T}_{v, v_{\text{son} 2}}
\end{bmatrix}^T \boldsymbol{Q}_j^v. \qedhere
\end{align*}
\end{proof}

\noindent
The procedure to compute the basis matrices is formulated in Algorithm \ref{alg. 6}. Define
\begin{equation*}
\boldsymbol{K}^{\boldsymbol{\Phi}, \boldsymbol{\Sigma}}_{v,v'} = (k, \boldsymbol{\Phi}_{j}^v \otimes \boldsymbol{\Sigma}_{j'}^{v'})_{\Omega \times \Omega} = [ (k, \varphi_{j,i}^v \otimes \sigma_{j',i'}^{v'} )_{\Omega \times \Omega} ]_{1 \leq i \leq | \boldsymbol{\Phi}_j^v |, 1 \leq i' \leq | \boldsymbol{\Sigma}_{j'}^{v'} |}, \index{\textit{$(k, \boldsymbol{\Phi}_{j}^v \otimes \boldsymbol{\Sigma}_{j'}^{v'})_{\Omega \times \Omega}$,}}
\end{equation*}
then for an admissible pair $(v,v')$, we are now able to approximate
$$\begin{bmatrix} \boldsymbol{K}^{\boldsymbol{\Phi}, \boldsymbol{\Phi}}_{v,v'} & \boldsymbol{K}^{\boldsymbol{\Phi}, \boldsymbol{\Sigma}}_{v,v'} 
\\ \boldsymbol{K}^{\boldsymbol{\Sigma}, \boldsymbol{\Phi}}_{v,v'} & \boldsymbol{K}^{\boldsymbol{\Sigma}, \boldsymbol{\Sigma}}_{v,v'} \end{bmatrix}
\approx \begin{bmatrix} \boldsymbol{V}^{\boldsymbol{\Phi}}_v \\ \boldsymbol{V}^{\boldsymbol{\Sigma}}_v\end{bmatrix} \boldsymbol{S}_{v,v'} \begin{bmatrix} \left( \boldsymbol{V}^{\boldsymbol{\Phi}}_{v'} \right)^T & \left( \boldsymbol{V}^{\boldsymbol{\Sigma}}_{v'} \right)^T \end{bmatrix}.$$ 
To compute the non-admissible matrix blocks, we use the refinement relations stated in the next lemma.


\begin{algorithm}[h]
\DontPrintSemicolon
\caption{Multiscale cluster basis matrices}\label{alg. 6}
\KwData{Cluster tree $\mathcal{T}$, cluster $v$, $(\boldsymbol{Q}_j^v)_{v \in P \backslash \mathcal{L(T)}}$, $(\boldsymbol{V}_v^{\boldsymbol{\Phi}})_{v \in \mathcal{L(T)}}$, $(\boldsymbol{T}_{v, v_{\text{son}}})_{v \in P \backslash \mathcal{L(T)}}$}
\KwResult{Multiscale cluster basis matrices $(\boldsymbol{V}_v^{\boldsymbol{\Phi}})_{v \in P}$ and $(\boldsymbol{V}_v^{\boldsymbol{\Sigma}})_{v \in P \backslash \mathcal{L(T)}}$}
\SetKwFunction{FMain}{computeMultiscaleClusterBasis}
\SetKwProg{Fn}{Function}{:}{}
\Fn{\FMain{$v$}}{
	\If{children $v_{\text{son} 1}, v_{\text{son} 2}$ of $v$ aren't leaf clusters }{
	$\boldsymbol{V}^{\boldsymbol{\Phi}}_{v_{\text{son} 1}} = 
	\texttt{computeMultiscaleClusterBasis}(v_{\text{son} 1})$\;
	$\boldsymbol{V}^{\boldsymbol{\Phi}}_{v_{\text{son} 2}} = 
	\texttt{computeMultiscaleClusterBasis}(v_{\text{son} 2})$\;
	}
	$\begin{bmatrix} \boldsymbol{V}^{\boldsymbol{\Phi}}_v \\ \boldsymbol{V}^{\boldsymbol{\Sigma}}_v \end{bmatrix}
= \left( \boldsymbol{Q}_j^v \right)^T \begin{bmatrix}
\boldsymbol{V}^{\boldsymbol{\Phi}}_{v_{\text{son} 1}} \cdot \boldsymbol{T}_{v, v_{\text{son} 1}} \\
\boldsymbol{V}^{\boldsymbol{\Phi}}_{v_{\text{son} 2}} \cdot \boldsymbol{T}_{v, v_{\text{son} 2}}
\end{bmatrix}$\;
	store $(\boldsymbol{V}^{\boldsymbol{\Phi}}_v, \boldsymbol{V}^{\boldsymbol{\Sigma}}_v)$\;
  	\KwRet $\boldsymbol{V}^{\boldsymbol{\Phi}}_v$\;
}
\textbf{end}\;
$\texttt{computeMultiscaleClusterBasis}(X)$\;
\end{algorithm}

\begin{lemma} \label{4.4.3}
We have
\begin{align*}
\begin{bmatrix} \boldsymbol{K}^{\boldsymbol{\Phi}, \boldsymbol{\Phi}}_{v,v'_{\text{son} 1}} & \boldsymbol{K}^{\boldsymbol{\Phi}, \boldsymbol{\Phi}}_{v,v'_{\text{son} 2}} \\ 
\boldsymbol{K}^{\boldsymbol{\Sigma}, \boldsymbol{\Phi}}_{v,v'_{\text{son} 1}} & \boldsymbol{K}^{\boldsymbol{\Sigma}, \boldsymbol{\Phi}}_{v,v'_{\text{son} 2}} \end{bmatrix}
\boldsymbol{Q}_{j'}^{v'}
= \begin{bmatrix} \boldsymbol{K}^{\boldsymbol{\Phi}, \boldsymbol{\Phi}}_{v,v'} & \boldsymbol{K}^{\boldsymbol{\Phi}, \boldsymbol{\Sigma}}_{v,v'} 
\\ \boldsymbol{K}^{\boldsymbol{\Sigma}, \boldsymbol{\Phi}}_{v,v'} & \boldsymbol{K}^{\boldsymbol{\Sigma}, \boldsymbol{\Sigma}}_{v,v'} \end{bmatrix}
= \left( \boldsymbol{Q}_j^v \right)^T \begin{bmatrix} 
\boldsymbol{K}^{\boldsymbol{\Phi}, \boldsymbol{\Phi}}_{v_{\text{son} 1},v'} & \boldsymbol{K}^{\boldsymbol{\Phi}, \boldsymbol{\Sigma}}_{v_{\text{son} 1},v'} \\ 
\boldsymbol{K}^{\boldsymbol{\Phi}, \boldsymbol{\Phi}}_{v_{\text{son} 2},v'} & \boldsymbol{K}^{\boldsymbol{\Phi}, \boldsymbol{\Sigma}}_{v_{\text{son} 2},v'} \end{bmatrix}.
\end{align*}
\end{lemma}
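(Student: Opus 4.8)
The statement is a pair of block-matrix identities, each saying that the matrix blocks indexed by $(\boldsymbol{\Phi}_j^v,\boldsymbol{\Sigma}_j^v)$-functions against $(\boldsymbol{\Phi}_{j'}^{v'},\boldsymbol{\Sigma}_{j'}^{v'})$-functions are obtained from the blocks indexed by the child scaling functions via multiplication by the transformation matrices $\boldsymbol{Q}_j^v$ and $\boldsymbol{Q}_{j'}^{v'}$. The whole thing is just the bilinearity of $(k,\,\cdot\,\otimes\,\cdot\,)_{\Omega\times\Omega}$ combined with the defining relation $[\boldsymbol{\Phi}_j^v\mid\boldsymbol{\Sigma}_j^v]=\boldsymbol{\Phi}_{j+1}^v\boldsymbol{Q}_j^v$ from Section \ref{Chap. 4.2.2} (equation (\ref{eq. 4.2.12})). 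So the plan is: reduce everything to one scalar computation, then read off the matrix form.

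First I would note that all four sub-blocks on each side assemble into a single matrix, namely
$$\begin{bmatrix} \boldsymbol{K}^{\boldsymbol{\Phi}, \boldsymbol{\Phi}}_{v,v'} & \boldsymbol{K}^{\boldsymbol{\Phi}, \boldsymbol{\Sigma}}_{v,v'} \\ \boldsymbol{K}^{\boldsymbol{\Sigma}, \boldsymbol{\Phi}}_{v,v'} & \boldsymbol{K}^{\boldsymbol{\Sigma}, \boldsymbol{\Sigma}}_{v,v'} \end{bmatrix}
= \big(k,\,[\boldsymbol{\Phi}_j^v\mid\boldsymbol{\Sigma}_j^v]^T\otimes[\boldsymbol{\Phi}_{j'}^{v'}\mid\boldsymbol{\Sigma}_{j'}^{v'}]\big)_{\Omega\times\Omega},$$
where the outer product notation is interpreted entrywise, i.e. the $(m,m')$ entry is $(k,\,\psi_m\otimes\psi'_{m'})_{\Omega\times\Omega}$ with $\psi_m$ ranging over the row matrix $[\boldsymbol{\Phi}_j^v\mid\boldsymbol{\Sigma}_j^v]$ and $\psi'_{m'}$ over $[\boldsymbol{\Phi}_{j'}^{v'}\mid\boldsymbol{\Sigma}_{j'}^{v'}]$. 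Likewise the left-hand factor is $(k,\,(\boldsymbol{\Phi}_{j+1}^v)^T\otimes[\boldsymbol{\Phi}_{j'}^{v'}\mid\boldsymbol{\Sigma}_{j'}^{v'}])_{\Omega\times\Omega}$ with $\boldsymbol{\Phi}_{j+1}^v=[\boldsymbol{\Phi}_{j+1}^{v_{\text{son}1}}\mid\boldsymbol{\Phi}_{j+1}^{v_{\text{son}2}}]$ in block form (the middle matrix in the claim), and similarly for the other side. Then the identity to prove is exactly that for any row matrices $\boldsymbol{A}=\boldsymbol{B}\boldsymbol{Q}$ of functions in $V$ one has $(k,\boldsymbol{A}^T\otimes\boldsymbol{C})_{\Omega\times\Omega}=\boldsymbol{Q}^T(k,\boldsymbol{B}^T\otimes\boldsymbol{C})_{\Omega\times\Omega}$ and symmetrically $(k,\boldsymbol{C}^T\otimes\boldsymbol{A})_{\Omega\times\Omega}=(k,\boldsymbol{C}^T\otimes\boldsymbol{B})_{\Omega\times\Omega}\boldsymbol{Q}$, which follows because $(k,\,\cdot\,\otimes\,\cdot\,)_{\Omega\times\Omega}$ is bilinear in its two function arguments (writing each function in the $\delta_{\boldsymbol{x}_i}$ basis and using (\ref{eq. 4.3.14}), exactly as in the proof of Lemma \ref{4.3.3}). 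Applying this to the left equality with $\boldsymbol{Q}_{j'}^{v'}$ acting on the right factor and to the right equality with $(\boldsymbol{Q}_j^v)^T$ acting on the left factor gives the two asserted identities. The middle and outer expressions are literally the same matrix by the definitions of the $\boldsymbol{K}^{*,*}_{v,v'}$ blocks, so no computation beyond unwinding notation remains.

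The only mildly delicate point — and the step I expect to require the most care in writing — is bookkeeping the block structure consistently: making sure that the partition of $\boldsymbol{\Phi}_{j+1}^v$ into contributions from $v_{\text{son}1}$ and $v_{\text{son}2}$ lines up with the rows of $\boldsymbol{Q}_j^v$ in the way (\ref{eq. 4.2.12}) prescribes, and that on the $v'$ side the columns of $\boldsymbol{Q}_{j'}^{v'}$ split the $(\boldsymbol{\Phi},\boldsymbol{\Sigma})$ columns correctly. Once the notational identifications are pinned down the argument is a two-line appeal to bilinearity; there is no analytic content. I would therefore present it as: (1) rewrite both sides of each identity as $(k,\,\cdot\,\otimes\,\cdot\,)_{\Omega\times\Omega}$ of combined row matrices; (2) invoke bilinearity in the form above, referencing the computation in Lemma \ref{4.3.3} and relation (\ref{eq. 4.2.12}); (3) observe the resulting expressions coincide with the claimed blocks. $\qed$
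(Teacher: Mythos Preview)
Your proposal is correct and follows essentially the same route as the paper: both arguments combine the defining relation $[\boldsymbol{\Phi}_j^v\mid\boldsymbol{\Sigma}_j^v]=\boldsymbol{\Phi}_{j+1}^v\boldsymbol{Q}_j^v$ with bilinearity of $(k,\,\cdot\,\otimes\,\cdot\,)_{\Omega\times\Omega}$ to pull out $\boldsymbol{Q}_{j'}^{v'}$ on the right and $(\boldsymbol{Q}_j^v)^T$ on the left, and then identify the remaining factor with the child-cluster blocks via $\boldsymbol{\Phi}_{j+1}^v=[\boldsymbol{\Phi}_{j+1}^{v_{\text{son}1}}\mid\boldsymbol{\Phi}_{j+1}^{v_{\text{son}2}}]$. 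The only cosmetic difference is that the paper writes out each row/column block separately whereas you package everything into one bilinear identity; the content is identical.
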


\begin{proof}
It is
\begin{align*}
\begin{bmatrix} \boldsymbol{K}^{\boldsymbol{\Phi}, \boldsymbol{\Phi}}_{v,v'} & \boldsymbol{K}^{\boldsymbol{\Phi}, \boldsymbol{\Sigma}}_{v,v'} \end{bmatrix}
&= \begin{bmatrix} (k, \boldsymbol{\Phi}_{j}^v \otimes \boldsymbol{\Phi}_{j'}^{v'})_{\Omega \times \Omega} & (k, \boldsymbol{\Phi}_{j}^v \otimes \boldsymbol{\Sigma}_{j'}^{v'})_{\Omega \times \Omega} \end{bmatrix} \\
&= \left(k, \boldsymbol{\Phi}_{j}^v \otimes [ \boldsymbol{\Phi}_{j'}^{v'} \, | \, \boldsymbol{\Sigma}_{j'}^{v'} ] \right)_{\Omega \times \Omega}
= \left(k, \boldsymbol{\Phi}_{j}^v \otimes ( \boldsymbol{\Phi}_{j'+1}^{v'} \boldsymbol{Q}_{j'}^{v'} ) \right)_{\Omega \times \Omega} \\
&= \left(k, \boldsymbol{\Phi}_{j}^v \otimes  \boldsymbol{\Phi}_{j'+1}^{v'} \right)_{\Omega \times \Omega} \boldsymbol{Q}_{j'}^{v'}
= \left(k, \boldsymbol{\Phi}_{j}^v \otimes \left[ \boldsymbol{\Phi}_{j'+1}^{v'_{\text{son} 1}} \, | \, \boldsymbol{\Phi}_{j'+1}^{v'_{\text{son} 2}} \right] \right)_{\Omega \times \Omega} \boldsymbol{Q}_{j'}^{v'} \\
&= \begin{bmatrix} (k, \boldsymbol{\Phi}_{j}^v \otimes \boldsymbol{\Phi}_{j'+1}^{v'_{\text{son} 1}})_{\Omega \times \Omega} & (k, \boldsymbol{\Phi}_{j}^v \otimes \boldsymbol{\Phi}_{j'+1}^{v'_{\text{son} 2}})_{\Omega \times \Omega} \end{bmatrix} \boldsymbol{Q}_{j'}^{v'} \\
&= \begin{bmatrix} \boldsymbol{K}^{\boldsymbol{\Phi}, \boldsymbol{\Phi}}_{v,v'_{\text{son} 1}} & \boldsymbol{K}^{\boldsymbol{\Phi}, \boldsymbol{\Phi}}_{v,v'_{\text{son} 2}} \end{bmatrix} \boldsymbol{Q}_{j'}^{v'},
\end{align*}
where the fourth equality can be verified through direct calculation. Similarly, we get
$$\begin{bmatrix} \boldsymbol{K}^{\boldsymbol{\Sigma}, \boldsymbol{\Phi}}_{v,v'} & \boldsymbol{K}^{\boldsymbol{\Sigma}, \boldsymbol{\Sigma}}_{v,v'} \end{bmatrix}
= \begin{bmatrix} \boldsymbol{K}^{\boldsymbol{\Sigma}, \boldsymbol{\Phi}}_{v,v'_{\text{son} 1}} & \boldsymbol{K}^{\boldsymbol{\Sigma}, \boldsymbol{\Phi}}_{v,v'_{\text{son} 2}} \end{bmatrix} \boldsymbol{Q}_{j'}^{v'}.$$
Also is
\begin{align*}
\begin{bmatrix} \boldsymbol{K}^{\boldsymbol{\Phi}, \boldsymbol{\Phi}}_{v,v'} \\ \boldsymbol{K}^{\boldsymbol{\Sigma}, \boldsymbol{\Phi}}_{v,v'} \end{bmatrix}
&= \begin{bmatrix} (k, \boldsymbol{\Phi}_{j}^v \otimes \boldsymbol{\Phi}_{j'}^{v'})_{\Omega \times \Omega} \\ (k, \boldsymbol{\Sigma}_{j}^v \otimes \boldsymbol{\Phi}_{j'}^{v'})_{\Omega \times \Omega} \end{bmatrix}
= \left(k, [ \boldsymbol{\Phi}_{j}^{v} \, | \, \boldsymbol{\Sigma}_{j}^{v} ] \otimes \boldsymbol{\Phi}_{j'}^{v'} \right)_{\Omega \times \Omega}
= \left(k, ( \boldsymbol{\Phi}_{j+1}^{v} \boldsymbol{Q}_j^v ) \otimes  \boldsymbol{\Phi}_{j'+1}^{v'} \right)_{\Omega \times \Omega} \\
&= \left( \boldsymbol{Q}_j^v \right)^T \left(k, \boldsymbol{\Phi}_{j+1}^{v} \otimes  \boldsymbol{\Phi}_{j'+1}^{v'} \right)_{\Omega \times \Omega}
= \left( \boldsymbol{Q}_j^v \right)^T \left(k, [ \boldsymbol{\Phi}_{j+1}^{v_{\text{son} 1}} \, | \, \boldsymbol{\Phi}_{j+1}^{v_{\text{son} 2}} ] \otimes  \boldsymbol{\Phi}_{j'+1}^{v'} \right)_{\Omega \times \Omega} \\
&= \left( \boldsymbol{Q}_j^v \right)^T \begin{bmatrix} (k, \boldsymbol{\Phi}_{j+1}^{v_{\text{son} 1}} \otimes \boldsymbol{\Phi}_{j'}^{v'})_{\Omega \times \Omega} \\ (k, \boldsymbol{\Phi}_{j+1}^{v_{\text{son} 2}} \otimes \boldsymbol{\Phi}_{j'}^{v'})_{\Omega \times \Omega} \end{bmatrix}
= \left( \boldsymbol{Q}_j^v \right)^T \begin{bmatrix} \boldsymbol{K}^{\boldsymbol{\Phi}, \boldsymbol{\Phi}}_{v_{\text{son} 1} ,v'} \\ \boldsymbol{K}^{\boldsymbol{\Phi}, \boldsymbol{\Phi}}_{v_{\text{son} 2} ,v'} \end{bmatrix},
\end{align*}
where the fourth equality can be verified through direct calculation. Similarly, we get
\[ \pushQED{\qed} 
\begin{bmatrix} \boldsymbol{K}^{\boldsymbol{\Phi}, \boldsymbol{\Sigma}}_{v,v'} \\ \boldsymbol{K}^{\boldsymbol{\Sigma}, \boldsymbol{\Sigma}}_{v,v'} \end{bmatrix}
= \left( \boldsymbol{Q}_j^v \right)^T \begin{bmatrix} \boldsymbol{K}^{\boldsymbol{\Phi}, \boldsymbol{\Sigma}}_{v_{\text{son} 1} ,v'} \\ \boldsymbol{K}^{\boldsymbol{\Phi}, \boldsymbol{\Sigma}}_{v_{\text{son} 2} ,v'} \end{bmatrix}. \qedhere \]
\end{proof}


Algorithm \Ref{alg. 7} outlines the recursive process for calculating a matrix block. It can be applied to compute $\boldsymbol{K}^{\Sigma}_{\eta}$, as described in \cite{HM1}. However, to avoid nested functions, I formulate the computation of $\boldsymbol{K}^{\Sigma}_{\eta}$ without using $\texttt{recursivelyDetermineBlock}(v,v')$.

\begin{algorithm}[H]
\DontPrintSemicolon
\caption{Compute matrix block}\label{alg. 7}
\KwData{Cluster tree $\mathcal{T}$, $(\boldsymbol{Q}_j^v)_{v \in P \backslash \mathcal{L(T)}}$, $(\boldsymbol{V}_v^{\boldsymbol{\Phi}})_{v \in P}$, $(\boldsymbol{V}_v^{\boldsymbol{\Sigma}})_{v \in P \backslash \mathcal{L(T)}}$, $\{ \boldsymbol{\xi}_1^v, ... , \boldsymbol{\xi}_n^v \}_{v \in P}$}
\KwResult{Matrix block $\begin{bmatrix} \boldsymbol{K}^{\boldsymbol{\Phi}, \boldsymbol{\Phi}}_{v,v'} & \boldsymbol{K}^{\boldsymbol{\Phi}, \boldsymbol{\Sigma}}_{v,v'} 
\\ \boldsymbol{K}^{\boldsymbol{\Sigma}, \boldsymbol{\Phi}}_{v,v'} & \boldsymbol{K}^{\boldsymbol{\Sigma}, \boldsymbol{\Sigma}}_{v,v'} \end{bmatrix}$ for a pair of clusters $(v,v')$}
\SetKwFunction{FMain}{recursivelyDetermineBlock}
\SetKwProg{Fn}{Function}{:}{}
\Fn{\FMain{$v,v'$}}{
	\uIf{$(v,v')$ is admissible }{
	$\boldsymbol{S}_{v,v'} = [k(\boldsymbol{\xi}_{\boldsymbol{\alpha}}^v, \boldsymbol{\xi}
_{\boldsymbol{\beta}}^{v'})]_{\boldsymbol{\alpha}, \boldsymbol{\beta} \in \{ 1,...,n \}^d }$\;
	\KwRet $\begin{bmatrix} \boldsymbol{V}^{\boldsymbol{\Phi}}_v \\ \boldsymbol{V}^{\boldsymbol{\Sigma}}_v\end{bmatrix} \boldsymbol{S}_{v,v'} \begin{bmatrix} \left( \boldsymbol{V}^{\boldsymbol{\Phi}}_{v'} \right)^T & \left( \boldsymbol{V}^{\boldsymbol{\Sigma}}_{v'} \right)^T \end{bmatrix}$\;
	} \uElseIf{$v,v'$ are leaf clusters}{
	\KwRet $\boldsymbol{K}_{v,v'}^{\boldsymbol{\Phi}, \boldsymbol{\Phi}} = [ (k, \delta_{\boldsymbol{x}} \otimes \delta_{\boldsymbol{x}'} )_{\Omega \times \Omega} ]_{\boldsymbol{x} \in v, \boldsymbol{x}' \in v'}$\;
	}
	\uElseIf{$v$ is a leaf cluster}{
	$\begin{bmatrix} \boldsymbol{K}^{\boldsymbol{\Phi}, \boldsymbol{\Phi}}_{v,v'_{\text{son} 1}} & \boldsymbol{K}^{\boldsymbol{\Phi}, \boldsymbol{\Sigma}}_{v,v'_{\text{son} 1}} \\ \boldsymbol{K}^{\boldsymbol{\Sigma}, \boldsymbol{\Phi}}_{v,v'_{\text{son} 1}} & \boldsymbol{K}^{\boldsymbol{\Sigma}, \boldsymbol{\Sigma}}_{v,v'_{\text{son} 1}} \end{bmatrix} = \texttt{recursivelyDetermineBlock}(v,v'_{\text{son} 1})$\;
	$\begin{bmatrix} \boldsymbol{K}^{\boldsymbol{\Phi}, \boldsymbol{\Phi}}_{v,v'_{\text{son} 2}} & \boldsymbol{K}^{\boldsymbol{\Phi}, \boldsymbol{\Sigma}}_{v,v'_{\text{son} 2}} \\ \boldsymbol{K}^{\boldsymbol{\Sigma}, \boldsymbol{\Phi}}_{v,v'_{\text{son} 2}} & \boldsymbol{K}^{\boldsymbol{\Sigma}, \boldsymbol{\Sigma}}_{v,v'_{\text{son} 2}} \end{bmatrix} = \texttt{recursivelyDetermineBlock}(v,v'_{\text{son} 2})$\;
	\KwRet $\begin{bmatrix} \boldsymbol{K}^{\boldsymbol{\Phi}, \boldsymbol{\Phi}}_{v,v'_{\text{son} 1}} & \boldsymbol{K}^{\boldsymbol{\Phi}, \boldsymbol{\Phi}}_{v,v'_{\text{son} 2}} \\ \boldsymbol{K}^{\boldsymbol{\Sigma}, \boldsymbol{\Phi}}_{v,v'_{\text{son} 1}} & \boldsymbol{K}^{\boldsymbol{\Sigma}, \boldsymbol{\Phi}}_{v,v'_{\text{son} 2}} \end{bmatrix}
\boldsymbol{Q}_{j'}^{v'}$\;
	}
	\uElse{
	$\begin{bmatrix} \boldsymbol{K}^{\boldsymbol{\Phi}, \boldsymbol{\Phi}}_{v_{\text{son} 1}, v'} & \boldsymbol{K}^{\boldsymbol{\Phi}, \boldsymbol{\Sigma}}_{v_{\text{son} 1}, v'} \\ \boldsymbol{K}^{\boldsymbol{\Sigma}, \boldsymbol{\Phi}}_{v_{\text{son} 1}, v'} & \boldsymbol{K}^{\boldsymbol{\Sigma}, \boldsymbol{\Sigma}}_{v_{\text{son} 1}, v'} \end{bmatrix} = \texttt{recursivelyDetermineBlock}(v_{\text{son} 1}, v')$\;
	$\begin{bmatrix} \boldsymbol{K}^{\boldsymbol{\Phi}, \boldsymbol{\Phi}}_{v_{\text{son} 2}, v'} & \boldsymbol{K}^{\boldsymbol{\Phi}, \boldsymbol{\Sigma}}_{v_{\text{son} 2}, v'} \\ \boldsymbol{K}^{\boldsymbol{\Sigma}, \boldsymbol{\Phi}}_{v_{\text{son} 2}, v'} & \boldsymbol{K}^{\boldsymbol{\Sigma}, \boldsymbol{\Sigma}}_{v_{\text{son} 2}, v'} \end{bmatrix} = \texttt{recursivelyDetermineBlock}(v_{\text{son} 2}, v')$\;
	\KwRet $\left( \boldsymbol{Q}_j^v \right)^T \begin{bmatrix} 
\boldsymbol{K}^{\boldsymbol{\Phi}, \boldsymbol{\Phi}}_{v_{\text{son} 1},v'} & \boldsymbol{K}^{\boldsymbol{\Phi}, \boldsymbol{\Sigma}}_{v_{\text{son} 1},v'} \\ 
\boldsymbol{K}^{\boldsymbol{\Phi}, \boldsymbol{\Phi}}_{v_{\text{son} 2},v'} & \boldsymbol{K}^{\boldsymbol{\Phi}, \boldsymbol{\Sigma}}_{v_{\text{son} 2},v'} \end{bmatrix}$\;
	}
	\textbf{end}\;
}
\textbf{end}\;
\end{algorithm}

Algorithm \ref{alg. 8} finalizes the computation of $\boldsymbol{K}^{\Sigma}_{\eta}$ by sequentially calculating each column, starting with the last column and moving from bottom to top, then continuing in the same manner for the second-to-last column, and so forth, until the first column is finished. During the computation of a column, admissible blocks are skipped, and previously computed blocks to the right and below are reused. The admissible matrix blocks are calculated only if necessary. The coupling matrices\index{Coupling matrices,} $\boldsymbol{S}_{v,v'}$ are calculated on the fly as in Algorithm \ref{alg. 7}, but for the sake of brevity I have omitted this detail in the notation of Algorithm \ref{alg. 8}. This algorithm enables efficient computation of $\boldsymbol{K}^{\Sigma}_{\eta}$ while minimizing storage requirements.

\begin{proposition} \label{4.4.4}
Let $(X_N)_{N\in \mathbb{N}}$ be a sequence of quasi-uniform points in $\Omega$. The computational workload of computing $\boldsymbol{K}^{\Sigma}_{\eta}$ using Algorithms \ref{alg. 6} and \Ref{alg. 8} is $\mathcal{O}(N \log N)$, with the storage requirements also being $\mathcal{O}(N \log N)$.
\end{proposition}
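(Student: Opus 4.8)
The plan is to argue in two parts, first bounding the cost of Algorithm~\ref{alg. 6} (computing the multiscale cluster basis matrices $\boldsymbol{V}_v^{\boldsymbol{\Phi}}, \boldsymbol{V}_v^{\boldsymbol{\Sigma}}$) and then the cost of Algorithm~\ref{alg. 8} (assembling $\boldsymbol{K}^{\boldsymbol{\Sigma}}_\eta$ block by block), and finally the storage. For the first part, observe that by Lemma~\ref{4.2.9} every cluster carries at most $2m_q$ scaling functions and at most $m_q$ samplets, and the interpolation order $n$ is a fixed constant, so each of the matrices $\boldsymbol{V}_v^{\boldsymbol{\Phi}}, \boldsymbol{V}_v^{\boldsymbol{\Sigma}}, \boldsymbol{T}_{v,v_{\text{son}}}, \boldsymbol{Q}_j^v$ has size bounded by a constant depending only on $q$, $d$ and $n$. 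Hence the recursion in Lemma~\ref{4.4.2} costs $\mathcal{O}(1)$ per cluster, and since a balanced binary tree of $X$ has $\mathcal{O}(N)$ clusters (as established in the proof of Proposition~\ref{4.2.12}), Algorithm~\ref{alg. 6} runs in $\mathcal{O}(N)$ time and storage.

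For the second part, the key observation is that Algorithm~\ref{alg. 8} only ever computes blocks $(v,v')$ that are \emph{not} admissible in the sense of~(\ref{eq. 4.3.15}): admissible blocks are skipped (their contribution to $\boldsymbol{K}^{\boldsymbol{\Sigma}}_\eta$ is zeroed), and the structure of Algorithm~\ref{alg. 7}/Lemma~\ref{4.4.3} guarantees that once a pair becomes admissible we never descend into it. By Lemma~\ref{4.4.1}, admissibility is inherited by descendants, so the set of non-admissible pairs that are actually touched is exactly the set counted in the proof of Proposition~\ref{4.3.8}: for quasi-uniform points there are $\mathcal{O}(N\log N)$ such pairs. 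Each such block has constant size by Lemma~\ref{4.2.9} and is obtained either directly (leaf--leaf, $\mathcal{O}(1)$ kernel evaluations), by one interpolation step with the constant-size coupling matrix $\boldsymbol{S}_{v,v'}$, or by one application of a refinement relation from Lemma~\ref{4.4.3}, each of which is $\mathcal{O}(1)$ work. Summing over the $\mathcal{O}(N\log N)$ touched pairs gives the claimed $\mathcal{O}(N\log N)$ time bound; the stored matrix $\boldsymbol{K}^{\boldsymbol{\Sigma}}_\eta$ has $\mathcal{O}(N\log N)$ nonzero entries by Proposition~\ref{4.3.8}, and the auxiliary data ($\boldsymbol{Q}_j^v$, $\boldsymbol{V}_v^{\boldsymbol{\Phi}}$, $\boldsymbol{V}_v^{\boldsymbol{\Sigma}}$, $\boldsymbol{T}_{v,v_{\text{son}}}$, interpolation points) is $\mathcal{O}(N)$, so the overall storage is $\mathcal{O}(N\log N)$.

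The main obstacle I expect is the bookkeeping in Algorithm~\ref{alg. 8}: one must verify that the column-by-column sweep with reuse of already-computed blocks to the right and below does not recompute blocks and does not visit more than $\mathcal{O}(N\log N)$ blocks in total --- in particular that descending into a non-admissible pair only generates child pairs that are themselves among the $\mathcal{O}(N\log N)$ pairs counted by the geometric argument behind Proposition~\ref{4.3.8}. The clean way to handle this is to decouple the counting from the algorithmic details: define the set $\mathcal{B}$ of pairs $(v,v')$ such that $(v,v')$ is non-admissible but the parent pair is admissible (or $(v,v')$ is the root pair), show $|\mathcal{B}| = \mathcal{O}(N\log N)$ by the same fans-of-neighbours estimate used in Proposition~\ref{4.3.8}, note that the blocks actually assembled form a subtree of the block cluster tree whose leaves lie in $\mathcal{B}$ and whose internal nodes number at most a constant times $|\mathcal{B}|$, and invoke the constant per-block cost. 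The remaining points --- that Lemma~\ref{4.4.1} makes the recursion well-defined and that reuse prevents duplication --- are then immediate from the refinement identities in Lemmas~\ref{4.4.2} and~\ref{4.4.3}.
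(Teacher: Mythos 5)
Your proposal is correct and follows essentially the same route as the paper's proof: constant work per cluster for Algorithm~\ref{alg. 6} over the $\mathcal{O}(N)$ clusters, then $\mathcal{O}(N\log N)$ non-admissible blocks (counted via Proposition~\ref{4.3.8} and pruned via Lemma~\ref{4.4.1}), each assembled at constant cost with coupling matrices computed on the fly, and storage bounded by the nonzero count of $\boldsymbol{K}^{\boldsymbol{\Sigma}}_\eta$ plus $\mathcal{O}(N)$ auxiliary data. Your explicit treatment of the block-cluster-tree bookkeeping via the set $\mathcal{B}$ of maximal non-admissible pairs is a slightly more careful formulation of the counting the paper states tersely, but it is not a different argument.
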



\begin{proof}
By Proposition \ref{4.2.4}, we can construct a balanced binary tree $\mathcal{T} = (P,E)$ in $\mathcal{O}(N \log N)$ time and by Proposition \ref{4.2.12}, we can compute $(\boldsymbol{Q}_j^v)_{v \in P \backslash \mathcal{L(T)}}$ in $\mathcal{O}(N)$ time. For non-leaf clusters, the matrix $\boldsymbol{T}_{v, v_{\text{son}}}$ requires only constant time to compute and for leaf clusters, the matrix $\boldsymbol{V}^{\boldsymbol{\Phi}}_v$ also requires constant time. Therefore, gathering the inputs for Algorithm \ref{alg. 6} takes $\mathcal{O}(N \log N)$ time. For a non-leaf cluster, the calculation 
$$\begin{bmatrix} \boldsymbol{V}^{\boldsymbol{\Phi}}_v \\ \boldsymbol{V}^{\boldsymbol{\Sigma}}_v \end{bmatrix}
= \left( \boldsymbol{Q}_j^v \right)^T \begin{bmatrix}
\boldsymbol{V}^{\boldsymbol{\Phi}}_{v_{\text{son} 1}} \cdot \boldsymbol{T}_{v, v_{\text{son} 1}} \\
\boldsymbol{V}^{\boldsymbol{\Phi}}_{v_{\text{son} 2}} \cdot \boldsymbol{T}_{v, v_{\text{son} 2}}
\end{bmatrix}$$
also requires constant time. Given that there are $\mathcal{O}(N)$ clusters, the execution of  Algorithm \ref{alg. 6} takes $\mathcal{O}(N)$ time. \\
The computation of all coupling matrices $(\boldsymbol{S}_{v,v'})_{v,v' \in P}$ would take $\mathcal{O}(N^2)$ time, but since we compute only the necessary ones on the fly while calculating non-admissible matrix blocks, and by Proposition \ref{4.3.8} there are $\mathcal{O}(N \log N)$ non-admissible matrix blocks, the computation of these coupling matrices takes $\mathcal{O}(N \log N)$ time.\\
When executing Algorithm \ref{alg. 8}, we only calculate $\mathcal{O}(N \log N)$ matrix blocks, as admissible blocks are excluded. Determining whether a pair of clusters is admissible takes constant time, and using Lemma \ref{4.4.1} we limit the checks to $\mathcal{O}(N \log N)$ clusters. Since the involved multiplications for each matrix block take constant time, the execution of Algorithm 8 also takes $\mathcal{O}(N \log N)$ time. In summary, we can compute $\boldsymbol{K}^{\Sigma}_{\eta}$ in $\mathcal{O}(N \log N)$ time. Similarly, we derive that the storage requirements are $\mathcal{O}(N \log N)$.
\end{proof}


\begin{algorithm}[p]
\DontPrintSemicolon
\caption{Computation of the compressed kernel matrix}\label{alg. 8}
\KwData{Cluster tree $\mathcal{T}$, $(\boldsymbol{Q}_j^v)_{v \in P \backslash \mathcal{L(T)}}$, $(\boldsymbol{V}_v^{\boldsymbol{\Phi}})_{v \in P}$, $(\boldsymbol{V}_v^{\boldsymbol{\Sigma}})_{v \in P \backslash \mathcal{L(T)}}$, $\{ \boldsymbol{\xi}_1^v, ... , \boldsymbol{\xi}_n^v \}_{v \in P}$}
\KwResult{Sparse matrix $\boldsymbol{K}^{\boldsymbol{\Sigma}}_{\eta}$}
$ \ $ \;

\SetKwFunction{FMain}{setupColumn}
\SetKwProg{Fn}{Function}{:}{}
\Fn{\FMain{$v'$}}{
	\ForEach{son $v'_{\text{son}}$ of $v'$} {
    				$\texttt{setupColumn}(v'_{\text{son}})$\;
    	}
    	$\texttt{setupRow}(X, v')$\;
}
\textbf{end}\;

\SetKwFunction{FMain}{setupRow}
\SetKwProg{Fn}{Function}{:}{}
\Fn{\FMain{$v,v'$}}{
	\uIf{$v$ is not a leaf }{
		\ForEach{son $v_{\text{son}}$ of $v$} {
			\eIf{$(v_{\text{son}},v')$ is non-admissible }{
			$\begin{bmatrix} \boldsymbol{K}^{\boldsymbol{\Phi}, \boldsymbol{\Phi}}_{v_{\text{son}}, v'} & \boldsymbol{K}^{\boldsymbol{\Phi}, \boldsymbol{\Sigma}}_{v_{\text{son}}, v'} \\ \boldsymbol{K}^{\boldsymbol{\Sigma}, \boldsymbol{\Phi}}_{v_{\text{son}}, v'} & \boldsymbol{K}^{\boldsymbol{\Sigma}, \boldsymbol{\Sigma}}_{v_{\text{son}}, v'} \end{bmatrix} = \texttt{setupRow}(v_{\text{son}}, v')$\;
			}
			{
			$\begin{bmatrix} \boldsymbol{K}^{\boldsymbol{\Phi}, \boldsymbol{\Phi}}_{v_{\text{son}}, v'} & \boldsymbol{K}^{\boldsymbol{\Phi}, \boldsymbol{\Sigma}}_{v_{\text{son}}, v'} \\ \boldsymbol{K}^{\boldsymbol{\Sigma}, \boldsymbol{\Phi}}_{v_{\text{son}}, v'} & \boldsymbol{K}^{\boldsymbol{\Sigma}, \boldsymbol{\Sigma}}_{v_{\text{son}}, v'} \end{bmatrix} = \begin{bmatrix} \boldsymbol{V}^{\boldsymbol{\Phi}}_{v_{\text{son}}} \\ \boldsymbol{V}^{\boldsymbol{\Sigma}}_{v_{\text{son}}} \end{bmatrix} \boldsymbol{S}_{v_{\text{son}},v'} \begin{bmatrix} \left( \boldsymbol{V}^{\boldsymbol{\Phi}}_{v'} \right)^T & \left( \boldsymbol{V}^{\boldsymbol{\Sigma}}_{v'} \right)^T \end{bmatrix}$\;
			}
		}
		$\begin{bmatrix} \boldsymbol{K}^{\boldsymbol{\Phi}, \boldsymbol{\Phi}}_{v,v'} & \boldsymbol{K}^{\boldsymbol{\Phi}, \boldsymbol{\Sigma}}_{v,v'} 
\\ \boldsymbol{K}^{\boldsymbol{\Sigma}, \boldsymbol{\Phi}}_{v,v'} & \boldsymbol{K}^{\boldsymbol{\Sigma}, \boldsymbol{\Sigma}}_{v,v'} \end{bmatrix} 
= \left( \boldsymbol{Q}_j^v \right)^T \begin{bmatrix} \boldsymbol{K}^{\boldsymbol{\Phi}, \boldsymbol{\Phi}}_{v_{\text{son} 1},v'} & \boldsymbol{K}^{\boldsymbol{\Phi}, \boldsymbol{\Sigma}}_{v_{\text{son} 1},v'} \\ 
\boldsymbol{K}^{\boldsymbol{\Phi}, \boldsymbol{\Phi}}_{v_{\text{son} 2},v'} & \boldsymbol{K}^{\boldsymbol{\Phi}, \boldsymbol{\Sigma}}_{v_{\text{son} 2},v'} \end{bmatrix}$\;
	} 
	\uElseIf{$v'$ is a leaf cluster}{
		$\begin{bmatrix} \boldsymbol{K}^{\boldsymbol{\Phi}, \boldsymbol{\Phi}}_{v, v'} & \boldsymbol{K}^{\boldsymbol{\Phi}, \boldsymbol{\Sigma}}_{v, v'} \\ \boldsymbol{K}^{\boldsymbol{\Sigma}, \boldsymbol{\Phi}}_{v, v'} & \boldsymbol{K}^{\boldsymbol{\Sigma}, \boldsymbol{\Sigma}}_{v, v'} \end{bmatrix} = [ (k, \delta_{\boldsymbol{x}} \otimes \delta_{\boldsymbol{x}'} )_{\Omega \times \Omega} ]_{\boldsymbol{x} \in v, \boldsymbol{x}' \in v'}$\;
	}
	\uElse{
		\ForEach{son $v'_{\text{son}}$ of $v'$} {
			\eIf{$(v,v'_{\text{son}})$ is non-admissible }{
			load already computed block $\begin{bmatrix} \boldsymbol{K}^{\boldsymbol{\Phi}, \boldsymbol{\Phi}}_{v,v'_{\text{son}}} & \boldsymbol{K}^{\boldsymbol{\Phi}, \boldsymbol{\Sigma}}_{v,v'_{\text{son}}} \\ \boldsymbol{K}^{\boldsymbol{\Sigma}, \boldsymbol{\Phi}}_{v,v'_{\text{son}}} & \boldsymbol{K}^{\boldsymbol{\Sigma}, \boldsymbol{\Sigma}}_{v,v'_{\text{son}}} \end{bmatrix}$\;
			}
			{
			$\begin{bmatrix} \boldsymbol{K}^{\boldsymbol{\Phi}, \boldsymbol{\Phi}}_{v,v'_{\text{son}}} & \boldsymbol{K}^{\boldsymbol{\Phi}, \boldsymbol{\Sigma}}_{v,v'_{\text{son}}} \\ \boldsymbol{K}^{\boldsymbol{\Sigma}, \boldsymbol{\Phi}}_{v,v'_{\text{son}}} & \boldsymbol{K}^{\boldsymbol{\Sigma}, \boldsymbol{\Sigma}}_{v,v'_{\text{son}}} \end{bmatrix} = \begin{bmatrix} \boldsymbol{V}^{\boldsymbol{\Phi}}_{v} \\ \boldsymbol{V}^{\boldsymbol{\Sigma}}_{v} \end{bmatrix} \boldsymbol{S}_{v,v'_{\text{son}}} \begin{bmatrix} \left( \boldsymbol{V}^{\boldsymbol{\Phi}}_{v'_{\text{son}}} \right)^T & \left( \boldsymbol{V}^{\boldsymbol{\Sigma}}_{v'_{\text{son}}} \right)^T \end{bmatrix}$\;
			}
		}
		$\begin{bmatrix} \boldsymbol{K}^{\boldsymbol{\Phi}, \boldsymbol{\Phi}}_{v,v'} & \boldsymbol{K}^{\boldsymbol{\Phi}, \boldsymbol{\Sigma}}_{v,v'} 
\\ \boldsymbol{K}^{\boldsymbol{\Sigma}, \boldsymbol{\Phi}}_{v,v'} & \boldsymbol{K}^{\boldsymbol{\Sigma}, \boldsymbol{\Sigma}}_{v,v'} \end{bmatrix} 
= \begin{bmatrix} \boldsymbol{K}^{\boldsymbol{\Phi}, \boldsymbol{\Phi}}_{v,v'_{\text{son} 1}} & \boldsymbol{K}^{\boldsymbol{\Phi}, \boldsymbol{\Phi}}_{v,v'_{\text{son} 2}} \\ \boldsymbol{K}^{\boldsymbol{\Sigma}, \boldsymbol{\Phi}}_{v,v'_{\text{son} 1}} & \boldsymbol{K}^{\boldsymbol{\Sigma}, \boldsymbol{\Phi}}_{v,v'_{\text{son} 2}} \end{bmatrix}
\boldsymbol{Q}_{j'}^{v'}$\;
	}
	\textbf{end}\;
	store $\boldsymbol{K}^{\boldsymbol{\Sigma}, \boldsymbol{\Sigma}}_{v,v'}$ in $\boldsymbol{K}^{\boldsymbol{\Sigma}}_{\eta}$ and \KwRet $\begin{bmatrix} \boldsymbol{K}^{\boldsymbol{\Phi}, \boldsymbol{\Phi}}_{v, v'} & \boldsymbol{K}^{\boldsymbol{\Phi}, \boldsymbol{\Sigma}}_{v, v'} \\ \boldsymbol{K}^{\boldsymbol{\Sigma}, \boldsymbol{\Phi}}_{v, v'} & \boldsymbol{K}^{\boldsymbol{\Sigma}, \boldsymbol{\Sigma}}_{v, v'} \end{bmatrix}$\;
}
\textbf{end}\;

Initialize $\boldsymbol{K}^{\boldsymbol{\Sigma}}_{\eta}$ as $N \times N$ zero matrix\;
$\texttt{setupColumn}(X)$\;
\end{algorithm}

\newpage
\section{Approximation Methods} \label{Chap. 5}
To compute a Gaussian Process, it is imperative to either invert an $N \times N$ kernel matrix or to solve the corresponding linear systems. However, employing the Cholesky decomposition for this task usually demands $\mathcal{O}(N^3)$ time, making it impractical for large datasets ($N > 10,000$). Consequently, utilizing approximation methods and leveraging the structure of the kernel are essential for efficiently computing the Gaussian Process.\\

In this chapter, we examine various approximation methods and introduce a new approach using Samplets. The sparsity pattern of the compressed kernel matrix suggests leveraging a sparse Cholesky decomposition, which will be a key component for achieving a fast and reliable approximation.


\subsection{Overview}
Several approximation and kernel learning approaches are available, and the following provides an overview of popular methods, with many of them detailed in \cite{RW}:

\begin{itemize}[topsep=5pt]
	\setlength\itemsep{0.1mm}
\item Compactly supported kernels yield a kernel that corresponds to the zero function within a subset of the domain, resulting in sparse kernel matrices. The computational reduction depends on the sparsity pattern of the matrix.

\item Low-rank approximation employs a rank $n < N$ matrix to approximate the kernel matrix. Storage and inverse computation are typically reduced to $\mathcal{O}(nN)$ and $\mathcal{O}(n^2 N)$ respectively. The Nyström approximation, outlined in section 8.1 of \cite{RW}, stands out as the most commonly used low-rank approximation.

\item Subset of data approximation entails selecting a subset of $n < N$ points from the dataset based on a specific criterion, establishing this amount as a new dataset. Storage and inverse computation are reduced to $\mathcal{O}(n^2)$ and $\mathcal{O}(n^3)$ respectively.

\item Subset of Regressors exploits the fact that the prediction can be expressed as
$\sum_{i=1}^N c_i k(\boldsymbol{x}, \boldsymbol{x}_i)$, as utilized in equation (\ref{eq. 2.2.1}). It seeks to approximate the Gaussian process by selecting a subset of $n < N$ functions from the $N$ regressors $k(\boldsymbol{x}, \boldsymbol{x}_1),...,k(\boldsymbol{x}, \boldsymbol{x}_N)$ based on a criterion. This reduces storage and inverse computation to $\mathcal{O}(nN)$ and $\mathcal{O}(n^2 N)$ respectively.

\item Bayesian committee machine predicts the function values $\boldsymbol{f}_{*}$ of $n$ test points by splitting the dataset into parts $\mathcal{D}^1, ..., \mathcal{D}^p$. It approximates the multivariate normal distribution of $\boldsymbol{f}_{*} \, | \, \mathcal{D}_N$, derived in Proposition \ref{2.1.5}, by using the distributions of $\boldsymbol{f}_{*} \, | \, \mathcal{D}^1, ..., \boldsymbol{f}_{*} \, | \, \mathcal{D}^p$. The computational effort reduces to $\mathcal{O}(n^2 N)$.

\item Conjugate gradient solves the linear system $(\boldsymbol{K} + \sigma^2 \boldsymbol{I}_N)\boldsymbol{c}=\boldsymbol{y}$ iteratively to compute the Gaussian process according to equation (\ref{eq. 2.2.1}). If the method is terminated at iteration $n$, its complexity amounts to $\mathcal{O}(n N^2)$.

\item Variational learning approximates the posterior distribution by selecting a subset of data, consisting of $n < N$ inducing points, and minimizing the Kullbeck-Leibler divergence of the approximation from the true posterior distribution. Computation is reduced to $\mathcal{O}(n^2 N)$ and details can be found in \cite{TI}. Stochastic variational regression (SvGP) enhances this process by employing stochastic optimization techniques.


\item Expectation propagation approximates the posterior distribution by iteratively refining the approximation. It uses $n < N$ points for the calculation, resulting in a cost of $\mathcal{O}(n^2 N)$. Further details are provided in \cite{TU}.

\item Blackbox Matrix-Matrix multiplication utilizes a Lanczos algorithm to optimize the likelihood function, uses pivoted Cholesky decomposition to precondition the kernel matrix and applies a modified conjugate gradient method to compute the Gaussian Process. The key innovation lies in the parallel calculations, which reduces computational effort to $\mathcal{O}(N^2)$. For a comprehensive understanding, refer to \cite{GPW}.

\item Kernel interpolation for scalable structured Gaussian Processes (KissGP) approximates the kernel function using interpolation based on $n < N$ inducing points. This approach reduces computation and storage to $\mathcal{O}(N)$.
\end{itemize}

Various methods are categorized as sparse approximations because they involve removing data. Additionally, they are often termed greedy approximations, as they select points in a greedy manner rather than identifying the optimal subset of training points. This approach is influenced by combinatorial effects. Nonetheless, nearly all approximation methods follow at least one of the three ideas depicted in Figure \ref{fig. 20}.

\begin{figure}[h]
	\centering
	\includegraphics[width=0.50\textwidth]{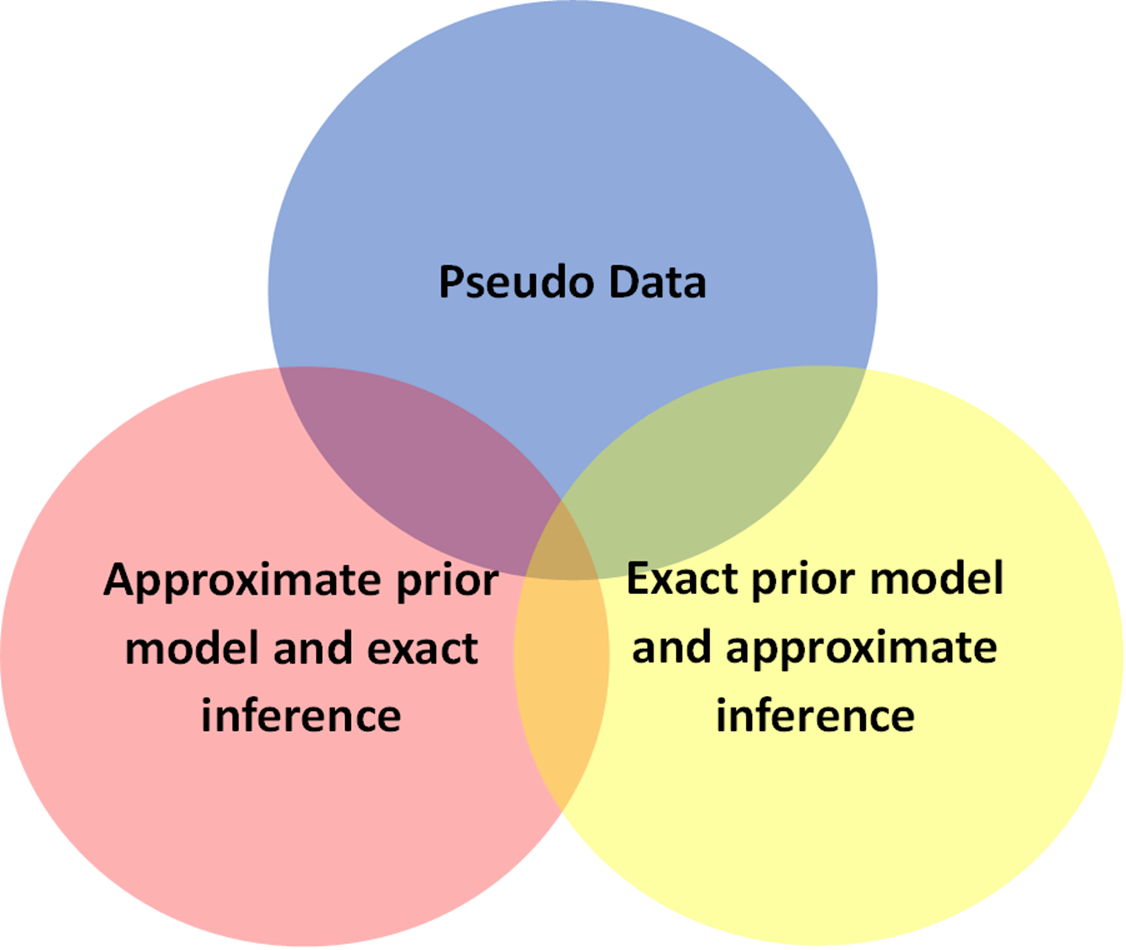}
	\caption{Categories of approximation methods. Inspired by a similar Figure in \cite{TU}.}\label{fig. 20}
\end{figure}

The choice of method may vary depending on the specific problem. For instance, if only prediction is of interest, the conjugate gradient method might be suitable. If the focus is on predicting for test points, the Bayesian committee machine could be more appropriate. If the entire Gaussian Process is of interest, the Nyström approximation may be employed. By using samplets, we aim to compute the entire Gaussian Process with an exact prior model while approximating the posterior functions. 

Current state-of-the-art approaches for this employ preconditioned conjugate gradient methods, often combined with stochastic Lanczos quadrature. The Python library GPyTorch is committed on implementing cutting-edge algorithms and accelerating computations, including GPU support. Therefore, we will compare our samplet-based algorithms with those implemented in GPyTorch.


\subsection{Gaussian Processes via Samplets}
To compute the posterior functions in Proposition \ref{2.1.5} or the marginal likelihood function in equation (\ref{eq. 2.3.1}), we need to solve linear systems of the form
\begin{equation}
(\boldsymbol{K} + \sigma^2 \boldsymbol{I}_N) \boldsymbol{c} = \boldsymbol{y}, \label{eq. 5.2.1}
\end{equation}
where $\boldsymbol{c}=[c_1, ..., c_N]^T$ is a real-valued vector. For simplicity, we use the vector $\boldsymbol{y}=[y_1,...,y_N]^T$ on the right-hand side of (\ref{eq. 5.2.1}), although in practice, we also need to compute solutions for different vectors. Let $\boldsymbol{T} \in \mathbb{R}^{N \times N}$ be the orthogonal change of basis matrix from the samplet basis to the canonical basis. Then, we have the equivalent linear systems
$$(\boldsymbol{K} + \sigma^2 \boldsymbol{I}_N) \boldsymbol{c} = \boldsymbol{y} \Longleftrightarrow \boldsymbol{T} (\boldsymbol{K} + \sigma^2 \boldsymbol{I}_N) \boldsymbol{T}^{-1} \boldsymbol{T} \boldsymbol{c} = \boldsymbol{T} \boldsymbol{y} 
\Longleftrightarrow (\boldsymbol{K}^{\boldsymbol{\Sigma}} + \sigma^2 \boldsymbol{I}_N) \boldsymbol{T} \boldsymbol{c} = \boldsymbol{T} \boldsymbol{y}.$$
Instead of solving the linear system for the dense matrix $\boldsymbol{K}^{\boldsymbol{\Sigma}}$, which consists of many small entries, we use its compressed version $\boldsymbol{K}^{\boldsymbol{\Sigma}}_\eta \approx \boldsymbol{K}^{\boldsymbol{\Sigma}}$ to approximately compute the solution $\boldsymbol{c}$. This results in solving the perturbed linear system
\begin{equation}
(\boldsymbol{K}^{\boldsymbol{\Sigma}}_\eta + \sigma^2 \boldsymbol{I}_N) \boldsymbol{T} \tilde{\boldsymbol{c}} = \boldsymbol{T} \boldsymbol{y}, \label{eq. 5.2.2}
\end{equation}
where $\tilde{\boldsymbol{c}} \approx \boldsymbol{c}$ is the approximate solution. The matrix $\boldsymbol{K}^{\boldsymbol{\Sigma}}_\eta + \sigma^2 \boldsymbol{I}_N$ is symmetric, and since it is a compressed version of the positive definite matrix $\boldsymbol{K}^{\boldsymbol{\Sigma}} + \sigma^2 \boldsymbol{I}_N$, we assume it remains positive definite. If this assumption fails, it is likely that the parameters $q, \eta$, or the additional threshold for the matrix entries were poorly selected. It is also common to use $\tilde{\sigma}^2 > \sigma^2$ to enforce positive definiteness. As suggested in Remark \ref{3.3.9}, it is recommended to use $\tilde{\sigma}^2 \sim h_{X_N, \Omega}^{\tau - d/2}$ when working with a $\tau$-smooth kernel.\\

The perturbed linear system is, under suitable conditions, faster to solve than the original system in equation (\ref{eq. 5.2.1}). Let $\Omega \subset \mathbb{R}^d$ be an open and convex set, $k_{\nu, \ell}$ a Matérn kernel and $(X_N)_{N \in \mathbb{N}}$ be a sequence of quasi-uniform points in $\Omega$. In chapter \ref{Chap. 4}, we established the following results:
\begin{itemize}[topsep=5pt]
	\setlength\itemsep{0.1mm}
\item $\boldsymbol{K}^{\boldsymbol{\Sigma}}_\eta + \sigma^2 \boldsymbol{I}_N$ is a sparse matrix with $\mathcal{O}(N \log N)$ non-zero entries.
\item $\boldsymbol{K}^{\boldsymbol{\Sigma}}_\eta + \sigma^2 \boldsymbol{I}_N$ can be computed in $\mathcal{O}(N \log N)$ time, requiring $\mathcal{O}(N \log N)$ storage.
\item $\boldsymbol{T} \boldsymbol{y}$ can be computed via the fast samplet transform in $\mathcal{O}(N)$ operations. 
\item Given $\boldsymbol{T} \tilde{\boldsymbol{c}}$, then $\tilde{\boldsymbol{c}} = \boldsymbol{T}^{-1} (\boldsymbol{T} \tilde{\boldsymbol{c}})$ can be computed via the inverse fast samplet transform in $\mathcal{O}(N)$ operations.
\end{itemize}
This implies that for large $N$, the primary computational effort is in solving equation (\ref{eq. 5.2.2}) for $\boldsymbol{T} \tilde{\boldsymbol{c}}$. The method employed should take advantage of the sparsity pattern of the matrix $\boldsymbol{K}^{\boldsymbol{\Sigma}}_\eta + \sigma^2 \boldsymbol{I}_N$. Typically, a sparse Cholesky decomposition or a modified conjugate gradient method would be the initial approaches to consider. If such a method requires $\mathcal{O}(N^2)$ operations, then the overall cost of approximating the Gaussian Process would also be $\mathcal{O}(N^2)$. However, accurately determining the computational effort of these methods is challenging due to fill-in during the process. Given the Cholesky decomposition $\boldsymbol{L} \boldsymbol{L}^T = \boldsymbol{K}^{\boldsymbol{\Sigma}}_\eta + \sigma^2 \boldsymbol{I}_N$, we formulate Algorithm \ref{alg. 9} to compute $\tilde{\boldsymbol{c}} \approx \boldsymbol{c}$.\\


\begin{algorithm}[h]
\DontPrintSemicolon
\caption{Perturbed linear system}\label{alg. 9}
\KwData{Cluster tree $\mathcal{T}$, Cholesky $\boldsymbol{L} \boldsymbol{L}^T = \boldsymbol{K}^{\boldsymbol{\Sigma}}_\eta + \sigma^2 \boldsymbol{I}_N$, vector $\boldsymbol{y}$, $(\boldsymbol{Q}_j^v)_{v \in P \backslash \mathcal{L(P)}}$}
\KwResult{Approximate solution $\tilde{\boldsymbol{c}} \approx \boldsymbol{c}$ of the system $(\boldsymbol{K} + \sigma^2 \boldsymbol{I}_N) \boldsymbol{c} = \boldsymbol{y}$}
\SetKwFunction{FMain}{SolvePerturbedSystem}
\SetKwProg{Fn}{Function}{:}{}
\Fn{\FMain{$\boldsymbol{L}, \boldsymbol{y}$}}{
	$\boldsymbol{Ty} = \texttt{SampletTransform}(\boldsymbol{y})$\;
	$\boldsymbol{T} \tilde{\boldsymbol{c}} = \boldsymbol{L}^T \backslash ( \boldsymbol{L} 
	\backslash \boldsymbol{Ty} )$\;
	$\tilde{\boldsymbol{c}} = \texttt{InverseTransform}(\boldsymbol{T} \tilde{\boldsymbol{c}})$
	\;
  	\KwRet $\tilde{\boldsymbol{c}}$\;
}
\textbf{end}
\end{algorithm}

Using Algorithm \ref{alg. 8} to compute $\boldsymbol{K}^{\boldsymbol{\Sigma}}_\eta$, a sparse Cholesky decomposition to calculate $\boldsymbol{L} \boldsymbol{L}^T = \boldsymbol{K}^{\boldsymbol{\Sigma}}_\eta + \sigma^2 \boldsymbol{I}_N$ and Algorithm \ref{alg. 9} to solve the corresponding linear systems, the computation of the posterior functions in Proposition \ref{2.1.5} becomes straightforward. Additionally, by utilizing the approximation
\begin{align*}
\text{det}(\boldsymbol{K} + \sigma^2 \boldsymbol{I}_N) 
&= \text{det}(\boldsymbol{T} (\boldsymbol{K} + \sigma^2 \boldsymbol{I}_N) \boldsymbol{T}^{-1})\\
&= \text{det} (\boldsymbol{K}^{\boldsymbol{\Sigma}} + \sigma^2 \boldsymbol{I}_N)
\approx \text{det} (\boldsymbol{L} \boldsymbol{L}^T)
= \text{det}^2 (\boldsymbol{L}) = \left( \prod_{i=1}^N L_{i,i} \right)^2,
\end{align*}
we can also compute the log marginal likelihood in equation (\ref{eq. 2.3.1}) as
\begin{equation}
\log p(\boldsymbol{y} \, | \, X)
\approx -\dfrac{1}{2} \boldsymbol{y}^T \tilde{\boldsymbol{c}} - \sum_{i=1}^N \log L_{i,i} - \dfrac{N}{2} \log 2\pi \label{eq. 5.2.3}
\end{equation}
in $\mathcal{O}(N)$ operations. To learn the hyperparameters of the Gaussian Process, it is common to optimize this likelihood function through gradient descent methods, which requires evaluation of equation (\ref{eq. 2.3.2}). Let $\vartheta \in \mathbb{R}$ be a hyperparameter of the Gaussian Process, then we approximate
\begin{align*}
\boldsymbol{c}^T \dfrac{\partial \hat{\boldsymbol{K}}}{\partial \vartheta} \boldsymbol{c} 
= \boldsymbol{c}^T \boldsymbol{T}^{-1} \boldsymbol{T} \dfrac{\partial \hat{\boldsymbol{K}}}{\partial \vartheta} \boldsymbol{T}^{-1} \boldsymbol{T} \boldsymbol{c} 
= (\boldsymbol{T} \boldsymbol{c})^T \left( \dfrac{\partial \hat{\boldsymbol{K}}}{\partial \vartheta} \right)^{\boldsymbol{\Sigma}} \boldsymbol{T} \boldsymbol{c}
\approx (\boldsymbol{T} \tilde{\boldsymbol{c}})^T \left( \dfrac{\partial \hat{\boldsymbol{K}}}{\partial \vartheta} \right)^{\boldsymbol{\Sigma}}_{\eta} \boldsymbol{T} \tilde{\boldsymbol{c}}
\end{align*}
and estimate
\begin{align*}
\text{tr} \left( \hat{\boldsymbol{K}}^{-1} \dfrac{\partial \hat{\boldsymbol{K}}}{\partial \vartheta} \right)
&= \text{tr} \left( \boldsymbol{T}^{-1} \boldsymbol{T} \hat{\boldsymbol{K}}^{-1} \boldsymbol{T}^{-1} \boldsymbol{T} \dfrac{\partial \hat{\boldsymbol{K}}}{\partial \vartheta} \right)
= \text{tr} \left( \boldsymbol{T} \hat{\boldsymbol{K}}^{-1}  \boldsymbol{T}^{-1} \boldsymbol{T} \dfrac{\partial \hat{\boldsymbol{K}}}{\partial \vartheta} \boldsymbol{T}^{-1} \right)\\
&= \text{tr} \left(  \left( \hat{\boldsymbol{K}}^{\boldsymbol{\Sigma}} \right)^{-1} \left( \dfrac{\partial \hat{\boldsymbol{K}}}{\partial \vartheta} \right)^{\boldsymbol{\Sigma}} \right)
\approx \text{tr} \left(  \left( \hat{\boldsymbol{K}}^{\boldsymbol{\Sigma}}_{\eta} \right)^{-1} \left( \dfrac{\partial \hat{\boldsymbol{K}}}{\partial \vartheta} \right)^{\boldsymbol{\Sigma}}_{\eta} \right),
\end{align*}
where we have $\hat{\boldsymbol{K}}^{\boldsymbol{\Sigma}}_{\eta} = \boldsymbol{K}^{\boldsymbol{\Sigma}}_{\eta} + \sigma^2 \boldsymbol{I}_N$. In \cite{HM2}, the authors demonstrated that under certain assumptions, the involved inverse matrix has only $\mathcal{O}(N \log N)$ non-zero entries and for large $N$, we have 


$$\left( \hat{\boldsymbol{K}}^{\boldsymbol{\Sigma}}_{\eta} \right)^{-1} \approx \left( \hat{\boldsymbol{K}}^{\boldsymbol{\Sigma}} \right)^{-1} 
\qquad \text{in the same sense as} \qquad
\hat{\boldsymbol{K}}^{\boldsymbol{\Sigma}}_{\eta} \approx \hat{\boldsymbol{K}}^{\boldsymbol{\Sigma}}.$$
They also propose an algorithm for computing this inverse using the Cholesky decomposition, but a direct computation is generally inefficient. Instead, let $\boldsymbol{z}_1, ..., \boldsymbol{z}_t$ be random variables with $\boldsymbol{z}_i \sim \mathcal{N}(\boldsymbol{0}, \boldsymbol{I}_N)$, and apply the stochastic trace estimation
\begin{align*}
\text{tr} \left(  \left( \hat{\boldsymbol{K}}^{\boldsymbol{\Sigma}}_{\eta} \right)^{-1} \left( \dfrac{\partial \hat{\boldsymbol{K}}}{\partial \vartheta} \right)^{\boldsymbol{\Sigma}}_{\eta} \right)
&= \mathbb{E} \left[ \boldsymbol{z}_i^T \left( \hat{\boldsymbol{K}}^{\boldsymbol{\Sigma}}_{\eta} \right)^{-1} \left( \dfrac{\partial \hat{\boldsymbol{K}}}{\partial \vartheta} \right)^{\boldsymbol{\Sigma}}_{\eta} \boldsymbol{z}_i \right] \\
&\approx \dfrac{1}{t} \sum_{i=1}^t \left( \left( \hat{\boldsymbol{K}}^{\boldsymbol{\Sigma}}_{\eta} \right)^{-1} \boldsymbol{z}_i \right)^T \left( \dfrac{\partial \hat{\boldsymbol{K}}}{\partial \vartheta} \right)^{\boldsymbol{\Sigma}}_{\eta} \boldsymbol{z}_i.
\end{align*}
This estimate is unbiased and widely used (see e.g. \cite{GPW}), as it requires only a small number $t$ for a reliable approximation. By leveraging the precomputed Cholesky decomposition, we can efficiently solve the linear systems. Consequently, we compute the derivative of the log marginal likelihood function from equation (\ref{eq. 2.3.2}) as
\begin{align*}
\dfrac{\partial}{\partial \vartheta} \log p(\boldsymbol{y} \, | \, X) 
\approx \dfrac{1}{2} (\boldsymbol{T} \tilde{\boldsymbol{c}})^T \left( \dfrac{\partial \hat{\boldsymbol{K}}}{\partial \vartheta} \right)^{\boldsymbol{\Sigma}}_{\eta} \boldsymbol{T} \tilde{\boldsymbol{c}} - \dfrac{1}{2t} \sum_{i=1}^t \left( \left( \hat{\boldsymbol{K}}^{\boldsymbol{\Sigma}}_{\eta} \right)^{-1} \boldsymbol{z}_i \right)^T \left( \dfrac{\partial \hat{\boldsymbol{K}}}{\partial \vartheta} \right)^{\boldsymbol{\Sigma}}_{\eta} \boldsymbol{z}_i. \label{eq. 5.2.4}
\end{align*}
It is not obvious why this approximation should be computationally faster than calculating the gradient using equation (\ref{eq. 2.3.2}). However, with an understanding of the involved sparsity pattern and the structure of the derivatives, the time savings become clear. Since the Cholesky decomposition $\boldsymbol{L} \boldsymbol{L}^T = \hat{\boldsymbol{K}}^{\boldsymbol{\Sigma}}_{\eta}$ typically reflects the sparsity pattern of $\hat{\boldsymbol{K}}^{\boldsymbol{\Sigma}}_{\eta}$, we expect the computational cost of solving the linear systems to be significantly less than the usual $\mathcal{O}(N^2)$ operations. Specifically, because the compressed matrix contains only $\mathcal{O}(N \log N)$ non-zero entries, we anticipate the cost also to be $\mathcal{O}(N \log N)$.\\
The derivative of $\hat{\boldsymbol{K}}$ with respect to the hyperparameter $\vartheta \in \mathbb{R}$ often inherits the kernel matrix $\boldsymbol{K}$ or a variation of it. For example, consider the Matérn kernel $s^2 k_{\nu, \ell}$ with $\ell, s^2 >0$ and $\nu = n + 1/2$, where $n \in \mathbb{N}_{\geq 1}$. In this case, the hyperparameters are $\{ \ell, s^2, \sigma^2 \}$, and the derivatives are given by
\begin{align*}
\dfrac{\partial \hat{\boldsymbol{K}}}{\partial \ell} = \dfrac{s^2 \sqrt{2 \nu}}{\ell^2} \boldsymbol{\mathfrak{K}} - \dfrac{1}{\ell} \boldsymbol{K} \qquad
\dfrac{\partial \hat{\boldsymbol{K}}}{\partial s^2} = \dfrac{\boldsymbol{K}}{s^2} \qquad
\text{and} \qquad
\dfrac{\partial \hat{\boldsymbol{K}}}{\partial \sigma^2} = \boldsymbol{I}_N,
\end{align*}
where \index{\textit{$\boldsymbol{\mathfrak{K}}$,}}
$$\boldsymbol{\mathfrak{K}} := \big[\| \boldsymbol{x}_i - \boldsymbol{x}_j \|_2 k_{\nu, \ell}(\boldsymbol{x}_i, \boldsymbol{x}_j) \big]_{1 \leq i,j \leq N}.$$
As before, the matrix $\boldsymbol{K}$ can be compressed using samplets. The function
\begin{equation}
(\boldsymbol{x}, \boldsymbol{y}) \mapsto \| \boldsymbol{x} - \boldsymbol{y} \|_2 k_{\nu, \ell}(\boldsymbol{x}, \boldsymbol{y}) \label{eq. 5.2.5}
\end{equation}
is symmetric, but generally not positive definite, meaning $\boldsymbol{\mathfrak{K}}$ is not a kernel matrix. However, since the theory of samplets is not restricted to kernel functions, we can still compress $\boldsymbol{\mathfrak{K}}$. Furthermore, since Proposition \ref{4.3.2} holds more generally for symmetric and holomorphic functions, the function in (\ref{eq. 5.2.5}) is sufficiently asymptotically smooth to ensure that Theorem \ref{4.3.5} applies and we have


$$\boldsymbol{\mathfrak{K}}^{\boldsymbol{\Sigma}}_{\eta} \approx \boldsymbol{\mathfrak{K}}^{\boldsymbol{\Sigma}} 
\qquad \text{in the same sense as} \qquad
\boldsymbol{K}^{\boldsymbol{\Sigma}}_{\eta} \approx \boldsymbol{K}^{\boldsymbol{\Sigma}}.$$
Aside from that, $\boldsymbol{\mathfrak{K}}^{\boldsymbol{\Sigma}}_{\eta}$ consists of $\mathcal{O}(N \log N)$ non-zero entries. Using Algorithm \ref{alg. 8}, we require $\mathcal{O}(N \log N)$ operations to compute the derivatives
\begin{align*}
\left( \dfrac{\partial \hat{\boldsymbol{K}}}{\partial \ell} \right)^{\boldsymbol{\Sigma}}_\eta = \dfrac{s^2 \sqrt{2 \nu}}{\ell^2} \boldsymbol{\mathfrak{K}}^{\boldsymbol{\Sigma}}_\eta - \dfrac{1}{\ell} \boldsymbol{K}^{\boldsymbol{\Sigma}}_\eta \qquad
\left( \dfrac{\partial \hat{\boldsymbol{K}}}{\partial s^2} \right)^{\boldsymbol{\Sigma}}_\eta = \dfrac{\boldsymbol{K}^{\boldsymbol{\Sigma}}_\eta}{s^2} \qquad
\text{and} \qquad
\left( \dfrac{\partial \hat{\boldsymbol{K}}}{\partial \sigma^2} \right)^{\boldsymbol{\Sigma}}_\eta = \boldsymbol{I}_N,
\end{align*}
where $\boldsymbol{K}^{\boldsymbol{\Sigma}}_\eta$ has already been computed when solving the linear system in equation (\ref{eq. 5.2.1}). With efficient coding, it should be possible to compute $\boldsymbol{K}^{\boldsymbol{\Sigma}}_\eta$ and $\boldsymbol{\mathfrak{K}}^{\boldsymbol{\Sigma}}_\eta$ in a single process.\\
The main computational effort involved in approximating the derivative of the log marginal likelihood lies in solving the linear systems. If this takes $\mathcal{O}(N \log N)$ operations, the effort of evaluating the gradient is also $\mathcal{O}(N \log N)$. Algorithm \ref{alg. 10} outlines the procedure for approximating a Gaussian Process via Samplets. Based on the computational cost of the Cholesky decomposition, and the sparsity pattern of $\boldsymbol{L}$, the algorithm requires $\mathcal{O} (N \log N)$ operations at best.


\begin{algorithm}[p]
\DontPrintSemicolon
\caption{SampletsGP: Gaussian Process via Samplets}\label{alg. 10}
\KwData{Data $\mathcal{D}_N = \{ X_{train}, \boldsymbol{y}_{train} \}$, kernel $k$, initial hyperparameters $\theta_{N,1}$, vanishing moments $q$, compression parameter $\eta$, compression threshold $\tau_{comp}$, gradient based optimizer, optimization steps $n_{\text{steps}}$}
\KwResult{Posterior Process $\mathcal{GP}(m',k')$ ready for prediction}
$ \ $

\SetKwFunction{FMain}{TrainGP}
\SetKwProg{Fn}{Function}{:}{}
\Fn{\FMain{$X_{\text{train}}, \boldsymbol{y}_{\text{train}}, n_{\text{steps}}$}}{
	store ST = $\texttt{SampletTree} (X_{\text{train}}, q)$\;
	\For{$i = 1$ \KwTo $n_{steps}$} {
		$\boldsymbol{K}_\eta^{\boldsymbol{\Sigma}} = \texttt{SampletKernelCompressor} 
		(\text{ST}, k(\theta_{N,i}), X_{train}, \eta, \tau_{comp})$\;
		$\boldsymbol{L} = \texttt{SparseCholesky} (\boldsymbol{K}_\eta^{\boldsymbol{\Sigma}} + 
		\sigma_{N,i}^2 \boldsymbol{I}_N)$\;
		$\boldsymbol{Ty}_{train} = \texttt{SampletTransform}(\boldsymbol{y}_{train})$\;
		$\boldsymbol{T} \tilde{\boldsymbol{c}} = \boldsymbol{L}^T \backslash ( \boldsymbol{L} 
		\backslash \boldsymbol{Ty}_{train} )$\;
		$[\boldsymbol{z}_1, ..., \boldsymbol{z}_{50}] = \texttt{Random.standard\_normal}(\text{size} = N, \text{dimension} = 50)$\;
		$[\tilde{\boldsymbol{u}}_1, ..., \tilde{\boldsymbol{u}}_{50}] = \boldsymbol{L}^T \backslash ( \boldsymbol{L} 
	\backslash [\boldsymbol{z}_1, ..., \boldsymbol{z}_{50}] )$\;
		$\nabla = \{ \}$\;
		\ForEach{$\text{hyperparameter } \vartheta$} {
			$D = \dfrac{1}{2} (\boldsymbol{T} \tilde{\boldsymbol{c}})^T \left( \dfrac{\partial \hat{\boldsymbol{K}}}{\partial \vartheta} \right)^{\boldsymbol{\Sigma}}_{\eta} \boldsymbol{T} \tilde{\boldsymbol{c}} - \dfrac{1}{100} \displaystyle \sum_{i=1}^{50} \tilde{\boldsymbol{u}}_i^T \left( \dfrac{\partial \hat{\boldsymbol{K}}}{\partial \vartheta} \right)^{\boldsymbol{\Sigma}}_{\eta} \boldsymbol{z}_i$\;
			$\nabla = \nabla \cup \{ D \}$\;
		}
		store $\theta_{N,i+1} = \texttt{Update}(optimizer, \nabla , \theta_{N,i})
		$\;
    	}
    	$\boldsymbol{K}_\eta^{\boldsymbol{\Sigma}} = \texttt{SampletKernelCompressor} 
	(\text{ST}, k(\theta_{N,n_{steps}}), X_{train}, \eta, \tau_{comp})$\;
	store $\boldsymbol{L} = \texttt{SparseCholesky} (\boldsymbol{K}_\eta^{\boldsymbol{\Sigma}} + 
	\sigma_{N,n_{steps}}^2 \boldsymbol{I}_N)$\;
	store $\tilde{\boldsymbol{c}} = \texttt{SolvePerturbedSystem}(\boldsymbol{L}, \boldsymbol{y}_{\text{train}})$\;
	store the models log-likelihood $-\dfrac{1}{2} \boldsymbol{y}_{train}^T 
	\tilde{\boldsymbol{c}} - \sum_{i=1}^N \log L_{i,i} - \dfrac{N}{2} \log 2\pi$\;
}
\textbf{end}\\
$ \ $

\SetKwFunction{FMain}{PredictGP}
\Fn{\FMain{$X_{\text{pred}}$}}{
	$cov = k(\theta_{N, n_{steps}})$\;
	$\boldsymbol{K}_1 = cov (X_{pred}, X_{train})$\;
	$\boldsymbol{K}_2 = cov (X_{pred}, X_{pred})$\;
	$\boldsymbol{y}_{pred} = \boldsymbol{K}_1 \tilde{\boldsymbol{c}}$\;
	$\tilde{\boldsymbol{A}} = \boldsymbol{L} \backslash (\boldsymbol{T} \boldsymbol{K}_1^T)$\;
	$\boldsymbol{K}_{pred} = \boldsymbol{K}_2 - \tilde{\boldsymbol{A}}^T \tilde{\boldsymbol{A}}$\;
	\KwRet $\boldsymbol{y}_{pred}, \boldsymbol{K}_{pred}$\;
}
\textbf{end}
\end{algorithm}


\newpage
\subsection{Bayesian Optimization via Samplets} \label{Chap. 5.3}
To optimize a black-box function $f:[0,1]^d \rightarrow \mathbb{R}$ through Bayesian optimization, we employ Gaussian Processes as surrogate models and use Thompson samples as acquisition functions.\\
For large $N$, we approximate the Gaussian Process using samplets, following the procedure in Algorithm \ref{alg. 10}. Each GP requires constructing a samplet tree based on the $N$ observations. However, when only a few new observations are added, it is inefficient to build a completely new samplet tree. Instead, we update the existing tree by incorporating new observations into a single cluster at the highest level of the tree. This strategy ensures that only $\mathcal{O}(\log N)$ samplets need to be updated. Specifically, updating the samplet tree takes $\mathcal{O}(\log N)$ operations, instead of the $\mathcal{O}(N \log N)$ operations required for building a new one. Since only a small number of samplets are modified, updating the corresponding entries in the compressed kernel matrix is also more efficient than computing the entire matrix. Nevertheless, the computational gain is mostly in constants, as the update still requires $\mathcal{O}(N \log N)$ operations. To balance efficiency with accuracy, it is beneficial to rebuild the samplet tree periodically, for example, after every $1,000$ new observations, to fully leverage the potential of samplets.\\
One advantage of using Thompson sampling is the ability to select multiple new observations before constructing the next surrogate model. For large $N$, it becomes inefficient and not worthwhile to compute the surrogate model after selecting each individual point. Instead, we determine $100$ new observations at once before computing the next GP. To maintain a quasi-uniform distribution of points, we apply the $\gamma$-stabilized algorithm framework as described in \cite{BGW}. Let $k$ denote a stationary and $\tau$-smooth kernel with $\tau > d/2 + 1$. For a Thompson sample $a:[0,1]^d \rightarrow \mathbb{R}$ and $\gamma \in (0,1]$, the next point is chosen as
$$\boldsymbol{x}_{\text{next}} = \underset{\boldsymbol{x} \in [0,1]^d_{N, \gamma}}{\argmax} \, a(\boldsymbol{x}),$$
where $[0,1]^d_{N, \gamma} := \{ \boldsymbol{x} \in [0,1]^d \ | \ k'(\boldsymbol{x}, \boldsymbol{x}) \geq \gamma \| k' \|_{L^\infty([0,1]^d)} \}$ and $k'$ is the posterior kernel function after observing $N$ points. We use Algorithm \ref{alg. 1} to generate Thompson samples, with each sample based on $100d$ candidate points. Once $N_0-1$ points have been selected, the final point is chosen to maximize the posterior mean function $m'$, leading to
$$| f(\boldsymbol{x}^*) - f(\boldsymbol{x}_{N_0}) | 
\leq | f(\boldsymbol{x}^*) - m'(\boldsymbol{x}^*) - f(\boldsymbol{x}_{N_0}) - m'(\boldsymbol{x}_{N_0})|
\leq 2 \|  f-m' \|_{L^\infty([0,1]^d)}.$$
Since we maintain a quasi-uniform distribution of points, Theorems \ref{3.3.2} and \ref{3.3.4} guarantee convergence as $N_0$ increases. If the smoothness of the function is correctly estimated by the kernel, we achieve a convergence rate of $\mathcal{O}(N_0^{-\tau_f /d + 1/2})$ in the interpolation setting and $\mathcal{O} (N_0^{- \tau_f /(2\tau_f + d) + d/2})$ in the regression setting, with respect to the norm $\| \cdot \|_{L^\infty([0,1]^d)}$. Algorithm \ref{alg. 11} details the procedure for performing Bayesian optimization using samplets and incorporating Thompson sampling in conjunction with the $\gamma$-stabilized algorithm framework. 


\begin{algorithm}[h]
\DontPrintSemicolon
\caption{Bayesian Optimization via Samplets}\label{alg. 11}
\KwData{Black-box function $f$, optimization steps $N_0$, $\gamma \in (0,1]$, Data from Algorithm \ref{alg. 10}}
\KwResult{Estimate of the optimum $\left( \boldsymbol{x}^*, f(\boldsymbol{x}^*) \right)$}
\SetKwFunction{FMain}{SampletsBO}
\SetKwProg{Fn}{Function}{:}{}
\Fn{\FMain{$N_0, \gamma$}}{
	$X_{train}, \boldsymbol{y}_{train} = \{ \}, \{ \}$\;
	\For{$i = 1$ \KwTo $N_0 - 1$} {
		\If{$i$ is a multiple of $100$}{
			$\texttt{TrainGP}(X_{train}, \boldsymbol{y}_{train}, n_{steps})$\;
		}
		$X = \texttt{Random.rand}(\text{size} = 100d, \text{dimension} = d)$\;
		$\boldsymbol{m}', \boldsymbol{K}' = \texttt{PredictGP}(X)$\;
		$\boldsymbol{L} = \texttt{Cholesky}(\boldsymbol{K}')$\;
		$\boldsymbol{\varepsilon} = \texttt{Random.standard\_normal}(\text{size} = 100d)$\;
		$[ a(\boldsymbol{x}_1), ..., a(\boldsymbol{x}_{100d}) ] = \boldsymbol{m}' + \boldsymbol{L} \boldsymbol{\varepsilon}$\;
		$\boldsymbol{x}_{\text{next}} = \underset{\boldsymbol{x} \in X_{\lfloor i/100 \rfloor \cdot 100, \gamma}}{\argmax} \, a(\boldsymbol{x})$\;
		$X_{train} = X_{train} \cup \{ \boldsymbol{x}_{\text{next}} \}$\;
		$y_{train} = y_{train} \cup \{ y(\boldsymbol{x}_{\text{next}}) \}$\;
	}
	$\texttt{TrainGP}(X_{train}, \boldsymbol{y}_{train}, n_{steps})$\;
	$X = \texttt{Random.rand}(\text{size} = 100d, \text{dimension} = d)$\;
	$[ m'(\boldsymbol{x}_1), ..., m'(\boldsymbol{x}_{100d}) ], \boldsymbol{K}' = \texttt{PredictGP}(X)$\;
	$\boldsymbol{x}_{\text{next}} = \underset{\boldsymbol{x} \in X}{\argmax} \, m'(\boldsymbol{x})$\;
	$X_{train} = X_{train} \cup \{ \boldsymbol{x}_{\text{next}} \}$\;
	$y_{train} = y_{train} \cup \{ y(\boldsymbol{x}_{\text{next}}) \}$\;
	$\boldsymbol{x}^*, f(\boldsymbol{x}^*) = \underset{\boldsymbol{x} \in X_{ train }}{\argmax} \, y(\boldsymbol{x}), \underset{ y \in \boldsymbol{y}_{train} }{\max} y$\;
  	\KwRet $\boldsymbol{x}^*, f(\boldsymbol{x}^*)$\;
}
\textbf{end}
\end{algorithm}

\newpage
$ \ $
\newpage
\section{Numerical Experiments} \label{Chap. 6}
In this chapter, we compare the computational advantages of using Samplets against multiple GP algorithms implemented in GPyTorch across three examples. We also assess the theoretical results developed in Chapter \ref{Chap. 3} and examine how well they hold in practice. All computations were conducted on a system equipped with an Intel i5-12400F CPU (6 cores) and 48 GB of 3600 MHz RAM.


\subsection{Rastrigin function}
For $d=1,2,3$, we aim to estimate the Rastrigin function
$$f: (5.12, 5.12)^d \rightarrow \mathbb{R}, f(\boldsymbol{x}) = 10d + \sum_{i=1}^d \left( x_i^2 - 10 \cos (2 \pi x_i) \right)$$
using noisy data with $\varepsilon \sim \mathcal{N}(0,1)$. This function is smooth and typically, the smoother the chosen kernel, the better the approximation. Nonetheless, we employ the Matérn $5/2$ kernel, which offers only moderate smoothness. The data points are randomly generated according to the uniform distribution on $\Omega = (5.12, 5.12)^d$ and they are generally not quasi-uniform. To compare the performance of the algorithms, we optimize the initial hyperparameters $\ell, s^2, \sigma^2 = 1$ only once. Since the samplet tree only needs to be constructed once for an entire optimization procedure, its setup time is excluded from the comparison. We then use Monte Carlo integration, to compute 
\begin{equation}
\dfrac{\| f-m' \|_{L^2 (\Omega)}}{\| f \|_{L^2 (\Omega)}} 
\approx \left( \dfrac{ \sum_{i=1}^m (f(\boldsymbol{z}_i)-m'(\boldsymbol{z}_i))^2 }{\sum_{i=1}^m f(\boldsymbol{z}_i)^2} \right)^{0.5} \label{eq. 6.1.1}
\end{equation}
based on $m=1,000$ randomly sampled points from $\Omega$. This allows us to validate the theoretical results presented in Chapter \ref{Chap. 3}, even when the points are not quasi-uniform. Specifically, following section \ref{Chap. 3.4}, the optimal convergence rate achievable is
$$\dfrac{\| f-m' \|_{L^2 (\Omega)}}{\| f \|_{L^2 (\Omega)}}
= \mathcal{O} (N^{-\frac{2.5}{ 5 + d }})
 \quad \text{as } N \rightarrow \infty.$$
Details regarding the GPyTorch algorithms and the implementation of the SampletsGP algorithm can be found in Section \ref{Chap. 6.4}. In this experiment, we consider two SampletsGP models with the following hyperparameters:

\begin{table}[h!]
\centering
\begin{tabular}{|c|c|c|c|}
\hline
$ \ $ & $q$ & $\eta$ & $\tau_{comp}$ \\ \hline
Model 1 & 7 & 1.0 & $10^{-4}$ \\ \hline
Model 2 & 3 & 1.0 & $10^{-2}$ \\ \hline
\end{tabular}
\end{table}
For $d=1$, we exclude Model 2 from our analysis because its low number of vanishing moments and large threshold are unnecessary for this context. Furthermore, Model 2 tends to produce unstable numerical results, meaning the computation of $\boldsymbol{K}_{\eta}^{\boldsymbol{\Sigma}}$ does not produce a positive definite matrix. Consequently, when utilizing Model 2, we treat $\sigma^2$ as a constant rather than a hyperparameter. It is set to the smallest integer that ensures $\boldsymbol{K}_{\eta}^{\boldsymbol{\Sigma}}$ remains positive definite. Figure 18 presents the outcomes of this experiment.


\begin{figure}[H]
	\centering
	\includegraphics[width=0.81\textwidth]{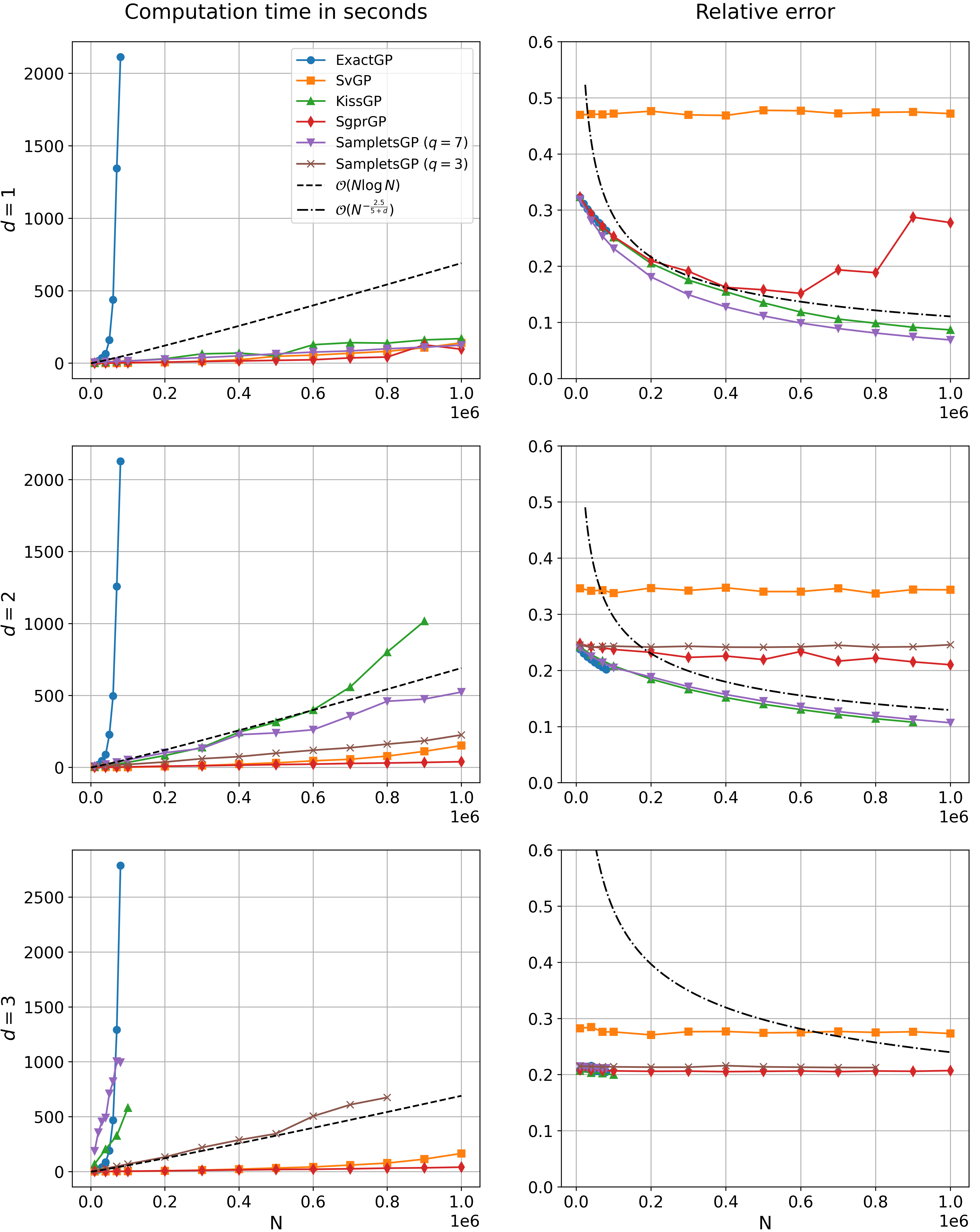}
	\caption{Computation times for a single step of hyperparameter optimization with the time taken to perform predictions on $m=1,000$ randomly sampled points. The figure also includes a comparison of the relative prediction errors.}\label{fig. 18}
\end{figure}

For a single dimension, Model 1 efficiently trains the GP and produces reliable predictions. Compared to the other models, it achieves a lower prediction error, likely due to the different implementations of the Adam optimizer in the Flux and Torch libraries. After the optimization step, Flux results in a signal noise of $s^2 = 1.1$, while Torch sets it to $s^2 = 1.05$. Model 1 also performs well for $d=2$, whereas most other GP models maintain a constant relative error. This is largely due to the reliance of other models on sparse methods, like inducing points, which limits their ability to leverage the convergence results from Chapter \ref{Chap. 3}. Additionally, these models use conjugate gradients which may terminate after $1,000$ iterations. For $d=3$, the first model becomes impractical due to high computational and memory requirements. The second model exhibits a constant prediction error and does not allow for optimization of $\sigma^2$. The constant error stems from a growing approximation error.


\subsection{Medicare Inpatient Hospitals - by Provider and Service} \label{Chap. 6.2}
Medicare is a federal health insurance program in the USA that covers people aged 65 or older and younger individuals with certain disabilities. The Medicare Inpatient Hospitals by Provider and Service \href{https://data.cms.gov/provider-summary-by-type-of-service/medicare-inpatient-hospitals/medicare-inpatient-hospitals-by-provider-and-service/data}{dataset} provides information about IPPS hospital stays for Medicare Part A beneficiaries (the part of Medicare that covers hospital care). Under the IPPS, each hospital stay is categorized into a Medicare Severity Diagnosis Related Group (MS-DRG) based on the patient’s diagnosis, the procedures performed, complicating conditions, age, sex, and discharge status. Hospitals set the prices they charge for the services they provide, and these charges reflect what the hospital bills the patient.\\
For our analysis, we use the \href{https://data.cms.gov/provider-summary-by-type-of-service/medicare-inpatient-hospitals/medicare-inpatient-hospitals-by-provider-and-service/data}{data} from 2022, which contains $145,742$ different MS-DRGs. There are $d=3$ key predictors:
\begin{enumerate}[topsep=5pt]
	\setlength\itemsep{0.1mm}
\item Total discharges: The number of patients treated and discharged.
\item Average submitted covered charges: The average amount hospitals billed for these patients.
\item Average Medicare Payment: The average amount Medicare actually paid.
\end{enumerate}
Our goal is to predict the average total payment amount using these predictors. The total payment amount includes multiple components such as the DRG amount, per diem payments, payments by other insurers, patient Part A coinsurance, deductible amounts, and any outlier payments related to the diagnosis. This amount is typically a bit higher than the Medicare payment, but drastically lower than the submitted charges by the hospital.\\

\noindent
Define $\Omega \subset \mathbb{R}^3$ to encompass the range of the data, specifically
$$\Omega = (10, 2751) \times (3433, 9709953) \times (144, 660249).$$ 
We assume that the unknown function  $f:\Omega \rightarrow \mathbb{R}$ is at least twice differentiable, such that $f \in W^2_2(\Omega)$. Since we are working within an interpolation framework, we employ the exponential kernel $k_{1/2,\ell}$ as a $2$-smooth kernel. Consider $N=145,000$ observations to train the Gaussian Processes and the remaining $m = 742$ observations to compute the relative error through Monte Carlo integration (\ref{eq. 6.1.1}). The points are not quasi-uniform, as it is unrealistic for Medicare payments to exceed the charges initially submitted by the hospital. Nevertheless, we test the theoretical results in Chapter \ref{Chap. 3} and the computational benefits in using samplets. Following section \ref{Chap. 3.4}, the optimal convergence rate achievable is
$$\dfrac{\| f-m' \|_{L^2 (\Omega)}}{\| f \|_{L^2 (\Omega)}}
= \mathcal{O} (N^{-\frac{1+d}{ 2d }})
 \quad \text{as } N \rightarrow \infty.$$
To compare algorithm performance, the initial hyperparameters $\ell, s^2, \sigma^2 = 1$ are being optimized over five trials. In this experiment, we evaluate two SampletsGP models with the following hyperparameters:

\begin{table}[H]
\centering
\begin{tabular}{|c|c|c|c|}
\hline
$ \ $ & $q$ & $\eta$ & $\tau_{comp}$ \\ \hline
Model 1 & 5 & 1.0 & $10^{-3}$ \\ \hline
\multirow{2}{*}{Model 2} & 3 & 1.0 & $10^{-2}$ \\ \cline{2-4} 
                         & 5 & 1.0 & $10^{-3}$ \\ \hline
\end{tabular}
\end{table}


Model 1 and KissGP are expected to have high computational demands in three dimensions, so we limit their comparison to two dimensions by excluding the "Total discharges" predictor. Model 2 uses different $q$ and $\tau_{comp}$ during kernel learning and prediction. We first optimize the kernel parameters with a more efficient configuration ($q=3$), then apply a more refined setup ($q=5$) to minimize approximation error for reliable predictions. In this comparison, the samplets setup is included, though they only need to be constructed once. Figure \ref{fig. 19} presents the results.

\begin{figure}[h]
	\centering
	\includegraphics[width=0.85\textwidth]{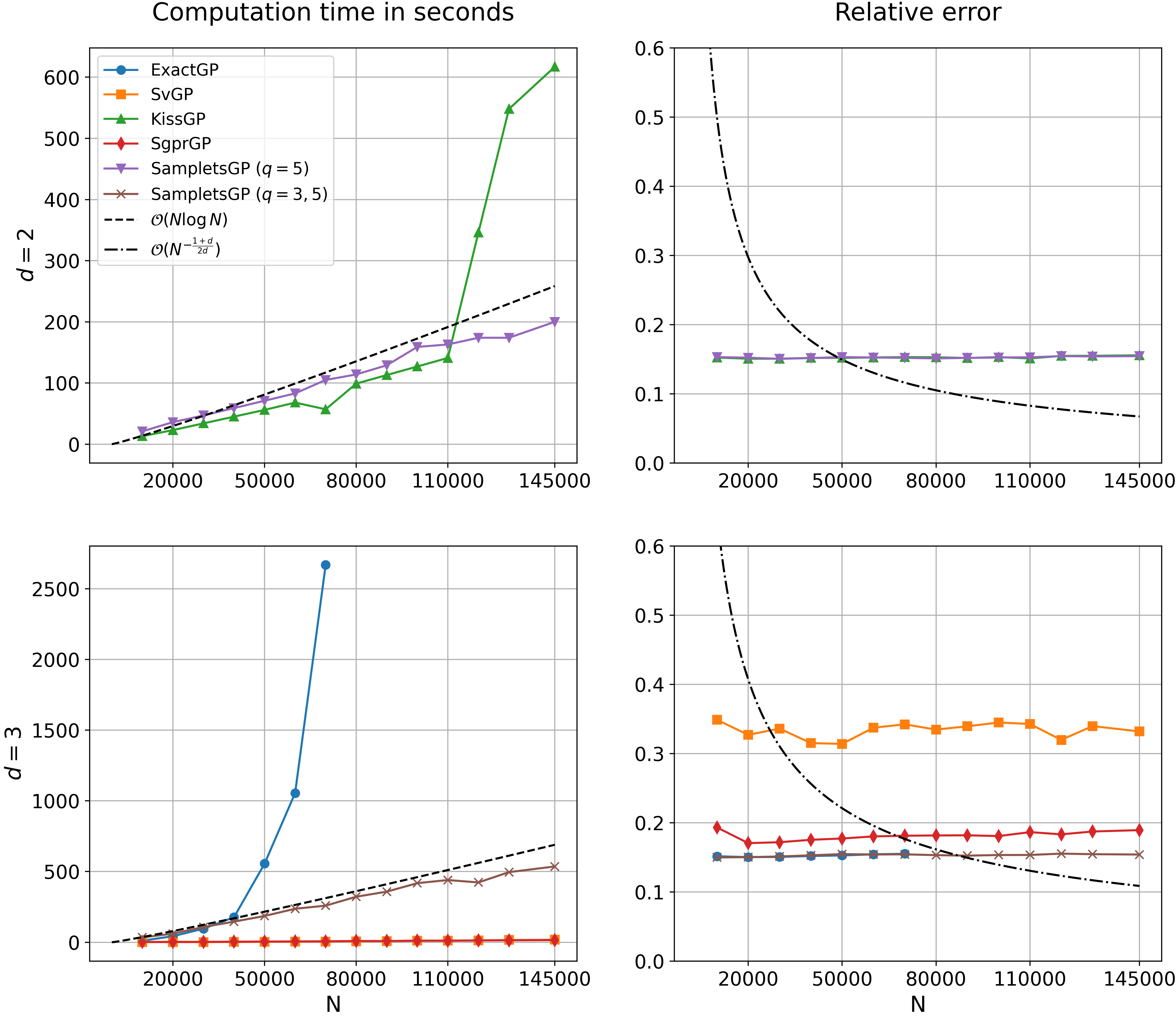}
	\caption{Computation times for five steps of hyperparameter optimization with the time taken to perform predictions on $m=742$ points. The figure also includes a comparison of the relative prediction errors.}\label{fig. 19}
\end{figure}

In both cases, the SampletsGP models perform well, achieving log-linear complexity with a stable prediction error. Even for the exact GP algorithm, the prediction error remains constant, indicating that the expected convergence results do not hold. Possible reasons for this could be that the assumption $f \in W^2_2(\Omega)$ is incorrect, the points are not quasi-uniform, or key predictors are missing. The good performance of the samplets approach indicates it can be effective even with non quasi-uniform points. This experiment highlights the importance of carefully selecting the hyperparameters $q$ and $\tau_{comp}$, with distinct considerations for kernel learning versus prediction. It also suggests that the "Total Discharges" predictor may be less important than initially expected.


\newpage
\subsection{Ackley function} \label{Chap. 6.3}
This example aims to compare Algorithm \ref{alg. 11} with the conventional approach of performing Bayesian Optimization using exact Gaussian Processes. In Bayesian Optimization, the number of observations is typically limited, as function evaluations are costly. For low-dimensional settings, this often restricts us to a few hundred observations, whereas in higher-dimensional scenarios, the limit may extend to several thousand. However, in the rare instance where a low-dimensional problem includes up to 20,000 observations, we are interested in exploring whether Gaussian Processes with Samplets provide any advantages. To evaluate this, we aim to maximize the 2D negative Ackley function
$$f(x, y) = 20 \exp \left(-0.2 \cdot \sqrt{ 0.5 (x^2 + y^2)} \right) + \exp\left( 0.5 \left(\cos 2\pi x + \cos 2\pi y \right)\right) - 20 - e$$
for $x,y \in [-5,5]$ within an interpolation framework. This function is non-convex and exhibits multiple local maxima, with a global maximum at $(0,0)$. To capture its behaviour, we employ the Matérn 5/2 kernel and initialize the hyperparameters $\ell, s^2, \sigma^2 = 1$ for the first GP model. Every $100$ observations, a new GP model is constructed, with the previous hyperparameters serving as starting values. For each model, these hyperparameters undergo two rounds of optimization. Following section \ref{Chap. 5.3}, the optimal convergence rate achievable is
$$| f(\boldsymbol{x}^*) - f(\boldsymbol{x}_N) |
= \mathcal{O} (N^{-0.75})
\quad \text{as } N \rightarrow \infty.$$
Since exactGP is faster than SampletsGP for a moderate number of observations, we use exactGP in Algorithm \ref{alg. 11} for up to $10,000$ observations. Beyond this threshold, we switch to the SampletsGP model with the following hyperparameters:

\begin{table}[h!]
\centering
\begin{tabular}{|c|c|c|c|}
\hline
$ \ $ & $q$ & $\eta$ & $\tau_{comp}$ \\ \hline
Model 1 & 5 & 1.0 & $10^{-3}$ \\ \hline
\end{tabular}
\end{table}

For simplicity, we use a single SampletsGP model without distinguishing between kernel learning and prediction. For comparison, the ExactBO algorithm is simply Algorithm \ref{alg. 11}, excluding the $\gamma$-stabilized algorithm framework and relying solely on exactGP. Figure \ref{fig. 20} presents the results.

\begin{figure}[H]
	\centering
	\includegraphics[width=0.87\textwidth]{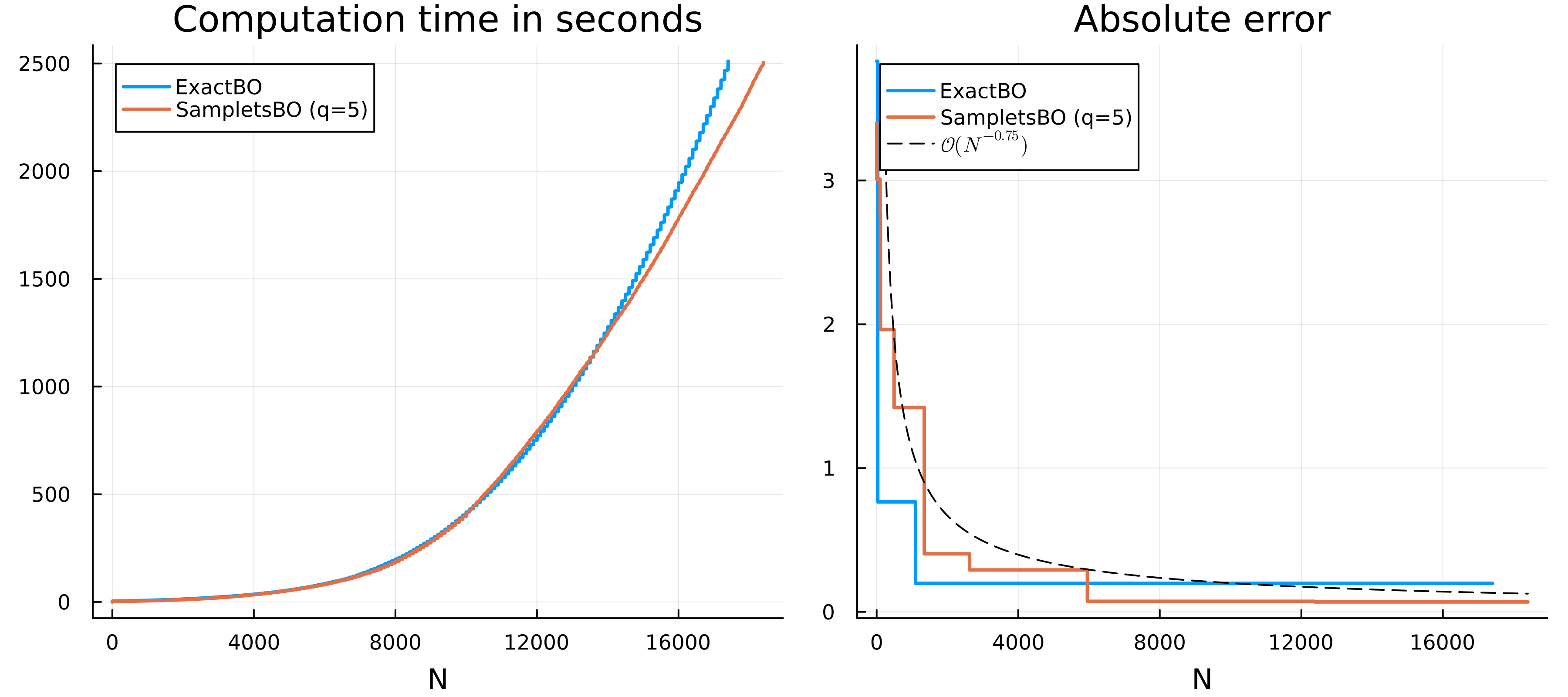}
	\caption{Computation times and absolute error $| f(\boldsymbol{x}^*) - f(\boldsymbol{x}_N) |$ for two Bayesian Optimization algorithms.}\label{fig. 20}
\end{figure}


This experiment demonstrates that $20,000$ observations are inadequate to make SampletsBO more advantageous than the ExactBO algorithm. Although the SampletsBO algorithm has a log-linear complexity, the constants involved are too large. This suggests that utilizing Samplets in Bayesian Optimization is not advantageous, as a large number of observations is seldom available to fully leverage their capabilities. Large datasets are typically available in higher-dimensional spaces, while the Samplets approach is most effective in low-dimensional settings.\\
The convergence rates of both algorithms are similar. However, with a large number of observations, the ExactBO algorithm tends to get stuck in exploitation. In contrast, the $\gamma$-stabilized framework for SampletsBO promotes continued exploration. Further numerical experiments are needed to comprehensively study the convergence rates.


\subsection{Details on the Experiments} \label{Chap. 6.4}
To compute a Gaussian Process via Samplets, I implemented Algorithm \ref{alg. 10} in Julia. For the computation of the compressed kernel matrix, I utilized the \href{https://github.com/muchip/fmca}{FMCA library}, which implements the algorithms discussed in Chapter \ref{Chap. 4} in \texttt{C++}. For efficient sparse Cholesky decomposition, I employed the \href{https://github.com/DrTimothyAldenDavis/SuiteSparse/tree/dev/CHOLMOD}{CHOLMOD library} from \href{https://github.com/DrTimothyAldenDavis/SuiteSparse}{SuiteSparse}. It employs an approximate minimum degree algorithm by default to minimize fill-in. For larger or denser matrices, it uses graph partitioning methods for nested dissection. Until now, many programming languages, including Python, Julia and MATLAB, use this library by default for performing sparse matrix Cholesky decompositions. The hyperparameters used are the lengthscale $\ell$, signal variance $s^2$ and noise variance $\sigma^2$. To optimize these, I use the Adam optimizer from the \href{https://github.com/FluxML/Flux.jl}{Flux library}, written in Julia, with the following constraints: 
$$\ell \in [0.005, 2.0], \quad s^2 \in [0.05, 20.0] \quad \text{and} \quad \sigma^2 \in [0.1, 2.0].$$
Before computing a Gaussian Process, I normalize the training data to reinforce the assumption of a zero-mean function and to improve numerical stability.\\
The algorithms used for comparison are implemented in GPyTorch, with implementation details available on their \href{https://gpytorch.ai/}{homepage}. For consistency in the comparisons, I also normalize the data, employ the same hyperparameters, apply the same constraints, and utilize the Adam optimizer.\\

The KissGP implementation requires specifying a grid size to interpolate the kernel function. For all experiments, I automatically determined this value using the helper function $\texttt{gpytorch.utils.grid.choose\_grid\_size}$, as recommended by GPyTorch. Manual grid size settings were also tested but yielded similar results. The SvGP and SgprGP implementations require selecting a set of inducing points. For SvGP, approximately $\sqrt{N}$ points were randomly selected from the total $N$. For SgprGP, the number of inducing points varied between $500$ and $1000$, chosen based on the best performance.\\

The experiment in Section \ref{Chap. 6.2} used the Medicare Inpatient Hospitals by Provider and Service dataset, accessible at \href{https://data.cms.gov/provider-summary-by-type-of-service/medicare-inpatient-hospitals/medicare-inpatient-hospitals-by-provider-and-service/data}{https://data.cms.gov/provider-summary-by-type-of-service/medicare-inpatient-hospitals/medicare-inpatient-hospitals-by-provider-and-service/data}.

\newpage
\section{Conclusion and Outlook} \label{Chap. 7}
The convergence results in Chapter \ref{Chap. 3} address the model selection problem for Gaussian Processes in regression tasks. They advocate the use of $\tau$-smooth kernels and outline the configurations needed to attain optimal convergence rates in Sobolev norms. These results depend on the smoothness of both the function and the kernel, while remaining unaffected by other kernel parameters. This independence enables the use of maximum likelihood estimation for parameter learning without compromising the convergence rates. Importantly, these findings are particularly applicable to the significant class of Matérn kernels. The experiments indicate that minor deviations from the assumptions can invalidate the convergence rates. However, it is still prudent to apply these findings during the model selection process to identify the model with the highest potential for optimal rates.

Samplets tackle the construction challenges of Gaussian Processes when dealing with a large number of observations in low-dimensional settings. Specifically, we use Samplets to compress the kernel matrix into a sparse format, enabling efficient computation. Algorithm \ref{alg. 10} details this construction process and because all observations are retained, we can utilize the convergence rates discussed in Chapter \ref{Chap. 3}. In the best case this approach reduces the cubic computational complexity to log-linear without loosing accuracy. Numerical experiments show that even with slight deviations from quasi-uniform point distributions, the log-linear complexity and prediction accuracy remain robust. Compared to state-of-the-art methods, SampletsGP provides enhanced flexibility with tunable parameters and includes all observations instead of relying on a subset. In three and four dimensions, careful selection of these tunable parameters is essential to balance approximation error, numerical stability, and computational cost. When using the SampletsGP algorithm, it is recommended to initially optimize the kernel parameters with a more efficient configuration, followed by a refined setup for prediction to minimize approximation error. In efficient configurations, it is often beneficial to treat the noise variance $\sigma^2$ as a constant rather than optimizing it, due to potential numerical instability. Since the Samplets only need to be constructed once for the kernel parameter optimization, SampletsGP is an efficient method for parameter learning.\\

A significant advancement in the Samplet theory would be extending their application to higher-dimensional spaces, though this currently appears unfeasible. For Gaussian Processes, future research could explore integrating Samplets with existing GP approximation methods or replacing Cholesky decomposition with conjugate gradient methods to improve efficiency.

The implemented algorithms also exhibit potential for further optimization. Currently, the SampletsGP algorithm computes the two matrices $\boldsymbol{K}_{\eta}^{\Sigma}$ and $\boldsymbol{\mathfrak{K}}^{\boldsymbol{\Sigma}}_\eta$ separately, which accounts for approximately $40 \%$ of the total computation time. Merging these calculations into a single process could lead to a $20 \%$ reduction in time. Additionally, converting the sparse matrix format from Python to Julia’s SparseMatrixCSC format consumes roughly $10 \%$ of the runtime, suggesting that a direct implementation in Python could save another $10 \%$. The Bayesian Optimization algorithm currently lacks an implementation for updating the Samplet tree.
\newpage
\appendix
\section{Appendix} \label{Chap. 8}
\subsection{Reproducing Kernel Hilbert Spaces} \label{Chap. 8.1}
The Moore-Aronszajn Theorem (Proposition \ref{2.2.2}) is well-known in the literature. However, despite Aronszajn's initial work, there are not many comprehensive references that verify it. Thus, a proof will be provided in this section, based on the proof outline provided on page 11 of \cite{HK}. The following two fundamental results from functional analysis are necessary:

\begin{proposition}
Every inner product space can be completed to a Hilbert space.
\end{proposition}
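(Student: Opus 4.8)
The plan is to imitate the classical construction of $\mathbb{R}$ from $\mathbb{Q}$, working with Cauchy sequences. Let $(V, \langle \cdot, \cdot \rangle)$ be an inner product space with induced norm $\|\cdot\|$. First I would let $\mathcal{C}$ denote the set of all Cauchy sequences $(x_n)_{n\in\mathbb{N}}$ in $V$; using the triangle inequality and the estimate $\|\lambda x_n - \lambda x_m\| = |\lambda|\,\|x_n-x_m\|$, one checks that $\mathcal{C}$ is a vector space under componentwise operations. Next I would introduce the subspace $N = \{(x_n)\in\mathcal{C} : \|x_n\|\to 0\}$ and define $\mathcal{H} = \mathcal{C}/N$, writing $[(x_n)]$ for equivalence classes.

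The core algebraic step is to equip $\mathcal{H}$ with an inner product by setting $\langle [(x_n)], [(y_n)] \rangle_{\mathcal{H}} = \lim_{n\to\infty} \langle x_n, y_n \rangle$. Here I would first show the scalar sequence $(\langle x_n,y_n\rangle)_n$ is Cauchy, hence convergent: this follows from
\[
|\langle x_n,y_n\rangle - \langle x_m,y_m\rangle| \le |\langle x_n - x_m, y_n\rangle| + |\langle x_m, y_n - y_m\rangle| \le \|x_n-x_m\|\,\|y_n\| + \|x_m\|\,\|y_n-y_m\|,
\]
together with the fact that Cauchy sequences are bounded. Then I would verify the value is independent of the chosen representatives (if $(x_n)-(x_n')\in N$ the limits agree, again by Cauchy–Schwarz), and that the resulting form is bilinear, symmetric, and positive definite --- positive definiteness being precisely the statement that $\lim\|x_n\|^2 = 0$ forces $(x_n)\in N$. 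The map $\iota : V \to \mathcal{H}$, $\iota(x) = [(x,x,x,\dots)]$, is then linear and isometric, and its image is dense because for $[(x_n)]\in\mathcal{H}$ one has $\|\,[(x_n)] - \iota(x_m)\,\|_{\mathcal{H}}^2 = \lim_{n}\|x_n - x_m\|^2$, which tends to $0$ as $m\to\infty$ since $(x_n)$ is Cauchy.

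The main obstacle, and the step I would spend the most care on, is completeness of $\mathcal{H}$. Given a Cauchy sequence $(\xi^{(k)})_{k\in\mathbb{N}}$ in $\mathcal{H}$, I would use density of $\iota(V)$ to pick $v_k\in V$ with $\|\xi^{(k)} - \iota(v_k)\|_{\mathcal{H}} < 1/k$; a $3\varepsilon$-argument then shows $(v_k)$ is Cauchy in $V$, so $\xi := [(v_k)]\in\mathcal{H}$ is well defined, and one checks $\|\xi^{(k)} - \xi\|_{\mathcal{H}} \to 0$ by combining $\|\xi^{(k)}-\iota(v_k)\|_{\mathcal{H}} \to 0$ with $\|\iota(v_k) - \xi\|_{\mathcal{H}} = \lim_{j}\|v_k - v_j\| \to 0$. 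This diagonal/density argument is routine but is where all the pieces have to fit together. Finally I would note the construction is essentially unique: any two completions are related by a unique isometric isomorphism extending the identity on $V$, which follows from the universal property that an isometry from a dense subspace into a complete space extends uniquely to the closure.
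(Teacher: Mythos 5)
Your construction is correct and complete in outline: the quotient of Cauchy sequences by null sequences, the limit inner product (with the Cauchy--Schwarz estimate justifying convergence and well-definedness), the isometric dense embedding, and the $3\varepsilon$ density argument for completeness are exactly the standard proof. The paper itself does not prove this proposition but defers it to the literature (Theorem 10.12 in \cite{M}), and the argument given there is the same Cauchy-sequence completion you describe, so there is nothing to reconcile.
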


\begin{proposition}
Let $M \subseteq \mathcal{H}$ be a closed subspace of a Hilbert space $\mathcal{H}$. Then is
$$\mathcal{H} = M \oplus M^\perp.$$
\end{proposition}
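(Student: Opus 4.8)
The plan is to prove this via the projection theorem: the decomposition $\mathcal{H} = M \oplus M^\perp$ follows once we establish that every $x \in \mathcal{H}$ admits a (unique) nearest point in the closed subspace $M$, and that the residual is orthogonal to $M$. First I would fix an arbitrary $x \in \mathcal{H}$ and set $d = \inf_{m \in M} \| x - m \|$. Choosing a minimizing sequence $(m_n) \subset M$ with $\| x - m_n \| \to d$, I would show $(m_n)$ is Cauchy. This is the technical heart of the argument, and I expect it to be the main obstacle, since it is exactly where the Hilbert space structure (as opposed to a general Banach space) is indispensable. The tool is the parallelogram law applied to $x - m_n$ and $x - m_k$:
\begin{equation*}
\| m_n - m_k \|^2 = 2 \| x - m_n \|^2 + 2 \| x - m_k \|^2 - 4 \Big\| x - \tfrac{m_n + m_k}{2} \Big\|^2.
\end{equation*}
Because $\tfrac{m_n + m_k}{2} \in M$ (here $M$ being a subspace is used), the last term is at least $d^2$, so the right-hand side is bounded by $2\| x - m_n \|^2 + 2\| x - m_k \|^2 - 4 d^2$, which tends to $0$ as $n, k \to \infty$. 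Hence $(m_n)$ is Cauchy; since $M$ is closed in the complete space $\mathcal{H}$, it is itself complete, so $m_n \to m$ for some $m \in M$, and continuity of the norm gives $\| x - m \| = d$.

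Next I would verify orthogonality of the residual, i.e. that $x - m \in M^\perp$. For any $n \in M$ and any real $t$, the vector $m + t n$ lies in $M$, so $\| x - m - t n \|^2 \ge d^2 = \| x - m \|^2$. Expanding the left-hand side yields
\begin{equation*}
\| x - m \|^2 - 2 t \langle x - m, n \rangle + t^2 \| n \|^2 \ge \| x - m \|^2,
\end{equation*}
that is $-2 t \langle x - m, n \rangle + t^2 \| n \|^2 \ge 0$ for all $t \in \mathbb{R}$. Viewing this as a quadratic in $t$ that is nonnegative everywhere forces $\langle x - m, n \rangle = 0$; since $n \in M$ was arbitrary, $x - m \in M^\perp$. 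This produces the decomposition $x = m + (x - m)$ with $m \in M$ and $x - m \in M^\perp$, establishing $\mathcal{H} = M + M^\perp$.

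Finally I would confirm the sum is direct by showing $M \cap M^\perp = \{ 0 \}$: if $v \in M \cap M^\perp$, then $v$ is orthogonal to itself, so $\langle v, v \rangle = \| v \|^2 = 0$ and hence $v = 0$. Combined with the existence of the decomposition, this gives the internal direct sum $\mathcal{H} = M \oplus M^\perp$ and, as a byproduct, the uniqueness of the representation $x = m + m^\perp$. The only subtle points to keep in mind are that closedness of $M$ is what guarantees completeness of the minimizing sequence's limit lies in $M$, and that the subspace property of $M$ is used both in the parallelogram estimate and in the variational orthogonality argument; if one works over $\mathbb{C}$ rather than $\mathbb{R}$, the quadratic-in-$t$ step should be run with $t$ replaced by a suitable complex scalar (or one treats the real and imaginary parts of $\langle x - m, n \rangle$ separately).
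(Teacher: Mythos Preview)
Your proof is correct and follows the standard projection-theorem route (parallelogram law to get a Cauchy minimizing sequence, variational argument for orthogonality, then $M \cap M^\perp = \{0\}$). The paper does not supply its own proof of this proposition; it simply cites Theorem 10.12 in Muscat's \emph{Functional Analysis} textbook, where the argument is essentially the one you outline. So there is nothing to compare beyond noting that your write-up is self-contained whereas the paper defers to a reference.
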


These statements and their verifications can be found in \cite{M} as Example 10.7 and Theorem 10.12. With these foundations established, we now proceed to the proof of the Moore-Aronszajn Theorem.

\begin{claim}
For every kernel $k$ exists a unique RKHS.
\end{claim}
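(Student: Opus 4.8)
The plan is to construct the RKHS explicitly from $k$ and then argue that any two RKHS sharing the kernel $k$ must coincide. First I would introduce the pre-Hilbert space
$$\mathcal{H}_0 = \text{span}\{ k(\cdot, \boldsymbol{x}) \ : \ \boldsymbol{x} \in \Omega \}$$
and equip it with the bilinear form determined on spanning elements by $\langle k(\cdot,\boldsymbol{x}), k(\cdot, \boldsymbol{y}) \rangle = k(\boldsymbol{x}, \boldsymbol{y})$. The first routine check is well-definedness: for $f = \sum_i c_i k(\cdot, \boldsymbol{x}_i)$ and $g = \sum_j d_j k(\cdot, \boldsymbol{y}_j)$ one has $\sum_{i,j} c_i d_j k(\boldsymbol{x}_i, \boldsymbol{y}_j) = \sum_j d_j f(\boldsymbol{y}_j) = \sum_i c_i g(\boldsymbol{x}_i)$, which depends only on the functions $f,g$ and not on the chosen representations, and at the same time yields the reproducing property $\langle f, k(\cdot, \boldsymbol{x})\rangle = f(\boldsymbol{x})$ on $\mathcal{H}_0$. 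Symmetry of the form follows from part (i) of Definition \ref{2.1.1} and positive semi-definiteness from part (ii). To promote this to a genuine inner product I would apply the Cauchy-Schwarz inequality for the semi-inner product together with the reproducing property: $|f(\boldsymbol{x})|^2 = |\langle f, k(\cdot,\boldsymbol{x})\rangle|^2 \le \langle f,f\rangle\, k(\boldsymbol{x},\boldsymbol{x})$, so $\langle f,f\rangle = 0$ forces $f \equiv 0$.

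Next I would complete $(\mathcal{H}_0, \langle\cdot,\cdot\rangle)$ to a Hilbert space using the first proposition above. The delicate point — and the step I expect to be the main obstacle — is that an abstract completion consists of equivalence classes of Cauchy sequences, not a priori of functions on $\Omega$, so it must be realized concretely as a function space. The key observation is that point evaluation is norm-continuous on $\mathcal{H}_0$, since $|f(\boldsymbol{x})| \le \sqrt{k(\boldsymbol{x},\boldsymbol{x})}\,\|f\|$; hence any $\langle\cdot,\cdot\rangle$-Cauchy sequence $(f_n)$ in $\mathcal{H}_0$ is pointwise Cauchy, and I would declare its limit element to be the pointwise limit function $\boldsymbol{x} \mapsto \lim_n f_n(\boldsymbol{x})$. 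I then need to verify that this assignment descends to equivalence classes, that the resulting set $\mathcal{H}$ of functions is a linear space on which the inherited inner product is positive definite (so that distinct Cauchy classes give distinct functions, again by continuity of evaluation), that $k(\cdot,\boldsymbol{x}) \in \mathcal{H}$, and that the reproducing property survives the limit: $\langle f, k(\cdot,\boldsymbol{x})\rangle = \lim_n \langle f_n, k(\cdot,\boldsymbol{x})\rangle = \lim_n f_n(\boldsymbol{x}) = f(\boldsymbol{x})$. This proves existence, and I would remark that the $\mathcal{H}$ so obtained is precisely the space described in Corollary \ref{2.2.3}.

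For uniqueness, suppose $\mathcal{H}_1$ and $\mathcal{H}_2$ are both RKHS for $k$. Each contains every $k(\cdot,\boldsymbol{x})$, so $\mathcal{H}_0 \subseteq \mathcal{H}_1 \cap \mathcal{H}_2$, and on $\mathcal{H}_0$ both inner products are forced by the reproducing property to equal $\langle k(\cdot,\boldsymbol{x}), k(\cdot,\boldsymbol{y})\rangle = k(\boldsymbol{x},\boldsymbol{y})$; in particular the two norms agree on $\mathcal{H}_0$. Moreover $\mathcal{H}_0$ is dense in each $\mathcal{H}_i$: if $f \in \mathcal{H}_i$ is orthogonal to $\overline{\mathcal{H}_0}$, then the second proposition above (applied to the closed subspace $\overline{\mathcal{H}_0}$) together with the reproducing property gives $f(\boldsymbol{x}) = \langle f, k(\cdot,\boldsymbol{x})\rangle_{\mathcal{H}_i} = 0$ for all $\boldsymbol{x}$, hence $f = 0$. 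Thus each $\mathcal{H}_i$ is a completion of the same pre-Hilbert space $\mathcal{H}_0$; since both are realized inside the space of functions on $\Omega$ through the same pointwise-limit identification and completions are unique up to isometric isomorphism, I conclude $\mathcal{H}_1 = \mathcal{H}_2$ as sets of functions with identical inner products, which finishes the proof.
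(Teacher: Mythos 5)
Your proposal is correct and follows essentially the same route as the paper: build the pre-Hilbert space spanned by the kernel sections $k(\cdot,\boldsymbol{x})$, verify that the canonical bilinear form is a genuine inner product (your Cauchy--Schwarz argument is the same computation the paper carries out by hand with the auxiliary parameter $t$), complete, and derive uniqueness from the orthogonal decomposition together with the reproducing property. You are in fact more careful than the paper on two points it glosses over -- well-definedness of the inner product under different representations, and the realization of the abstract completion as a space of functions via the norm-continuity of point evaluation -- but these are refinements of the same argument, not a different one.
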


\begin{proof}
Consider the space of functions
\begin{equation}
\mathcal{H}_0 = \left\{ g(\cdot) = \sum_{i=1}^n c_i k(\cdot,\boldsymbol{x}_i) \, : \, n\in \mathbb{N}, \boldsymbol{x}_i \in \Omega, c_i \in \mathbb{R} \right\}. \label{A.1.1}
\end{equation}
Let $g'(\cdot) = \sum_{j=1}^{n'} c_j' k(\cdot,\boldsymbol{x}_j ')$ and set
$$\langle g, g' \rangle_{\mathcal{H}_0 }
= \sum_{i=1}^n \sum_{j=1}^{n'} c_i c_j' k(\boldsymbol{x}_i, \boldsymbol{x}_j').$$
We want $\langle \cdot, \cdot \rangle_{\mathcal{H}_0 }$ to define an inner product on $\mathcal{H}_0$. The linearity and symmetry can be easily verified. By the definition of a kernel, we have
$$\langle g, g \rangle_{\mathcal{H}_0 } = \sum_{i,j=1}^n c_i c_j k(\boldsymbol{x}_i, \boldsymbol{x}_j) \geq 0.$$
It is $k(\cdot, \boldsymbol{x}) \in \mathcal{H}_0$ for every $\boldsymbol{x} \in \Omega$ and
$$\langle g, k(\cdot, \boldsymbol{x}) \rangle_{\mathcal{H}_0}
= \sum_{i=1}^n c_i k(\boldsymbol{x}_i, \boldsymbol{x})
= \sum_{i=1}^n c_i k(\boldsymbol{x}, \boldsymbol{x}_i)
= g(\boldsymbol{x}).$$
Let $\langle g, g \rangle_{\mathcal{H}_0 } = 0$ and $t \in \mathbb{R}$. For $\boldsymbol{x} \in \Omega$, we have
\begin{align*}
0 \leq \langle g + t k (\cdot, \boldsymbol{x}), g + t k (\cdot, \boldsymbol{x} \rangle_{\mathcal{H}_0} = 2t \langle g, k (\cdot, \boldsymbol{x}) \rangle_{\mathcal{H}_0} + t^2 k(\boldsymbol{x}, \boldsymbol{x}) = 2 t g(\boldsymbol{x}) + t^2 k(\boldsymbol{x}, \boldsymbol{x}).
\end{align*}
If $k(\boldsymbol{x}, \boldsymbol{x})=0$, setting $t = - g(\boldsymbol{x})$ gives $g(\boldsymbol{x})=0$. If $k(\boldsymbol{x}, \boldsymbol{x}) >0$, setting $t = - g(\boldsymbol{x})/k(\boldsymbol{x}, \boldsymbol{x})$ yields
$$2\dfrac{g(\boldsymbol{x})}{k(\boldsymbol{x}, \boldsymbol{x})} g(\boldsymbol{x}) 
\leq \dfrac{g(\boldsymbol{x})^2 }{k(\boldsymbol{x}, \boldsymbol{x})^2 } k(\boldsymbol{x}, \boldsymbol{x}) \Longrightarrow 2 g(\boldsymbol{x})^2 \leq g(\boldsymbol{x})^2 \Longrightarrow g(\boldsymbol{x}) = 0.$$
This implies $g \equiv 0$, showing that $\langle \cdot, \cdot \rangle_{\mathcal{H}_0 }$ is an inner product. Because the inner product is continuous, the completion $\mathcal{H}:=\overline{\mathcal{H}_0}$ is a RKHS for $k$. \\
Let $G$ be another RKHS for $k$ and $g \in G$. As $k(\cdot, \boldsymbol{x}) \in G$ we have $\mathcal{H} \subseteq G$ and can write $g = g_{\mathcal{H}} + g_{\mathcal{H}^\perp}$ with $g_{\mathcal{H}} \in \mathcal{H}$ and $g_{\mathcal{H}^\perp} \in \mathcal{H}^\perp$. Because of
$$\langle k(\boldsymbol{x}, \cdot)), k(\cdot, \boldsymbol{x}') \rangle_\mathcal{H}
= k(\boldsymbol{x}, \boldsymbol{x}')
= \langle k(\boldsymbol{x}, \cdot)), k(\cdot, \boldsymbol{x}') \rangle_G$$
and linearity of the inner products, it is $\langle \cdot , \cdot \rangle_\mathcal{H} = \langle \cdot , \cdot \rangle_G$ for functions in $\mathcal{H}$. It follows
\begin{align*}
g(\boldsymbol{x}) = \langle g, k(\cdot, \boldsymbol{x}) \rangle_G 
= \langle g_{\mathcal{H}} , &k(\cdot, \boldsymbol{x}) \rangle_G 
		+ \langle g_{\mathcal{H}^\perp} , k(\cdot, \boldsymbol{x}) \rangle_G\\
&= \langle g_{\mathcal{H}} , k(\cdot, \boldsymbol{x}) \rangle_G 
= \langle g_{\mathcal{H}} , k(\cdot, \boldsymbol{x}) \rangle_\mathcal{H} 
= g_\mathcal{H} (\boldsymbol{x})
\end{align*}
for every $\boldsymbol{x} \in \Omega$, respectively $g \equiv g_\mathcal{H} \in \mathcal{H}$.
\end{proof}

To verify the reverse direction of the claim, additional results from the functional analysis literature are not needed.

\begin{claim}
For every RKHS, there exists a unique kernel.
\end{claim}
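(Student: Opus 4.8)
The plan is to construct the kernel directly from the RKHS structure using the reproducing property, and then establish uniqueness. Given a reproducing kernel Hilbert space $\mathcal{H}$ with inner product $\langle \cdot, \cdot \rangle_{\mathcal{H}}$, the natural candidate is to \emph{define} $k(\boldsymbol{x}, \boldsymbol{x}') := \langle k(\cdot, \boldsymbol{x}'), k(\cdot, \boldsymbol{x}) \rangle_{\mathcal{H}}$, where $k(\cdot, \boldsymbol{x}) \in \mathcal{H}$ is the representer of point evaluation at $\boldsymbol{x}$. But since we are starting from an abstract RKHS, the subtlety is that the existence of these representers is exactly what Definition \ref{2.2.1} postulates: condition (i) gives $k(\cdot, \boldsymbol{x}) \in \mathcal{H}$ and condition (ii) gives $\langle g, k(\cdot, \boldsymbol{x}) \rangle_{\mathcal{H}} = g(\boldsymbol{x})$. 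So the kernel is already named in the definition, and the task reduces to verifying that this $k$ is symmetric and positive semi-definite in the sense of Definition \ref{2.1.1}.

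First I would verify symmetry: apply the reproducing property to the function $g = k(\cdot, \boldsymbol{x}')$ evaluated at $\boldsymbol{x}$, obtaining $k(\boldsymbol{x}, \boldsymbol{x}') = \langle k(\cdot, \boldsymbol{x}'), k(\cdot, \boldsymbol{x}) \rangle_{\mathcal{H}}$; then swap the roles of $\boldsymbol{x}$ and $\boldsymbol{x}'$ and use the symmetry of the inner product to conclude $k(\boldsymbol{x}, \boldsymbol{x}') = k(\boldsymbol{x}', \boldsymbol{x})$. Next, for positive semi-definiteness, take any $n \in \mathbb{N}$, $c_1, \dots, c_n \in \mathbb{R}$ and $\boldsymbol{x}_1, \dots, \boldsymbol{x}_n \in \Omega$, and consider the element $h = \sum_{i=1}^n c_i k(\cdot, \boldsymbol{x}_i) \in \mathcal{H}$. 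Then
$$
\sum_{i=1}^n \sum_{j=1}^n c_i c_j k(\boldsymbol{x}_i, \boldsymbol{x}_j) = \Big\langle \sum_{i=1}^n c_i k(\cdot, \boldsymbol{x}_i), \sum_{j=1}^n c_j k(\cdot, \boldsymbol{x}_j) \Big\rangle_{\mathcal{H}} = \| h \|_{\mathcal{H}}^2 \geq 0,
$$
using bilinearity and the identity for $k(\boldsymbol{x}_i, \boldsymbol{x}_j)$ from the symmetry step. This shows $k$ is a kernel.

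For uniqueness, I would suppose $k$ and $\tilde{k}$ are both reproducing kernels for the same $\mathcal{H}$. Fix $\boldsymbol{x} \in \Omega$ and apply the reproducing property of $\tilde{k}$ to the function $k(\cdot, \boldsymbol{x}) \in \mathcal{H}$, and the reproducing property of $k$ to the function $\tilde{k}(\cdot, \boldsymbol{x}) \in \mathcal{H}$; combining the two gives
$$
\| k(\cdot, \boldsymbol{x}) - \tilde{k}(\cdot, \boldsymbol{x}) \|_{\mathcal{H}}^2 = \langle k(\cdot, \boldsymbol{x}) - \tilde{k}(\cdot, \boldsymbol{x}), k(\cdot, \boldsymbol{x}) - \tilde{k}(\cdot, \boldsymbol{x}) \rangle_{\mathcal{H}} = 0,
$$
where the cross terms cancel because $\langle k(\cdot, \boldsymbol{x}), k(\cdot, \boldsymbol{x}) \rangle_{\mathcal{H}} = k(\boldsymbol{x}, \boldsymbol{x}) = \langle \tilde{k}(\cdot, \boldsymbol{x}), k(\cdot, \boldsymbol{x}) \rangle_{\mathcal{H}}$ and similarly for the other pairings. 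Hence $k(\cdot, \boldsymbol{x}) = \tilde{k}(\cdot, \boldsymbol{x})$ as elements of $\mathcal{H}$ for every $\boldsymbol{x}$, so $k = \tilde{k}$ pointwise. I do not anticipate a genuine obstacle here — the argument is short once one notices that Definition \ref{2.2.1} already hands us the representers; the only thing to be careful about is bookkeeping the order of arguments in the inner product when invoking the reproducing property, since the kernel's first slot is the function variable and the second is the fixed point.
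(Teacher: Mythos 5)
Your proof is correct and follows essentially the same route as the paper: symmetry and positive semi-definiteness are obtained exactly as in the paper's argument (reproducing property plus symmetry of the inner product, and writing the quadratic form as $\| \sum_i c_i k(\cdot,\boldsymbol{x}_i)\|_{\mathcal{H}}^2$). For uniqueness the paper uses a direct chain of equalities $k(\boldsymbol{x},\boldsymbol{x}') = \langle k(\boldsymbol{x},\cdot), \tilde{k}(\cdot,\boldsymbol{x}')\rangle_{\mathcal{H}} = \tilde{k}(\boldsymbol{x},\boldsymbol{x}')$ rather than your norm-zero computation, but both rest on the same idea of playing the two reproducing properties against each other, so the difference is purely cosmetic.
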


\begin{proof}
First show that the corresponding function $k$ is a kernel. Since
$$k(\boldsymbol{x}, \boldsymbol{x}') 
= \langle k(\cdot, \boldsymbol{x}'), k(\cdot, \boldsymbol{x}) \rangle_\mathcal{H}
= \langle k(\cdot, \boldsymbol{x}), k(\cdot, \boldsymbol{x}') \rangle_\mathcal{H}
= k(\boldsymbol{x}', \boldsymbol{x})$$
for every $\boldsymbol{x},\boldsymbol{x}'\in \Omega$, the function is symmetric. To show that $k$ is positive semi-definite, let $n \in \mathbb{N}$, $c_1,...,c_n \in \mathbb{R}$ and $\boldsymbol{x}_1,...,\boldsymbol{x}_n \in \Omega$. Then,
\begin{align*}
\sum_{i=1}^n \sum_{j=1}^n c_i c_j k(\boldsymbol{x}_i, \boldsymbol{x}_j) 
&= \sum_{i=1}^n \sum_{j=1}^n c_i c_j \langle k(\cdot, \boldsymbol{x}_j), k(\cdot, \boldsymbol{x}_i) \rangle_\mathcal{H} \\
&= \langle \sum_{j=1}^n c_j k(\cdot, \boldsymbol{x}_j), \sum_{i=1}^n c_i k(\cdot, \boldsymbol{x}_i) \rangle_\mathcal{H}
= \big\| \sum_{i=1}^n c_i k(\cdot, \boldsymbol{x}_i) \big\|_\mathcal{H}^2 \geq 0.
\end{align*}
To verify the uniqueness, assume two kernels $k,k'$ both satisfy definition \ref{2.2.1} for the same RKHS. Then,
\begin{align*}
k(\boldsymbol{x}, \boldsymbol{x}') 
= \langle k(\boldsymbol{x}, \cdot), k'(\cdot, \boldsymbol{x}') \rangle_\mathcal{H}
&= \langle k'(\cdot, \boldsymbol{x}'), k(\boldsymbol{x}, \cdot) \rangle_\mathcal{H}\\
&= \langle k'(\cdot, \boldsymbol{x}'), k(\cdot, \boldsymbol{x}) \rangle_\mathcal{H}
= k'(\boldsymbol{x}, \boldsymbol{x}')
\end{align*}
for every $\boldsymbol{x},\boldsymbol{x}' \in \Omega$. It follows $k \equiv k'$.
\end{proof}


\subsection{Interior cone condition and Lipschitz boundary} \label{Chap. 8.2}

To effectively investigate Sobolev spaces, it is essential for $\Omega \subset \mathbb{R}^d$ to satisfy the interior cone condition. This condition allows us to utilize embedding Theorems.

\begin{defi}
An open and non-empty set $\Omega\subset \mathbb{R}^d$ satisfies the interior cone condition\index{Interior cone condition}, if
there exists an angle $\delta \in (0,\pi /2)$ and a radius $R>0$ such that for every $\boldsymbol{x} \in \Omega$ a unit vector $\boldsymbol{\xi}(\boldsymbol{x})$ exists such that the cone \index{\textit{$C (\boldsymbol{x}, \boldsymbol{\xi}(\boldsymbol{x}), \delta, R)$,}}
$$C (\boldsymbol{x}, \boldsymbol{\xi}(\boldsymbol{x}), \delta, R) = \big\{ \boldsymbol{x}+\lambda \boldsymbol{y} \ | \ \boldsymbol{y} \in \mathbb{R}^d, \|\boldsymbol{y}\|_2 = 1, \boldsymbol{y}^T \boldsymbol{\xi}(\boldsymbol{x}) \geq \cos (\delta), \lambda \in [0,R] \big\}$$
is contained in $\Omega$.
\end{defi}


A second condition often required for $\Omega \subset \mathbb{R}^d$ is that it has a Lipschitz boundary. This condition creates a link between Sobolev spaces and the RKHS associated with Matérn kernels. The embedding Theorems can also be formulated more restrictive.

\begin{defi} \label{A.2.2}
A domain is an open, connected an non-empty set in $\mathbb{R}^d$.\index{Domain} A domain $\Omega_i \subseteq \mathbb{R}^d$ is called special Lipschitz domain\index{Special Lipschitz domain}, if there exists a rotation $\tilde{\Omega}_i$ of $\Omega_i$ and a function $\psi: \mathbb{R}^{d-1} \rightarrow \mathbb{R}$ which satisfy:
\begin{enumerate}[topsep=5pt]
	\setlength\itemsep{0.1mm}
\item $\tilde{\Omega}_i = \{ (\boldsymbol{x},y) \in \mathbb{R}^d \, | \, y > \psi(\boldsymbol{x}) \}$,
\item $\psi$ is Lipschitz continuous, meaning
$| \psi(\boldsymbol{x}) - \psi(\boldsymbol{x}') | \leq L \| \boldsymbol{x}-\boldsymbol{x}' \|_2$ for all $\boldsymbol{x},\boldsymbol{x}' \in \mathbb{R}^{d-1}$ and a constant $L>0$.
\end{enumerate}
An open, non-empty and bounded set $\Omega\subset \mathbb{R}^d$ has Lipschitz boundary\index{Lipschitz boundary}, if there exists $\varepsilon >0, n \in \mathbb{N}_{\geq 1}, M>0$ and open sets $U_1,U_2,... \subset \mathbb{R}^d$ such that:
\begin{enumerate}[label=(\roman*), topsep=5pt]
	\setlength\itemsep{0.1mm}
\item For any $\boldsymbol{x} \in \partial \Omega$, there exists an index $i$ such that $B_{\varepsilon}(\boldsymbol{x}) \subset U_i$, \label{A.2.2 (i)}
\item $U_{i_1} \cap ... \cap U_{i_{n+1}} = \emptyset$ for any distinct indices $\{ i_1, ..., i_{n+1} \}$,
\item For each index $i$, there exists a special Lipschitz domain $\Omega_i \subset \mathbb{R}^d$ with $U_i \cap \Omega = U_i \cap \Omega_i$ and Lipschitz constant $L \leq M$. \label{A.2.2 (iii)}
\end{enumerate}
\end{defi}

As described on page 84 of \cite{AF}, it is known that the Lipschitz boundary condition is more restrictive than the interior cone condition. Since I was unable to find a proof of this in the literature, I have decided to provide a proof myself.

\begin{lemma}
If $\Omega\subset \mathbb{R}^d$ has Lipschitz boundary, it satisfies the interior cone
condition.
\end{lemma}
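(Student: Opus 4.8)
The plan is to exploit the local graph structure of a Lipschitz boundary to build, at each point, a cone that stays inside $\Omega$, and to handle interior points (those far from $\partial\Omega$) separately by simply taking a cone inside a small ball. The main work is near the boundary, where the cone must be chosen to "point away" from the graph of the Lipschitz function and have an opening angle controlled only by the Lipschitz constant $L$.

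First I would fix the data $\varepsilon, n, M$ and the open sets $U_1, U_2, \dots$ from Definition \ref{A.2.2}, together with the associated special Lipschitz domains $\Omega_i$ and functions $\psi_i$ (after rotation) with Lipschitz constants $L \le M$. For a boundary point $\boldsymbol{x} \in \partial\Omega$, pick $i$ with $B_\varepsilon(\boldsymbol{x}) \subset U_i$, so that $B_\varepsilon(\boldsymbol{x}) \cap \Omega = B_\varepsilon(\boldsymbol{x}) \cap \Omega_i$. Working in the rotated coordinates where $\tilde\Omega_i = \{(\boldsymbol{z}, y) : y > \psi_i(\boldsymbol{z})\}$, I would take the cone axis to be the "vertical" direction $\boldsymbol{\xi} = \boldsymbol{e}_d$ (in rotated coordinates; rotate back to get $\boldsymbol{\xi}(\boldsymbol{x})$). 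The key elementary estimate: if $\boldsymbol{v} = (\boldsymbol{w}, t)$ is a unit vector with $\boldsymbol{v}^T\boldsymbol{e}_d = t \ge \cos\delta$, then $\|\boldsymbol{w}\|_2 = \sqrt{1-t^2} \le \tan\delta \cdot t$ roughly, and for a point $\boldsymbol{x} + \lambda\boldsymbol{v} = (\boldsymbol{z} + \lambda\boldsymbol{w},\, \psi_i(\boldsymbol{z}) + \lambda t)$ one has, using Lipschitz continuity,
$$\psi_i(\boldsymbol{z} + \lambda\boldsymbol{w}) \le \psi_i(\boldsymbol{z}) + L\lambda\|\boldsymbol{w}\|_2 \le \psi_i(\boldsymbol{z}) + L\lambda\sin\delta.$$
So the cone point lies in $\tilde\Omega_i$ provided $\lambda t > L\lambda\sin\delta$, i.e. provided $\cos\delta > L\sin\delta$, i.e. $\tan\delta < 1/L \le 1/M$. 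Thus choosing $\delta = \arctan(1/(2M)) \in (0,\pi/2)$ works for \emph{every} boundary point simultaneously, and choosing $R$ small enough (e.g. $R = \varepsilon/2$) keeps the whole cone inside $B_\varepsilon(\boldsymbol{x})$, hence inside $B_\varepsilon(\boldsymbol{x}) \cap \Omega_i = B_\varepsilon(\boldsymbol{x}) \cap \Omega \subset \Omega$.

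Next I would treat points $\boldsymbol{x} \in \Omega$ with $\operatorname{dist}(\boldsymbol{x}, \partial\Omega) \ge R'$ for a suitable fixed $R' > 0$: here $B_{R'}(\boldsymbol{x}) \subset \Omega$, so any cone with vertex $\boldsymbol{x}$, any axis, opening angle $\delta$, and radius $R'$ lies in $\Omega$. For points with $0 < \operatorname{dist}(\boldsymbol{x},\partial\Omega) < R'$, pick a nearest boundary point $\boldsymbol{x}_0 \in \partial\Omega$; the cone constructed at $\boldsymbol{x}_0$ with a slightly larger radius contains a translated cone at $\boldsymbol{x}$ with the same angle and a radius reduced by $\operatorname{dist}(\boldsymbol{x},\boldsymbol{x}_0) < R'$ — here one wants $R'$ small compared to $\varepsilon$ so that $B_\varepsilon(\boldsymbol{x}_0)$ still contains this cone. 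Taking $R = \min\{R'/2,\ \varepsilon/4\}$ and $\delta = \arctan(1/(2M))$ then gives a uniform choice valid for all $\boldsymbol{x}\in\Omega$. Finally one checks that finitely many $U_i$ actually suffice near $\partial\Omega$: this follows because $\partial\Omega$ is compact (as $\Omega$ is bounded), the balls $B_\varepsilon(\boldsymbol{x})$ cover it, and the overlap condition (ii) bounds multiplicity — though for the argument above we only ever use one $U_i$ at a time, so compactness is only needed to ensure a positive lower bound on the relevant radii if the $\psi_i$ had varying constants, which they do not since all $L \le M$.

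The main obstacle I anticipate is bookkeeping the geometry cleanly: making sure the cone built in the rotated coordinates of $\Omega_i$ genuinely sits inside $B_\varepsilon(\boldsymbol{x})$ (so that the identity $B_\varepsilon(\boldsymbol{x})\cap\Omega = B_\varepsilon(\boldsymbol{x})\cap\Omega_i$ can be invoked), and handling the transition between "interior" points and "near-boundary" points without the constants degenerating. None of this is deep, but one must be careful that $R$ is chosen \emph{after} $\delta$ and small enough relative to $\varepsilon$, and that the rotation relating $\Omega_i$ to $\tilde\Omega_i$ is applied consistently to the axis $\boldsymbol{\xi}(\boldsymbol{x})$. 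I would present it by first stating the choice $\delta := \arctan\!\big(\tfrac{1}{2M}\big)$ and $R := \tfrac14\min\{\varepsilon, \operatorname{dist}(\,\cdot\,)\text{-threshold}\}$, then proving the boundary-point case in full (the displayed Lipschitz estimate above is the heart of it), then disposing of interior points in two lines.
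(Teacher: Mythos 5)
Your proposal follows essentially the same route as the paper's proof: represent $\Omega$ locally (after a rotation) as the region above the graph of a Lipschitz function with constant $L \le M$, use the Lipschitz estimate to show that a cone with axis $\boldsymbol{e}_d$ and half-angle $\delta$ with $\tan\delta < 1/M$ stays above the graph, cap the radius by a fraction of $\varepsilon$ so that the identity $U_i\cap\Omega = U_i\cap\Omega_i$ can be invoked, and dispose of deep interior points with a ball. The displayed Lipschitz inequality is exactly the heart of the paper's argument, and your uniform choice of $\delta$ depending only on $M$ is correct.

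One step, however, does not hold as written: the claim that the cone at a nearest boundary point $\boldsymbol{x}_0$, taken with a slightly larger radius, \emph{contains} the translated cone at $\boldsymbol{x}$. A translate of a cone by a vector transverse to its axis is not contained in any concentric enlargement of the original cone — points of the translated cone just above its vertex make an angle close to $\pi/2$ with the axis as seen from $\boldsymbol{x}_0$, so they escape $C(\boldsymbol{x}_0,\boldsymbol{\xi},\delta,R)$ for small $\delta$; and since the boundary is only Lipschitz, $\boldsymbol{x}-\boldsymbol{x}_0$ can indeed be far from the axis direction. (Relatedly, the cone based at $\boldsymbol{x}_0\in\partial\Omega$ itself is never contained in the open set $\Omega$, since its vertex lies on the boundary.) The repair is immediate and is what the paper does: run your Lipschitz estimate directly at the interior point $\boldsymbol{x}=(\boldsymbol{z}_x,y_x)$ with $y_x>\psi_i(\boldsymbol{z}_x)$, rather than at a boundary point. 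The chain $\psi_i(\boldsymbol{z}_x+\lambda\boldsymbol{w}) \le \psi_i(\boldsymbol{z}_x)+L\lambda\sin\delta < y_x+\lambda\cos\delta \le y_x+\lambda t$ shows the cone with vertex $\boldsymbol{x}$ lies in $\tilde\Omega_i$ with no translation argument needed; every point of $\Omega$ within $\varepsilon/2$ of $\partial\Omega$ lies in some $U_i$ by condition \ref{A.2.2 (i)}, so this covers all near-boundary points. With that adjustment your choices of $\delta$ and $R$ go through and the proof is complete.
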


\begin{proof}
First, let $\boldsymbol{x}_0 \in \Omega$ with $\text{dist}(\boldsymbol{x}_0, \partial \Omega) < \varepsilon /2$ and assume without restriction $\boldsymbol{x}_0 = \boldsymbol{0}$. According to condition \ref{A.2.2 (i)}, there exists an index $i$ with $\boldsymbol{0} \in U_i \cap \Omega$. Based on condition \ref{A.2.2 (iii)}, there is a Lipschitz continuous function $\psi$ with Lipschitz constant $L \leq M$ such that, up to a rotation, 
$$\boldsymbol{0} \in \tilde{\Omega}_i = \{ (\boldsymbol{x},y) \in \mathbb{R}^d  \, | \, \psi(\boldsymbol{x}) < y \}.$$
Consider the cone $\tilde{C}(\boldsymbol{0}) = \{ (\boldsymbol{x},y) \in \mathbb{R}^d  \, | \, \| \boldsymbol{x} \|_2 \leq M^{-1}y \}$:
\begin{figure}[H]
	\centering
	\includegraphics[width=0.6\textwidth]{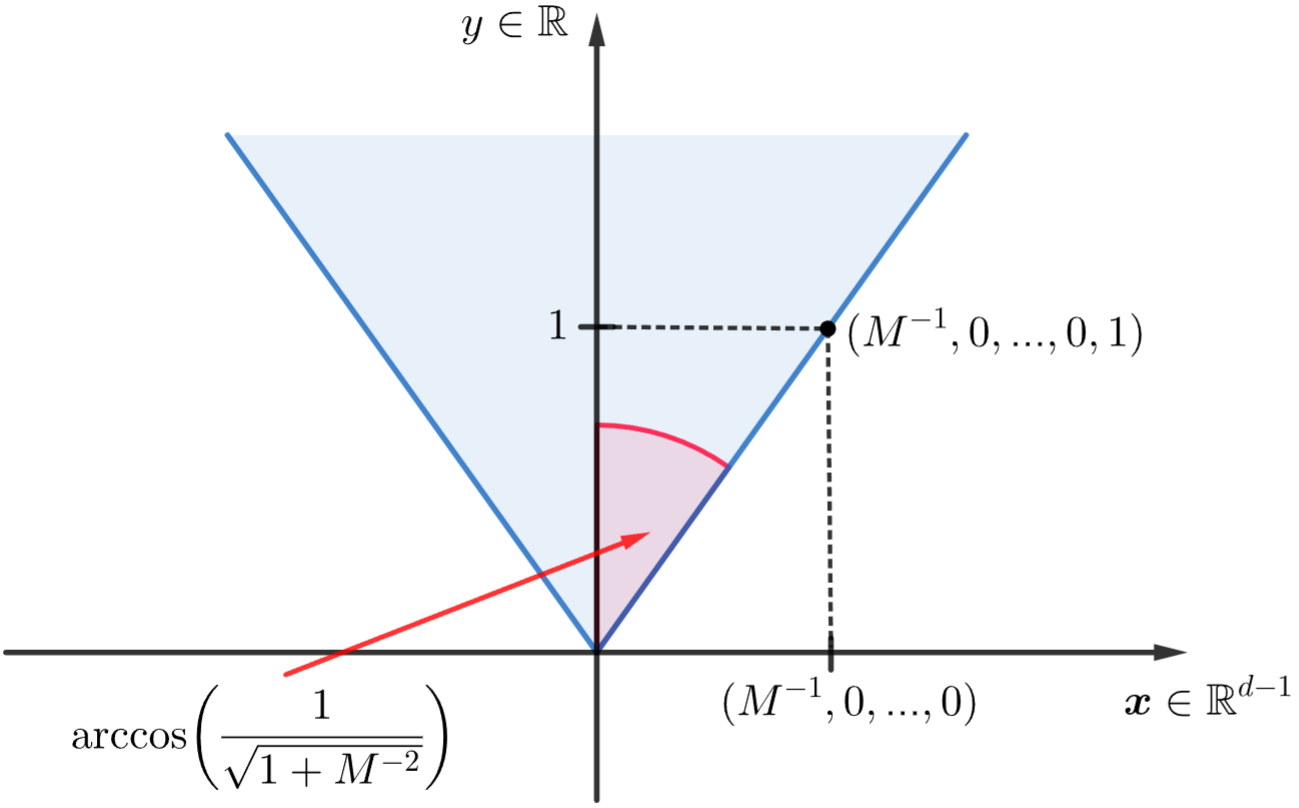}
\end{figure}
Due to the Lipschitz continuity of $\psi$, we have
$$ M \| \boldsymbol{x} \|_2 \leq y 
\Longrightarrow  \psi (\boldsymbol{x}) 
\leq \psi (\boldsymbol{0}) + L \| \boldsymbol{x}-\boldsymbol{0} \|_2
<  L \| \boldsymbol{x} \|_2 \leq M \| \boldsymbol{x} \|_2 \leq y
\text{ for all } \boldsymbol{x} \in \mathbb{R}^{d-1}.$$
Therefore,
$$\tilde{C}(\boldsymbol{0}) \subseteq \{ (\boldsymbol{x},y) \in \mathbb{R}^d  \, | \, \psi(\boldsymbol{x}) < y \} = \tilde{\Omega}_i.$$
By restricting the height of the cone to $\varepsilon/2$ and setting $\boldsymbol{\xi}(\boldsymbol{x})= (0,...,0,1)^T $, we express
\begin{align*}
\tilde{C}(\boldsymbol{0}) &= \tilde{C} \left(\boldsymbol{x}, \boldsymbol{\xi} (\boldsymbol{x}), \arccos \left( \dfrac{1}{\sqrt{1+M^{-2}}} \right), \dfrac{\varepsilon}{2} \right) \\
&= \Big\{ \lambda \boldsymbol{y} \ \big| \  \boldsymbol{y} \in \mathbb{R}^d, \| \boldsymbol{y} \|_2 = 1, \boldsymbol{y}^T \boldsymbol{\xi}(\boldsymbol{x}) \geq \dfrac{1}{\sqrt{1+M^{-2}}}, \lambda \in [0,\varepsilon/2] \Big\}.
\end{align*}
Thus, we have
$$\tilde{C}(\boldsymbol{0}) \subseteq \overline{B_{\varepsilon /2}(\boldsymbol{0})} \cap \tilde{\Omega}_i.$$
By suitably rotating this cone to $C(\boldsymbol{0})$, we obtain
$$C(\boldsymbol{0}) \subseteq  \overline{B_{\varepsilon /2}(\boldsymbol{0})} \cap \Omega_i \subseteq U_i \cap \Omega_i = U_i \cap \Omega \subseteq \Omega,$$
ensuring that this cone is contained in $\Omega$. \\
Now, let $\boldsymbol{x}_0 \in \Omega$ with $\text{dist}(\boldsymbol{x}_0, \partial \Omega) \geq \varepsilon /2$. Since $C(\boldsymbol{0})$ has a height of $\varepsilon /2$, translating $C(\boldsymbol{0})$ from its vertex at $\boldsymbol{0}$ to $\boldsymbol{x}_0$ and shrinking the height to $\varepsilon /4$, ensures that it is also contained within $\Omega$.
\end{proof}

To highlight that a set with a Lipschitz boundary satisfies the interior cone condition with a specific angle and radius, we introduce a new term.

\begin{defi}
A domain $\Omega \subset \mathbb{R}^d$ is called an $\mathcal{L}(R, \delta)$-domain\index{\textit{$\mathcal{L}(R, \delta)$,}} if it has a Lipschitz boundary and satisfies the interior cone condition with radius $R>0$ and angle $\delta \in (0,\pi/2)$.
\end{defi}


\subsection{Dimension of the polynomial space} \label{Chap. 8.3}
Let $q \in \mathbb{N}_0$. The number of solutions to
\begin{equation}
\alpha_1 + \alpha_2 + ... + \alpha_d = q  \label{binomial_0}
\end{equation}
for $\alpha_1, ..., \alpha_d \in \mathbb{N}_0$ is
\begin{equation}
\binom{q+d-1}{d-1} = \dfrac{(q+d-1)!}{q!(d-1)!}. \label{binomial}
\end{equation}
To verify this consider $d+q-1$ boxes
\begin{figure}[H]
	\centering
	\begin{ytableau}
       ~   &    &   & \none[\dots]  &	&	& \\
	\end{ytableau}
\end{figure}\noindent
and notice that the problem can be seen as using $q-1$ gray boxes as walls and the number of empty boxes between two walls represent the corresponding integer. As example for $d=3$ and $q=3$ is
\newcommand{\labelrow}[2]{\makebox[0pt][l]{\hspace{#1}#2}}
\begin{figure}[H]
	\hspace{1.5cm}
	\begin{ytableau}
       {}   &  *(gray)  & {}  &	*(gray) & {} \labelrow{1em}{$ = 1+1+1,$}
    \end{ytableau}
    \hspace{3cm}
    \begin{ytableau}
       *(gray)   &  {}  & {}  &	*(gray) & {} \labelrow{1em}{$ = 0+2+1,$}
    \end{ytableau}
\end{figure}
\vspace{-0.5cm}
\begin{figure}[H]
	\hspace{1.5cm}
	\begin{ytableau}
       {}   &  {}  & {}  &	*(gray) & *(gray) \labelrow{1em}{$ = 3+0+0,$}
    \end{ytableau}
    \hspace{3cm}
    \begin{ytableau}
       {}   &  {}  & *(gray)  &	*(gray) & {} \labelrow{1em}{$ = 2+0+1.$}
    \end{ytableau}
\end{figure}
\noindent
Therefore, the number of solutions are the possibilities to draw $q-1$ gray boxes out of the initial $d+q-1$ white boxes, which is the binomial coefficient in (\ref{binomial}). Switching $"="$ in formula (\ref{binomial_0}) to $"\leq"$, we get
\begin{equation}
\sum_{\ell = 0}^q \binom{\ell+d-1}{d-1} = \binom{q+d}{d} \label{eq. A.3.3}
\end{equation}
solutions, where the equality can be proven by induction over $q$. This means the dimension of the polynomial space $\mathcal{P}_q(\Omega)$ for $\Omega \subseteq \mathbb{R}^d$ is given by equation (\ref{eq. A.3.3}).


\newpage
\fancyhead[RO, RE]{}
\fancyhead[LE, LO]{}

\printindex

\end{document}